\definecolor{LightGray}{gray}{0.9}
\newcommand\R{\mathbb{R}}
\newcommand\N{\mathbb{N}}
\newcommand{\interior}[1]{%
  {\kern0pt#1}^{\mathrm{o}}%
}
\newcommand{\closure}[1]{%
  \mkern 1.5mu\overline{\mkern-1.5mu#1\mkern-1.5mu}\mkern 1.5mu%
}
\DeclareMathOperator{\tr}{tr}
\DeclareMathOperator{\diag}{diag}
\DeclareMathOperator{\kl}{KL}
\DeclareMathOperator{\jf}{JF}
\DeclareMathOperator{\im}{im}
\newcommand{\fr}{\partial}
\DeclareMathOperator{\gl}{GL}
\DeclareMathOperator{\cov}{Cov}
\newcommand{\matrixnorm}[1]{{\left\vert\kern-0.25ex\left\vert\kern-0.25ex\left\vert #1 
    \right\vert\kern-0.25ex\right\vert\kern-0.25ex\right\vert}}
\newcommand{\istargetof}{\rightsquigarrow}
\theoremstyle{definition}
\newtheorem{definition}{Definition}
\newtheorem{example}{Example}
\theoremstyle{remark}
\newtheorem{remark}{Remark}
\theoremstyle{plain}
\newtheorem{theorem}{Theorem}
\newtheorem{proposition}{Proposition}
\newtheorem{corollary}{Corollary}
\newtheorem{lemma}{Lemma}
\title{A Tutorial on Distance Metric Learning: Mathematical Foundations, Algorithms, Experimental Analysis, Prospects and Challenges (with Appendices on Mathematical Background and Detailed Algorithms Explanation)}
\author{Juan Luis Su\'arez-D\'iaz \And Salvador Garc\'ia \And Francisco Herrera \\
DaSCI, Andalusian Research Institute in Data Science and Computational Intelligence \\
University of Granada \\ Granada, Spain
\\ \texttt{\{jlsuarezdiaz,salvagl,herrera\}@decsai.ugr.es}}
\date{August, 2020}
\begin{document}
\maketitle



                  

\begin{abstract}


Distance metric learning is a branch of machine learning that aims to learn distances from the data, which enhances the performance of similarity-based algorithms.
This tutorial provides a theoretical background and foundations on this topic and a comprehensive experimental analysis of the most-known algorithms. We start by describing the distance metric learning problem and its main mathematical foundations, divided into three main blocks: convex analysis, matrix analysis and information theory. Then, we will describe a representative set of the most popular distance metric learning methods used in classification. All the algorithms studied in this paper will be evaluated with exhaustive testing in order to analyze their capabilities in standard classification problems, particularly considering dimensionality reduction and kernelization. The results,  verified by Bayesian statistical tests, highlight a set of outstanding algorithms. Finally, we will discuss several potential future prospects and challenges in this field. This tutorial will serve as a starting point in the domain of distance metric learning from both a theoretical and practical perspective.

\end{abstract}
\keywords{Distance Metric Learning \and Classification \and Mahalanobis Distance \and Dimensionality \and Similarity}





\section{Introduction} \label{sec:intro}

The use of distances in machine learning has been present since its inception. Distances provide a similarity measure between the data, so that close data will be considered similar, while remote data will be considered dissimilar. One of the most popular examples of similarity-based learning is the well-known nearest neighbors rule for classification, where a new sample is labeled with the majority class within its nearest neighbors in the training set. This classifier was presented in 1967 by \citet{cover1967nearest}, even though this idea had already been mentioned in earlier publications \cite{sebestyen1962decision,nilsson1965learning}.

Algorithms in the style of the nearest neighbors classifier are among the main motivators of distance metric learning. These kinds of algorithms have usually used a standard distance, like the Euclidean distance, to measure the data similarity. However, a standard distance may ignore some important properties available in our dataset, and therefore the learning results might be non optimal. The search for a distance that brings similar data as close as possible, while moving non similar data away, can significantly increase the quality of these algorithms.

During the first decade of the 21st century some of the most well-known distance metric learning algorithms were developed, and perhaps the algorithm from \citet{lsi} is responsible for drawing attention to this concept for the first time. Since then, distance metric learning has established itself as a promising domain in machine learning, with applications in many fields, such as medicine \cite{ma2019nasopharyngeal,wei2020multi}, security \cite{li2020improving,luo2020transforming}, social media mining \cite{liu2017metric,liu2019multi}, speech recognition \cite{li2020automatic,bai2020speaker}, information retrieval \cite{lopez2019visual,hu2019semi}, recommender systems \cite{wu2020effective,li2020social} and many areas of computer vision, such as person re-identification \cite{Nguyen2019589,zhao2020similarity}, kinship verification \cite{Liang20191149,dornaika2020transfer} or image classification \cite{wang2019hybrid,wang2020deep}.

Although distance metric learning has proven to be an alternative to consider with small and medium datasets \cite{du2019detection}, one of its main limitations is the treatment of large data sets, both at the level of number of samples and at the level of number of attributes \cite{wells2020simple}. In recent years, several alternatives have been explored in order to develop algorithms for these areas \cite{nguyen2020scalable,liu2019escaping}.

Several surveys on distance metric learning have been proposed. Among the well-known surveys we can find the work of \citet{yang2006distance}, \citet{kulis2013metric}, \citet{bellet2013survey} and \citet{moutafis2017overview}. However, we must point out several `loose ends' present in these studies. On the one hand, they do not provide an in-depth study of the theory behind distance metric learning. Such a study would help to understand the motivation behind the mechanics that give rise to the various problems and tools in this field. On the other hand, previous studies do not carry out enough experimental analyses that evaluate the performance of the algorithms on sufficiently diverse datasets and circumstances.

In this paper we undergo a theoretical study of supervised distance metric learning, in which we show the mathematical foundations of distance metric learning and its algorithms. We analyze several distance metric learning algorithms for classification, from the problems and the objective functions they try to optimize, to the methods that lead to the solutions to these problems. Finally, we carry out several experiments involving all the algorithms analyzed in this study. In our paper, we want to set ourselves apart from previous publications by focusing on a deeper analysis of the main concepts of distance metric learning, trying to get to its basic ideas, as well as providing an experimental framework with the most popular metric learning algorithms. We will also discuss some opportunities for future work in this topic.

Regarding the theoretical background of distance metric learning, we have studied three mathematical fields that are closely related to this topic. The first one is convex analysis \cite{convexanalysis,convexoptimization}. Convex analysis is present in many distance metric learning algorithms, as these algorithms try to optimize convex functions over convex sets. Some interesting properties of convex sets, as well as how to deal with constrained convex problems, will be shown in this study. We will also see how the use of matrices is a fundamental part when modeling our problem. Matrix analysis \cite{matrix_analysis} will therefore be the second field studied. The third is information theory \cite{information_theory}, which is also used in some of the algorithms we will present. 

As explained before, our work focuses on supervised distance metric learning techniques. A large number of algorithms have been proposed over the years. These algorithms were developed with different purposes and based on different ideas, so that they could be classified into different groups. Thus, we can find algorithms whose main goal is dimensionality reduction \cite{lda,anmm,cunningham2015linear}, algorithms specifically oriented to improve distance based classifiers, such as the nearest neighbors classifier \cite{lmnn,nca}, or the nearest centroid classification \cite{ncmml}, and a few techniques that are based on information theory \cite{itml,dmlmj,mcml}. Some of these algorithms also allow kernel versions \cite{klmnn,kda,anmm,dmlmj}, that allow for the extension of distance metric learning to highly dimensional spaces. 

As can be seen in the experiments, we compare all the studied algorithms using up to 34 different datasets. In order to do so, we define different settings to explore their performance and capabilities when considering maximum dimension, centroid-based methods, different kernels and dimensionality reduction. Bayesian statistical tests are used to assess the significant differences among algorithms \cite{benavoli2017time}.

In summary, the aims of this tutorial are:

\begin{itemize}
    \item To know and understand the discipline of distance metric learning and its foundations.
    \item To gather and study the foundations of the main supervised distance metric learning algorithms.
    \item To provide experiments to evaluate the performance of the studied algorithms under several case studies and to analyze the results obtained. The code of the algorithms is available in the Python library \texttt{pydml} \cite{suarez2020pydml}.
\end{itemize}

Our paper is organized as follows. Section \ref{sec:dml} introduces the distance metric problem and its mathematical foundations, explains the family of distances we will work with and shows several examples and applications. Section \ref{sec:algs} discusses all the distance metric learning algorithms chosen for this tutorial. Section \ref{sec:experiments} describes the experiments done to evaluate the performance of the algorithms and shows the obtained results. Finally, Sections \ref{sec:future} and \ref{sec:conclusions} conclude the paper by indicating possible future lines of research in this area and summarizing the work done, respectively. We also provide a glossary of terms in Appendix \ref{app:glossary} with the acronyms used in the paper. So as to avoid overloading the sections with a large theoretical content a theoretical supplement is provided, in which Appendix \ref{app:math} presents in a rigorous and detailed manner the mathematical foundations of distance metric learning, structured in the three blocks discussed previously, and Appendix \ref{app:algs} provides a detailed explanation of the algorithms included in Section \ref{sec:algs}.

\section{Distance Metric Learning and Mathematical Foundations} \label{sec:dml}

In this section we will introduce the distance metric learning problem. To begin with, we will provide reasons to support the distance metric learning problem in Section \ref{ssec:dml_why}. Then, we will go over the concept of distance, with special emphasis on the Mahalanobis distances, which will allow us to model our problem (Section \ref{ssec:distances}). Once these concepts are defined, in Section \ref{ssec:dml_desc} we will describe the distance metric learning problem, explaining what it consists of, how it is handled in the supervised case and how it is modeled so that it can be treated computationally. To understand the basics of distance metric learning we provide a summary of the mathematical foundations underlying this field in Section \ref{ssec:intromaths}. These foundations support the theoretical description of this discipline as well as the numerous distance metric learning algorithms that will be discussed in Section \ref{sec:algs} and Appendix \ref{app:algs}. The mathematical background is then developed extensively in Appendix \ref{app:math}. Finally, we will finish with Section \ref{ssec:dmlinml} by detailing some of the uses of distance metric learning in machine learning.

\subsection{Distance Metric Learning: Why and What For?} \label{ssec:dml_why}

Similarity-based learning algorithms are among the earliest used in the field of machine learning. They are inspired by one of the most important components in many human cognitive processes: the ability to detect similarities between different objects. This ability has been adapted to machine learning by designing algorithms that learn from a dataset according to the similarities present between the data. These algorithms are present in most areas of machine learning, such as classification and regression, with the \ac{kNN} rule \cite{cover1967nearest}; in clustering, with the $k$-means algorithm \cite{macqueen1967some}; in recommender systems, with collaborative approaches based also on nearest neighbors \cite{dou2016survey}; in semi-supervised learning, to construct the graph representations \cite{ssl2}; in some kernel methods such as the radial basis functions \cite{hofmann2008kernel}, and many more.

To measure the similarity between data, it is necessary to introduce a distance, which allows us to establish a measure whereby it is possible to determine when a pair of samples is more similar than another pair of samples. However, there is an infinite number of distances we can work with, and not all of them will adapt properly to our data. Therefore, the choice of an adequate distance is a crucial element in this type of algorithm.

Distance metric learning arises to meet this need, providing algorithms that are capable of searching for distances that are able to capture features or relationships hidden in our data, which possibly a standard distance, like the Euclidean distance, would not have been able to discover. From another perspective, distance metric learning can also be seen as the missing training step in many similarity-based algorithms, such as the \emph{lazy} nearest-neighbor approaches. The combination of both the distance learning algorithms and the distance-based learners allows us to build more complete learning algorithms with greater capabilities to extract information of interest from our data.

Choosing an appropriate distance learned from the data has proven to be able to greatly improve the results of distance-based algorithms in many of the areas mentioned above. In addition to its potential when adhering to these learners, a good distance allows data to be transformed to facilitate their analysis, with mechanisms such as dimensionality reduction or axes selection, as we will discuss later.

\subsection{Mahalanobis Distances} \label{ssec:distances}

We will start by reviewing the concept of distance and some of its properties.

\begin{definition}
  Let $X$ be a non-empty set. A \emph{distance} or \emph{metric} over $X$ is a map $d \colon X \times X \to \R$ that satisfies the following properties:
  \begin{enumerate}
    \item Coincidence: $d(x,y) = 0 \iff x = y$, for every $x, y \in X$.
    \item Symmetry: $d(x,y) = d(y,x)$, for every $x,y \in X$.
    \item Triangle inequality: $d(x,z) \le d(x,y) + d(y,z)$, for every $x,y,z \in X$.
  \end{enumerate}
  The ordered pair $(X,d)$ is called a \emph{metric space}.
\end{definition}

The coincidence property stated above will not be of importance to us. That is why we will also consider mappings known as \emph{pseudodistances}, which only require that $d(x,x) = 0$, instead of the coincidence property. In fact, pseudodistances are strongly related with dimensionality reduction, which is an important application of distance metric learning. From now on, when we talk about distances, we will be considering proper distances as well as pseudodistances.

\begin{remark}
  As an immediate consequence of the definition, we have the following additional properties of distances:
  \begin{enumerate}
    \setcounter{enumi}{3}
    \item Non negativity: $d(x,y) \ge 0$ for every $x,y \in X$.
    \item Reverse triangle inequality: $|d(x,y) - d(y,z)| \le d(x,z)$ for every $x,y,z \in X$.
    \item Generalized triangle inequality: $d(x_1,x_n) \le \sum_{i=1}^{n-1} d(x_i,x_{i+1})$ for $x_1,\dots,x_n \in X$.
  \end{enumerate}
\end{remark}

When we work in the $d$-dimensional Euclidean space, a family of distances become very useful in the computing field. These distances are parameterized by positive semidefinite matrices and are known as \emph{Mahalanobis distances}. In what follows, we will refer to $\mathcal{M}_{d'\times d}(\R)$ (resp. $\mathcal{M}_{d}(\R)$) as the set of matrices of dimension $d' \times d$ (resp. square matrices of dimension $d$), and to $S_d(\mathbb{R})^+_0$ as the set of positive semidefinite matrices of dimension $d$.

\begin{definition}
  Let $d \in \N$ and $M \in S_d(\R)^+_0$. The \emph{Mahalanobis distance} corresponding to the matrix $M$ is the map $d_M \colon \R^d \times \R^d \to \R$ given by
  \[ d_M(x,y) = \sqrt{(x-y)^TM(x-y)}, \quad x,y \in \R^d. \]
\end{definition}

Mahalanobis distances come from the (semi-)dot products in $\R^d$ defined by the positive semidefinite matrix $M$. When $M$ is full-rank, Mahalanobis distances are proper distances. Otherwise, they are pseudodistances. Note that the Euclidean usual distance is a particular example of a Mahalanobis distance, when $M$ is the identity matrix $I$. Mahalanobis distances have additional properties specific to distances over normed spaces.

\begin{enumerate}
  \setcounter{enumi}{6}
  \item Homogeneousness: $d(ax,ay) = |a|d(x,y)$, for $a \in \R$, and $x, y \in \R^d$.
  \item Translation invariance: $d(x,y) = d(x+z,y+z)$, for $x, y, z \in \R^d$.
\end{enumerate}

Sometimes the term ``Mahalanobis distance'' is used to describe the squared distances of the form $d_M^2(x,y) = (x-y)^TM(x-y)$. In the area of computing, it is much more efficient to work with $d_M^2$ rather than with $d_M$, as this avoids the calculation of square roots. Although $d_M^2$ is not really a distance, it keeps the most useful properties of $d_M$ from the distance metric learning perspective, as we will see, such as the greater or lesser closeness between different pairs of points. That is why the use of the term ``Mahalanobis distance'' for both $d_M$ and $d_M^2$ is quite widespread.

To end this section, we return to the issue of dimensionality reduction that we mentioned when introducing the concept of pseudodistance. When we work with a pseudodistance $\sigma$ over a set $X$, it is possible to define an equivalence relationship given by $x \sim y$ if and only if $\sigma(x,y) = 0$, for each $x, y \in X$. Using this relationship we can consider the quotient space $X/_\sim$, and the map $\hat{\sigma}\colon X/_\sim \times X/_\sim \to \R$ given by $\hat{\sigma}([x],[y]) = \sigma(x,y)$, for each $[x],[y] \in X/_\sim$. This map is well defined and is a distance over the quotient space. When $\sigma$ is a Mahalanobis distance over $\R^d$, with rank $d' < d$ (we define the rank of a Mahalanobis distance as the rank of the associated positive semidefinite matrix matrix), then the previous quotient space becomes a vector space isomorphic to $\R^{d'}$, and the distance $\hat{\sigma}$ is a full-rank Mahalanobis distance over $\R^{d'}$. That is why, when we have a Mahalanobis pseudodistance on $\R^d$, we can view this as a proper Mahalanobis distance over a lower dimensional space, hence we have obtained a dimensionality reduction.

\subsection{Description of Distance Metric Learning} \label{ssec:dml_desc}

\emph{\ac{DML}} is a machine learning discipline with the purpose of learning distances from a dataset. In its most general version, a dataset $\mathcal{X} = \{x_1,\dots,x_N\}$ is available, on which certain similarity measures between different pairs or triplets of data are collected. These similarities are determined by the sets
\begin{align*}
  S &= \{(x_i,x_j) \in \mathcal{X}\times\mathcal{X} \colon x_i \text{ and } x_j \text{ are similar.} \}, \\
  D &= \{(x_i,x_j) \in \mathcal{X}\times\mathcal{X} \colon x_i \text{ and } x_j \text{ are not similar.} \}, \\
  R &= \{(x_i,x_j,x_l) \in \mathcal{X}\times\mathcal{X}\times\mathcal{X} \colon x_i \text{ is more similar to } x_j \text{ than to } x_l. \}.
\end{align*}

With these data and similarity constraints, the problem to be solved consists in finding, after establishing a family of distances $\mathcal{D}$, those distances that best adapt to the criteria specified by the similarity constraints. To do this, a certain loss function $\ell$ is set, and the sought-after distances will be those that solve the optimization problem
\[ \min_{d \in \mathcal{D}} \ell(d,S,D,R) .\]

When we focus on supervised learning, in addition to dataset $\mathcal{X}$ we have a list of labels $y_1,\dots,y_N$ corresponding to each sample in $\mathcal{X}$. The general formulation of the \ac{DML} problem is easily adapted to this new situation, just by considering the sets $S$ and $D$ as sets of pairs of same-class samples and different-class samples, respectively. Two main approaches are followed to establish these sets. The \emph{global \ac{DML}} approach considers the sets $S$ and $D$ to be
\begin{align*}
  S &= \{(x_i,x_j) \in \mathcal{X}\times\mathcal{X} \colon y_i = y_j\}, \\
  D &= \{(x_i,x_j) \in \mathcal{X}\times\mathcal{X} \colon y_i \ne y_j\}. 
\end{align*}
On the other hand, the \emph{local \ac{DML}} approach replaces the previous definition of $S$ with
\begin{equation*}
    S = \{(x_i, x_j) \in \mathcal{X}\times\mathcal{X} \colon y_i = y_j \text{ and } x_j \in \mathcal{U}(x_i)\},
\end{equation*}
where $\mathcal{U}(x_i)$ denotes a \emph{neighborhood} of $x_i$, that is, a set of points that should be close to $x_i$, which has to be established before the learning process by using some sort of prior information, or a standard similarity measure. The set $D$ remains the same in the local approach, since different-class samples are not meant to be similar in a supervised learning setting. In addition, the set $R$ may be also available in both approaches by defining triplets $(x_i,x_j,x_l)$ where in general $y_i = y_j \ne y_l$, and they verify certain conditions imposed on the distance between $x_i$ and $x_j$, as opposed to the distance between $x_i$ and $x_l$. This is the case, for example, for impostors in the \acs{LMNN} algorithm (see Section \ref{dml:lmnn} and \cite{lmnn}). In any case, labels have all the necessary information in the field of supervised \ac{DML}. From now on we will focus on this kind of problem.

Furthermore, focusing on the nature of the dataset, practically all of the \ac{DML} theory is developed for numerical data. Although it is possible to define relevant distances for non-numerical attributes \cite{ahmad2007method, blumenthal2020exact} and although some learning processes can be performed with them \cite{norouzi2012hamming, ma2020discriminative}, the richness of the distances available to numerical features, their ability to be parameterized computationally, and the fact that nominal data can be converted to numerical variables or ordinal variables, with appropriate encoding \cite{zheng2018feature}, cause the relevant distances in this discipline to be those defined for numerical data. For this reason, from now on, we will focus on supervised learning problems with numerical datasets.

We will suppose then that $\mathcal{X} \subset \R^d$. As we saw in the previous section, for finite-dimensional vector spaces we have the family of Mahalanobis distances, $\mathcal{D} = \{d_M \colon M \in S_d(\R)^+_0\}$. With this family, we have at our disposal all the distances associated with dot products in $\R^d$ (and in lower dimensions). In addition, this family is determined by the set of positive semidefinite matrices, and therefore, we can use these matrices, which we will call \emph{metric matrices}, to parameterize distances. In this way, the general problem adapted to supervised learning with Mahalanobis distances can be rewritten as
\begin{equation*} \label{eq:metric_learning_eq}
        \min_{M \in S_{d}(\mathbb{R})^+_0} \ell(d_M,(x_1,y_1),\dots,(x_N,y_N)) .
\end{equation*}

However, this is not the only way to parameterize this type of problem. We know, from the \emph{matrix decomposition theorem} discussed in Section \ref{ssec:intromaths} and Theorem \ref{thm:decomposition_llt}, that if $M \in S_d(\R)^+_0$, then there is a matrix $L \in \mathcal{M}_d(\R)$ so that $M = L^TL$, and this matrix is unique except for an isometry. So, then we get
\[d_M^2(x,y) = (x-y)^TM(x-y) =(x-y)^TL^TL(x-y) = (L(x-y))^T(L(x-y)) = \|L(x-y)\|_2^2. \]

Therefore, we can also parameterize Mahalanobis distances through any matrix, although in this case the interpretation is different. When we learn distances through positive semidefinite matrices we are learning a new metric over $\R^d$. When we learn distances with the previous $L$ matrices, we are learning a linear map (given by $x \mapsto Lx$) that transforms the data in the space, and the corresponding distance is the usual Euclidean distance after projecting the data onto the new space using the linear map. Both approaches are equivalent thanks to the matrix decomposition theorem (Theorem \ref{thm:decomposition_llt}).

In relation to dimensionality, it is important to note that, when the learned metric $M$ is not full-rank, we are actually learning a distance over a space of lower dimension (as we mentioned in the previous section), which allows us to reduce the dimensionality of our dataset. The same occurs when we learn linear maps that are not full-rank. We can extend this case and opt to learn directly linear maps defined by $L \in \mathcal{M}_{d' \times d}(\R)$, with $d' < d$. In this way, we ensure that data are directly projected into a space of dimension no greater than $d'$.

Both learning the metric matrix $M$ and learning the linear transformation $L$, are useful approaches to model \ac{DML} problems, each one with its advantages and disadvantages. For example, parameterizations via $M$ usually lead to convex optimization problems. In contrast, convexity in problems parameterized by $L$ is not so easy to achieve. On the other hand, parameterizations using $L$ make it possible to learn projections directly onto lower dimensional spaces, while dimensional constraints for problems parameterized by $M$ are not so easy to achieve. Let us examine these differences with simple examples.

\begin{example} \label{ex:M_convex}
  Many of the functions we will want to optimize will depend on the squared distance defined by the metric $M$ or by the linear transformation $L$, that is, either they will have terms of the form $\|v\|_M^2 = v^TMv$, or of the form $\|v\|_L^2 = \|Lv\|_2^2$. Both the maps $M \mapsto \|v\|_M^2$ and $L \mapsto \|v\|_L^2$ are convex (the first is actually affine). However, if we want to substract terms in this way, we lose convexity in $L$, because the mapping $L \mapsto -\|v\|_L^2$ is no longer convex. In contrast, the mapping $M \mapsto -\|v\|_M^2$ is still affine and, therefore, convex.
\end{example}

\begin{example} \label{ex:rank_not_convex}
  Rank constraints are not convex, and therefore we may not dispose of a projection onto the set corresponding to those constraints, unless we learn the mapping (parameterized by $L$) directly to the space with the desired dimension, as explained before. For example, if we consider the set $C = \{M \in S_2(\R)^+_0 \colon r(A) \le 1 \}$, we get $A = \begin{pmatrix} 2 & 0 \\ 0 & 0 \end{pmatrix} \in C$ and $B = \begin{pmatrix} 0 & 0 \\ 0 & 2 \end{pmatrix} \in C$. However, $(1-\lambda)A+\lambda B = I \notin C$, for $\lambda=1/2$.
\end{example}

\subsection{Mathematical Foundations of Distance Metric Learning} \label{ssec:intromaths}

There are three main mathematical areas that support \ac{DML}: convex analysis, matrix analysis and information theory. The first provides the necessary tools so that many of the algorithms can address their optimization problems. Thanks to the second we can parameterize \ac{DML}, and in this way we can compute the different problems. It also provides us with some interesting results when it comes to solving certain problems related to dimensionality reduction. Finally, the third field provides us with concepts and tools that are very useful for designing algorithms that use probability distributions associated with the data.

\subsubsection{Convex Analysis}

Let us begin with convex analysis. One of the properties of convex sets that makes convex analysis of great interest in \ac{DML} is known as the \emph{convex projection theorem} (Theorem \ref{thm:convex_projection}), which ensures that for any non-empty convex closed set $K$ in $\mathbb{R}^d$ and for every point $x \in \mathbb{R}^d$ there is a single point $x_0 \in K$ for which the distance from $x$ to $K$ is the same to the distance from $x$ to $x_0$. That is, the distance from $x$ to $K$ is materialized in the point $x_0$, which is called the \emph{projection of $x$ to $K$}.

The existence of a projection mapping onto any closed and convex set in $\mathbb{R}^d$ is fundamental when optimizing convex functions with convex constraints, which are frequent, in particular, in many \ac{DML} algorithms. Let us first discuss optimization mechanisms when working with unconstrained differentiable functions, which, although they do not strictly take part in convex analysis, are also present in some \ac{DML} algorithms and are the basis for convex optimization mechanisms. In these cases, the most popular techniques are the well-known \emph{gradient descent methods}, which are iterative methods. The basic idea of gradient descent methods is to move in the direction of the gradient of the objective function in order to optimize it. We show in \ref{sssec:opt_methods} that, indeed, small displacements in the gradient direction guarantee the improvement of the objective function, proving the effectiveness of these methods.

Returning to the constrained case, we can see that gradient descent methods are no longer valid, since the displacement in the gradient direction can no longer fulfill the constraints. However, we will show that, if after the gradient step we project the obtained point onto the convex set determined by the constraints, the combination of both movements contributes to improving the value of the objective function as long as the initial gradient step is small enough. This extension of gradient descent to the constrained convex case is known as the \emph{projected gradient method}. There are also other approaches, such as the penalty methods, which allow these problems to be handled by transforming the constrained objective function into a new unconstrained objective function in which the violations of the previous constraints are converted into penalties that worsen the value of the new objective function \cite{yeniay2005penalty}. They will not usually, however, be preferred in the matrix problems we will be dealing with, as they may be computationally expensive and difficult to adapt \cite{yang2017richer}.

Finally, we must highlight other tools of interest for the optimization problems to be studied. Firstly, when working with convex problems with multiple constraints, the projections on each individual constraint are often known, but the projection onto the set determined by all the constraints is not. With the method known as the \emph{iterated projections method} (see Appendix \ref{sssec:opt_methods}) we can approach this projection by subsequently projecting onto each of the individual constraints until convergence (which is guaranteed) is obtained. Lastly, convex functions that are not differentiable everywhere can still be optimized following the approaches discussed here, as they admit sub-gradients at every point. Sub-gradient descent methods (see Appendix \ref{sssec:opt_methods}) can work in the same way as gradient descent methods and can therefore be applied with convex functions that may not be differentiable at some points.

\subsubsection{Matrix Analysis}

As we have already seen, matrices are a key element in \ac{DML}. There are several results that are essential for the development of the \ac{DML} theory and its algorithms. The first of these is the \emph{matrix decomposition theorem} (Theorem \ref{thm:decomposition_llt}), which was already mentioned in Section \ref{ssec:dml_desc}. This theorem states that for any positive matrix $M \in S_d(\R)^+_0$ there is a matrix $L \in \mathcal{M}_d(\R)$ so that $M = L^TL$ and $L$ is unique except for an isometry. This result allows us to approach \ac{DML} from the two perspectives (learning the metric $M$ or learning the linear map $L$) already discussed in Section \ref{ssec:dml_desc}.

An important aspect when designing \ac{DML} algorithms is the geometric manipulation of the matrices (for learning both $M$ and $L$). Observe that to be able to talk about the convex analysis concepts discussed previously over the set of matrices, we first need to establish an inner product over them. The \emph{Frobenius inner product} allows us to identify matrices as vectors where we add the matrix rows one after the other, and then compute the usual vectorial inner product with these vectors. With the Frobenius product we convert the matrices set in a Hilbert space, and therefore can apply the convex analysis theory studied in the previous section.

Staying on this subject, we have to highlight a case study of particular interest. We will see many situations where we want to optimize a convex function defined on a matrix space, with the restriction that the variable is positive semidefinite. These optimization problems are convex and are usually called \emph{semidefinite programming} problems. We can optimize these objective functions using the projected gradient descent method. The \emph{semidefinite projection theorem} (Theorem \ref{thm:psd_projection}) states that we can compute the projection of a matrix onto the positive semidefinite cone by performing an eigenvalue decomposition, nullifying the negative eigenvalues and recomposing the matrix with the new eigenvalues. Therefore, we know how to project onto the constraint set and consequently we can apply the projected gradient method.

Finally, we will see that certain algorithms, especially those associated with dimensionality reduction, use optimization problems with similar structures. These problems involve one or more symmetric matrices and the objective function is obtained as a trace after performing certain operations with these matrices. This is, for instance, the case of the objective function of the well-known \emph{principal components analysis}, which can be written as
\begin{equation*}
  \begin{split}
      \max_{L \in \mathcal{M}_{d'\times d}(\R)} &\quad \tr\left(LAL^T\right)  \\
      \text{s.t.: } &\quad LL^T = I,
   \end{split}
\end{equation*}
where $A$ is a symmetric matrix of dimension $d$. These problems have the property that they can be optimized without using gradient methods, since an optimum can be built by taking the eigenvectors associated with the largest eigenvalues of the matrices involved in the problem (in the case described here, the eigenvectors of $A$). In Appendix \ref{ssec:matrix_opt} we present these problems in more detail and show how their solution is obtained.

\subsubsection{Information Theory}

Information theory is a branch of mathematics and computer theory with the purpose of establishing a rigorous measure to quantify the information and disorder found in a communication message. It has been applied in most science and engineering fields. In \ac{DML}, information theory is used in several algorithms to measure the closeness between probability distributions. Then, these algorithms try to find a distance metric for which these probability distributions are as close as possible or as far as possible, depending on what distributions are defined. The measures used in this area, unlike distances, only require the properties of non-negativity and coincidence, and are called \emph{divergences}. We will use two different divergences throughout this study:

\begin{itemize}
  \item The \emph{relative entropy} or the \emph{Kullback-Leibler divergence}, defined for probability distributions $p$ and $q$, and $X$ the random variable corresponding to $p$ as
    \[ \kl(p\|q) = \mathbb{E}_p\left[\log\frac{p(X)}{q(X)}\right]. \]
  \item The \emph{Jeffrey divergence} or the \emph{symmetric relative entropy}, defined for $p$, $q$ and $X$ in the same conditions as above, as
    \[\jf(p\|q) =  \kl(p\|q) + \kl(q\|p).\]
\end{itemize}

The key fact that makes divergences very useful in \ac{DML} is that, when the distributions involved are multivariate gaussian, these divergences can be expressed in terms of matrix divergences, which give rise to problems that can be dealt with quite effectively using the tools described in this section. In Appendix \ref{ssec:information_theory} we present the matrix expressions obtained for the Kullback-Leibler and the Jeffrey divergences for the most remarkable cases.

\subsection{Use Cases in Machine Learning} \label{ssec:dmlinml}

This section describes some of the most prominent uses of \ac{DML} in machine learning, illustrated with several examples.
\begin{itemize}
  \item \textbf{Improve the performance of distance-based classifiers.} This is one of the main purposes of \ac{DML}. Using such learning, a distance that fits well with the dataset and the classifier can be found, improving the performance of the classifier \cite{lmnn,nca}. An example is shown in Figure \ref{fig:improve_knn}.

  \begin{figure}[htbp]
    \centering
    \makebox[\textwidth][c]{\includegraphics[width=21cm]{./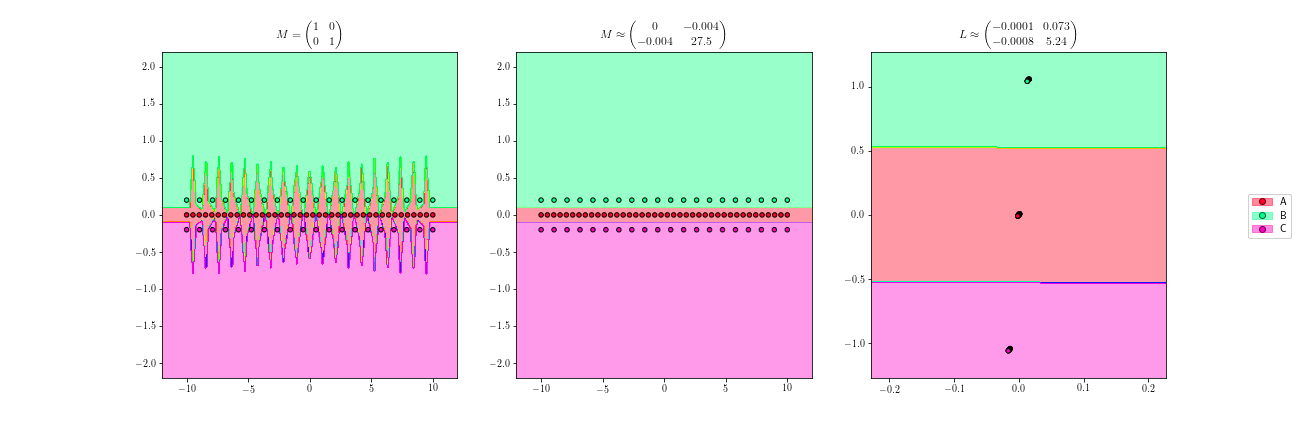}}%
    \caption{Suppose we have a dataset in the plane, where data can belong to three different classes, whose regions are defined by parallel lines. Suppose that we want to classify new samples using the one nearest neighbor classifier. If we use Euclidean distance, we would obtain the classification regions shown in the image on the left, because there is a greater separation between each sample in class B and class C than there is between the regions. However, if we learn an adequate distance and try to classify with the nearest neighbor classifier again, we obtain much more effective classification regions, as shown in the center image. Finally, as we have seen, learning a metric is equivalent to learning a linear map and to use Euclidean distance in the transformed space. This is shown in the right figure. We can also observe that data are being projected, except for precision errors, onto a line, thus we are also reducing the dimensionality of the dataset. } \label{fig:improve_knn}
  \end{figure}

  \item \textbf{Dimensionality reduction.} As we have already commented, learning a low-rank metric implies a dimensionality reduction on the dataset we are working with. This dimensionality reduction provides numerous advantages, such as a reduction in the computational cost, both in space and time, of the algorithms that will be used later, or the removal of the possible noise introduced when picking up the data. In addition, some distance-based classifiers are exposed to a problem called \emph{curse of dimensionality} (see, for example, \cite{understandingml}, sec.~19.2.2). By reducing the dimension of the dataset, this problem also becomes less serious. Finally, if deemed necessary, projections onto dimension 1, 2 and 3 would allow us to obtain visual representations of our data, as shown in Figure \ref{fig:reduc_dim}. In general, many real-world problems arise with a high dimensionality, and need a dimensionality reduction to be handled properly. 

  \begin{figure}[htbp]
    \centering
    \includegraphics[width=0.75\textwidth]{./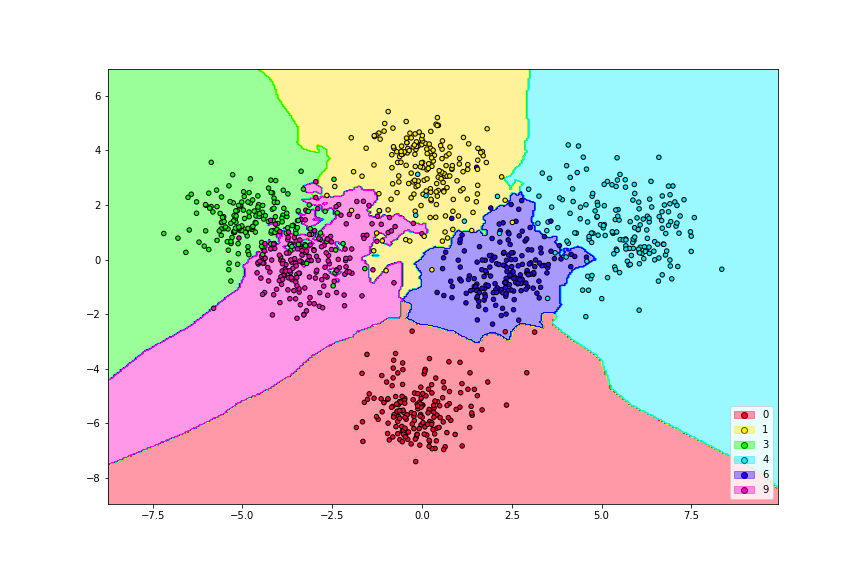}
    \caption{'Digits' dataset consists of 1797 examples. Each of them consists of a vector with 64 attributes, representing intensity values on an 8x8 image. The examples belong to 10 different classes, each of them representing the numbers from 0 to 9. By learning an appropriate transformation we are able to project most of the classes on the plane, so that we can clearly perceive the differentiated regions associated with each of the classes.} \label{fig:reduc_dim}
  \end{figure}

  \item \textbf{Axes selection and data rearrangement.} Closely related to dimensionality reduction, this application is a result of algorithms that learn transformations which allow the coordinate axes to be moved (or selected according to the dimension), so that in the new coordinate system the vectors concentrate certain measures of information on their first components \cite{kokiopoulou2011trace}. An example is shown in Figure \ref{fig:axes_move}.

  \begin{figure}[htbp]
    \centering
    \makebox[\textwidth][c]{\includegraphics[width=20cm]{./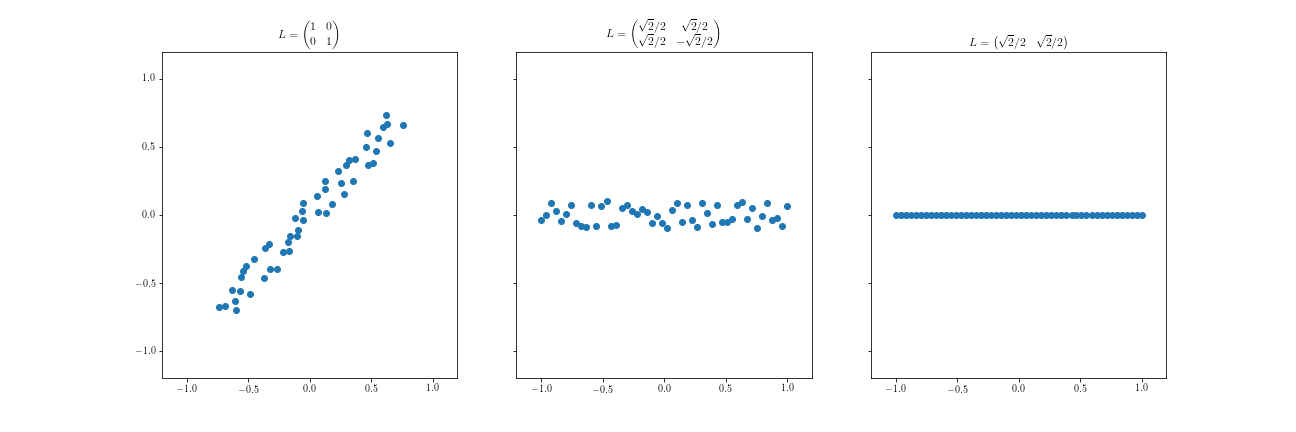}}
    \caption{The dataset in the left figure seems to concentrate most of its information on the diagonal line that links the lower left and upper right corners. By learning an appropriate transformation, we can get that direction to fall on the horizontal axis, as shown in the center image. As a result, the first coordinate of the vectors in this new basis concentrates a large part of the variability of the vector. In addition, it seems reasonable to think that the values introduced by the vertical coordinate might be due to noise, and so, we can even just keep the first component, as shown in the right image.} \label{fig:axes_move}
  \end{figure}

  \item \textbf{Improve the performance of clustering algorithms.} Many of the clustering algorithms use a distance to measure the closeness between data, and thus establish the clusters so that data in the same cluster are considered close for that distance. Sometimes, although we do not know the ideal groupings of the data or the number of clusters to establish, we can know that certain pairs of points must be in the same cluster and that other specific pairs must be in different clusters \cite{lsi}. This happens in numerous problems, for example, when clustering web documents \cite{aggarwal2012text}. These documents have a lot of additional information, such as links between documents, which can be included as similarity constraints. Many clustering algorithms are particularly sensitive to the distance used, although many also depend heavily on the parameters with which they are initialized \cite{bradley1998refining,probst2019tunability}. It is therefore important to seek a balance or an appropriate aggregation between these two components. In any case, the parameter initialization is beyond the scope of this paper.

  \item \textbf{Semi-supervised learning.} Semi-supervised learning is a learning model in which there is one set of labeled data and another set (generally much larger) of unlabeled data. Both datasets are intended to learn a model that allows new data to be labeled. Semi-supervised learning arises from the fact that in many situations collecting unlabeled data is relatively straightforward, but assigning labels can require a supervisor to assign them manually, which may not be feasible. In contrast, when a lot of unlabeled data is used along with a small amount of labeled data, it is possible to improve learning outcomes considerably, as exemplified in Figure \ref{fig:ssl}. Many of these techniques consist of constructing a graph with weighted edges from the data, where the value of the edges depends on the distances between the data. From this graph we try to infer the labels of the whole dataset, using different propagation algorithms \cite{ssl2}. In the construction of the graph, the choice of a suitable distance is important, thus \ac{DML} comes into play \cite{idml}.

  \begin{figure}[htbp]
    \centering
    \includegraphics[width=\textwidth]{./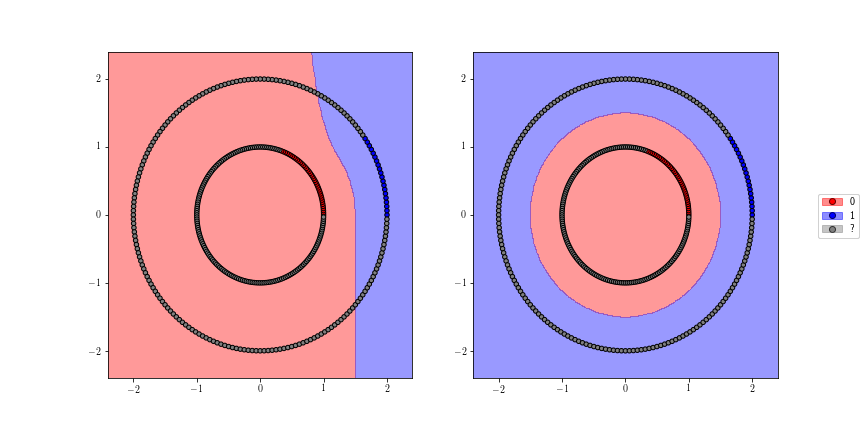}
    \caption{Learning with only supervised information (left) versus learning with all unsupervised information (right).} \label{fig:ssl}
  \end{figure}
\end{itemize}

From the applications we have seen, we can conclude that \ac{DML} can be viewed as a preprocessing step for many distance-based learning algorithms. The algorithms analyzed in our work focus on the first three applications of the above enumeration. It should be noted that, although the fields above are those where \ac{DML} has traditionally been used, today, new prospects and challenges for \ac{DML} are being considered. They will be discussed in Section \ref{sec:future}.

\section{Algorithms for Distance Metric Learning} \label{sec:algs}

This section introduces some of the most popular techniques currently being used in supervised \ac{DML}. Due to space issues, the section will give a brief description of each of the algorithms, while a detailed description can be found in Appendix \ref{app:algs}, where the problems the algorithms try to optimize are analyzed, together with their mathematical foundations and the techniques used to solve them.

Table \ref{tbl:algs_intro} shows the algorithms studied throughout this work, including name, references and a short description. These algorithms will be empirically analyzed in the next section. This study is not intended to be exhaustive and therefore only some of the most popular algorithms have been selected for the theoretical study and the subsequent experimental analysis.

\begin{table}
  \resizebox{\linewidth}{!}{%
    \begin{tabular}{p{2cm}p{4cm}p{2cm}p{8cm}}
    \toprule
      Name & References & Appendix section & Description \\
    \midrule
      \acs{PCA} & \citet{pca} & \ref{desc:pca} & A dimensionality reduction technique that obtains directions maximizing variance. Although not supervised, it is important to allow dimensionality reduction in other algorithms that are not able to do so on their own. \\

      \acs{LDA} & \citet{lda} & \ref{desc:lda} & A dimensionality reduction technique that obtains direction maximizing a ratio involving \emph{between-class} variances and \emph{within-class} variances. \\

      \acs{ANMM} & \citet{anmm} & \ref{desc:anmm} & A dimensionality reduction technique that aims at optimizing an average neighborhood margin between same-class neighbors and different-class neighbors, for each of the samples. \\

      \acs{LMNN} & \citet{lmnn} & \ref{desc:lmnn} & An algorithm aimed at improving the accuracy of the $k$-neighbors classifier. It tries to optimize a two-term error function that penalizes, on the one hand, large distance between each sample and its \emph{target neighbors}, and on the other hand, small distances between different-class samples.\\

      \acs{NCA} & \citet{nca} & \ref{desc:nca} & An algorithm aimed at improving the accuracy of the $k$-neighbors classifier, by optimizing the expected \emph{leave-one-out} accuracy for the nearest neighbor classification. \\

      \acs{NCMML} & \citet{ncmml} & \ref{desc:ncmml} & An algorithm aimed at improving the accuracy of the \emph{nearest class mean} classifier, by optimizing a log-likelihood for the labeled data in the training set. \\

      \acs{NCMC} & \citet{ncmml} & \ref{desc:ncmc} & A generalization of the \acs{NCMML} algorithm aimed at improving nearest centroids classifiers that allow multiple centroids per class. \\

      \acs{ITML} & \citet{itml} & \ref{desc:itml} & An information theory based technique that aims at minimizing the Kullback-Leibler divergence with respect to an initial gaussian distribution, but while keeping certain similarity constraints between data. \\

      \acs{DMLMJ} & \citet{dmlmj} & \ref{desc:dmlmj} & An information theory based technique that aims at maximizing the Jeffrey divergence between two distributions, associated to similar and dissimilar points, respectively. \\

      \acs{MCML} & \citet{mcml} & \ref{desc:mcml} & An information theory based technique that tries to collapse same-class points in a single point, as far as possible from the other classes collapsing points. \\

      \acs{LSI} & \citet{lsi} & \ref{desc:lsi} & A \ac{DML} algorithm that globally minimizes the distances between same-class points, while fulfilling minimum-distance constraints for different-class points. \\

      \acs{DML-eig} & \citet{dmleig} & \ref{desc:dmleig} & A \ac{DML} algorithm similar to \acs{LSI} that offers a different resolution method based on eigenvalue optimization. \\

      \acs{LDML} & \citet{ldml} & \ref{desc:ldml} & A probabilistic approach for \ac{DML} based on the logistic function. \\

      \acs{KLMNN} & \citet{lmnn}; \citet{klmnn} & \ref{desc:klmnn} & The kernel version of \acs{LMNN}. \\

      \acs{KANMM} & \citet{anmm} & \ref{desc:kanmm} & The kernel version of \acs{ANMM}. \\

      \acs{KDMLMJ} & \citet{dmlmj} & \ref{desc:kdmlmj} & The kernel version of \acs{DMLMJ}. \\

      \acs{KDA} & \citet{kda} & \ref{desc:kda} & The kernel version of \acs{LDA}. \\

      \bottomrule
    \end{tabular}
    }
  \caption{Description of the \ac{DML} algorithms analyzed in this study. \label{tbl:algs_intro}}
\end{table}

We will now provide a brief introduction to these algorithms. According to the main purpose of each algorithm, we can group them into different categories: dimensionality reduction techniques (Section \ref{sssec:dim_red}), algorithms directed at improving the nearest neighbors classifiers (Section \ref{sssec:algs_knn}), algorithms directed at improving the nearest centroid classifiers (Section \ref{sssec:algs_ncm}), or algorithms based on information theory (Section \ref{sssec:algs_information_theory}). These categories are not necessarily exclusive, but we have considered each of the algorithms in the category associated with their dominant purpose. We also introduce, in Section \ref{sssec:other_dmls} several algorithms with less specific goals, and finally, in Section \ref{sssec:kernel_dml}, the kernel versions for some of the algorithms studied.

We will explain the problems that each of these techniques try to solve. For a more detailed description of each algorithm, the reader can refer to the corresponding section of the appendix as shown in Table \ref{tbl:algs_intro}.

\subsection{Dimensionality Reduction Techniques} \label{sssec:dim_red}

Dimensionality reduction techniques try to learn a distance by searching a linear transformation from the dataset space to a lower dimensional Euclidean space. We will describe the algorithms \acs{PCA} \cite{pca}, \acs{LDA} \cite{lda} and  \acs{ANMM} \cite{anmm}.

\subsubsection{\acf{PCA}}

\acs{PCA} \cite{pca} is one of the most popular dimensionality reduction techniques in unsupervised \ac{DML}. Although \acs{PCA} is an unsupervised learning algorithm, it is necessary to talk about it in our paper, firstly because of its great relevance, and more particularly, because when a supervised \ac{DML} algorithm does not allow a dimensionality reduction, \acs{PCA} can be first applied to the data in order to be able to use the algorithm later in the lower dimensional space.

The purpose of \acs{PCA} is to learn a linear transformation from the original space $\mathbb{R}^d$ to a lower dimensional space $\mathbb{R}^{d'}$ for which the loss when recomposing the data in the original space is minimized. This has been proven to be equivalent to iteratively finding orthogonal directions for which the projected variance of the dataset is maximized. The linear transformation is then the projection onto these directions. The optimization problem can be formulated as

\begin{equation*}
    \max_{\substack{L \in \mathcal{M}_{d'\times d}(\R) \\LL^T = I}} \quad \tr\left(L \Sigma L^T\right),
\end{equation*}

where $\Sigma$ is, except for a constant, the covariance matrix of $\mathcal{X}$, and $\tr$ is the trace operator. The solution to this problem can be obtained by taking as the rows of $L$ the eigenvectors of $\Sigma$ associated with its largest eigenvalues.

\subsubsection{\acf{LDA}}

\acs{LDA} \cite{lda} is a classical \ac{DML} technique with the purpose of learning a projection matrix that maximizes the separation between classes in the projected space using \emph{within-class} and \emph{between-class} variances. It follows a scheme similar to the one proposed by \acs{PCA}, but in this case it takes the supervised information provided by the labels into account.

The optimization problem of \acs{LDA} is formulated as
\begin{equation*}
    \max_{\substack{L \in \mathcal{M}_{d'\times d}(\R) }} \quad \tr\left((LS_wL^T)^{-1}(L S_b L^T)\right),
\end{equation*}
where $S_b$ and $S_w$ are, respectively, the between-class and within-class \emph{scatter matrices}, which are defined as
\begin{align*}
   S_b &= \sum_{c \in \mathcal{C}} N_c(\mu_c - \mu)(\mu_c - \mu)^T, \\
   S_w &= \sum_{c \in \mathcal{C}} \sum_{i \in \mathcal{C}_c}(x_i- \mu_c)(x_i - \mu_c)^T,
\end{align*}
where $\mathcal{C}$ is the set of all the labels, $\mathcal{C}_c$ is the set of indices $i$ for which $y_i = c \in \mathcal{C}$, $N_c$ is the number of samples in $\mathcal{X}$ with class $c$, $\mu_c$ is the mean of the training samples in class $c$ and $\mu$ is the mean of the whole training set. The solution to this problem can be found by taking the eigenvectors of $S_w^{-1}S_b$ associated with its largest eigenvalues to be the rows of $L$.

\subsubsection{\acf{ANMM}}

\acs{ANMM} \cite{anmm} is another \ac{DML} technique specifically oriented to dimensionality reduction that tries to solve some of the limitations of \acs{PCA} and \acs{LDA}. 

The objective of \acs{ANMM} is to learn a linear transformation $L \in \mathcal{M}_{d' \times d}(\R)$, with $d' \le d$ that maximizes an \emph{average neighborhood margin} defined, for each sample, by the difference between its average distance to its nearest neighbors of different class and the average distance to its nearest neighbors of same class.

If we consider the set $\mathcal{N}_i^o$ of the $\xi$ samples in $\mathcal{X}$ nearest to $x_i$ and with the same class as $x_i$, and the set $\mathcal{N}_i^e$ of the $\zeta$ samples in $\mathcal{X}$ nearest to $x_i$ and with a different class to $x_i$, we can express the global average neighborhood margin, for the distance defined by $L$, as

\[\gamma^L = \sum_{i=1}^N\left( \sum\limits_{k \colon x_k \in \mathcal{N}_i^e} \frac{\|Lx_i - Lx_k \|^2}{|\mathcal{N}_i^e|}- \sum\limits_{j \colon x_j \in \mathcal{N}_i^o} \frac{\|Lx_i - Lx_j \|^2}{|\mathcal{N}_i^o|} \right).\]

In this expression, each summand is associated with each sample $x_i$ in the training set, and the positive term inside each summand represents the average distance to its $\zeta$ nearest neighbors of different classes, while the negative term represents the average distance to its $\xi$ nearest neighbors of the same class. Therefore, these differences constitute the average neighborhood margins for each sample $x_i$. The global margin $\gamma^L$ can be expressed in terms of a scatterness matrix containing the information related to different-class neighbors, and a compactness matrix that stores the information corresponding to the same-class neighbors, that is,

\begin{equation*}
    \gamma^L = \tr(L(S-C)L^T),
\end{equation*}

where $S$ and $C$ are, respectively, the \emph{scatterness} and \emph{compactness} matrices, defined as
\begin{align*}
S &= \sum_{i}\sum_{k\colon x_k \in \mathcal{N}_i^e}\frac{(x_i-x_k)(x_i-x_k)^T}{|\mathcal{N}_i^e|} \\
C &= \sum_{i}\sum_{j\colon x_j \in \mathcal{N}_i^o}\frac{(x_i-x_j)(x_i-x_j)^T}{|\mathcal{N}_i^o|}.
\end{align*}

If we impose the scaling restriction $LL^T = I$ (scaling would increase the average neighborhood margin indefinitely), the average neighborhood margin can be maximized by taking the eigenvectors of $S-C$ associated with its largest eigenvalues to be the rows of $L$.

\subsection{Algorithms to Improve Nearest Neighbors Classifiers} \label{sssec:algs_knn}

One of the main applications of \ac{DML} is to improve other distance based learning algorithms. Since the nearest neighbors classifier is one of the most popular distance based classifiers many \ac{DML} algorithms are designed to improve this classifier, as is the case with \acs{LMNN} \cite{lmnn} and \acs{NCA} \cite{nca}.

\subsubsection{\acf{LMNN}} \label{dml:lmnn}

\acs{LMNN} \cite{lmnn} is a \ac{DML} algorithm aimed specifically at improving the accuracy of the $k$-nearest neighbors classifier. 

\acs{LMNN} tries to bring each sample as close as possible to its \emph{target neighbors}, which are $k$ pre-selected same-class samples requested to become the nearest neighbors of the sample, while trying to prevent samples from other classes from invading a margin defined by those target neighbors. This setup allows the algorithm to locally separate the classes in an optimal way for $k$-neighbors classification.

Assuming the sets of target neighbors are chosen (usually they are taken as the nearest neighbors for Euclidean distance), the error function that \acs{LMNN} minimizes is a two-term function. The first term is the target neighbors pulling term, given by
\[ \varepsilon_{pull}(M) = \sum_{i=1}^N \sum_{j \istargetof i} d_M(x_i, x_j)^2, \]
where $d_M$ is the Mahalanobis distance corresponding to $M \in S_d(\R)^+_0$ and $j \istargetof i$ iff $x_j$ is a target neighbor of $x_i$. The second term is the \emph{impostors} pushing term, given by
\[ \varepsilon_{push}(M) = \sum_{i=1}^N\sum_{j \istargetof i}\sum_{l=1}^N(1-y_{il})[1 + d_M(x_i,x_j)^2 - d_M(x_i, x_l)^2]_+,\]
where $y_{il} = 1$ if $y_i = y_l$ and $0$ otherwise, and $[\cdot]_+$ is defined as $[z]_+ = \max\{z, 0\}$. Finally, the objective function is given by
\[ \varepsilon(M) = (1-\mu)\varepsilon_{pull}(M) + \mu\varepsilon_{push}(M), \quad \mu \in ]0,1[.\]
This function can be optimized using semidefinite programming. It is possible to optimize this function in terms of $L$, using gradient methods, as well. By optimizing in terms of $M$ we gain convexity in the problem, while by optimizing in terms of $L$ we can use the algorithm to force a dimensionality reduction.

\subsubsection{\acf{NCA}}

\acs{NCA} \cite{nca} is another \ac{DML} algorithm aimed specifically at improving the accuracy of the nearest neighbors classifiers. It is designed is to learn a linear transformation with the goal of minimizing the leave-one-out error expected by the nearest neighbor classification.

To do this, we define the probability that a sample $x_i \in \mathcal{X}$ has $x_j \in \mathcal{X}$ as its nearest neighbor for the distance defined by $L \in \mathcal{M}_d(\R)$, $p_{ij}^L$, as the softmax
\begin{equation*}
    \begin{split}
    p_{ij}^L = \frac{\exp\left( - \|Lx_i - Lx_j \|^2 \right)}{\sum\limits_{k \ne i} \exp\left(-\|Lx_i - Lx_k \|^2\right)}\ \ (j \ne i),  
    \end{split}
    \quad\quad
    \begin{split}
    p_{ii}^L = 0.
    \end{split}
\end{equation*}

The expected number of correctly classified samples according to this probability is obtained as
\begin{equation*}
    f(L) = \sum_{i=1}^N \sum_{j \in C_i} p_{ij}^L,
\end{equation*}
where $C_i$ is the set of indices $j$ so that $y_j = y_i$. The function $f$ can be maximized using gradient methods, and the distance resulting from this optimization is the one that minimizes the expected leave-one-out error, and therefore, the one that \acs{NCA} learns.

\subsection{Algorithms to Improve Nearest Centroids Classifiers} \label{sssec:algs_ncm}

Apart from the nearest neighbors classifiers, other distance-based classifiers of interest are the so-called nearest centroid classifiers. These classifiers obtain a set of centroids for each class and classify a new sample by considering the nearest centroids to the sample. There are also \ac{DML} algorithms designed for these classifiers, as is the case for \acs{NCMML} and \acs{NCMC} \cite{ncmml}.

\subsubsection{\acf{NCMML}}

\acs{NCMML} \cite{ncmml} is a \ac{DML} algorithm specifically designed to improve the \ac{NCM} classifier. To do this, it uses a probabilistic approach similar to that used by \acs{NCA} to improve the accuracy of the nearest neighbors classifier.

In this case, we define the probability that a sample $x_i \in \mathcal{X}$ will be labeled with the class $c$, according to the nearest class mean criterion, for the distance defined by $L \in \mathcal{M}_{d'\times d}(\R)$, as

\begin{equation*}
    p_L(c|x) = \frac{\exp\left(-\frac{1}{2} \|L(x - \mu_c)\|^2\right)}{\sum\limits_{c' \in \mathcal{C}} \exp\left(-\frac{1}{2} \|L(x - \mu_{c'})\|^2\right)},
\end{equation*} 
where $\mathcal{C}$ is the set of all the classes and $\mu_c$ is the mean of the training samples with class $c$. The objective function that \acs{NCMML} tries to maximize is the log-likelihood for the labeled data in the training set, according to the probability defined above, that is,
\begin{equation*}
\mathcal{L}(L) = \frac{1}{N}\sum_{i=1}^N\log p_L(y_i|x_i).
\end{equation*} 
This function can be optimized using gradient methods.

\subsubsection{\acf{NCMC}}

\acs{NCMC} is the generalization of the nearest class mean classifier. In this classifier, a set with an arbitrary number of centroids is calculated for each class, using a clustering algorithm. Then, a new sample is classified by assigning the label of its nearest centroid.

An immediate generalization of \acs{NCMML} allows us to learn a distance directed at improving \acs{NCMC}. This \ac{DML} algorithm is also referred to as \acs{NCMC}. In this case, instead of the class means, we have a set of centroids $\{m_{c_j}\}_{j=1}^{k_c}$, for each class $c \in \mathcal{C}$. The generalized probability that a sample $x_i \in \mathcal{X}$ will be labeled with the class $c$ is now given by $p_L(c|x) = \sum_{j=1}^{k_c} p_L(m_{c_j}|x)$, where $p_L(m_{c_j}|x)$ are the probabilities that $m_{c_j}$ is the closest centroid to $x$, and is given by
\begin{equation*}
    p_L(m_{c_j}|x) = \frac{\exp\left( -\frac{1}{2} \|L(x-m_{c_j})\|^2 \right)}{ \sum\limits_{c \in \mathcal{C}} \sum\limits_{i=1}^{k_c} \exp\left( -\frac{1}{2} \|L(x-m_{c_i})\|^2 \right) }.
\end{equation*}

Again, \acs{NCMC} maximizes the log-likelihood function $\mathcal{L}(L) = \frac{1}{N}\sum_{i=1}^N p_L(y_i|x_i)$ using gradient methods.

\subsection{Information Theory Based Algorithms} \label{sssec:algs_information_theory}

Several \ac{DML} algorithms rely on information theory to learn their corresponding distances. The information theory concepts used in the algorithms we will introduce below are described in Appendix \ref{ssec:information_theory}. These algorithms have similar working schemes. First, they establish different probability distributions on the data, and then they try to bring these distributions closer or further away using divergences. The information theory based algorithms we will study are \acs{ITML} \cite{itml}, \acs{DMLMJ} \cite{dmlmj} and \acs{MCML} \cite{mcml}.

\subsubsection{\acf{ITML}}

\acs{ITML} \cite{itml} is a \ac{DML} technique intended to find a distance metric as close as possible to an initial pre-defined distance, on which similarity and dissimilarity constraints for same-class and different-class samples are satisfied. This approach tries to preserve the properties of the original distance while adapting it to our dataset thanks to the restrictions it adds.

We will denote the positive definite matrix associated with the initial distance as $M_0$. Given any positive definite matrix $M \in S_d(\R)^+$ and a fixed mean vector $\mu$, we can construct a normal distribution $p(x|M)$ with mean $\mu$ and covariance $M$. \acs{ITML} tries to minimize the Kullback-Leibler divergence between $p(x|M_0)$ and $p(x|M)$, subject to several similarity constraints on the data, that is

\begin{equation*}
    \begin{split}
    \min_{M \in S_d(\R)^+} &\quad \kl(p(x|M_0)\|p(x|M))  \\
    \text{s.t.: } &\quad d_M(x_i,x_j) \le u, \quad (i,j) \in S \\
                  &\quad d_M(x_i,x_j) \ge l, \quad (i,j) \in D,
    \end{split}
\end{equation*} 
where $S$ and $D$ are sets of pairs of indices on the elements of $\mathcal{X}$ that represent the samples considered similar and not similar, respectively (normally, same-labeled pairs and different-labeled pairs), and $u$ and $l$ are, respectively, upper and lower bounds for the similarity and dissimilarity constraints. This problem can be optimized using gradient methods combined with iterated projections in order to fulfill the constraints.

\subsubsection{\acf{DMLMJ}}

\acs{DMLMJ} \cite{dmlmj} is another \ac{DML} technique based on information theory that tries to separate, with respect to the Jeffrey divergence, two probability distributions, the first associated with similar points while the second is associated with dissimilar points. 

\acs{DMLMJ} defines two difference spaces: a $k$-\emph{positive difference space} that contains the differences between each sample in the dataset and its $k$-nearest neighbors from the same class, and a $k$-\emph{negative difference space} that contains the differences between each sample and its $k$-nearest neighbors from different classes. Over these spaces, for a distance determined by a linear transformation $L \in \mathcal{M}_{d' \times d}(\R)$, two gaussian distributions $P_L$ and $Q_L$ with equal mean are assumed. Then, the problem that \acs{DMLMJ} optimizes is

\[ \max_{L \in \mathcal{M}_{d'\times d}(\R)} \quad f(L) =  \jf(P_L\|Q_L) = \kl(P_L\|Q_L) + \kl(Q_L\|P_L).\]

This problem can be transformed into a trace optimization problem similar to those of \acs{PCA} and \acs{LDA}, and can also be solved by taking eigenvectors from the covariance matrices involved in the problem.

\subsubsection{\acf{MCML}}

\acs{MCML} \cite{mcml} is another \ac{DML} technique based on information theory. The key idea of this algorithm is the fact that we would obtain an ideal class separation if we could project all the samples from the same class on a same point, far enough away from the points on which the rest of the classes would be projected.

In order to try to achieve this, \acs{MCML} defines a probability that a sample $x_j$ will be classified with the same label as $x_i$, with the distance given by a positive semidefinite matrix $M \in S_d(\R)^+_0$, $p^M(j|i)$, as the softmax
\begin{equation*}
    p^{M}(j|i) = \frac{\exp(-d_M(x_i, x_j)^2)}{\sum\limits_{k\ne i} \exp(-d_M(x_i,x_k)^2)}.
\end{equation*}

Then, it also defines a probability $p_0(j|i)$ for the ideal situation in which all the same-class samples collapse into the same point, far enough away from the collapsing points of the other classes, given by
\begin{equation*}
    p_0(j|i) \propto \begin{cases}1, &\quad y_i = y_j \\ 0, &\quad y_i \ne y_j\end{cases}.
\end{equation*}

\acs{MCML} tries to bring $p^M(\cdot|i)$ as close to the ideal $p^0(\cdot|i)$ as possible, for each $i$, using the Kullback-Leibler divergence between them. Therefore, the optimization problem is formulated as
\begin{equation*}
    \min_{M \in S_d(\R)^+_0}\quad f(M) = \sum_{i=1}^N \kl \left[ p_0(\cdot|i) \| p^M(\cdot|i) \right].
\end{equation*} 

This function can be minimized using semidefinite programming.

\subsection{Other Distance Metric Learning Techniques} \label{sssec:other_dmls}

In this section we will study some different proposals for \ac{DML} techniques. The algorithms we will analyze are \acs{LSI} \cite{lsi}, \acs{DML-eig} \cite{dmleig} and \acs{LDML} \cite{ldml}.

\subsubsection{\acf{LSI}}

\acs{LSI} \cite{lsi}, also sometimes referred to as \acf{MMC} is possibly one of the first algorithms that has helped make the concept of \ac{DML} more well known. This algorithm is a global approach that tries to bring same-class data closer together while keeping data from different classes far enough apart.

Assuming that the sets $S$ and $D$ represent, respectively, pairs of samples that should be considered similar or dissimilar (i.e. samples that belong to the same class or to different classes, respectively), \acs{LSI} looks for a positive semidefinite matrix $M \in S_d(\R)^+_0$ that optimizes the following problem:

\begin{equation*}
\begin{split}
    \min_{M} &\quad \sum_{(x_i,x_j)\in S}  d_M(x_i, x_j)^2 \\
    \text{s.t.: } &\quad \sum_{(x_i,x_j) \in D} d_M(x_i, x_j) \ge 1 \\
                  &\quad M \in S_d(\R)^+_0.
\end{split}
\end{equation*}

This problem can be optimized using gradient descent together with iterated projections in order to fulfill the constraints.

\subsubsection{\acf{DML-eig}}

\acs{DML-eig} \cite{dmleig} is a \ac{DML} algorithm inspired by the \acs{LSI} algorithm of the previous section, proposing a very similar optimization problem but offering a completely different resolution method, based on eigenvalue optimization.

We will once again consider the two sets $S$ and $D$, of pairs of samples that are considered similar and dissimilar, respectively. \acs{DML-eig} proposes an optimization problem that slightly differs from that proposed by the \acs{LSI} algorithm, given by

\begin{equation*}
\begin{split}
    \max_{M} &\quad \min_{(x_i,x_j)\in D}  d_M(x_i, x_j)^2 \\
    \text{s.t.: } &\quad \sum_{(x_i,x_j) \in S} d_M(x_i,x_j)^2 \le 1 \\
                  &\quad M \in S_d(\R)^+_0.
\end{split}
\end{equation*}

This problem can be transformed into a minimization problem for the largest eigenvalue of a symmetric matrix. This is a well-known problem and there are  some iterative methods that allow this minimum to be reached \cite{overton1988minimizing}.

\subsubsection{\acf{LDML}}

\acs{LDML} \cite{ldml} is a \ac{DML} algorithm in which the optimization model makes use of the logistic function.

Recall that the \emph{logistic} or \emph{sigmoid} function is the map $\sigma \colon \R \to \R$ given by
\[ \sigma(x) = \frac{1}{1+e^{-x}}. \]
In \acs{LDML}, the logistic function is used to define a probability, which will assign the greater probability the smaller the distance between points. Given a positive semidefinite matrix $M \in S_d(\R)^+_0$, this probability is expressed as
\begin{equation*}
    p_{ij,M} = \sigma(b - \|x_i-x_j\|_M^2),
\end{equation*} 
where $b$ is a positive threshold value that will determine the maximum value achievable by the logistic function, and that can be estimated by cross validation. \acs{LDML} tries to maximize the log-likelihood given by
\begin{equation*}
    \mathcal{L}(M) = \sum_{i,j=1}^N y_{ij}\log p_{ij,M} + (1-y_{ij})\log(1-p_{ij,M}),
\end{equation*}
where $y_{ij}$ is a binary variable that takes the value 1 if $y_i = y_j$ and 0 otherwise. This function can be optimized using semidefinite programming.

\subsection{Kernel Distance Metric Learning} \label{sssec:kernel_dml}

Kernel methods constitute a paradigm within machine learning that is very useful in many of the problems addressed in this discipline. They usually arise in problems where the learning algorithm capability is reduced, typically due to the shape of the dataset. A classic learning algorithm where the kernel trick is very useful is the \emph{\ac{SVM}} classifier \cite{burges1998tutorial}. An example for this case is given in Figure \ref{fig:ker_svm}.

\begin{figure}[htbp]
    \centering
    \subfloat{\includegraphics[width=0.4\textwidth]{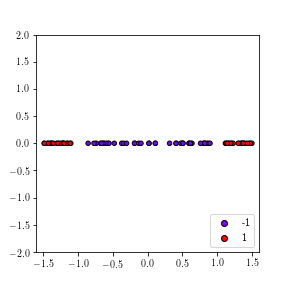}}
    \subfloat{\includegraphics[width=0.4\textwidth]{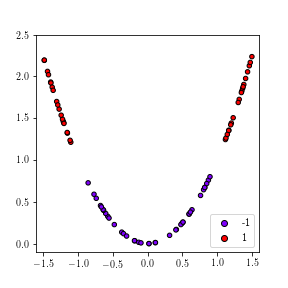}} \\
    \subfloat{\includegraphics[width=0.4\textwidth]{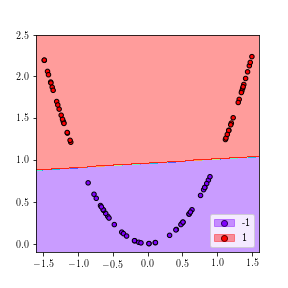}}
    \subfloat{\includegraphics[width=0.4\textwidth]{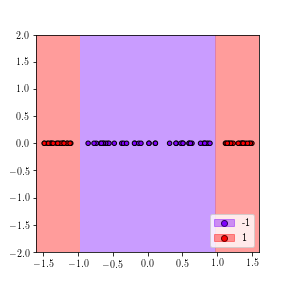}}

    \caption{\ac{SVM} and kernel trick. This binary classifier looks for the hyperplane that best separates both classes. Therefore, it is highly limited when the dataset is not separable by hyperplanes, as is the case for the dataset in the upper-left image. A solution consists of sending the data into a higher dimensional space, where data can be separated by hyperplanes, and apply the algorithm there, as it can be seen in the remaining images. The kernel trick allows us to execute the algorithm only in terms of the dot products of the samples in the new space, which makes it possible to work on very high dimensional spaces, or even infinite dimensional spaces. The existence of a \emph{representer theorem} for \ac{SVM} also allows the solution to be rewritten in terms of a vector with the size of the number of samples.} \label{fig:ker_svm}
\end{figure}

    In \ac{DML}, the usefulness of kernel learning is a consequence of the limitations given by the Mahalanobis distances. Although learned metrics can later be used with non-linear classifiers, such as the nearest neighbors classifier, the metrics themselves are determined by linear transformations, which, in turn, are determined by the image of a basis in the departure space, which results in the fact that we only have the freedom to choose the image of as much data as the dimension has the space, mapping the rest of the vectors by linearity. When the amount of data is much larger than the space dimension this can become a limitation.

    The kernel approach for \ac{DML} follows a similar scheme to that of \ac{SVM}. If we work with a dataset $\mathcal{X} = \{x_1,\dots,x_N\} \subset \R^d$, the idea is to send the data to a higher dimensional space, using a mapping $\phi \colon \R^d\to \mathcal{F}$, where $\mathcal{F}$ is a Hilbert space called the \emph{feature space}, and then to learn in the feature space using a \ac{DML} algorithm. The way we will learn a distance in the feature space will be via a continuous linear transformation $L \colon \mathcal{F} \to \R^{d'}$, where $d' \le d$ (observe that $L$ is not necessarily a matrix, since $\mathcal{F}$ is not necessarily finite dimensional), which we will also denote as $L \in \mathcal{L}(\mathcal{F},\R^{d'})$.

    As occurs with \ac{SVM}, a great inconvenience arises when sending the data to the feature space, and that is that the problem dimension can greatly increase, and therefore the application of the algorithms can be very expensive computationally. In addition, if we want to work in infinite dimensional feature spaces, it is impossible to deal with the data in this case, unless we turn to the kernel trick.

    We define the \emph{kernel function} as the mapping $K \colon \R^d\times \R^d \to \R$ given by $K(x,x') = \langle \phi(x),\phi(x')\rangle$. The success of kernel functions is due to the fact that many learning algorithms only need to know the dot products between the elements in the training set to be able to work. This will happen in the \ac{DML} algorithms we will study later. We can observe, as an example, that the calculation of Euclidean distances, which is essential in many \ac{DML} algorithms, can be made using only the kernel function. Indeed, for $x,x' \in \R^d$, we have
    \begin{equation} \label{eq:dist_features}
    \begin{split}
    \|\phi(x)-\phi(x')\|^2 &= \langle \phi(x)-\phi(x'), \phi(x) - \phi(x') \rangle \\
                              &= \langle \phi(x),\phi(x) \rangle - 2 \langle\phi(x), \phi(x') \rangle + \langle \phi(x'), \phi(x') \rangle \\
                              &= K(x,x) + K(x',x') -2K(x,x').
    \end{split}
    \end{equation}

    The next common problem for all the kernel-based \ac{DML} algorithms is how to deal with the learned transformation. Since we are trying to learn a map $L \in \mathcal{L}(\mathcal{F},\R^{d'})$, we may not be able to write it as a matrix, and when we can, this matrix may have dimensions that are too large. However, as $L$ is continuous and linear, using the Riesz representation theorem, we can rewrite $L$ as a vector of dot products by fixed vectors, that is, $L = (\langle \cdot, w_1 \rangle, \dots, \langle \cdot, w_{d'} \rangle)$, where $w_1,\dots,w_{d'} \in \mathcal{F}$. Furthermore, for the algorithms we will study, several \emph{representer theorems} are known \cite{neurocomputing,hofmann2008kernel,kda,dmlmj,kpca}. These theorems allow the vectors $w_i$ to be expressed as a linear combination of the samples in the feature space, that is, for each $i \in \{1,\dots,d'\}$, there is a vector $\alpha^i = (\alpha^i_1,\dots,\alpha^i_N) \in \R^N$ so that $w_i = \sum_{j=1}^N \alpha_j^i \phi(x_j)$. Consequently, we can see that
    \begin{equation} \label{eq:representer}
      L\phi(x) = A \begin{pmatrix} K(x_1,x) \\ \vdots \\ K(x_N,x) \end{pmatrix},
    \end{equation}
    where $A \in \mathcal{M}_{d'\times N}(\R)$ is given by $A_{ij} = \alpha_j^i$.

    Thanks to these theorems, we can address the problem computationally as long as we are able to calculate the coefficients of matrix $A$. When transforming a new sample it will suffice to construct the previous column matrix by evaluating the kernel function between the sample and each element in the training set, and then multiplying $A$ by this matrix. On a final note, when training it is useful to view the kernel map as a matrix $K \in S_N(\R)$, where $K_{ij} = K(x_i,x_j)$. A similar (in this case not necessarily square) matrix can be constructed when testing, with all the dot products between the train and test samples. By choosing the appropriate column of this matrix, we will be able to transform the corresponding test sample using Eq. \ref{eq:representer}.

    Each \ac{DML} technique that supports the use of kernels will use different tools for its performance, each one based on the original algorithms. In \ref{ssec:kernel_dml} we describe the kernelizations of some of the algorithms already introduced, namely, \acs{LMNN}, \acs{ANMM}, \acs{DMLMJ} and \acs{LDA}.


\section{Experimental Framework and Results} \label{sec:experiments}

With the algorithms introduced in the previous section, several experiments have been carried out. This section describes these experiments and shows the results.

\subsection{Description of the Experiments}

For the \ac{DML} algorithms studied, a collection of experiments has been developed, consisting of the following procedures.

\begin{enumerate}
  \item Evaluation of all the algorithms capable of learning at maximum dimension, applied to the \ac{kNN} classification, for different values of $k$. \label{exp:normal}
  \item Evaluation of the algorithms aimed at improving nearest centroid classifiers, applied to the corresponding centroid-based classifiers. \label{exp:ncm}
  \item Evaluation of kernel-based algorithms, experimenting with different kernels, applied to the nearest neighbors classification. \label{exp:ker}
  \item Evaluation of algorithms capable of reducing dimensionality, for different dimensions, applied to the nearest neighbors classification. \label{exp:dim}
\end{enumerate}

When we mention in experiment \ref{exp:normal} that an algorithm is ``capable of learning at maximum dimension'' we are excluding those dimensionality reduction algorithms that only learn a change of axes, as is the case with both \acs{PCA} and \acs{ANMM}, which at maximum dimension learn a transformation whose associated distance is still the Euclidean. \acs{LDA} is kept, assuming that it will always take the maximum dimension that it is able to, which will be the number of classes of the problem. The algorithms directed at centroid-based classifiers are also excluded from experiment \ref{exp:normal}, together with those based on kernels, which will be analyzed in experiments \ref{exp:ncm} and \ref{exp:ker}, respectively.

The stated experiments indicate that the magnitude with which we will measure the performance of the algorithms is the result of the $k$-neighbors classification, except in the case of the algorithms based on centroids, which will use their corresponding classifier. These classifiers will be evaluated by a 10-fold cross validation. The results obtained from the predictions on the training set will also be included, in order to evaluate possible overfitting.

To evaluate the algorithms, we will use the implementations available in the Python library \texttt{pyDML} \cite{suarez2020pydml}. The algorithms will be executed using their default parameters, which can be found in the \texttt{pyDML} documentation\footnote{\url{https://pydml.readthedocs.io/}}. These default parameters have been set with standard values. The following exceptions to the default parameters have been made:

\begin{itemize}
  \item The \acs{LSI} algorithm will have the parameter \texttt{supervised = True}, as it will be used for supervised learning.
  \item In the dimensionality reduction experiment (\ref{exp:dim}), the algorithms will have the dimension number parameter set according to the dimension being evaluated.
  \item The parameter \texttt{k} of \acs{LMNN} and \acs{KLMNN} will be equal to the number of neighbors being considered in the nearest neighbors classification.
  \item \acs{LMNN} will be executed with stochastic gradient descent, instead of semidefinite programming, in dimensionality reduction experiments, thus learning a linear transformation instead of a metric.
  \item The parameters \texttt{n\_friends} and \texttt{n\_enemies} of \acs{ANMM} and \acs{KANMM} will be equal to the number of neighbors being considered in the nearest neighbors classification.
  \item The parameter \texttt{n\_neighbors} of \acs{DMLMJ} and \acs{KDMLMJ} will be equal to the number of neighbors being considered in the nearest neighbors classification.
  \item The parameter \texttt{centroids\_num} of \acs{NCMC} will be equal to the parameter \texttt{centroids\_num} being considered in its corresponding classifier, \texttt{NCMC\_Classifier}.
\end{itemize}

As for the datasets used in the experiments, up to 34 datasets have been collected and all of them are available in KEEL\footnote{KEEL, \emph{knowledge extraction based on evolutionary learning} \cite{triguero2017keel}: \url{http://www.keel.es/}.}. All these datasets are numeric, do not contain missing values, and are oriented to standard classification problems. In addition, although some of the \ac{DML} algorithms scale well with the number of samples, others cannot deal with datasets that are too large, so it was decided that for sets with a high number of samples, a subset of a size that all algorithms can deal with, keeping the class distribution the same, would be selected. The characteristics of these datasets are described in Table \ref{fig:exp:datasets}. All datasets have been \emph{min-max} normalized to the interval $[0,1]$, feature to feature, prior to the execution of the experiments.

\begin{table}[htbp]
\centering
    \input{results/datasets.tex}
\caption{Datasets used in the experiments. \label{fig:exp:datasets}}
\end{table}

Finally, we describe the details of the experiments \ref{exp:normal}, \ref{exp:ncm}, \ref{exp:ker} and \ref{exp:dim}:
\begin{enumerate}
  \item Algorithms will be evaluated with the classifiers 3-NN, 5-NN and 7-NN.
  \item \acs{NCMML} will be evaluated with the \texttt{Scikit-Learn} \acs{NCM} classifier, while \acs{NCMC} will be evaluated with its associated classifier, available in \texttt{pyDML}, for two different values: 2 centroids per class and 3 centroids per class.
  \item Algorithms will be evaluated with 3-NN classifier, using the following kernels: linear (\texttt{Linear}), grade-2 (\texttt{Poly-2}) and grade-3 (\texttt{Poly-3}) polynomials, gaussian (\texttt{RBF}) and laplacian (\texttt{Laplacian}). The kernel version of \acs{PCA}\footnote{It is implemented in \texttt{Scikit-Learn}: \url{http://scikit-learn.org/stable/modules/generated/sklearn.decomposition.KernelPCA.html}. Its theoretical details can be found in \cite{kpca}.} will be also included in the comparison. Only the smallest datasets will be considered, so that they can be applicable to the algorithms that scale the worst with the dimension (recall that the kernel trick forces algorithms to work in dimensions of the order of the number of samples).
  \item Algorithms will be evaluated with the classifiers 3-NN, 5-NN and 7-NN. The dimensions used are: $1, 2, 3, 5, 10, 20, 30, 40, 50, $ the maximum dimension of the dataset, and the number of classes of the dataset minus 1. In this case, the following high-dimensionality datasets are selected: \texttt{sonar}, \texttt{movement\_libras} and \texttt{spambase}. The algorithms to be evaluated in this experiment are: \acs{PCA}, \acs{LDA}, \acs{ANMM}, \acs{DMLMJ}, \acs{LMNN} and \acs{NCA}.
\end{enumerate}

\subsection{Results}

This section shows the results of the cross-validation for the different experiments. We will only show the results of the 3-NN classifier in the experiments that use nearest neighbors classifiers in this text. The results obtained for the remaining \ac{kNN} used in the experiments are available on the pyDML-Stats\footnote{\label{pydml-stats}Source code: \url{https://github.com/jlsuarezdiaz/pyDML-Stats}. The current website is located at \url{https://jlsuarezdiaz.github.io/software/pyDML/stats/versions/0.0.1-1/}} website, where the results of all these experiments have been stored. The scripts used to do the experiments can also be found on this website. We have added the average score obtained and the average ranking to the results of the experiments \ref{exp:normal}, \ref{exp:ncm} and \ref{exp:ker}. The ranking has been made by assigning integer values between 1 and $m$, where $m$ is the number of algorithms being compared in each experiment (adding half fractions in case of a tie), according to the position of the algorithms over each dataset, 1 being the best algorithm, and $m$ the worst. The content of the different tables elaborated is described below.

\begin{itemize}
    \item Table \ref{results:normal:3nn} shows the cross-validation results obtained for experiment \ref{exp:normal}, using the 3-NN score as the evaluation measure. Some cells do not show results because the algorithm did not converge.

    \item Table \ref{results:ncm} shows the results of experiment \ref{exp:ncm}. \acs{NCM} and \acs{NCMC} classifiers with 2 and 3 centroids per class were used as evaluation measures. For each classifier, the Euclidean distance (\texttt{Euclidean + CLF}) and the distance learning algorithm associated with the classifier (\texttt{NCMML / NCMC (2 ctrd) / NCMC (3 ctrd)}) have been evaluated.

    \item Table \ref{results:ker:3nn:train} shows the cross-validation results obtained on the training set for the kernel-based algorithms using the 3-NN classifier. Table \ref{results:ker:3nn:test} shows the corresponding results obtained on the test set.

    \item Table \ref{results:dim:sonar} shows the cross-validation results for experiment \ref{exp:dim} in dataset \texttt{sonar}, using the classifier 3-NN. On the left are the results for the training set, and on the right, the results for the test set. Each row shows the results for the different dimensions evaluated. Tables \ref{results:dim:movement_libras} and \ref{results:dim:spambase} show the corresponding dimensionality results over the datasets \texttt{movement\_libras} and \texttt{spambase}, respectively.
\end{itemize}

\begin{landscape}
\begin{table}[!htbp]
\resizebox{1.42\textwidth}{!}{%
    \input{results/basic-experiments-traintest-3nn.tex}
}

\caption{Results of cross-validation with 3-NN.} \label{results:normal:3nn}
\end{table}
\end{landscape}

\begin{table}[!htbp]
\makebox[\textwidth][c]{%
\resizebox{1.1\textwidth}{!}{%
    \input{results/ncm-experiments-traintest.tex}
}%
}
\caption{Results of the experiments with \acs{NCMML} and \acs{NCMC}.} \label{results:ncm}
\end{table}

\begin{landscape}
\begin{table}[!htbp]
\makebox[\linewidth][c]{%
\resizebox{1.5\textwidth}{!}{%
    \input{results/ker-experiments-3nn-train-stretch.tex}
}%
}
\caption{Results of kernel experiments on the training set.} \label{results:ker:3nn:train}
\end{table}

\begin{table}[!htbp]
\makebox[\linewidth][c]{%
\resizebox{1.5\textwidth}{!}{%
    \input{results/ker-experiments-3nn-test-stretch.tex}
}%
}
\caption{Results of kernel experiments on the test set.} \label{results:ker:3nn:test}
\end{table}

\end{landscape}

\begin{table}[!htbp]
\makebox[\textwidth][c]{%
\subfloat{%
    \resizebox{0.53\textwidth}{!}{%
        \input{results/dim-exp-sonar-train-3nn-stretch.tex}
    }%
}%
\subfloat{%
    \resizebox{0.53\textwidth}{!}{%
            \input{results/dim-exp-sonar-test-3nn-stretch.tex}
    }%
}%
}
\caption{Results of dimensionality reduction experiments on \texttt{sonar} with 3-NN (train - test)} \label{results:dim:sonar}
\end{table}

\begin{table}[!htbp]
\makebox[\textwidth][c]{%
\subfloat{%
    \resizebox{0.53\textwidth}{!}{%
        \input{results/dim-exp-movement_libras-train-3nn-stretch.tex}
    }%
}%
\subfloat{%
    \resizebox{0.53\textwidth}{!}{%
            \input{results/dim-exp-movement_libras-test-3nn-stretch.tex}
    }%
}%
}
\caption{Results of dimensionality reduction experiments on \texttt{movement\_libras} with 3-NN (train - test)} \label{results:dim:movement_libras}
\end{table}

\begin{table}[!htbp]
\makebox[\textwidth][c]{%
\subfloat{%
    \resizebox{0.53\textwidth}{!}{%
        \input{results/dim-exp-spambase-train-3nn-stretch.tex}
    }%
}%
\subfloat{%
    \resizebox{0.53\textwidth}{!}{%
            \input{results/dim-exp-spambase-test-3nn-stretch.tex}
    }%
}%
}
\caption{Results of dimensionality reduction experiments on \texttt{spambase} with 3-NN (train - test)} \label{results:dim:spambase}
\end{table}

\subsection{Analysis of Results}

\subsubsection{In-depth analysis}

Below we will describe the main details observed in the algorithms for the different experiments carried out.

\begin{itemize}

\item \textbf{\acs{NCA}.} In terms of the results obtained in the first experiment, we can clearly see that \acs{NCA} has obtained the best results. This is partly due to the fact that the algorithms have been evaluated with nearest neighbors classifiers, and that \acs{NCA} was specifically designed to improve this classifier. \acs{NCA} came first in most of the validations over the training set, showing its ability to fit to the data, but it has also obtained clear victories in many of the datasets over the test set, thus also demonstrating a great capacity for generalization. We have to note that \acs{NCA} (and also other algorithms such as \acs{LMNN}) commits substantial errors in datasets such as \emph{titanic} \cite{triguero2017keel}. This is a numerical-transformed dataset, but of a categorical nature, and with many repeated elements that may belong to different classes. This may be causing highly discriminative algorithms such as \acs{NCA} or \acs{LMNN} not being able to transform the dataset appropriately. This justifies how in certain situations other algorithms can be more useful than those that show better behavior in general \cite{wolpert1997no}.

\item \textbf{\acs{LMNN} and \acs{DMLMJ}.} We can also see that \acs{DMLMJ} and \acs{LMNN} algorithms stand out, although not as much as \acs{NCA}. These algorithms are also directed at nearest neighbor classification, which justifies these good results. \acs{LMNN} seems to have a slow convergence with the projected gradient method, and it could have achieved better results with a greater number of iterations. In fact, in the analysis of dimensionality reduction experiments we will observe that \acs{LMNN} performs much better with the stochastic gradient descent method.

\item \textbf{\acs{LSI}.} \acs{LSI} is another algorithm that is capable of obtaining very good results on certain datasets, but it is penalized by many others, where it is not able to optimize enough, not even being able to converge in several datasets.

\item \textbf{\acs{ITML} and \acs{MCML}.} \acs{ITML} and \acs{MCML} are two algorithms that, despite getting the best results in a very small number of cases, they get decent results in most datasets, resulting in quite a stable performance. \acs{ITML} does not learn too much from the characteristics of the training set, but is able to generalize what has been learned in quite an effective way, being possibly the algorithm that loses the least accuracy over the test set, with respect to the training set. On the other hand, \acs{MCML} has more learning capacity, even showing a slight overfitting, as its results are worse than those of many algorithms on the test set.

\item \textbf{\acs{LDA}.} Another algorithm in which we can see overfitting, perhaps more clearly, is \acs{LDA}. This algorithm is capable of getting very good results on the training set, surpassing most of the algorithms, but it gets noticeably worse when evaluated on the test dataset. Recall that \acs{LDA} is able to learn only a maximum dimension equal to the number of classes of the dataset minus one. This may be causing a loss of important information on many datasets by the projection it learns.

\item \textbf{\acs{DML-eig} and \acs{LDML}.} Finally, although \acs{DML-eig} and \acs{LDML} are able to get better results than Euclidean distance on the training sets, on several datasets they have obtained quite low quality results. On many of the test datasets, they are surpassed by the Euclidean distance.

\item \textbf{Untrained \acs{kNN}}. The untrained \acs{kNN} or, equivalently, the classical \acs{kNN} with Euclidean distance, is always outperformed by some of the distance-learned \acsp{kNN} in the training set, and is also mostly outperformed in the test set. This shows the benefits of learning a distance as opposed to the traditional use of the nearest neighbor classifier. The untrained \acs{kNN} also shows better average results on the test set than on the training set, and is the only one among all the compared algorithms. This may be due to the fact that, as it is not using a pretrained distance, it is unlikely to overfit, although according to the results there is a lot of room for improvement in both training and test sets for this basic version.

\item \textbf{\acs{NCMML} and \acs{NCMC}.} If we analyze the results of the centroid-based classifiers, we can easily observe that in the vast majority of cases the classifier has worked much better after learning the distance with its associated learning algorithm, than it has by using the Euclidean distance. It can also be observed that the results are subject to great variability, depending on the number of centroids chosen. This shows that the choice of an adequate number of centroids that adapt well to the disposition of the different classes is fundamental to achieve successful learning with these algorithms.

\item \textbf{Kernel algorithms.} Focusing now on the kernel-based algorithms, it is interesting to note how \acs{KLMNN} with laplacian kernel is able to adjust as much as possible to the data, getting a 100 \% success rate on most of the datasets. This success rate is not transferred, in general, to the test data, showing that this algorithm overfits with laplacian kernel. We can also observe that the best results are distributed in a varied way among the different evaluated options. The choice of a suitable kernel that fits well with the disposition of the data is decisive for the performance of kernel-based algorithms.

\item \textbf{Dimensionality reduction experiments.} To conclude our analysis, dimensionality experiments allow us to observe that the best results are not always obtained when considering the maximum dimension. This may be due to the fact that the algorithms are able to denoise the data, ensuring that the classifier used later does not overfit. We also see that we cannot reduce the dimension as much as we want, because at some point we start losing information, which happens in many cases with \acs{LDA}, which is its great limitation. In general, we can observe that all algorithms improve their results by reducing dimensionality until a certain value, although the best results are provided by \acs{LMNN}, \acs{DMLMJ} and \acs{NCA}. The results obtained by \acs{LMNN} open the possibility of using this algorithm with stochastic gradient descent, instead of the semidefinite programming algorithm used in the first experiment, since the results it provides are quite good. Although these algorithms have obtained better results, the use of \acs{ANMM} and \acs{LDA} (as long as the dimension allows it) is important for the estimation of an adequate dimension, since they are much more efficient than the first ones. As for \acs{PCA}, it gets the worst results in low dimensions, probably due to not considering the information of the labels.

\end{itemize}

\subsubsection{Global analysis}

In order to complete the verbal analysis, we have developed a series of Bayesian statistical tests to assess the extent to which the performance of the different algorithms analyzed outperforms the other algorithms. To do this, we have elaborated several pairwise Bayesian sign tests \cite{benavoli2017time}. In these tests, we will consider the differences between the obtained scores of two algorithms, assuming that their prior distribution is a Dirichlet Process \cite{benavoli2014bayesian}, defined by a prior strength $s = 1$ and a prior pseudo-observation $z_0 = 0$. After considering the score observations obtained for each dataset, we obtain a posterior distribution which gives us the probabilities that one algorithm outperforms the other. We also introduce a \emph{rope} (region of practically equivalent) region, in which we consider the algorithms to have equivalent performance. We have designated the rope region to be the one where the score differences are in the interval $[-0.01, 0.01]$. In summary, from the posterior distribution we obtain three probabilities: the probability that the first algorithm outperforms the second, the probability that the second algorithm outperforms the first one, and the probability that both algorithms are equivalent. These probabilities can be visualized in a simplex plot for a sample of the posterior distribution, in which a greater tendency of the points towards one of the regions will represent a greater probability.

To do the Bayesian sign tests, we have used the R package \texttt{rNPBST} \cite{carrasco2017rnpbst}. In Figure \ref{fig:bayes:basic} we pairwise compare some of the algorithms that seem to have better performance in experiment \ref{exp:normal} with 3-NN (\acs{NCA}, \acs{DMLMJ} and \acs{LMNN}) with the results of the 3-NN classifier for Euclidean distance. In the comparison made between Euclidean distance and \acs{NCA}, we can clearly see that the points are concentrated close to the $[\text{NCA}, \text{rope}]$ segment. This shows us that Euclidean distance is unlikely to outperform \acs{NCA}, and there is also a high probability for \acs{NCA} to outperform Euclidean distance, since a big concentration of points is in the \acs{NCA} region. We obtain similar conclusions for \acs{DMLMJ} against Euclidean distance, although in this case, despite the fact that Euclidean distance is still unlikely to win, there is a greater concentration of points in the rope region. In the comparison made between \acs{LMNN} and Euclidean distance, we see a more centered concentration of points, that is slightly weighted towards the \acs{LMNN} region. In the comparisons made between the \ac{DML} algorithms we observe the points weighted to the $[\text{NCA}, \text{rope}]$ segment, which concludes the difficulty of outperforming \acs{NCA}, and between \acs{DMLMJ} and \acs{LMNN} we can see a pretty level playing field that is slightly biased to the \acs{DMLMJ} algorithm.

The outperforming of Euclidean distance is even clearer in the results from experiment \ref{exp:ncm}. For these algorithms, we can clearly observe that the points are concentrated in the region corresponding to the nearest centroid metric learning algorithm, as shown in Figure \ref{fig:bayes:ncm}. We have elaborated more pairwise Bayesian sign tests for the rest of the algorithms in experiment \ref{exp:normal}. The results of these tests can also be found on the pyDML-Stats website\cref{pydml-stats}.

\begin{figure}
\centering
\begin{tabular}{ccc}
  {}                                                     \includegraphics[width=4cm]{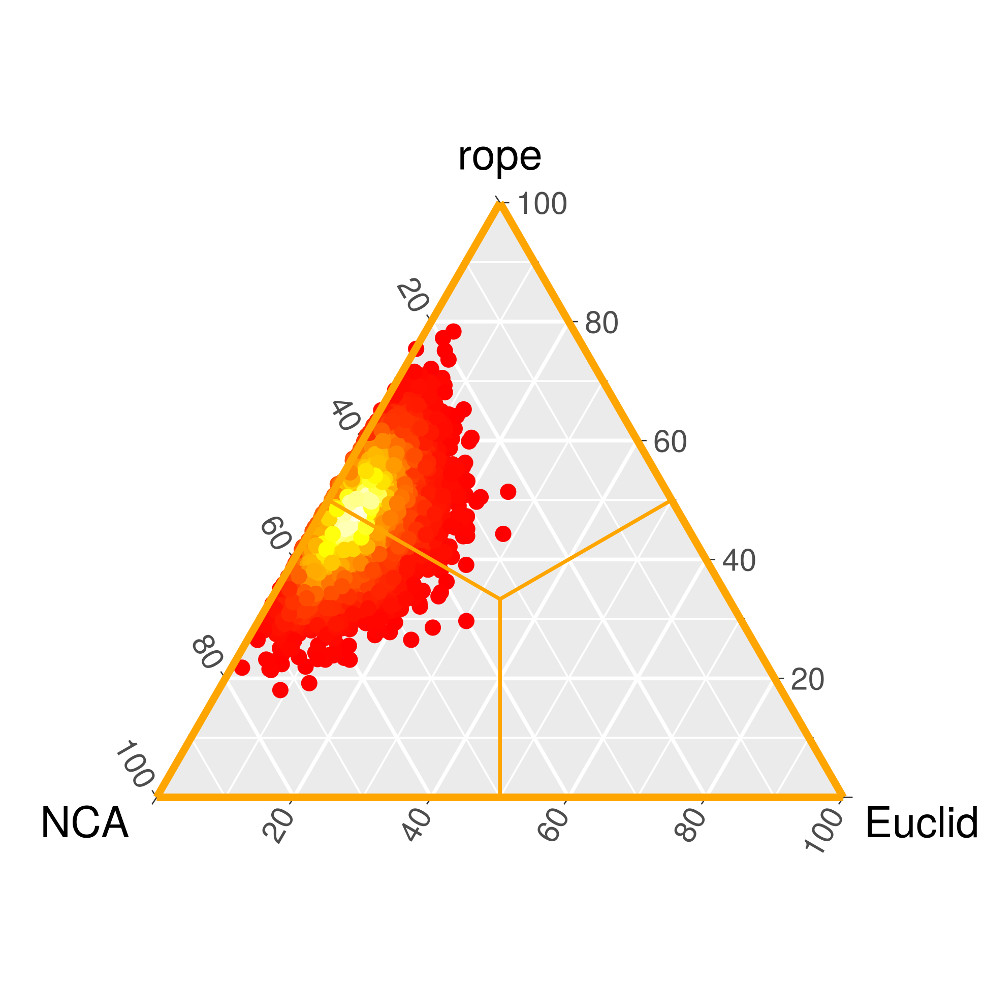} & \includegraphics[width=4cm]{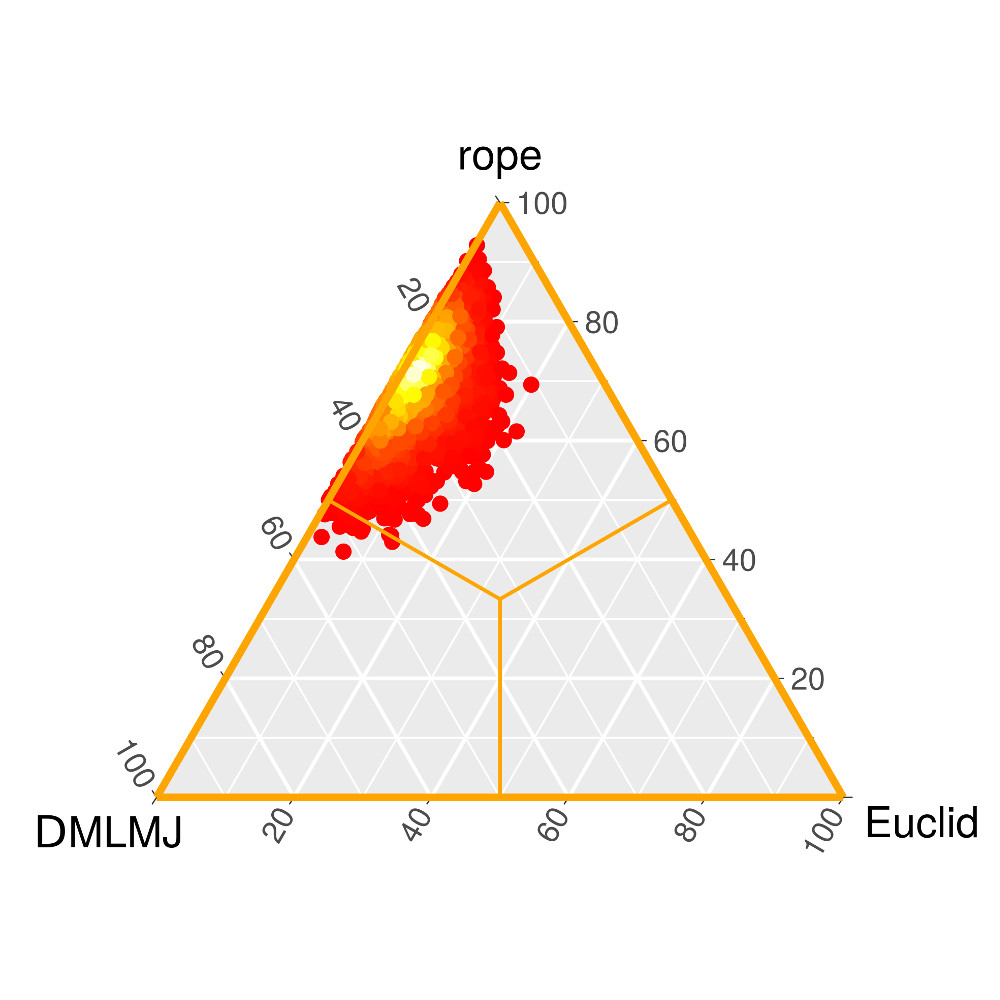} & \includegraphics[width=4cm]{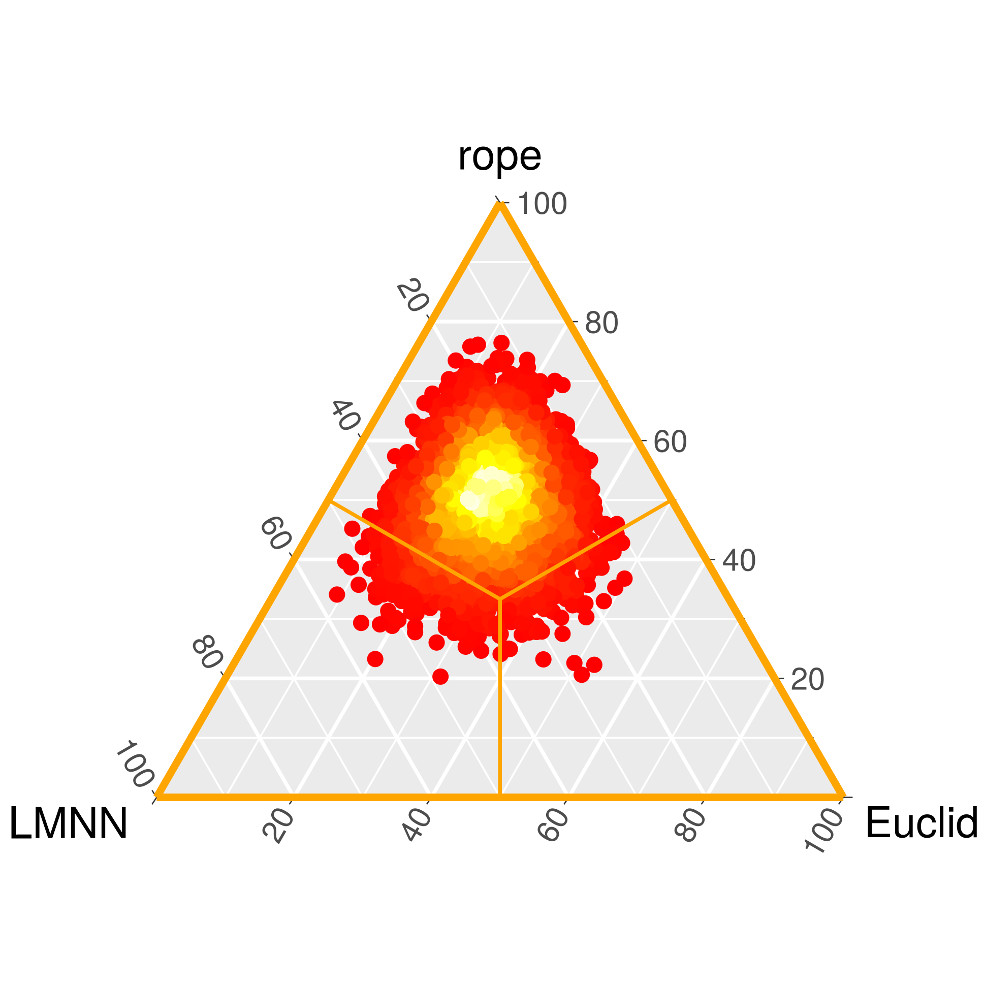} \\
  \includegraphics[width=4cm]{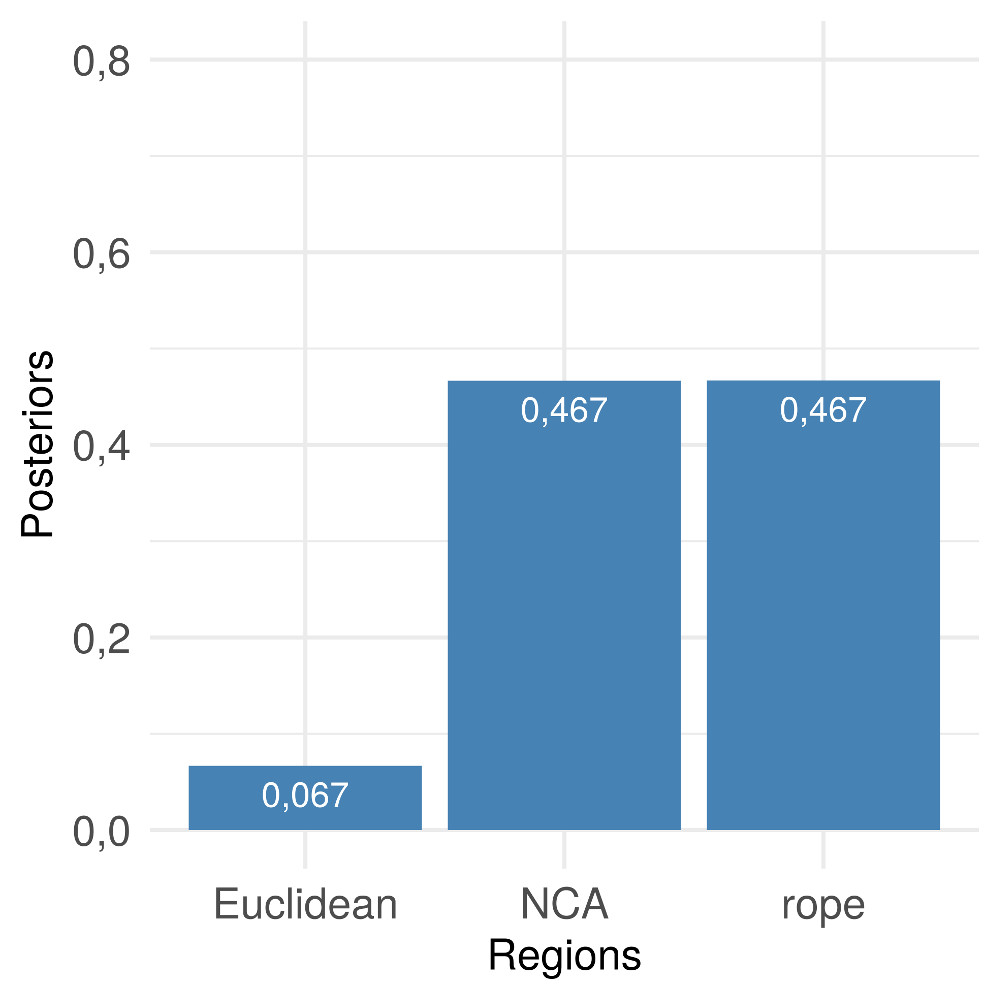}  {}                                                    & \includegraphics[width=4cm]{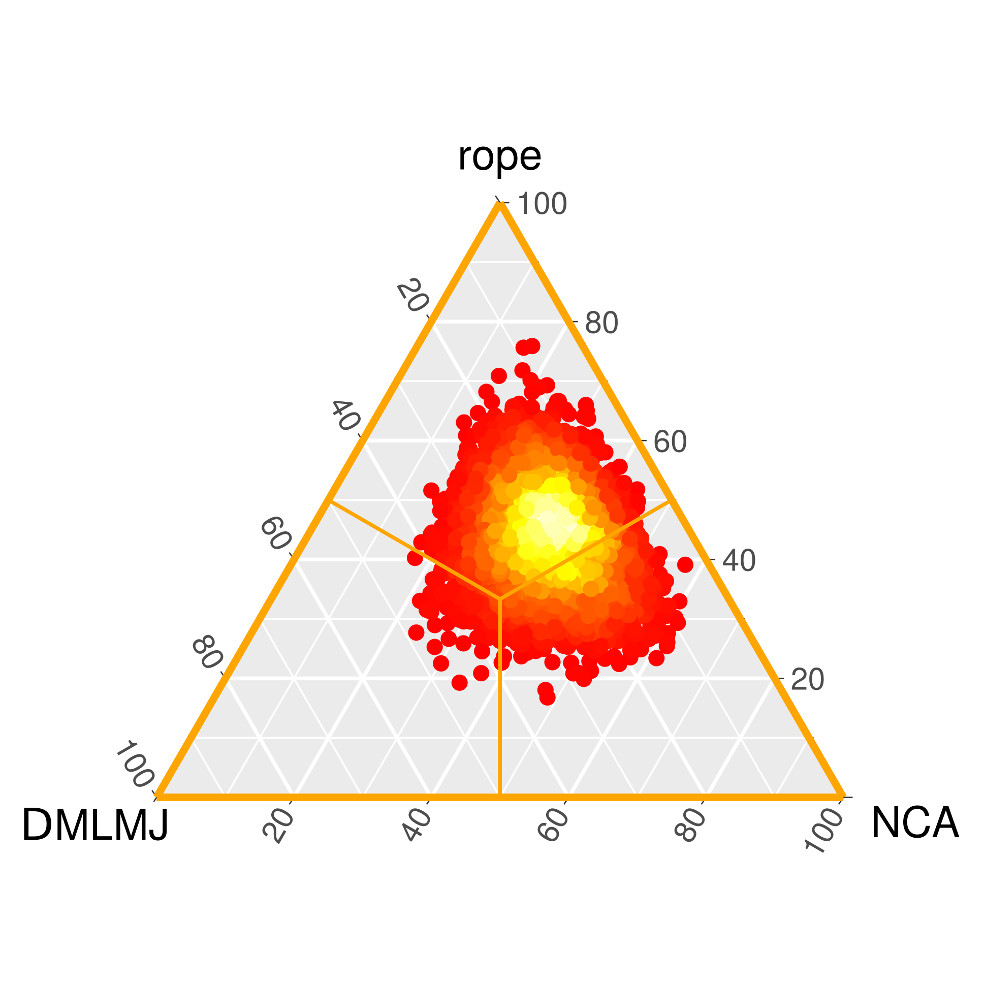} & \includegraphics[width=4cm]{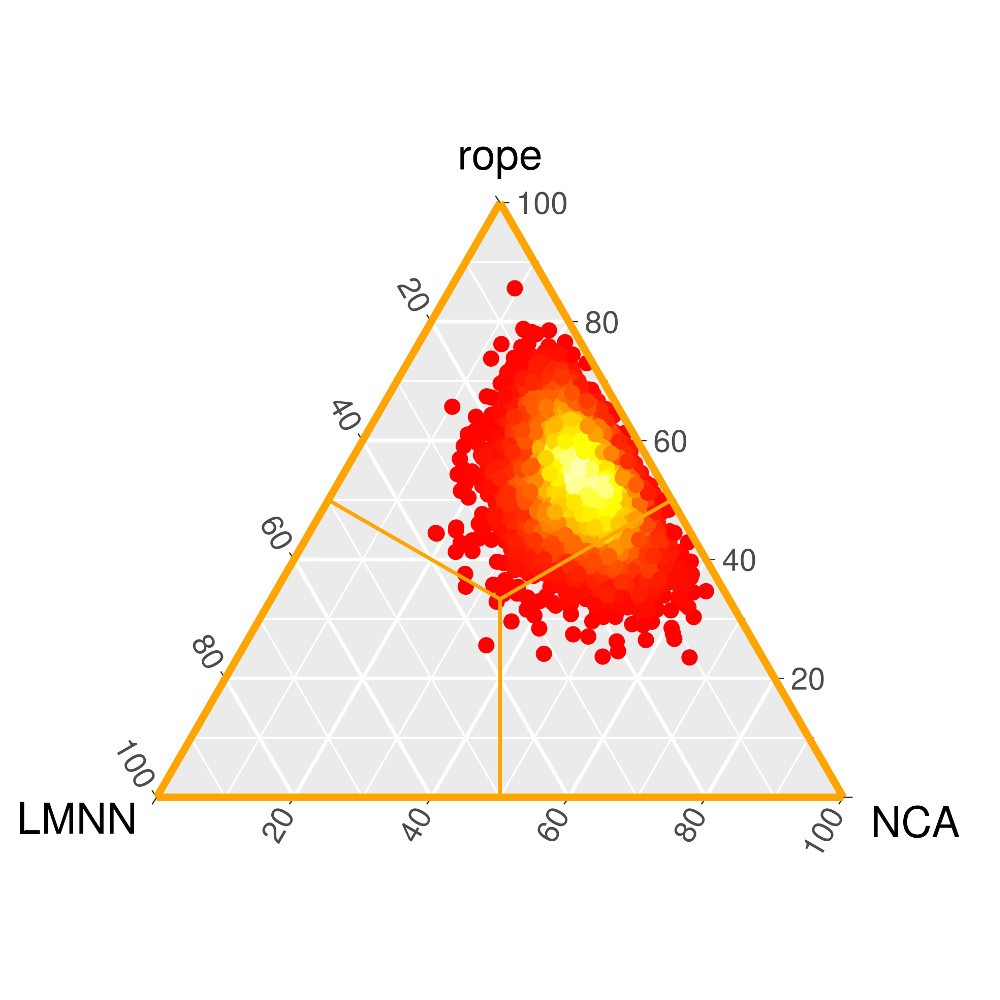} \\
  \includegraphics[width=4cm]{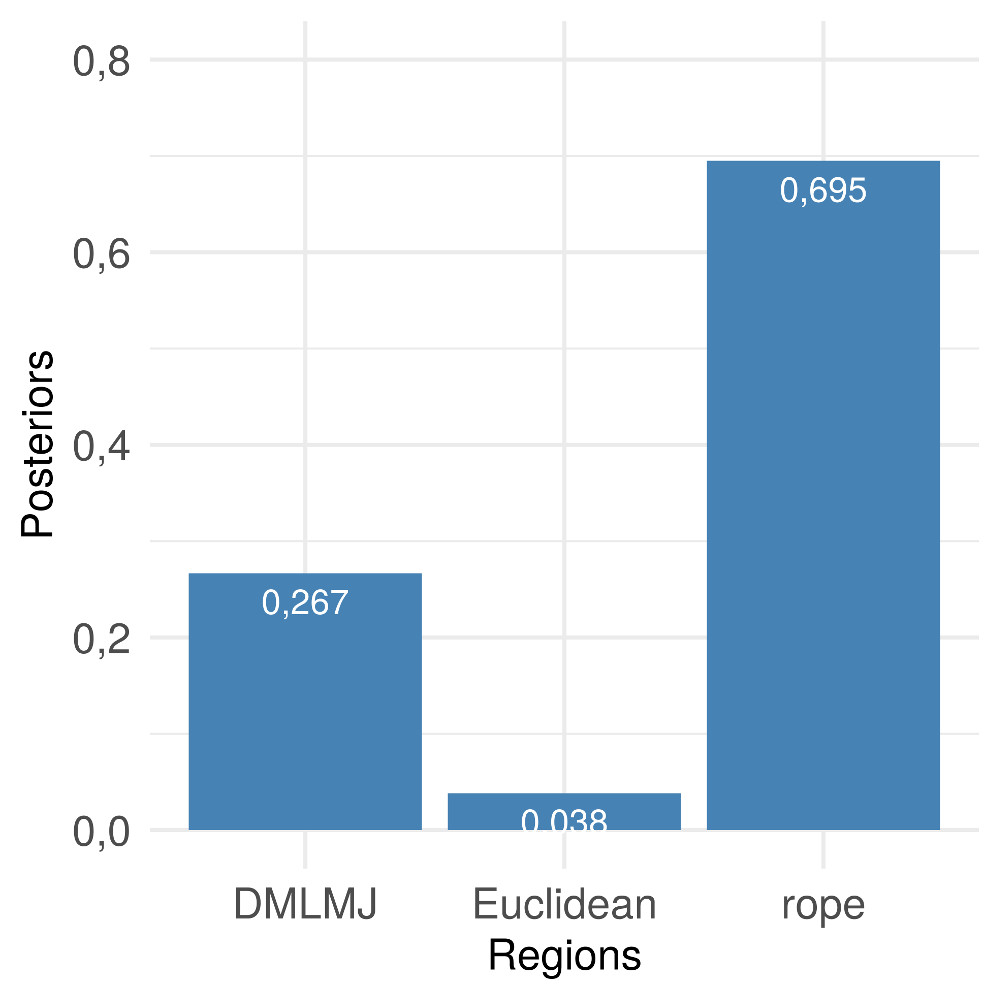} & \includegraphics[width=4cm]{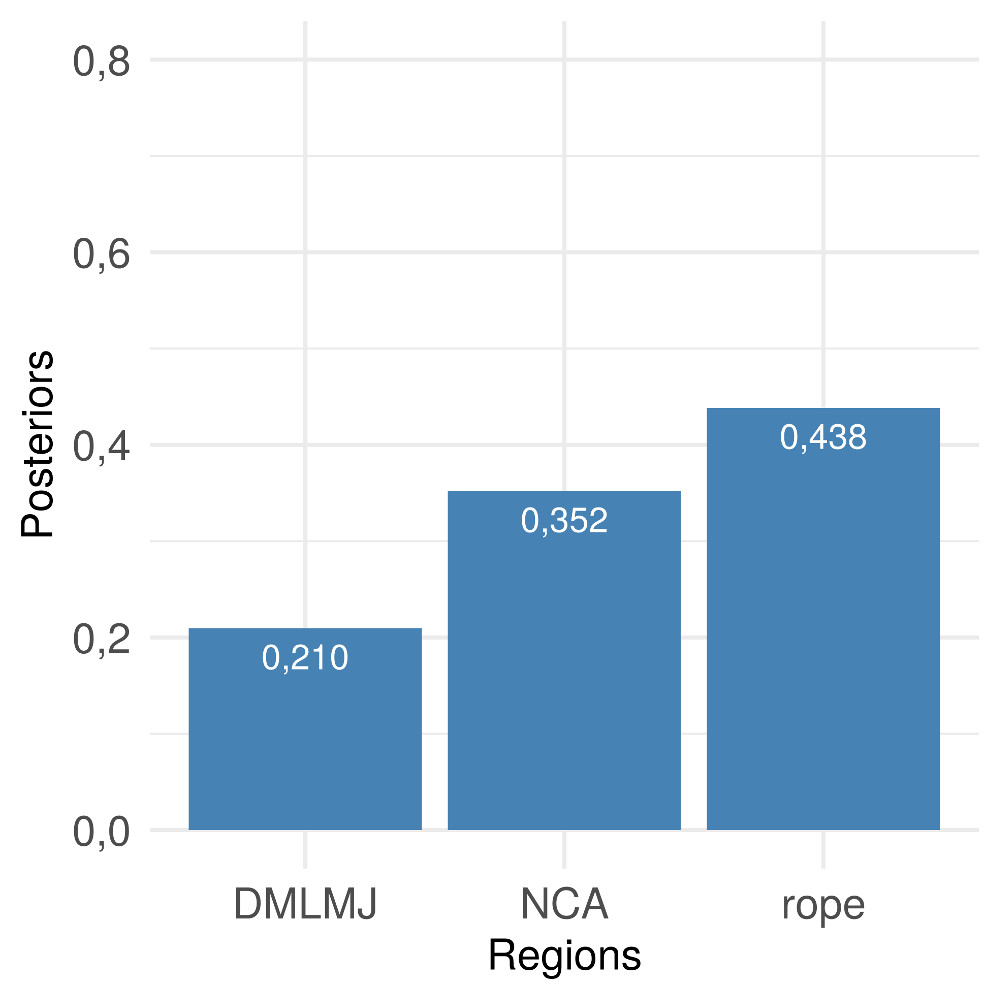} {}                                                    & \includegraphics[width=4cm]{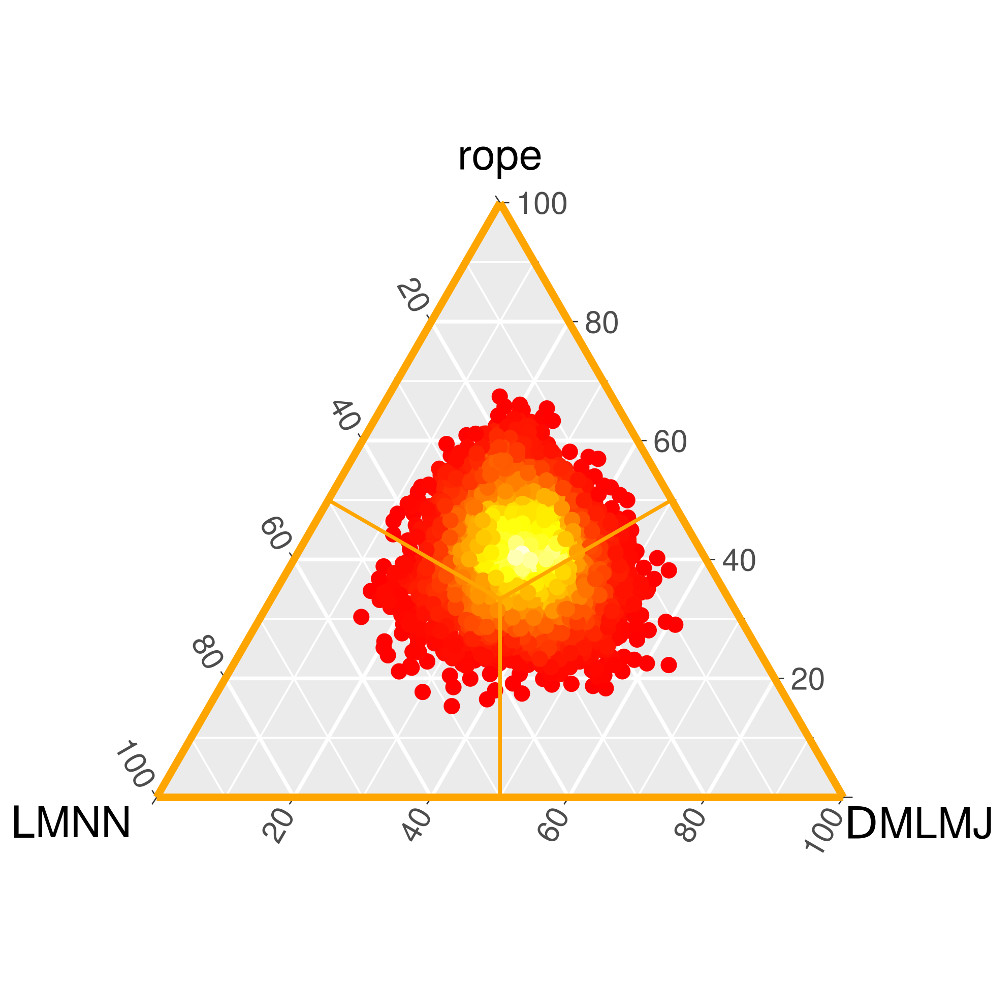}\\
  \includegraphics[width=4cm]{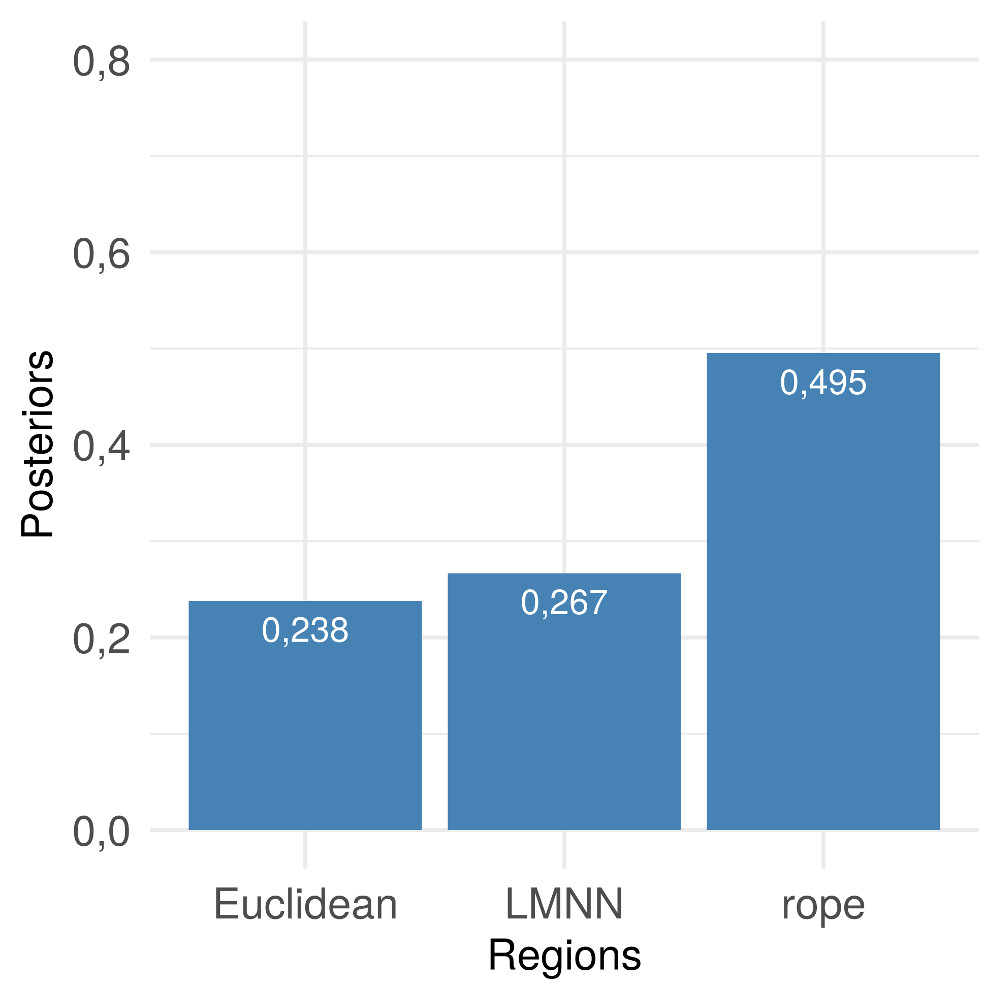} &\includegraphics[width=4cm]{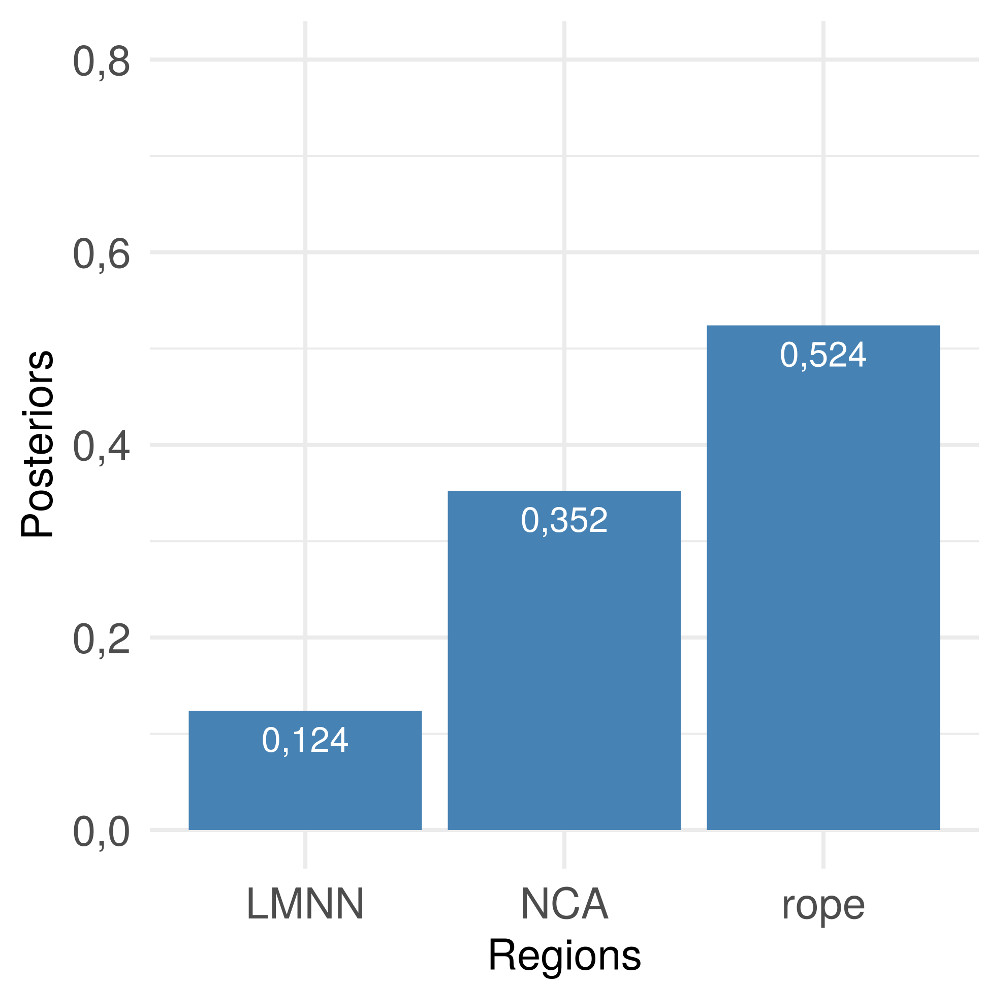}& \includegraphics[width=4cm]{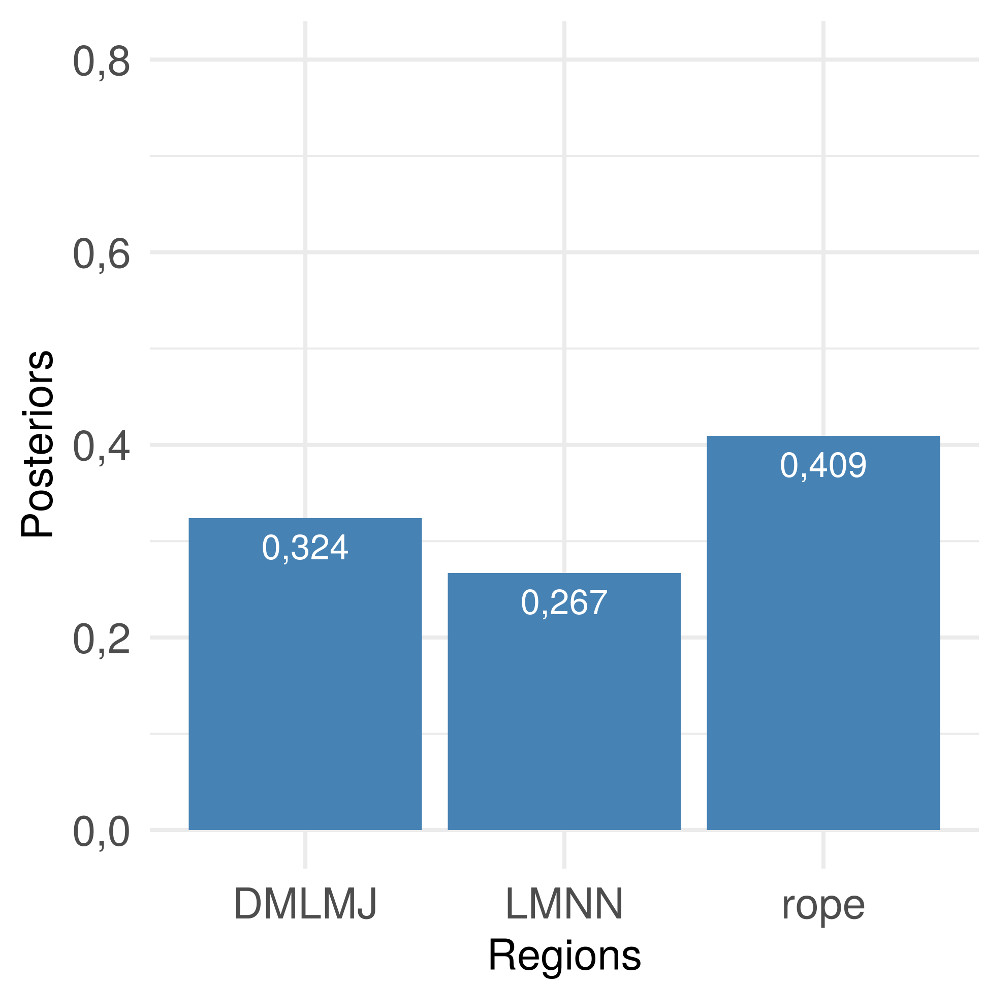} {}\\
\end{tabular}
\caption{Bayesian sign results for \acs{NCA}, \acs{DMLMJ}, \acs{LMNN} and Euclidean distance with 3-NN.} \label{fig:bayes:basic}
\end{figure}

\begin{figure}
\centering
\begin{tabular}{ccc}
  \includegraphics[width=4cm]{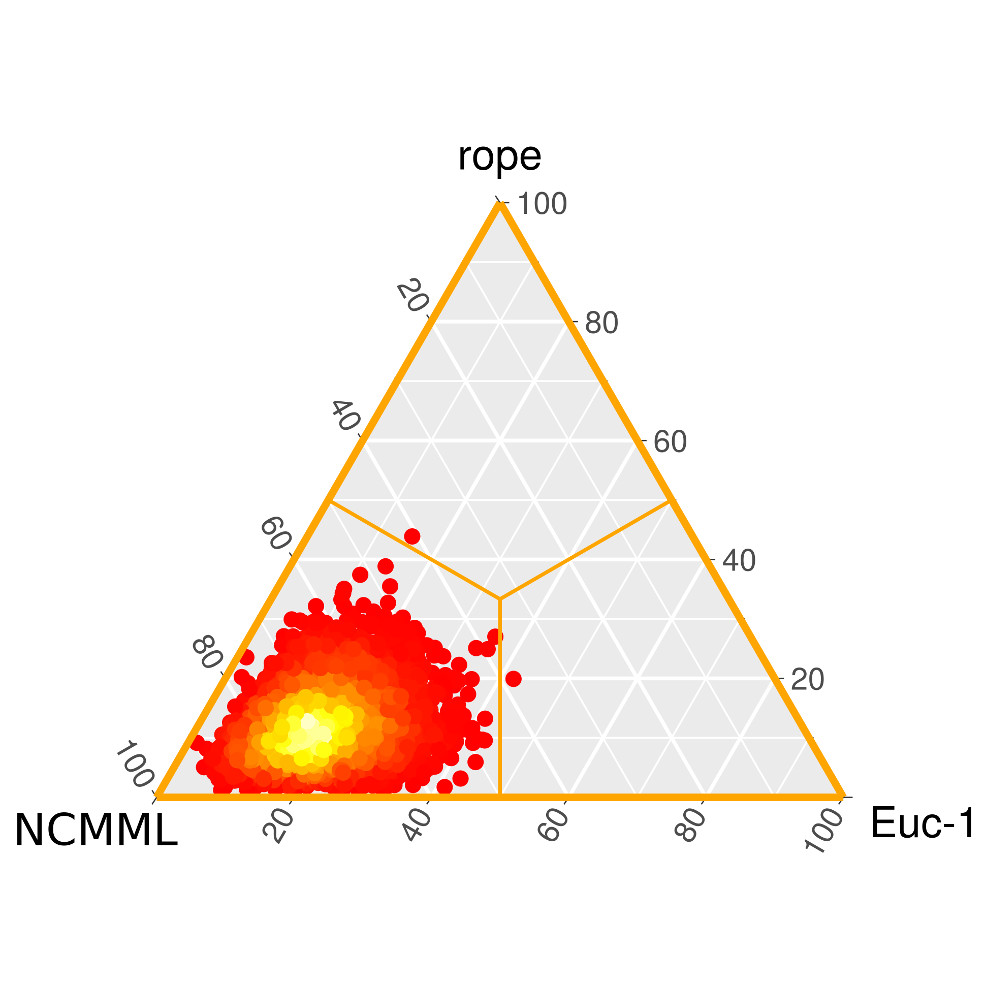} & \includegraphics[width=4cm]{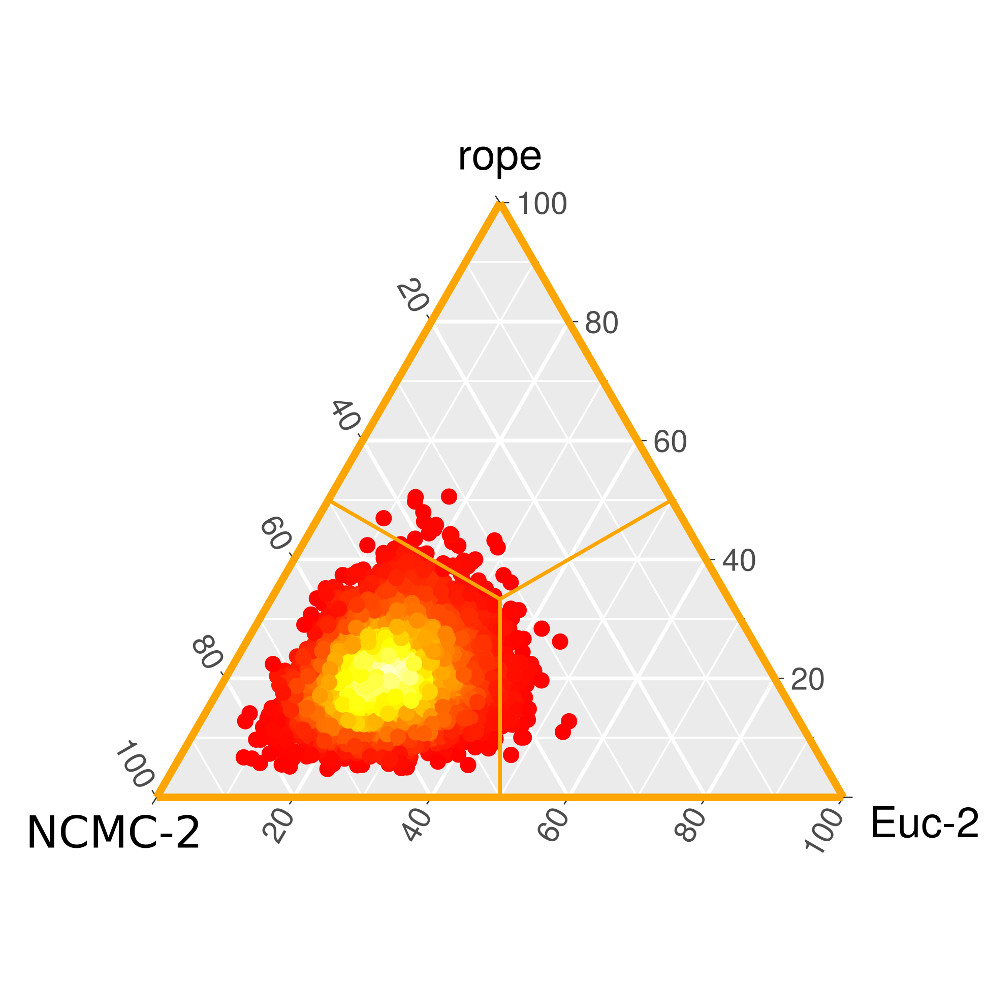} & \includegraphics[width=4cm]{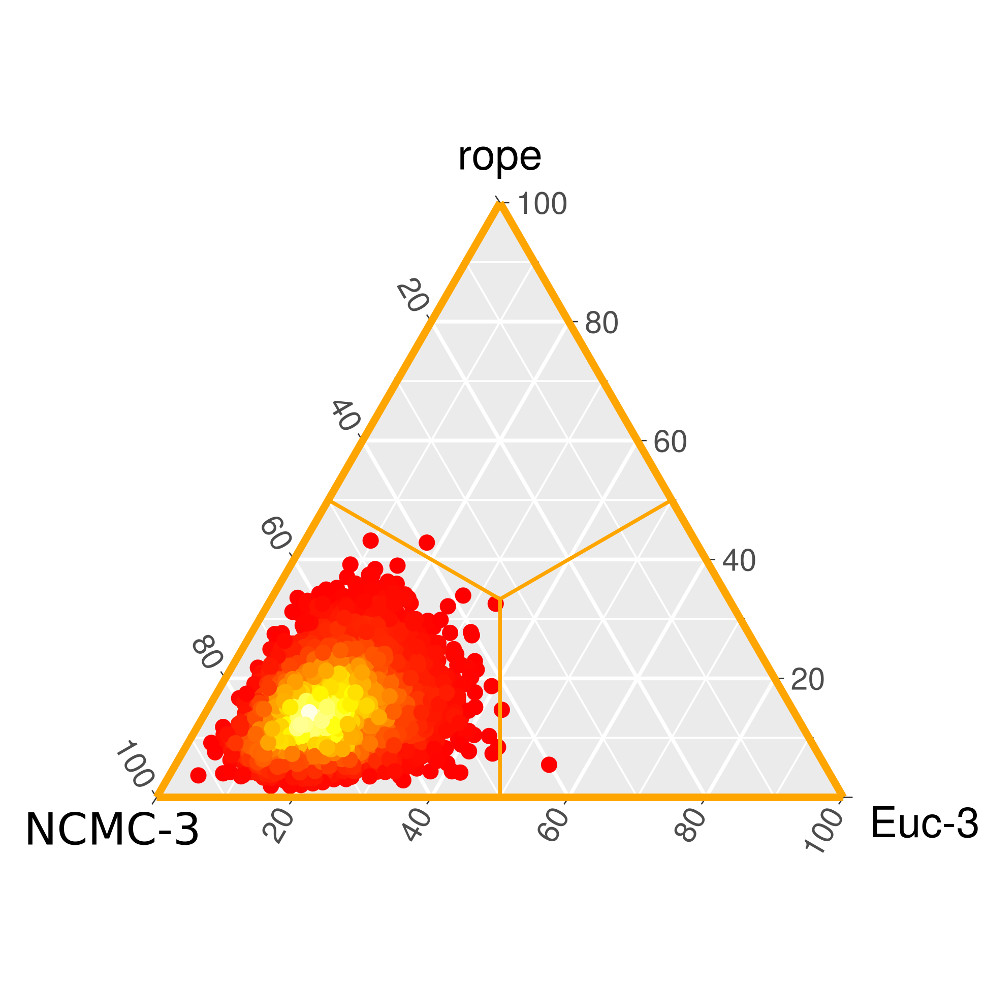} \\
  \includegraphics[width=4cm]{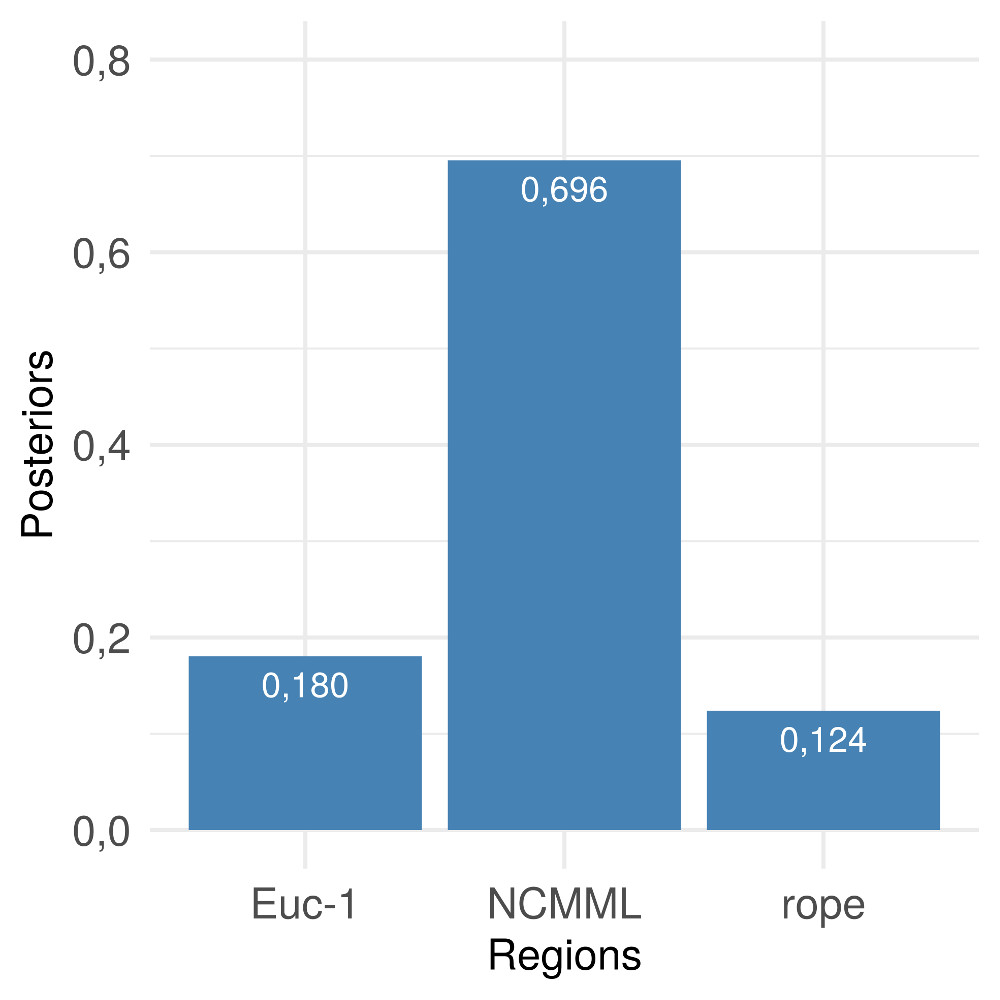} & \includegraphics[width=4cm]{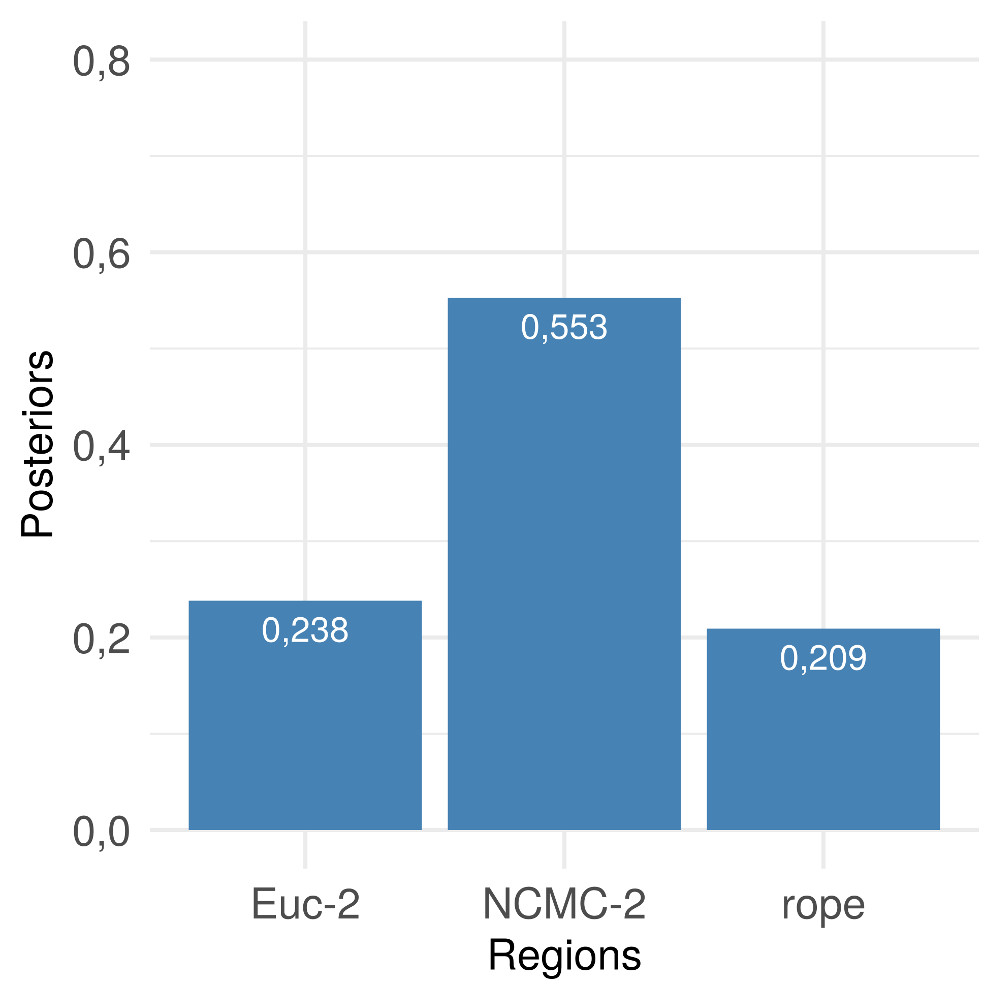} & \includegraphics[width=4cm]{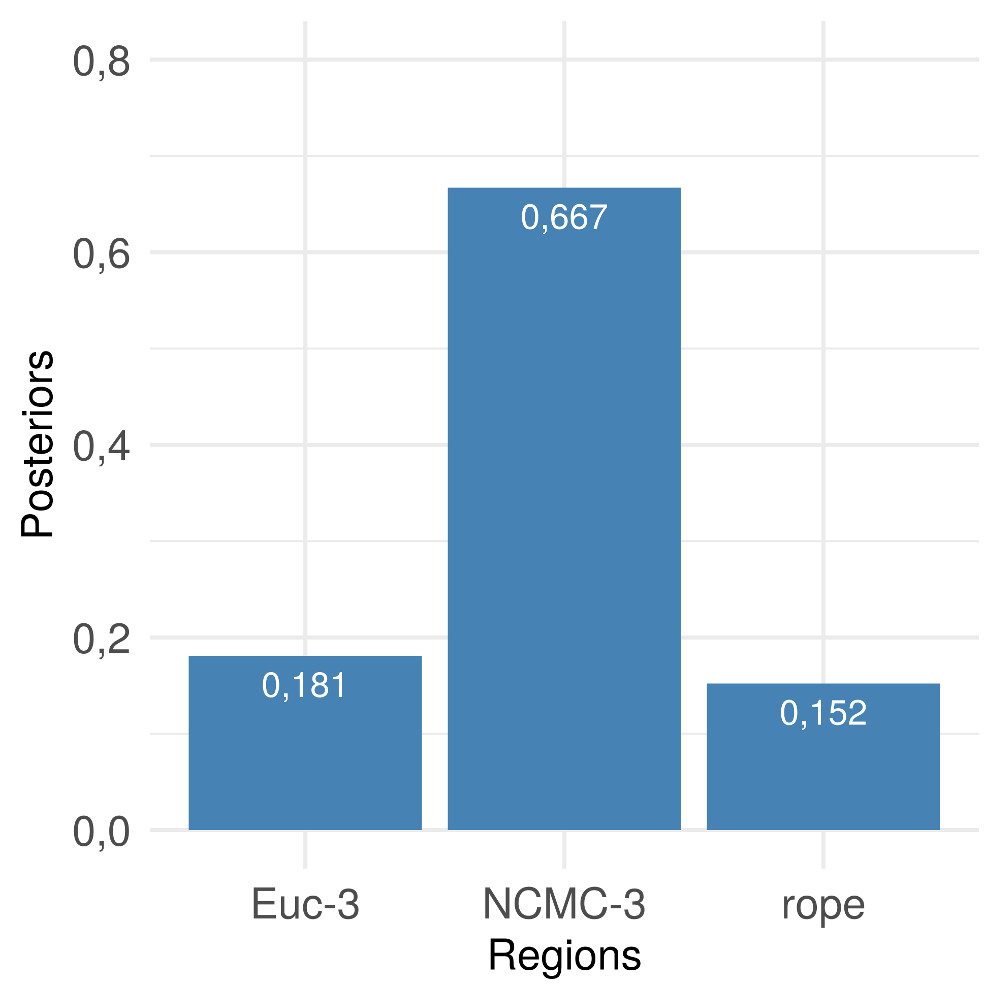} \\
\end{tabular}
\caption{Bayesian sign test results for the comparison between scores of nearest centroid classifiers with their corresponding \ac{DML} algorithm against the same classifier with Euclidean distance. The results are shown for nearest class mean classifier (left), nearest class with 2 centroids (center) and nearest class with 3 centroids (right).} \label{fig:bayes:ncm}
\end{figure}

\section{Prospects and Challenges in Distance Metric Learning} \label{sec:future}

Throughout this tutorial we have seen what \ac{DML} consists of and how it has traditionally been applied in machine learning. However, the development of technology in recent years has given rise to new problems that cannot be adequately addressed from the point of view of classic machine learning. In the same way, this technological development has led to new tools that are very useful when facing new problems, as well as allowing better results to be obtained with the more traditional problems.

Focusing on \ac{DML}, both the new problems and the new tools are generating new prospects where in which applying \ac{DML} could be of interest, and as well as generating new challenges in the design and application of \ac{DML}. Below we will describe some of the most outstanding ones.

\subsection{Prospects of Distance Metric Learning in Machine Learning}

Nowadays there are many fields where the further development of \ac{DML} might be of interest. On the one hand, the large volumes of data that are usually being handled today make it necessary to adapt or design new algorithms that can work properly with both high-dimensional data and huge amounts of examples. Similarly, new problems are arising, which make it necessary to reconsider the algorithms so that they can handle these problems in an appropriate way. On the other hand, many of the tools provided by machine learning, from the classical ones to the most modern ones, can be used in line with \ac{DML} to achieve better results. We outline these prospects below.

\begin{itemize}

    \item \textbf{Hybridization with feature selection techniques to solve high dimensional data problems.} \ac{DML} is of great interest in many real problems in high dimensionality, such as face recognition, where it is very useful to be able to measure the similarity between different images \cite{moutafis2017overview}. When we work with datasets of even greater dimensionality, the treatment of distances can become too expensive, since it would be necessary to store matrices of very large dimensions. In these situations, it may be of interest to combine \ac{DML} with feature selection techniques prepared for very high dimensional data \cite{tan2014towards,liu2019escaping}.

   \item \textbf{Big Data solutions.} The problem of learning when the amount of data we have is huge and heterogeneous is one of the challenges of machine learning nowadays \cite{wu2014data}. In the case of the \ac{DML} algorithms, although many of them, especially those based on gradient descent, are quite slow and do not scale well with the number of samples, they can be largely parallelized in both matrix computations and gradient descent batches. As a result, \ac{DML} can be extended to handle Big Data by developing specialized algorithms and integrating them with frameworks such as Spark \cite{meng2016mllib} and Cloud Computing architectures \cite{hashem2015rise}.

   \item \textbf{Application of distance metric learning to singular problems.} In this paper, we have focused on \ac{DML} for common problems, like standard classification and dimensionality reduction, and we have also mentioned its applications for clustering and semi-supervised learning. However, \ac{DML} can be useful in a wide variety of learning tasks \cite{charte2019nonstandard}, and can be carried out either by designing new algorithms or by adapting known algorithms from standard problems to these tasks. In recent years, several \ac{DML} proposals have been made in problems like regression \cite{nguyen2016large}, multi-dimensional classification \cite{ma2018multi}, ordinal classification \cite{nguyen2018distance}, multi-output learning \cite{Liu2019408} and even transfer learning \cite{Luo20191013,lopez2019visual}.

   \item \textbf{Hybridization with shallow learning techniques.} Over the years, some distance-based algorithms, or some of their ideas or foundations, have been combined with other algorithms in order to improve their learning capabilities in certain problems. For example, the concept of nearest-neighbors has been combined with classifiers such as Naive-Bayes, obtaining a Naive-Bayes classifier whose feature distributions are determined by the nearest neighbors of each class \cite{yang2012eigenjoints}; with neural networks, to find the best neural network architecture \cite{wang2017improving}; with random forests, by exploiting the relationship between voting points and potential nearest neighbors \cite{lin2006random}; with ensemble methods, like bootstrap \cite{steele2009exact,hamamoto1997bootstrap}; with support vector machines, training them locally in neighborhoods \cite{zhang2006svm}; or with rule-learning algorithms, obtaining the so-called \emph{nested generalized exemplar} algorithms \cite{wettschereck1995experimental}. The distances used in these combinations of algorithms can condition their performance, so designing appropriate distance learning algorithms for each of these tasks can help achieve good results. Staying on this subject, another option is to hybridize directly \ac{DML} with other techniques, like ensemble learning \cite{mu2013local}.

   \item \textbf{Hybridization with deep learning techniques.} In recent years, machine learning has experienced great popularity thanks to the development of deep learning, which is capable of obtaining very good results in different learning problems \cite{gomez2019towards}. As in the previous case, it is possible to combine distance-based algorithms or their foundations with deep learning techniques to improve their learning capabilities. For instance, \citet{papernot2018deep} use the $k$-nearest neighbors classifier to provide interpretability and robustness to deep neural networks. Another prospect that has gained popularity in recent years is based on the use of neural networks to learn distances, which is being referred to as \emph{deep metric learning} \cite{yi2014deep,zhe2019directional,cakir2019deep,cao2019hyperspectral,nguyen2020improved}. Deep learning is likely to play an important role in the future of machine learning, and thus its combination with \ac{DML} may lead to interesting advances in both fields.
   
   \item \textbf{Other approaches for the concept of distance.} Most of the current \ac{DML} theory focus on Mahalanobis distances. However, some articles open a door to learning about other possible distances, such as local Mahalanobis distances, that lead to a multi-metric learning \cite{lmnn}, or approaches beyond the Mahalanobis theory \cite{pan2020metric,shindo_metric_2020}. The \emph{deep metric learning} approach discussed above is another way of handling a wider range of distances. By developing new approaches, we will have a greater variety of distances to learn, and thus have a greater chance of success.
   
\end{itemize}

\subsection{Challenges in Distance Metric Learning}

In addition to the numerous action horizons, \ac{DML} presents several challenges in terms of the design of its algorithms, which can lead to substantial improvements. We describe these challenges below.

\begin{itemize}

   \item \textbf{Non-linear distance metric learning.} As we have already mentioned, since learning a Mahalanobis distance is equivalent to learning a linear map, there are many problems where these distances are not able to capture the inherent non-linearity of the data. Although the non-linearity of a subsequent learning algorithm, such as the nearest neighbors classifier, may mitigate this fact, that algorithm could benefit much more from a distance capable of capturing the non-linearity of the dataset. In this sense, we have already seen how the kernelization of \ac{DML} algorithms can be applied to fit non-linear data. Extending the kernel trick to other algorithms besides those presented, by searching for suitable parameterizations and representer theorems, is another possible task to carry out. Another possibility for non-linear \ac{DML} is to adapt classical objective functions so that they can work with non-linear distances, such as the $\chi^2$ histogram distance, or with non-linear transformations of the data learned by another algorithm, such as gradient boosting \cite{kedem2012non}.   


   \item \textbf{Multi-linear distance metric learning.} In learning problems where the data are images or videos, the traditional vector representation may not be the most appropriate to fit the data properly. Vector representation does not allow, for example, for the consideration of neighborhood relationships between pixels in an image. It is therefore better to consider images as matrices, or more generally, as multi-linear mappings or \emph{tensors}. Some \ac{DML} algorithms can be adapted so that they can learn distances in tensor spaces \cite{cai2005subspace,anmm,laiadi2020tensor}, which will be more suitable for similarity learning in datasets that support this representation. The development or extension of techniques for multi-linear \ac{DML} is a challenge that has many applications in a field such as computer vision, where \ac{DML} has been shown to be quite useful \cite{Nguyen2019589,Liang20191149,wang2019hybrid,ldml}.

   \item \textbf{Other optimization mechanisms.} The algorithms we have studied optimize their objective functions by applying gradient descent methods. However, the possibilities in terms of optimization mechanisms are very broad, and choosing the most appropriate method can contribute to achieving better values in the objective functions. In addition, the consideration of different optimization methods may lead to the design of new objective functions that may be appropriate for new problems or approaches and that cannot be optimized by classical gradient methods. In this way, we have studied several differentiable objective functions in this tutorial, unconstrained or with convex constraints, but for those non-convex functions the gradient descent methods (even the stochastic version) cannot guarantee a convergence to the global optimum. If we wanted to consider functions with even worse analytical characteristics or constraints, such as non-differentiability or integer constraints, we could not even use this type of method. For the non-convex and differentiable case, we are still able to use the information of the derivatives of the objective function, and some refinements of the classic gradient methods, such as AdaDelta, RMSprop or Adam have shown good performance in this type of problem \cite{sun2019survey}. In the most general case, we are only able to afford to evaluate the objective function, and sometimes not a very high number of times, due to its complexity. This general case is usually called \emph{black-box optimization}. To optimize these functions, a wide variety of proposals have been made. If we cannot afford to evaluate the objective function many times, Bayesian optimization may be an interesting alternative \cite{frazier2018tutorial}. If the objective function is not so complex, evolutionary algorithms can provide us with a great capability of exploration in the search space. Their repertoire is much broader and includes techniques such as \emph{simulated annealing}, \emph{particle swarm optimization} or \emph{response surface methods}, among others \cite{rios2013derivative}, thus many tools are available to address the most diverse optimization problems. These heuristics can also be used over differentiable optimization problems, and sometimes they can even outperform gradient methods, thanks to their greater ability to escape from local optima \cite{morse2016simple}. The evolutionary approach has been explored recently in several \ac{DML} problems \cite{kalintha2017kernelized,ali2018reinforcement}.


\end{itemize}

\section{Conclusions} \label{sec:conclusions}

In this tutorial we have studied the concept of distance metric learning, showing what it is, what its applications are, how to design its algorithms, and the theoretical foundations of this discipline. We have also studied some of the most popular algorithms in this field and their theoretical foundations, and explained different resolution techniques.

In order to understand the theoretical foundations of distance metric learning and its algorithms, it was necessary to delve into three different mathematical theories: convex analysis, matrix analysis and information theory. Convex analysis made it possible to present many of the optimization problems studied in the algorithms, along with some methods for solving them. Matrix analysis provided many useful tools to help understand this discipline, from how to parameterize Mahalanobis distances, to the optimization with eigenvectors, going through the most basic algorithm of semidefinite programming. Finally, information theory has motivated several of the algorithms we have studied.

In addition, several experiments have been developed that have allowed for the evaluation of the performance of the algorithms analyzed in this study. The results of these experiments have allowed us to observe how algorithms such as \acs{LMNN}, \acs{DMLMJ}, and especially \acs{NCA} can considerably improve the nearest neighbors classification, and how centroid-based distance learning algorithms also improve their corresponding classifiers. We have also seen the wide variety of possibilities offered by kernel-based algorithms, and the advantages that an appropriate reduction of the dimensionality of the datasets can offer.






\appendix

\section{Glossary of terms} \label{app:glossary}

\begin{acronym}
    \acro{ANMM}[ANMM]{Average Neighborhood Margin Maximization}
    \acro{DML}[DML]{Distance Metric Learning}
    \acro{DML-eig}[DML-eig]{Distance Metric Learning with eigenvalue optimization}
    \acro{DMLMJ}[DMLMJ]{Distance Metric Learning through the Maximization of the Jeffrey divergence}
    \acro{ITML}[ITML]{Information Theoretic Metric Learning}
    \acro{KANMM}[KANMM]{Kernel Average Neighborhood Margin Maximization}
    \acro{KDA}[KDA]{Kernel Discriminant Analysis}
    \acro{KDMLMJ}[KDMLMJ]{Kernel Distance Metric Learning through the Maximization of the Jeffrey divergence}
    \acro{KLMNN}[KLMNN]{Kernel Large Margin Nearest Neighbors}
    \acro{kNN}[$k$-NN]{$k$-Nearest Neighbors}
    \acro{LDA}[LDA]{Linear Discriminant Analysis}
    \acro{LDML}[LDML]{Logistic Discriminant Metric Learning}
    \acro{LMNN}[LMNN]{Large Margin Nearest Neighbors}
    \acro{LSI}[LSI]{Learning with Side Information}
    \acro{MCML}[MCML]{Maximally Collapsing Metric Learning}
    \acro{MMC}[MMC]{Mahalanobis Metric for Clustering}
    \acro{NCA}[NCA]{Neighborhood Components Analysis}
    \acro{NCM}[NCM]{Nearest Class Mean}
    \acro{NCMC}[NCMC]{Nearest Class with Multiple Centroids}
    \acro{NCMML}[NCMML]{Nearest Class Mean Metric Learning}
    \acro{PCA}[PCA]{Principal Components Analysis}
    \acro{SVM}[SVM]{Support Vector Machines}
    
\end{acronym}


\section{Mathematical Background} \label{app:math}

  In this appendix we will study three mathematical blocks that make up the foundations of distance metric learning: convex analysis, matrix analysis and information theory.

\subsection{Convex Analysis}

  Convex analysis is a fundamental field of study for many optimization problems. This field studies the convex sets, functions and problems. Convex functions have very useful properties in optimization tasks, and allow tools to be built to solve numerous types of convex optimization problems.

  We will highlight some results of convex analysis in our work. First, we will show some important geometric properties of convex sets, such as the convex projection theorem, and then we will analyze some optimization methods that will be used later.

  We start with the geometry of convex sets. We will work in the euclidean $d$-dimensional space, $\R^d$, where we note the dot product as $\langle \cdot, \cdot \rangle$.

  \subsubsection{Convex Set Results}

  Recall that convex sets are those for which any segment between two points in the set remains within the set, that is, a set $K \subset \R^d$ is convex iff $[x,y] = \{(1-\lambda) x + \lambda y \colon \lambda \in [0,1] \} \subset K$, for every $x, y \in K$. An important result from convex sets states that, at every point on the border of a closed set, we can setup a hyperplane so that the convex set and the hyperplane intersect only at the boundary of the set, and the whole set lies on one side of the hyperplane. Furthermore, this property characterizes the closed convex sets with non empty interiors. This result is known as the supporting hyperplane theorem and we discuss it below.

  \begin{definition}

  Let $T \colon \R^d \to \R$ be a linear map, $\alpha \in \R$ and $P = \{ x \in \R^d \colon T(x) = \alpha \}$ be an hyperplane. Associated with $P$, we define $P^+ = \{ x \in  \R^d \colon T(x) \ge \alpha\}$ and $P^- = \{ x \in \R^d \colon T(x) \le \alpha\}$.

  We say that $P$ is a \emph{supporting hyperplane} for the set $K \subset \R^d$ if $P \cap \closure{K} \ne \emptyset$, and either $K  \subset P^+$ or $K \subset P^-$. We refer to \emph{supporting half-space} as the half-space that contains $K$, between $P^+$ and $P^-$.

  \end{definition}

  \begin{theorem}[Supporting hyperplane theorem]~ \label{thm:support_hyperplane}
    \begin{enumerate}
      \item If $K \subset \R^d$ is a closed convex set, then for each $x_0 \in \fr K$ there is a supporting hyperplane $P$ for $K$ so that $x_0 \in P$.
      \item Every proper closed convex set in $\R^d$ is the intersection of all its supporting half-spaces.
      \item Let $K \subset \R^d$ be a closed set with non empty interior. Then, $K$ is convex if and only if for every $x \in \fr K$ there is a supporting hyperplane $P$ for $K$ with $x \in P$.
    \end{enumerate}
  \end{theorem}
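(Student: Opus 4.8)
The plan is to base everything on the convex projection theorem (Theorem~\ref{thm:convex_projection}), together with the \emph{obtuse-angle} characterization of the projection, which I would first record as a lemma since all three parts rest on it. Writing $p = P_K(x)$ for the projection of $x$ onto a non-empty closed convex set $K$, the claim is that $\langle x - p, y - p\rangle \le 0$ for every $y \in K$. To prove it I would use convexity to note that $p + t(y-p) \in K$ for $t \in [0,1]$, insert this point into the minimality of $\|x-p\|$, expand $\|x - p - t(y-p)\|^2$, divide the resulting inequality by $t > 0$, and let $t \to 0^+$.

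For part (1), fix $x_0 \in \fr K$ and choose a sequence $x_n \to x_0$ with $x_n \notin K$ (possible since $x_0$ is a boundary point). Let $p_n = P_K(x_n)$ and $u_n = (x_n - p_n)/\|x_n - p_n\|$; the lemma gives $\langle u_n, y - p_n\rangle \le 0$ for all $y \in K$. Since $x_0 \in K$ we have $\|x_n - p_n\| \le \|x_n - x_0\| \to 0$, so $p_n \to x_0$. The unit vectors $u_n$ lie on the compact unit sphere, so a subsequence converges to a unit vector $u$; passing to the limit in the inequality yields $\langle u, y - x_0\rangle \le 0$ for all $y \in K$. Then $P = \{x : \langle u, x\rangle = \langle u, x_0\rangle\}$ is a supporting hyperplane through $x_0$ with $K \subset P^-$.

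For part (2), the inclusion of $K$ in the intersection of its supporting half-spaces is immediate from the definition (here properness guarantees this family is non-empty, so the intersection is meaningful). For the reverse inclusion I would take $z \notin K$, set $p = P_K(z)$ (with $p \neq z$, and $p \in \fr K$ because an interior nearest point could be pushed slightly toward $z$), and build the supporting hyperplane at $p$ with normal $z - p$ exactly as in part (1). The strict computation $\langle z-p, z\rangle = \langle z-p,p\rangle + \|z-p\|^2 > \langle z-p,p\rangle$ shows that $z$ lies strictly outside this supporting half-space, so $z$ is not in the intersection. The forward implication of part (3) is then nothing more than part (1).

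The genuinely delicate step is the reverse implication of part (3), so I would spend most of the effort there. Assume every boundary point of the closed set $K$ (with $\interior K \neq \emptyset$) carries a supporting hyperplane, and let $C$ be the intersection of all supporting half-spaces; $C$ is convex as an intersection of half-spaces, and $K \subseteq C$ trivially, so it suffices to show $C \subseteq K$. Suppose instead that $z \in C \setminus K$; fix $a \in \interior K$ and let $b = a + t^*(z-a)$ with $t^* = \sup\{t \in [0,1] : a + t(z-a) \in K\}$, so $b \in \fr K$ and $t^* \in (0,1)$. Take a supporting hyperplane at $b$ with normal $u$ and $K \subset P^- = \{x : \langle u,x\rangle \le \langle u,b\rangle\}$; since $a$ is interior it must satisfy the strict inequality $\langle u,a\rangle < \langle u,b\rangle$. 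Writing $\langle u,b\rangle = (1-t^*)\langle u,a\rangle + t^*\langle u,z\rangle$ and using $t^* \in (0,1)$ forces $\langle u,z\rangle > \langle u,b\rangle$, contradicting $z \in C \subseteq P^-$. Hence $C = K$ and $K$ is convex. The main obstacle throughout is organizing these projection and limit arguments cleanly; the interior-point hypothesis in part (3) is precisely what lets the strict inequality $\langle u,a\rangle < \langle u,b\rangle$ close the contradiction.
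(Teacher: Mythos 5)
Your proof is correct. There is nothing in the paper to compare it against: the paper states this theorem without proof and defers to \citet{variations_convex} (chap.~2, theorem~2.7), so your argument makes the result self-contained. Your route is also the natural one given what the paper does prove: all three parts rest on the convex projection theorem (Theorem~\ref{thm:convex_projection}), whose proof in the paper nowhere uses supporting hyperplanes, so there is no circularity. In fact, the obtuse-angle lemma you begin with is precisely the final assertion of Theorem~\ref{thm:convex_projection} (stated there for $x \in \R^d \setminus K$, which is the only case you actually invoke), so you could simply cite it rather than re-derive it; note also that in part (2) no limiting argument is needed at all, since that lemma applies directly to $z \notin K$. The three core arguments --- the limiting normals $u_n$ on the compact unit sphere in part (1), the strict inequality $\langle z-p, z\rangle > \langle z-p, p\rangle$ excluding $z$ from one supporting half-space in part (2), and the boundary-crossing point $b = a + t^*(z-a)$ with $t^* \in (0,1)$ in part (3) --- are all sound. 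Two small points you should write out explicitly: the strict inequality $\langle u, a\rangle < \langle u, b\rangle$ in part (3) needs the one-line justification that a ball $B(a,\varepsilon) \subset K \subset P^-$ forces $\langle u, a\rangle + \varepsilon\|u\| \le \langle u, b\rangle$; and part (3) silently assumes $\fr K \ne \emptyset$, so you should note that otherwise $K$ is nonempty and clopen, hence $K = \R^d$ by connectedness, which is trivially convex.
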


  Proof of this result can be found in \citet{variations_convex} (chap. 2, theorem 2.7). We will use this theorem in the following results. The following property is fundamental to be able to make sense of the optimization tools shown in this paper. We will see that, given a closed convex set and a point in $\R^d$, we can find a nearest point to the given point in the convex set, and it is unique, that is, there is a projection for the given point onto the convex set. In other words, projections onto convex sets are well defined. We prove this result below. We will see that projections will help us to deal with constrained convex problems.

  \begin{theorem}[Convex projection] \label{thm:convex_projection}
    Let $K \subset \R^d$ be a non empty closed convex set. Then, for every $x \in \R^d$ there is a single point $x_0 \in K$ with $d(x,K) = d(x,x_0)$, where we have defined the distance to the set $K$ by
    \[ d(x,K) = \inf \{d(x,y) \colon y \in K\}.\]
    The point $x_0$ is called the \emph{projection} of $x$ onto $K$ and it is usually denoted by $P_K(x)$. The function $P_K \colon \R^d \to K$ given by the mapping $x \mapsto P_K(x)$ is therefore well defined and it is called the projection onto $K$.
    In addition, for each $x \in \R^d \setminus K$, the half-space $\{ y \in \R^d \colon \langle x - P_K(x), y - P_K(x) \rangle \le 0\}$ is a supporting half-space for $K$ in $P_K(x)$.
  \end{theorem}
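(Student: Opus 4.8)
The plan is to establish the three assertions in order: existence of a nearest point, its uniqueness, and the supporting half-space property. Write $\delta = d(x,K) \ge 0$, which exists as a finite infimum since $K$ is non-empty. For \textbf{existence}, I would take a minimizing sequence $(y_n) \subset K$ with $\|x - y_n\| \to \delta$ and show it is Cauchy using the parallelogram identity
\[
\|y_n - y_m\|^2 = 2\|y_n - x\|^2 + 2\|y_m - x\|^2 - 4\left\|\tfrac{y_n + y_m}{2} - x\right\|^2.
\]
Because $K$ is convex, the midpoint $\tfrac{y_n+y_m}{2}$ lies in $K$, so its distance to $x$ is at least $\delta$; substituting this bound shows the right-hand side tends to $2\delta^2 + 2\delta^2 - 4\delta^2 = 0$. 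Hence $(y_n)$ converges to some $x_0$, which belongs to $K$ since $K$ is closed, and continuity of the norm gives $\|x - x_0\| = \delta$, that is, $d(x,x_0) = d(x,K)$.

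For \textbf{uniqueness}, suppose $x_0, x_1 \in K$ both realize the distance $\delta$. Convexity again places $\tfrac{x_0+x_1}{2}$ in $K$, so applying the same parallelogram identity to $x_0$ and $x_1$ yields
\[
\|x_0 - x_1\|^2 = 2\delta^2 + 2\delta^2 - 4\left\|\tfrac{x_0+x_1}{2}-x\right\|^2 \le 4\delta^2 - 4\delta^2 = 0,
\]
forcing $x_0 = x_1$. This justifies defining $P_K(x) = x_0$ and declaring $P_K$ well defined.

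For the \textbf{supporting half-space}, fix $x \in \R^d \setminus K$ and write $p = P_K(x)$; note $x \ne p$, so $x - p \ne 0$ and $P = \{y \colon \langle x - p, y - p\rangle = 0\}$ is a genuine hyperplane containing $p \in K$, whence $P \cap \closure{K} \ne \emptyset$. The key step is to show $\langle x - p, y - p\rangle \le 0$ for every $y \in K$. For such $y$ and $t \in (0,1]$, convexity gives $p + t(y-p) \in K$, and minimality of $p$ yields
\[
\|x - p\|^2 \le \|x - p - t(y-p)\|^2 = \|x-p\|^2 - 2t\langle x-p, y-p\rangle + t^2\|y-p\|^2 .
\]
Cancelling $\|x-p\|^2$, dividing by $t>0$, and letting $t \to 0^+$ gives $\langle x-p, y-p\rangle \le 0$, so $K$ lies in the half-space $\{y \colon \langle x-p, y-p\rangle \le 0\}$, which is exactly $P^-$ for the linear functional $T(y) = \langle x-p, y\rangle$ with $\alpha = \langle x-p, p\rangle$; this is the claimed supporting half-space.

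The main obstacle is the existence argument: one must convert the definition of the infimum into an actual attained minimizer rather than merely a nearest candidate that might escape the set. The essential ingredient that makes this work is the interplay between convexity, which keeps midpoints inside $K$, and the parallelogram law, which is where the Euclidean (Hilbert-space) geometry of $\R^d$ enters. The same combination drives uniqueness, while the supporting half-space follows from a routine first-order perturbation along segments contained in $K$.
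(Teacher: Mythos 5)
Your proof is correct, but it follows a genuinely different route from the paper's in two of the three steps. For \emph{existence}, the paper does not use convexity at all: it intersects $K$ with a large closed ball $\closure{B}(x,R)$ to get a compact set, and lets the continuous function $y \mapsto \|x-y\|$ attain its minimum there, checking that points of $K$ outside the ball are farther away. You instead run the classical Hilbert-space argument: a minimizing sequence is Cauchy by the parallelogram law, precisely because convexity keeps midpoints in $K$. The paper's argument is more elementary in $\R^d$ and shows that the distance to \emph{any} closed set is attained, while yours carries over verbatim to infinite-dimensional Hilbert spaces, where the compactness argument breaks down. For \emph{uniqueness}, the paper shows $\langle x_1 - x_2, x - x_0\rangle = 0$ for the midpoint $x_0$ and invokes the Pythagorean theorem, whereas your parallelogram computation packages the same Euclidean geometry into one line; the two are equivalent but yours is tighter. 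For the \emph{supporting half-space}, both proofs are first-order perturbation arguments along the segment $[P_K(x), y] \subset K$: the paper argues by contradiction, showing $f(t) = \|y_t - x\|^2$ has $f'(0) < 0$ if the inner product were positive, while you derive $\langle x - p, y - p\rangle \le 0$ directly by cancelling $\|x-p\|^2$, dividing by $t$, and letting $t \to 0^+$ — the same idea, with your version avoiding the contradiction. One small merit of your write-up worth keeping: you explicitly note that $x \ne p$ makes $P$ a genuine hyperplane meeting $\closure{K}$, which ties the conclusion back to the paper's definition of supporting half-space.
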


  \begin{proof}
    First, we will prove the existence of a point in $K$ in which the distance to $K$ is achieved. In fact, this is true for every closed and not necessarily convex set. Let $x \in \R^d$. As $K$ is closed, we can choose $R > 0$ so that $K \cap \closure{B}(x,R)$ is a compact and non empty set. We consider the distance to $x$ in this set, that is, we define the map $d_x \colon K \cap \closure{B}(x,R) \to \R^+_0$ by $d_x(y) = d(x,y) = \|x-y\|$. $d_x$ is continuous and it is defined over a compact set, so it  attains a minimum at a point $x_0 \in K \cap \closure{B}(x,R)$.

    If we now take $y \in K \cap \closure{B}(x,R)$, we get $d(x,y) = d_x(y) \ge d_x(x_0) = d(x,x_0)$. On the other hand, if we take $y \in K \setminus \closure{B}(x,R)$, we get $d(x,y) > r \ge d(x,x_0)$. We have obtained that $d(x,y) \ge d(x,x_0)$ for every $y \in K$, and therefore $d(x,K) \ge d(x,x_0)$. The remaining inequality is clear, since $x_0 \in K$, that is, $x_0$ is the point we were looking for.

    We will see now the uniqueness of the point found. Suppose that $x_1,x_2 \in K$ verify that $d(x,x_1) = d(x,K) = d(x,x_2)$. We define $x_0$ as the half point in the segment $[x_1,x_2]$. We have that $x_0 \in K$, since $K$ is convex. Let us note that
    \[\langle x_1 - x_2, x - x_0 \rangle = \langle x_1 - x_2, x - \frac{1}{2}(x_1 + x_2) \rangle = \frac{1}{2}\langle x_1 - x_2, 2x - x_1 - x_2 \rangle.\]
    If we substitute $x_1 - x_2 = (x - x_2) - (x - x_1)$ and $2x - x_1 - x_2 = (x-x_2)+(x-x_1)$, we obtain
    \begin{align*}
        \langle x_1 - x_2, x - x_0 \rangle &= \frac{1}{2} \langle (x - x_2) - (x-x_1), (x - x_2)+(x - x_!) \rangle\\
                                           &= \frac{1}{2}(\|x - x_2\|^2 - \|x - x_1\|^2) \\
                                           &= \frac{1}{2}(d(x,K)^2 - d(x,K)^2) = 0.
    \end{align*}
    Therefore, the vectors $x_1 - x_2$ and $x - x_0$ are orthogonal, and consequently so are $x - x_0$ y $x_0 - x_2 = (x_1 - x_2)/2$. Applying Pythagorean theorem we have
    \[d(x,K)^2 = \|x - x_2\|^2 =  \|x - x_0\|^2 + \|x_0 - x_2\|^2 \ge \|x - x_0\|^2 \ge d(x,K)^2,\]
    that is, the equality holds in the previous inequality. In particular, we obtain that $\|x_0 - x_2\|^2 = 0$, and then $x_0 = x_2$. Since $x_0$ was the half point of $[x_1,x_2]$ we conclude that $x_1 = x_2$, proving the uniqueness.

    Finally we will prove the last assertion in the theorem. Let $x \in \R^d \setminus K$ and suppose that there exists $y \in K$ with $\langle x - P_K(x), y - P_K(x) \rangle > 0$. Since $K$ is convex, the segment $[y,P_K(x)]$ is contained in $K$, and therefore we have $y_t = P_K(x) + t(y - P_K(x)) \in K$, for every $t \in [0,1]$. We define the map $f \colon [0,1] \to \R$ by
    \begin{align*}
      f(t) &= \|y_t - x\|^2 = \|P_K(x) -x  + t(y - P_K(x))\|^2 \\
           &= \|P_K(x) - x \|^2 + 2t\langle P_K(x)-x,y-P_K(x) \rangle + t^2\|y - P_K(x)\|^2. 
    \end{align*}
    $f$ is a polynomial in $t$, so it is differentiable, and
    \[f'(0) = 2\langle P_K(x)-x,y-P_K(x) \rangle = -2 \langle x - P_K(x), y - P_K(x) \rangle < 0.\]
    Last expression implies that $f$ is strictly decreasing in a neighborhood of 0, that is, there exists $\varepsilon > 0$ so that $\|y_t - x\|^2 < \|y_0 - x\|^2 = \|P_K(x) - x\|^2$, for $0 < t < \varepsilon$, which results in a contradiction, since $P_K(x)$ minimizes the distance to $x$ in $K$ and the points $y_t$ lie on $K$.

  \end{proof}

  \subsubsection{Optimization Methods} \label{sssec:opt_methods}

  In the following paragraphs we will discuss some of the optimization methods that we will use in distance metric learning algorithms. These algorithms will generally try to optimize (we will focus on minimizing without loss of generality) differentiable functions without constraints, or convex functions subject to convex constraints. For the first case, it is well known that the gradient of a differentiable function has the direction of the maximum slope in the function graph, thus by advancing small quantities in the negative gradient direction we manage to reduce the value of our objective function. This iterative method is usually called the gradient descent method. The adaptation rule for this method, for a differentiable function $f \colon \R^d \to \R$, is given by $x_{t+1} = x_t - \eta \nabla f(x_t)$, $t \in \N \cup \{0\}$, where $\eta$ is the quantity we advance in the negative gradient direction, and it is called the \emph{learning rate}. This value can be either constant or adapted according to the evaluations of the objective function. For the first option, the choice of a value of $\eta$ that is too big or too small can lead to poor results. The second option needs to evaluate the objective function at each iteration, which can be computationally expensive.

  Foundations of gradient descent are based on the following ideas. Let us consider an objective function $f \colon \R^d \to \R$, $x \in \R^d$ and $v \in \R^d \setminus \{0\}$ an arbitrary direction. We also consider the function $g \colon \R \to \R$ given by $g(\eta) = f(x + \eta v/\|v\|)$. The rate of change or directional derivative of $f$ at $x$ in the direction of $v$ is given by $g'(0) = \langle \nabla f(x), v \rangle/\|v\|$. Applying Cauchy-Schwarz inequality, we have
  \[ -\|\nabla f(x)\| \le \frac{1}{\|v\|}\langle \nabla f(x), v \rangle \le \|\nabla f(x)\|, \]
  and equality in the left inequality holds when $v = - \nabla f(x)$, thus obtaining the maximum descent rate. In the same way, the maximum ascent rate is achieved when $v = \nabla f(x)$.

  If gradient at $x$ is non zero and we consider the first order Taylor approximation with the points $x$ and $x - \eta\nabla f(x)$, we have that
  \[ f(x - \eta\nabla f(x)) = f(x) - \eta \|\nabla f(x)\|^2 + o(\eta),\]
  with $\lim_{\eta \to 0} |o(\eta)|/\eta = 0$, then there is $\varepsilon > 0$ so that if $0 < \delta < \varepsilon$, we have
  \[  \frac{o(\delta)}{\delta} < \|\nabla f(x)\|,\]
  and therefore
  \[f(x - \delta\nabla f(x)) - f(x) = \delta\left(-\|\nabla f(x)\|^2+ \frac{o(\delta)}{\delta}\right) < \delta(-\|\nabla f(x)\|^2+\|\nabla f(x)\|^2) = 0, \]
  thus $f(x - \delta\nabla f(x)) < f(x)$ for $0 < \delta < \varepsilon$, so we are guaranteed that for an accurate learning rate the gradient method performs a descent at each iteration. Let us observe that the gradient direction is not the only valid descent direction, but the above calculations are still true for any direction $v \in \R^d$ with $\langle \nabla f(x), v \rangle < 0$. The choice of different descent directions, even if they are not the maximum slope direction, may provide better results in certain situations.

  Now we will discuss the constrained convex optimization problems. When we work with constrained problems, gradient descent method cannot be applied directly, as the gradient descent adaptation rule, $x_{t+1} = x_t - \eta \nabla f(x_t)$, does not guarantee $x_{t+1}$ to be a feasible point, that is, a point that fulfills all the constraints. When the optimization problem is convex, the set determined by the constraints is closed and convex, so we can take projections onto this feasible set. The projected gradient method tries to fix the gradient descent problem by adding a projection onto the feasible set in the gradient descent adaptation rule, that is, if $C$ is the feasible set, and $P_C$ is the projection onto this set, the projected gradient adaptation rule becomes $x_{t+1} = P_C(x_t - \eta \nabla f(x_t))$. To confirm that this method is successful, we have to show that the direction $v = P_C(x - \eta\nabla f(x)) - x$ is a descent direction, which is attained, thanks to the reasons given above, if $\langle \nabla f(x), v \rangle < 0$.

  We name $x_1 = x - \eta \nabla f(x)$. Then, $v = P_C(x_1) - x$. Note that $\langle \nabla f(x), v \rangle < 0 \iff \langle x_1 - x, P_C(x_1) - x \rangle = -\eta \langle \nabla f(x), v \rangle > 0$. If gradient is not null and $x_1 \in C$, we get $\langle x_1 - x, P_C(x_1) - x \rangle = \langle x_1 - x, x_1 - x \rangle = \|x_1 - x\|^2 > 0$. If $x_1 \notin C$, then the convex projection theorem (Theorem \ref{thm:convex_projection}) ensures that the half-space $H = \{y \in \R^d \colon \langle x_1 - P_C(x_1), y - P_C(x_1) \rangle \le 0\}$ contains $C$. In particular,
  \[ 0 \ge \langle x_1 - P_C(x_1), x - P_C(x_1) \rangle = \langle x_1 - x, x - P_C(x_1) \rangle + \|x - P_C(x_1)\|^2.  \]
  Consequently, $ \langle x_1 - x, P_C(x_1) - x \rangle \ge \|x - P_C(x_1)\|^2 \ge 0$. In addition, equality holds if and only if $x = P_C(x_1)$, in which case the iterative algorithm will have converged (observe that this happens when $x \in \fr C$ and the gradient descent direction points out of $C$ and orthogonally to the supporting hyperplane). Therefore, as long as the projected gradient iterations produce changes in the obtained points, an appropriate learning rate will ensure the descent in the objective function. Figure \ref{fig:gradient} visually compares the gradient descent method and the projected gradient method.

  \begin{figure}[htbp]
    \centering
    \includegraphics[width=0.75\textwidth]{./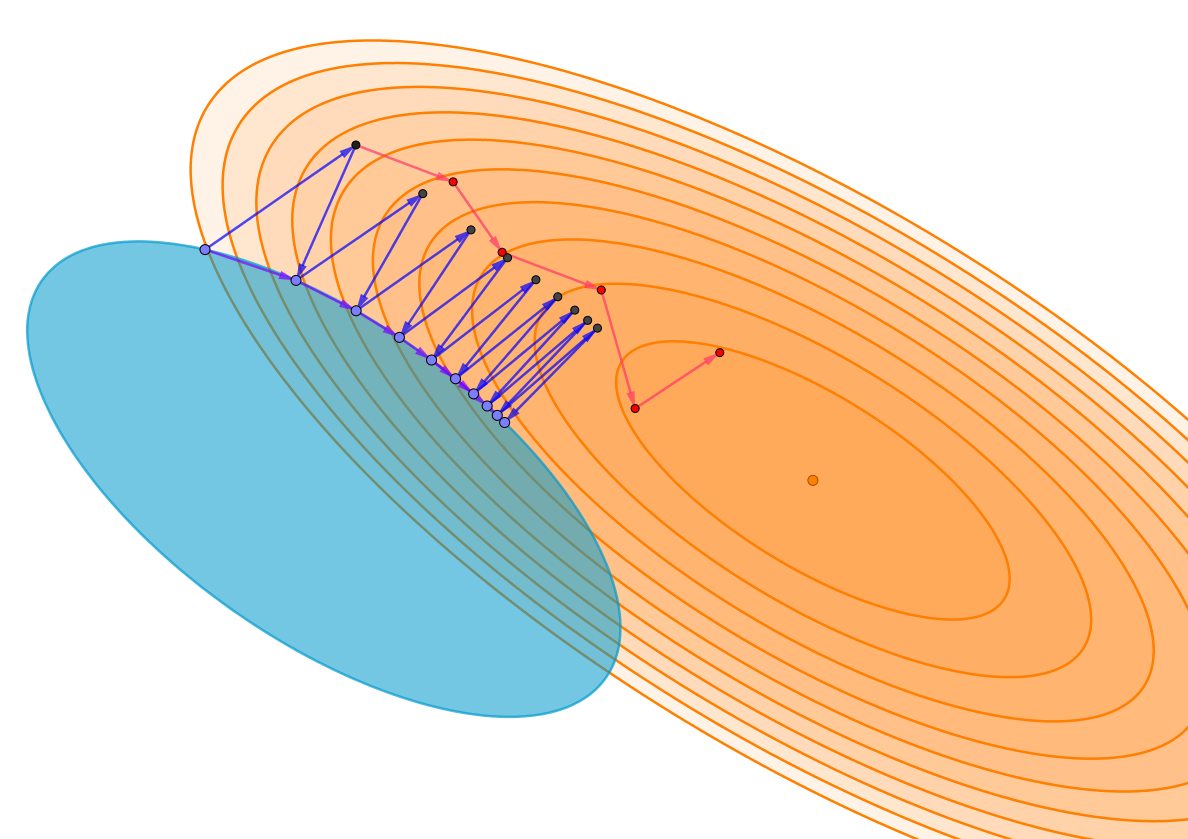}
    \caption{Orange shaded areas represent contour lines of the function $f(x,y) = 2(x+y)^2 + 2y^2$ for natural values between $0$ and $10$. The red path shows the behaviour of the unconstrained gradient descent method applied to $f$. The blue path shows the behaviour of the projected gradient descent, with the blue ellipse as the feasible set. In both cases we observe that we are obtaining descent directions.} \label{fig:gradient}
  \end{figure}

  Another problem we can find when trying to optimize constrained convex problems is that we may have multiple constraints, but we only know the projection onto each single restriction, without knowing the projection onto the intersection, which makes up the feasible set. In these cases, a popular method to find a point in the intersection is the so-called \emph{iterated projections method}, which consists of taking successive projections onto each constraint set, and repeating this procedure cyclically. We will analyze the simplest case, that is, let us suppose that we have a feasible set determined by two convex constraints. The following theorem states that, if the intersection of the sets determined by each constraint is not empty, then the sequence of iterated projections converge to a point in the intersection.

  \begin{theorem}[Convergence of the iterated projections method]
    Let $C, D \subset \R^d$ be closed convex sets, and let $P_C, P_D \colon \R^d \to \R^d$ be the projections onto $C$ and $D$, respectively. Suppose that $x_0 \in C$ and we build the sequences $\{x_n\}$ and $\{y_n\}$ given by $y_n = P_D(x_n)$ and $x_{n+1} = P_C(y_n)$, for each $n \in \N \cup \{0\}$.

    Then, if $C \cap D \ne \emptyset$, both sequences converge to a point $x^* \in C \cap D$.
  \end{theorem}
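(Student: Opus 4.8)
The plan is to run the classical alternating-projections argument, whose engine is the obtuse-angle characterization of the projection furnished by Theorem \ref{thm:convex_projection}: for a closed convex set $K$, the point $P_K(x)$ is the unique element of $K$ satisfying $\langle x - P_K(x), z - P_K(x)\rangle \le 0$ for every $z \in K$. Two consequences I would record at the outset, as they are the only facts about projections needed. First, if $p \in K$ then $P_K(p) = p$, since the distance from $p$ to $K$ is zero. Second, applying the criterion to a pair $x,y$ with $z = P_K(y)$, then with $z = P_K(x)$, and adding the two inequalities yields $\|P_K(x) - P_K(y)\|^2 \le \langle x - y, P_K(x) - P_K(y)\rangle$, so by Cauchy--Schwarz $P_K$ is nonexpansive: $\|P_K(x) - P_K(y)\| \le \|x - y\|$.

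First I would establish \emph{Fej\'er monotonicity} with respect to the (nonempty) intersection. Fix any $p \in C \cap D$. Since $P_D(p) = p$ and $P_C(p) = p$, nonexpansiveness gives $\|y_n - p\| = \|P_D(x_n) - P_D(p)\| \le \|x_n - p\|$ and $\|x_{n+1} - p\| = \|P_C(y_n) - P_C(p)\| \le \|y_n - p\|$. Hence the interlaced sequence $\|x_0 - p\| \ge \|y_0 - p\| \ge \|x_1 - p\| \ge \cdots$ is non-increasing and bounded below, so it converges; in particular $\{x_n\}$ and $\{y_n\}$ are bounded, and $\|x_n - p\|$ and $\|y_n - p\|$ share a common limit $\ell \ge 0$.

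Next I would show the gap collapses. Expanding $\|x_n - p\|^2 = \|x_n - y_n\|^2 + 2\langle x_n - y_n, y_n - p\rangle + \|y_n - p\|^2$ and invoking the obtuse-angle criterion for $D$ with the test point $z = p \in D$, which gives $\langle x_n - y_n, y_n - p\rangle \ge 0$, I obtain $\|x_n - y_n\|^2 \le \|x_n - p\|^2 - \|y_n - p\|^2$. The right-hand side telescopes along the monotone sequence of the previous step and therefore tends to $0$, forcing $\|x_n - y_n\| \to 0$. To finish, boundedness and the Bolzano--Weierstrass theorem in $\R^d$ produce a subsequence $x_{n_k} \to x^*$; since $\|x_n - y_n\| \to 0$, also $y_{n_k} \to x^*$, and as $x_{n_k} \in C$, $y_{n_k} \in D$ with $C,D$ closed, we get $x^* \in C \cap D$. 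Re-running the monotonicity estimate with the particular common point $p = x^*$ shows $\|x_n - x^*\|$ is non-increasing and admits a subsequence tending to $0$, so the whole sequence tends to $0$; thus $x_n \to x^*$ and then $y_n \to x^*$ as well.

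I expect the two genuinely load-bearing points to be the following. The gap estimate must use a \emph{common} point $p$ as the test vector in the obtuse-angle inequality, which is precisely where the hypothesis $C \cap D \ne \emptyset$ enters; without it the method need not converge at all. And the passage from subsequential to full convergence relies on feeding the already-identified limit $x^*$ back into the Fej\'er-monotonicity bound, which is what excludes the iterates oscillating among several cluster points. Working in finite dimension is what lets Bolzano--Weierstrass deliver a strongly convergent subsequence, sidestepping the weak-convergence subtleties that appear in the general Hilbert-space setting.
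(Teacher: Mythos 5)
Your proof is correct and complete. Note that the paper does not actually include a proof of this theorem---it defers to the cited reference of Boyd and Dattorro---and your argument is essentially the standard one found there: firm nonexpansiveness of convex projections derived from the obtuse-angle criterion of Theorem \ref{thm:convex_projection}, Fej\'er monotonicity of the interlaced distance sequence with respect to a common point $p \in C \cap D$, collapse of the gap $\|x_n - y_n\|$, extraction of a cluster point via Bolzano--Weierstrass, and re-use of the monotonicity bound with $p = x^*$ to upgrade subsequential to full convergence. One pedantic point worth a word: Theorem \ref{thm:convex_projection} states the supporting half-space property only for $x \in \R^d \setminus K$; for $x \in K$ the inequality $\langle x - P_K(x), z - P_K(x)\rangle \le 0$ holds trivially because $P_K(x) = x$, so the criterion you invoke (and hence both the nonexpansiveness and the gap estimate) is indeed valid for all $x$.
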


  Proof of this result is provided by \citet{boyd2003alternating}. The extension to the general case can be made following a similar argument, and it is discussed by \citet{bregman_projections}. That is why the general case is also called the \emph{Bregman projections method}. Figure \ref{fig:iterproj} shows a graphical example of the iterated projections method.

  \begin{figure}[htbp]
    \centering
    \includegraphics[width=0.75\textwidth]{./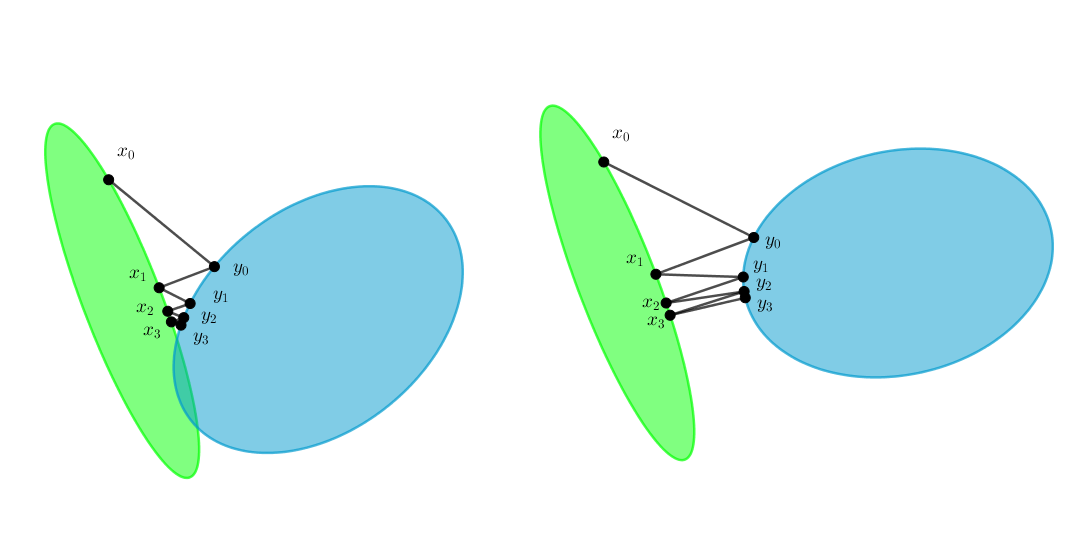}
    \caption{The iterated projections method. The second image shows how the algorithm works if the sets do not intersect.} \label{fig:iterproj}
  \end{figure}

  To conclude this section, we need to make a last remark. Recall that a convex and differentiable function $f \colon \Omega \to \R$ defined on a convex open set verifies that $f(x) \ge \langle \nabla f(x_0), x - x_0 \rangle$, for every $x, x_0 \in \Omega$. Let $x_0 \in \Omega$ be fix. When we work with convex but non differentiable functions, there are still vectors $v \in \R^d$ for which $f(x) \ge f(x_0) + \langle v, x - x_0 \rangle$, for every $x \in \Omega$. This is a consequence of the supporting hyperplane theorem applied to the epigraph of $f$ (recall that $f$ is convex iff its epigraph is too). In this case we say that $v$ is a \emph{subgradient} of $f$ at $x_0$ and we note it as $\partial f(x_0)$, or $\partial f(x_0) / \partial x$, if we need to specify the variable.

  Subgradients and gradients have similar behaviours, although we cannot always guarantee that subgradients are descent directions. Subgradient methods work in a similar way to gradient methods, replacing the gradient in the adaptation rule by a subgradient. In subgradient methods it is useful to keep track of the best value obtained, as some subgradients may not be descent directions. In the situations we will handle, subgradient computations are easy: if $f$ is differentiable at $x_0$, then $\nabla f(x_0)$ is a subgradient (in fact, this is the only subgradient at $x_0$); if $f$ is a maximum of convex differentiable functions, then a subgradient at $x_0$ is the gradient of any of the differentiable functions that attains the maximum at $x_0$.

\subsection{Matrix Analysis}

  In distance metric learning, matrices will play a key role, as they will be the structure over which distances will be defined and over which the optimization methods studied in the previous section will be applied. Within the set of all matrices, positive semidefinite matrices will be of even greater importance, so, in order to better understand the learning problems we will be dealing with, it will be necessary to delve into some of their numerous properties.

  This section examines in depth the study of matrices, on a basis of the best-known results of diagonalization in linear algebra. From this basis, we will show how to give the set of matrices a Hilbert space structure, in order to be able to apply the convexity results and optimization methods from the previous chapter. In particular, we will be interested in how to obtain projections onto the set of positive semidefinite matrices. Also related to positive semidefinite matrices, we will present several results regarding decomposition that we will need in future sections. Finally, we will study some matrix optimization problems that can be solved via eigenvectors. Table \ref{table:matrix_notation} shows the notations we will use for matrices. We will restrict the study to the real case, since the problem we will deal with is in real variables, although many of the results we will see can be extended to the complex case.

  \begin{table}[htbp]
    \begin{tabular}{rl}
      \toprule
      Notation & Concept \\
      \midrule
      $\mathcal{M}_{d'\times d}(\R)$ & Matrices of order $d' \times d$. \\
      $\mathcal{M}_d(\R)$ & Square matrices of orden $d$. \\
      $A_{ij}$ & The value of the matrix $A$ at the $i$-th row and $j$-th column. \\
      $A_{.j}$ (resp. $A_{i.}$) & The $j$-th column (resp. the $i$-th row) of the matrix $A$. \\
      $ v = (v_1,\dots,v_d)$ & A vector in $\R^d$. Vectors will be treated as column matrices. \\
      $A^T$ & The transpose of the matrix $A$. \\
      $S_d(\R)$ & Symmetric matrices of order $d$. \\
      $\gl_d(\R)$ & Invertible matrices of order $d$. \\
      $r(A), \tr(A), \det(A)$ & The rank, trace and determinant of the matrix $A$. \\
      $O_d(\R)$ & Orthogonal matrices of order $d$. \\
      $S_d(\R)^+_0$ & Positive semidefinite matrices of order $d$. \\
      $S_d(\R)^+$ & Positive definite matrices of order $d$. \\
      $S_d(\R)^-_0$ & Negative semidefinite matrices of order $d$. \\
      $S_d(\R)^-$ & Negative definite matrices of order $d$.
    \end{tabular}
    \caption{Matrices notations.}  \label{table:matrix_notation}
  \end{table}

\subsubsection{Matrices as a Hilbert Space. Projections}

Over the set of matrices we have defined a sum operation, and a matrix product, between matrices of orders $d \times r$ and $r \times n$. When working with square matrices, this sum and product give the matrix set a non-conmutative ring structure. These operations only allow us to obtain algebraic properties of matrices, but we also want to obtain geometric and topological properties. That is why we need to introduce a matrix inner product. We will introduce this inner product in the simplest way, that is, we will view matrices as vectors where we add the matrix rows one after the other, and we will consider the usual vector inner product. This matrix product is known as \emph{Frobenius inner product}.

\begin{definition}
  We define the \emph{Frobenius inner product} over the matrices space of order $d' \times d$ as the mapping $\langle \cdot, \cdot \rangle_F \colon \mathcal{M}_{d' \times d}(\R) \times \mathcal{M}_{d' \times d}(\R) \to \R$ given by
  \[ \langle A, B \rangle_F = \sum_{i=1}^{d'}\sum_{j=1}^{d} A_{ij}B_{ij} = \tr(A^TB). \]

  We define the \emph{Frobenius norm} over the matrices space of order $d' \times d$ as the mapping $\|\cdot \|_F \colon \mathcal{M}_{d' \times d}(\R) \to \R^+_0$ given by
  \[ \|A\|_F = \sqrt{\langle A, A \rangle} = \sqrt{\sum_{i=1}^{d'}\sum_{j=1}^{d}A_{ij}^2} = \sqrt{\tr(A^TA)}. \]

\end{definition}

 Frobenius norm is therefore identical to the euclidean norm in $\R^{d' \times d}$ identifying matrices with vectors as mentioned before. Viewing this norm as a matrix norm, we have to remark that Frobenius norm is sub-multiplicative, but it is not induced by any vector norm. Some interesting properties about Frobenius norm can be deduced from the definition. They are listed below.

\begin{proposition}~
  \begin{enumerate}
    \item For each $A \in \mathcal{M}_{d' \times d}(\R)$, $\|A\|_F = \|A^T\|_F$.
    \item For each $A \in \mathcal{M}_{d' \times d}(\R)$, $\|A\|_F = \sqrt{\tr(AA^T)}$
    \item If $U \in O_d(\R), V \in O_{d'}(\R)$ and $A \in \mathcal{M}_{d' \times d}(\R)$, then $\|AU\|_F = \|VA\|_F = \|VAU\|_F = \|A\|_F$.
    \item If $A \in S_d(\R)$, then $\|A\|_F^2 = \sum_{i=1}^d \lambda_i^2$, where $\lambda_1,\dots,\lambda_d$ are the eigenvalues of $A$.
  \end{enumerate}
\end{proposition}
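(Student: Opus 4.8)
The plan is to derive all four statements directly from the definition $\|A\|_F^2 = \tr(A^TA) = \sum_{i,j} A_{ij}^2$, together with the cyclic invariance of the trace, $\tr(BC) = \tr(CB)$ whenever both products are defined, and the characterization of orthogonal matrices by $U^TU = UU^T = I$.

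For the first property, I would simply observe that $(A^T)_{ij} = A_{ji}$, so that $\sum_{i,j}(A^T)_{ij}^2$ and $\sum_{i,j} A_{ij}^2$ run over exactly the same collection of squared entries; hence $\|A^T\|_F = \|A\|_F$. The second property follows at once from the first, since $\|A\|_F = \|A^T\|_F = \sqrt{\tr((A^T)^T A^T)} = \sqrt{\tr(AA^T)}$; equivalently, one may invoke cyclicity directly to obtain $\tr(A^TA) = \tr(AA^T)$.

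For the orthogonal invariance, I would expand each Frobenius norm as a trace and cancel the orthogonal factors. For $U \in O_d(\R)$, $\|AU\|_F^2 = \tr((AU)^T AU) = \tr(U^TA^TAU) = \tr(A^TA\,UU^T) = \tr(A^TA) = \|A\|_F^2$, using cyclicity to bring $U$ next to $U^T$ and then $UU^T = I$. For $V \in O_{d'}(\R)$, the analogous computation $\|VA\|_F^2 = \tr(A^TV^TVA) = \tr(A^TA)$ uses $V^TV = I$. The two-sided case then follows by composition, $\|VAU\|_F = \|V(AU)\|_F = \|AU\|_F = \|A\|_F$.

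The last property is where I would lean on the linear-algebra background. Since $A \in S_d(\R)$ is symmetric, the spectral theorem provides an orthogonal matrix $Q \in O_d(\R)$ with $A = QDQ^T$, where $D = \diag(\lambda_1,\dots,\lambda_d)$ collects the real eigenvalues. Applying the orthogonal invariance just proved, with $V = Q^T$ and $U = Q$, gives $\|A\|_F = \|Q^T A Q\|_F = \|D\|_F$, and $\|D\|_F^2 = \sum_{i=1}^d \lambda_i^2$ because $D$ is diagonal. Alternatively, one can bypass the diagonalization and note directly that $\|A\|_F^2 = \tr(A^TA) = \tr(A^2) = \sum_{i=1}^d \lambda_i^2$, since the eigenvalues of $A^2$ are the $\lambda_i^2$ and the trace is their sum. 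None of these steps presents a genuine obstacle; the only nontrivial ingredient is the spectral decomposition of a symmetric matrix, which is taken as known from the diagonalization results assumed at the start of this section.
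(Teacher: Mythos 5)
Your proof is correct, and it follows exactly the route the paper intends: the paper states these properties as immediate deductions from the definition of the Frobenius norm and omits the verification, which is precisely what you supply (entry-counting for transposition, trace cyclicity with $UU^T = V^TV = I$ for orthogonal invariance, and the spectral decomposition for the symmetric case). No gaps; both of your alternative arguments for the fourth property are valid.
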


With Frobenius inner product we can apply the convex analysis theory studied in the previous section. The positive semidefinite matrix set has a convex cone structure, that is, it is closed under non-negative linear combinations. That is why $S_d(\R)^+_0$ is usually called the positive semidefinite cone. Under the topology induced by symmetric matrices, we can also see that $S_d(\R)^+_0$ is closed, as it is the intersection of closed sets:
\[ S_d(\R)^+_0 = \{M \in S_d(\R) \colon x^TMx \ge 0 \ \forall x \in \R^d \} = \bigcap_{x\in \R^d} \{M \in S_d(\R) \colon x^TMx \ge 0\}.\]

So we understand, in particular, that $S_d(\R)^+_0$ is a closed convex set over symmetric matrices, and thus we have a well-defined projection onto the positive semidefinite cone. This property is very important for many of the optimization problems we will study, since they will try to optimize functions defined over the positive semidefinite cone. Here, the projected gradient descent method will be of great use, thus constituting one of the most basic algorithms of the paradigm of semidefinite programming. We can calculate the projection onto the positive semidefinite cone explicitly, as we will see below.

\begin{definition} 
  Let $\Sigma \in \mathcal{M}_d(\R)$ be a diagonal matrix, $\Sigma = \diag(\sigma_1,\dots,\sigma_d)$. We define the \emph{positive part} of $\Sigma$ as $\Sigma^+ = \diag(\sigma_1^+,\dots,\sigma_d^+)$, where $\sigma_i^+ = \max\{\sigma_i,0\}$. In a similar way, we define its \emph{negative part} as $\Sigma^- = \diag(\sigma_1^-,\dots,\sigma_d^-)$, where $\sigma_i^- = \max\{-\sigma_i,0\}$.

  Let $A \in S_d(\R)$ and let $A = UDU^T$ be a spectral decomposition of $A$. We define the \emph{positive part} of $A$ as $A^+ = UD^+U^T$. In a similar way we define its \emph{negative part} as $A^- = UD^-U^T$.
\end{definition}

\begin{theorem}[Semidefinite projection] \label{thm:psd_projection}
  Let $A \in S_d(\R)$. Then, $A^+$ is the projection of $A$ onto the positive semidefinite cone.
\end{theorem}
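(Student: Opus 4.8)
The plan is to work in the Hilbert space $(S_d(\R), \langle\cdot,\cdot\rangle_F)$ and show directly that $A^+$ is the unique minimizer of $\|A - M\|_F$ over $M \in S_d(\R)^+_0$, which by definition of projection is exactly the claim. Fixing the spectral decomposition $A = UDU^T$ with $D = \diag(\sigma_1,\dots,\sigma_d)$, I would first record the elementary bookkeeping facts $D = D^+ - D^-$ and $D^+ D^- = 0$: each diagonal entry is either nonnegative or nonpositive, so at most one of $\sigma_i^+, \sigma_i^-$ is nonzero. Conjugating by $U$, these become $A = A^+ - A^-$, hence $A - A^+ = -A^-$, together with the observation that $A^+, A^- \in S_d(\R)^+_0$, since their eigenvalues $\sigma_i^+$, resp. $\sigma_i^-$, are nonnegative. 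In particular $A^+$ is an admissible candidate.

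Next I would establish two key relations. The orthogonality identity $\langle A^+, A^-\rangle_F = \tr(A^+ A^-) = \tr(U D^+ D^- U^T) = 0$ follows at once from $D^+ D^- = 0$ and cyclicity of the trace. The crucial inequality is that $\tr(A^- M) \ge 0$ for every $M \in S_d(\R)^+_0$: writing $A^- = (A^-)^{1/2}(A^-)^{1/2}$ for the positive semidefinite square root and using cyclicity, $\tr(A^- M) = \tr\big((A^-)^{1/2} M (A^-)^{1/2}\big)$, where $(A^-)^{1/2} M (A^-)^{1/2}$ is positive semidefinite as a congruence of $M$, so its trace is nonnegative.

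With these in hand, the conclusion follows from a Pythagorean expansion. For arbitrary $M \in S_d(\R)^+_0$,
\[
\|A - M\|_F^2 = \|A - A^+\|_F^2 + 2\langle A - A^+, A^+ - M\rangle_F + \|A^+ - M\|_F^2,
\]
and the cross term equals $\langle -A^-, A^+ - M\rangle_F = -\langle A^-, A^+\rangle_F + \langle A^-, M\rangle_F = \tr(A^- M) \ge 0$. Hence $\|A - M\|_F^2 \ge \|A - A^+\|_F^2$, with equality forcing $\|A^+ - M\|_F = 0$, i.e. $M = A^+$. Thus $A^+$ is the unique closest positive semidefinite matrix to $A$, which is precisely the assertion $A^+ = P_{S_d(\R)^+_0}(A)$; existence and uniqueness of this projection are in any case guaranteed by the convex projection theorem (Theorem \ref{thm:convex_projection}), since $S_d(\R)^+_0$ is closed and convex.

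I expect the main obstacle to be the inequality $\tr(A^- M) \ge 0$ for products of positive semidefinite matrices, as it is the only step that genuinely exploits positive semidefiniteness of both factors rather than pure spectral bookkeeping; the square-root and congruence argument is the cleanest route, though one could alternatively expand in the eigenbasis of $A^-$. Everything else reduces to the relations $D = D^+ - D^-$ and $D^+ D^- = 0$ and the orthogonality they produce.
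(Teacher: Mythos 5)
Your proof is correct, and the first thing to note is that the paper itself does not prove this theorem at all: it states the result and defers to \citet{psd_projection}, so your proposal supplies an argument the paper leaves entirely to the literature. Every step checks out: $D^+D^-=0$ conjugates to $\langle A^+,A^-\rangle_F=\tr\bigl(UD^+D^-U^T\bigr)=0$; the congruence argument gives $\tr(A^-M)=\tr\bigl((A^-)^{1/2}M(A^-)^{1/2}\bigr)\ge 0$ for $M\in S_d(\R)^+_0$; and the expansion of $\|A-M\|_F^2$ then makes the cross term nonnegative, so $A^+$ is a minimizer, and a unique one since equality forces both nonnegative remainder terms to vanish. For comparison, the proof in the sort of reference the paper cites proceeds differently: by orthogonal invariance of the Frobenius norm one reduces to $\|A-M\|_F^2=\|D-U^TMU\|_F^2$, discards the off-diagonal entries, and bounds the diagonal contribution below by $\sum_{\sigma_i<0}\sigma_i^2$ using only the fact that a positive semidefinite matrix has nonnegative diagonal; that route is slightly more elementary, needing no trace inequality. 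What your coordinate-free route buys is that it exhibits the obtuse-angle inequality characterizing convex projections, tying the theorem back to Theorem \ref{thm:convex_projection} and the Hilbert-space framing of this section of the paper, and it yields as a free byproduct that $A^+$ is independent of the chosen spectral decomposition, since the projection onto a closed convex set is unique. The one genuinely nontrivial ingredient beyond spectral bookkeeping is, as you correctly identify, $\tr(PQ)\ge 0$ for positive semidefinite $P,Q$, and your square-root proof of it is sound.
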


This result has been proven by \citet{psd_projection}, and is extended easily to project any square matrix onto the positive semidefinite cone, as we show below, although the most interesting case is that mentioned previously.

\begin{corollary}
  Let $A \in \mathcal{M}_d(\R)$. Then, the projection of $A$ onto the positive semidefinite cone is given by $((A + A^T)/2)^+$.
\end{corollary}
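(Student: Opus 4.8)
The plan is to reduce this to the symmetric case already settled in Theorem \ref{thm:psd_projection}, by exploiting the orthogonal decomposition of any square matrix into its symmetric and antisymmetric parts under the Frobenius inner product. First I would write $A = A_s + A_a$, where $A_s = (A + A^T)/2$ is the symmetric part and $A_a = (A - A^T)/2$ is the antisymmetric part. The crucial observation is that $A_a$ is Frobenius-orthogonal to every symmetric matrix: for any $M \in S_d(\R)$, using $A_a^T = -A_a$, $M^T = M$ and the cyclic property of the trace, one checks that $\langle A_a, M \rangle_F = \tr(A_a^T M) = -\tr(A_a M) = 0$, where the last equality holds because $\tr(A_a M) = \tr((A_a M)^T) = \tr(M A_a^T) = -\tr(A_a M)$.

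Next, since every element $M$ of the positive semidefinite cone is symmetric, I would split the distance from $A$ to $M$ using this orthogonality. Writing $A - M = (A_s - M) + A_a$, with $A_s - M \in S_d(\R)$ symmetric and $A_a$ antisymmetric, the Pythagorean theorem with respect to $\langle\cdot,\cdot\rangle_F$ yields
\[ \|A - M\|_F^2 = \|A_s - M\|_F^2 + \|A_a\|_F^2. \]
Because $\|A_a\|_F^2$ is a constant independent of $M$, minimizing $\|A - M\|_F$ over the positive semidefinite cone is equivalent to minimizing $\|A_s - M\|_F$ over that same cone.

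Finally, I would apply Theorem \ref{thm:psd_projection} to the symmetric matrix $A_s$: its unique minimizer over the positive semidefinite cone is $A_s^+ = ((A+A^T)/2)^+$, which is therefore also the projection of $A$ onto the cone, proving the corollary. The only delicate point is the orthogonality of the symmetric and antisymmetric subspaces under the Frobenius product; once that is in place, the argument is a routine Pythagorean splitting followed by an appeal to the already-established symmetric case, so I do not anticipate any serious obstacle.
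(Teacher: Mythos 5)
Your proof is correct. The paper itself gives no explicit argument for this corollary (it merely asserts that Theorem~\ref{thm:psd_projection} ``is extended easily''), and your orthogonal decomposition of $A$ into symmetric and antisymmetric parts, followed by the Pythagorean identity $\|A-M\|_F^2 = \|A_s-M\|_F^2 + \|A_a\|_F^2$ for $M$ in the cone and an appeal to the symmetric case, is precisely the standard completion of that omitted step.
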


\subsubsection{Decomposition Theorems}

The positive semidefinite cone allows many of the concepts and properties that we already know about the non negative real numbers to be generalized. For example, we can similarly define concepts as the square roots, and modules or absolute values. These concepts play an important role in elaborating numerous decomposition theorems that involve positive semidefinite matrices. We will use these tools in order to prove a specific decomposition theorem that will motivate the ways of modeling the distance metric learning problem. The statement of this theorem is shown below.

\begin{theorem} \label{thm:decomposition_llt}
  Let $M \in S_d(\R)^+_0$. Then,
  \begin{enumerate}
    \item There is a matrix $L \in \mathcal{M}_d(\R)$ so that $M = L^TL$.
    \item If $K \in \mathcal{M}_d(\R)$ is any other matrix with $M = K^TK$, then $K = UL$, where $U \in O_d(\R)$ (that is, $L$ is unique up to isometries).
  \end{enumerate}

\end{theorem}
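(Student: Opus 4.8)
The plan is to prove existence via the spectral theorem and uniqueness by constructing an isometry between the column spaces of any two square roots of $M$ and then extending it to a full orthogonal matrix.

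For part 1, since $M \in S_d(\R)^+_0$ is symmetric, the spectral theorem provides an orthogonal decomposition $M = UDU^T$ with $U \in O_d(\R)$ and $D = \diag(\lambda_1,\dots,\lambda_d)$. Positive semidefiniteness guarantees $\lambda_i \ge 0$ for all $i$, so the diagonal matrix $D^{1/2} = \diag(\sqrt{\lambda_1},\dots,\sqrt{\lambda_d})$ is real and well defined. Setting $L = D^{1/2}U^T$, a direct computation gives $L^TL = U D^{1/2} D^{1/2} U^T = UDU^T = M$, which establishes existence.

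For part 2, suppose $K \in \mathcal{M}_d(\R)$ also satisfies $K^TK = M = L^TL$. The first observation is that $L$ and $K$ induce the same bilinear form on $\R^d$: for all $x,y \in \R^d$ we have $\langle Lx, Ly\rangle = x^TL^TLy = x^TMy = x^TK^TKy = \langle Kx, Ky\rangle$. I would use this to define a map $U_0 \colon \im L \to \im K$ by $U_0(Lx) = Kx$. This is well defined, since $Lx = Lx'$ forces $0 = \|L(x-x')\|^2 = (x-x')^TM(x-x') = \|K(x-x')\|^2$, hence $Kx = Kx'$; moreover $U_0$ is linear, surjective onto $\im K$, and norm-preserving by the identity above. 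In particular $\dim \im L = \dim \im K = r(M)$.

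The final and most delicate step is to extend $U_0$ to a genuine orthogonal matrix $U \in O_d(\R)$ with $UL = K$. Because $U_0$ is an isometry from $\im L$ onto $\im K$, the orthogonal complements $(\im L)^\perp$ and $(\im K)^\perp$ share the common dimension $d - r(M)$; choosing orthonormal bases of these complements and mapping one basis onto the other defines an isometry between them, which I would glue to $U_0$ to obtain a linear isometry $U$ of all of $\R^d$. A linear isometry of $\R^d$ is exactly an orthogonal matrix, and by construction $U(Lx) = U_0(Lx) = Kx$ for every $x$, so $UL = K$. I expect the main obstacle to be precisely this extension: one must verify that the pieces defined on $\im L$ and on $(\im L)^\perp$ assemble consistently across the orthogonal decomposition $\R^d = \im L \oplus (\im L)^\perp$ into a single orthogonal transformation, which is exactly where the dimension count on the complements is indispensable.
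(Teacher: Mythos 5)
Your proof is correct, but the uniqueness argument takes a genuinely different route from the paper's. The paper first establishes two auxiliary results --- the uniqueness of the positive semidefinite square root and the polar decomposition $A = U|A|$ --- and then dispatches part 2 in a few lines of algebra: writing $L = V|L|$ and $K = W|K|$ with $V, W \in O_d(\R)$, the hypothesis $L^TL = K^TK$ forces $|L|^2 = |K|^2$, uniqueness of the square root gives $|L| = |K|$, and then $K = WV^TL$ with $U = WV^T$ orthogonal. Your approach bypasses both pieces of machinery: you exploit the identity $\langle Lx, Ly\rangle = \langle Kx, Ky\rangle$ to define the partial isometry $U_0(Lx) = Kx$ from $\im L$ onto $\im K$ directly, then extend it to an orthogonal map of $\R^d$ by matching orthonormal bases of the complements $(\im L)^\perp$ and $(\im K)^\perp$, which have equal dimension since $r(L) = r(L^TL) = r(M) = r(K)$. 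Your extension step is sound: the glued map sends an orthonormal basis of $\R^d$ (an orthonormal basis of $\im L$ together with one of its complement) to an orthonormal basis of $\R^d$, hence is orthogonal, and $UL = K$ holds by construction. What each approach buys: the paper's proof is shorter because it reuses structural results it needs elsewhere anyway, and it exhibits $U$ explicitly as $WV^T$; yours is more elementary and self-contained (only the spectral theorem for existence and basic inner-product geometry for uniqueness), and it makes transparent exactly where the non-uniqueness of $U$ lives when $M$ is singular, namely in the arbitrary choice of isometry between the orthogonal complements of the column spaces.
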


To prove this theorem, we will start with a characterization of positive semidefinite matrices by decomposition, which will also allow us to introduce the concept of square root. We will rely on several previous lemmas.

\begin{lemma}  \label{lem:poly_conmute}
  Let $A, B \in \mathcal{M}_d(\R)$ be two commuting matrices, that is, $AB = BA$. Then, $Ap(B) = p(B)A$, where $p$ denotes any polynomial over matrices (that is, a expression of the form $p(C) = a_0I+a_1C+a_2C^2+\dots+a_nC^n$, with $a_0,\dots,a_n \in \R$).
\end{lemma}

\begin{proof}
  Observe that
  \[AB^n = (AB)B^{n-1} = B(AB)B^{n-2} = \dots = B^{n-1}(AB) = B^nA, \]
  and $Ap(B) = p(B)A$ is deduced by linearity.
\end{proof}

\begin{lemma} \label{lem:poly_diag_sqrt}
  Let $D \in S_d(\R)^+_0$ be a diagonal matrix. Then, there is a polynomial over matrices $p$ so that $p(D^2) = D$.
\end{lemma}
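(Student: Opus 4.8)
The plan is to reduce the matrix statement to a scalar interpolation problem, exploiting the fact that a polynomial applied to a diagonal matrix acts entrywise on the diagonal. Write $D = \diag(\lambda_1,\dots,\lambda_d)$; since $D \in S_d(\R)^+_0$ is diagonal, its eigenvalues are precisely the diagonal entries, so $\lambda_i \ge 0$ for every $i$. Then $D^2 = \diag(\lambda_1^2,\dots,\lambda_d^2)$, and for any real polynomial $p$ we have $p(D^2) = \diag(p(\lambda_1^2),\dots,p(\lambda_d^2))$. Thus the identity $p(D^2)=D$ is equivalent to the finite system of scalar conditions $p(\lambda_i^2) = \lambda_i$ for all $i \in \{1,\dots,d\}$, and it suffices to produce a single polynomial satisfying these.

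Next I would construct $p$ by Lagrange interpolation, after checking the one consistency condition that makes the interpolation well posed. Let $\mu_1,\dots,\mu_k$ be the distinct values occurring among $\lambda_1^2,\dots,\lambda_d^2$. I claim the prescribed data are consistent: if $\lambda_i^2 = \lambda_j^2$, then because both $\lambda_i,\lambda_j \ge 0$ (this is exactly where positive semidefiniteness enters), we get $\lambda_i = \lambda_j$. Hence the assignment $\mu_m \mapsto \sqrt{\mu_m}$ (the unique non-negative square root) is well defined, and it agrees with $\lambda_i$ for every index $i$ with $\lambda_i^2 = \mu_m$. I would then take $p$ to be the Lagrange interpolating polynomial through the $k$ distinct nodes $\mu_1,\dots,\mu_k$ with values $\sqrt{\mu_1},\dots,\sqrt{\mu_k}$, namely
\[
  p(t) = \sum_{m=1}^{k} \sqrt{\mu_m} \prod_{\substack{n=1 \\ n \ne m}}^{k} \frac{t - \mu_n}{\mu_m - \mu_n}.
\]

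Finally I would verify the conclusion directly. By construction $p(\mu_m) = \sqrt{\mu_m}$ for each $m$, so for every index $i$ we have $p(\lambda_i^2) = \sqrt{\lambda_i^2} = \lambda_i$, using once more that $\lambda_i \ge 0$. Therefore
\[
  p(D^2) = \diag\bigl(p(\lambda_1^2),\dots,p(\lambda_d^2)\bigr) = \diag(\lambda_1,\dots,\lambda_d) = D,
\]
which is what we wanted. The only genuinely substantive point, and the one I would emphasize, is the well-definedness of the interpolation data: it is not automatic that distinct diagonal entries give distinct squares, and the argument hinges entirely on the non-negativity coming from $D \in S_d(\R)^+_0$. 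Everything else is routine, and this scalar-polynomial construction is precisely the tool that will later let us define a matrix square root via Lemma \ref{lem:poly_conmute}.
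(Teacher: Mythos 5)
Your proof is correct and follows essentially the same route as the paper: both reduce the matrix identity to scalar interpolation through the points $(\lambda_i^2,\lambda_i)$ and use that a polynomial acts entrywise on a diagonal matrix. In fact, you are slightly more careful than the paper, which simply invokes ``an interpolation polynomial'' without spelling out the consistency check that repeated squares force equal (non-negative) eigenvalues --- the one place where $D \in S_d(\R)^+_0$ is genuinely needed.
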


\begin{proof}
  Suppose $D = \diag(\lambda_1,\dots,\lambda_d)$, with $0 \le \lambda_1 \le \dots \le \lambda_d$. Then, $D^2 = \diag(\lambda_1^2,\dots,\lambda_d^2)$. We take $p$ as an interpolation polynomial over the points $(\lambda_i^2,\lambda_i)$, for $i = 1,\dots,d$. If we evaluate it on $D^2$ we obtain
  \[p(D^2) = p(\diag(\lambda_1^2,\dots,\lambda_d^2)) = \diag(p(\lambda_1^2),\dots,p(\lambda_d^2)) = \diag(\lambda_1,\dots,\lambda_d) = D. \]
\end{proof}

\begin{theorem} \label{thm:decomp_sqrt}
  Let $M \in \mathcal{M}_d(\R)$. Then,
  \begin{enumerate}
    \item $M \in S_d(\R)^+_0$ if, and only if, there is $L \in \mathcal{M}_d(\R)$ so that $M = L^TL$.
    \item If $M \in S_d(\R)^+_0$, there is a single matrix $N \in S_d(\R)^+_0$ with $N^2 = M$. In addition, $M \in S_d(\R)^+ \iff N \in S_d(\R)^+$.
  \end{enumerate}
\end{theorem}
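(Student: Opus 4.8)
The plan is to handle the two statements in the natural order, deriving the existence of the square root first (which simultaneously settles the forward implication of part~1) and leaving uniqueness as the real work. For the easy implication of part~1, I would simply observe that any matrix of the form $L^TL$ is symmetric, since $(L^TL)^T = L^TL$, and positive semidefinite, since $x^T L^T L x = \|Lx\|^2 \ge 0$ for every $x \in \R^d$. For the converse, and for the existence half of part~2, I would invoke the spectral theorem: writing $M = UDU^T$ with $U$ orthogonal and $D = \diag(\lambda_1,\dots,\lambda_d)$, positive semidefiniteness forces every $\lambda_i \ge 0$, so $D^{1/2} = \diag(\sqrt{\lambda_1},\dots,\sqrt{\lambda_d})$ makes sense. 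Then $N = U D^{1/2} U^T$ is symmetric, has non-negative eigenvalues, and satisfies $N^2 = U D U^T = M$; taking $L = N$ gives $M = L^TL$ and proves both the forward direction of part~1 and the existence claim of part~2.

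The hard part will be the uniqueness of $N$, and this is where Lemmas~\ref{lem:poly_conmute} and~\ref{lem:poly_diag_sqrt} come into play. Suppose $N_1, N_2 \in S_d(\R)^+_0$ both satisfy $N_i^2 = M$. First I would show that every positive semidefinite square root is a polynomial in $M$: diagonalizing $N_1 = VDV^T$ gives $M = N_1^2 = V D^2 V^T$, and Lemma~\ref{lem:poly_diag_sqrt} furnishes a polynomial $p$ with $p(D^2) = D$, whence $p(M) = V p(D^2) V^T = N_1$. Now $N_2$ commutes with $M = N_2^2$ trivially, so Lemma~\ref{lem:poly_conmute} gives that $N_2$ commutes with $p(M) = N_1$. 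Two commuting symmetric matrices are simultaneously diagonalizable, so I can write $N_1 = W\Lambda_1 W^T$ and $N_2 = W\Lambda_2 W^T$ with a common orthogonal $W$ and diagonal $\Lambda_1,\Lambda_2$ having non-negative entries. From $N_1^2 = N_2^2$ I get $\Lambda_1^2 = \Lambda_2^2$, and since the non-negative square root of a non-negative number is unique, $\Lambda_1 = \Lambda_2$, hence $N_1 = N_2$. The delicate point to get right is the passage through simultaneous diagonalization; the commuting relation supplied by the two lemmas is exactly what unlocks it.

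Finally, for the equivalence $M \in S_d(\R)^+ \iff N \in S_d(\R)^+$, I would argue at the level of eigenvalues. The eigenvalues of $M = N^2$ are precisely the squares of the eigenvalues of $N$, so all eigenvalues of $M$ are strictly positive if and only if all eigenvalues of $N$ are, which is the statement that $M$ is positive definite exactly when $N$ is. Equivalently, one may note $\det M = (\det N)^2$ together with the fact that a positive semidefinite matrix is positive definite precisely when it is invertible.
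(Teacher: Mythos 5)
Your proposal is correct, and it coincides with the paper's proof on everything except the uniqueness argument, where you take a genuinely different route. The paper's uniqueness proof starts from the observation that $N_1$ and $N_2$ must have the same eigenvalues (the non-negative square roots of those of $M$), writes $N_1 = UDU^T$ and $N_2 = VDV^T$ with a \emph{common} diagonal $D$, and then applies Lemmas \ref{lem:poly_conmute} and \ref{lem:poly_diag_sqrt} to the orthogonal matrix $W = V^TU$: from $N_1^2 = N_2^2$ one gets that $W$ commutes with $D^2$, hence with $p(D^2) = D$, and unwinding $WD = DW$ gives $N_1 = N_2$ directly. You instead use the same two lemmas to establish the cleaner intermediate fact that $N_1 = p(M)$ is a polynomial in $M$, deduce that $N_2$ commutes with $N_1$, and then invoke simultaneous diagonalization of commuting symmetric matrices to force $\Lambda_1^2 = \Lambda_2^2$ and hence $\Lambda_1 = \Lambda_2$. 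Both arguments are sound; the trade-off is that your version isolates a more memorable structural statement (every positive semidefinite square root of $M$ is a polynomial in $M$, so any two of them commute), while the paper's version is more self-contained, since it never needs the simultaneous-diagonalization theorem, which is standard but is not proved or cited anywhere in the paper. If you wanted your argument to be fully self-contained at the same level as the paper's, you would either have to prove that theorem or note, as the paper implicitly does, that the commutation can be exploited without it by working with the explicit orthogonal change of basis.
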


\begin{proof},
  First we will see that $L^TL$ is a positive semidefinite, for any $L \in \mathcal{M}_d(\R)$. Indeed, given $x \in \R^d$,
  \[ x^TL^TLx = (Lx)^T(Lx) = \|Lx\|^2_2 \ge 0. \]

  We will prove the second implication of the first statement finding directly the matrix $N$ of the second statement. Consider the spectral decomposition $M = UDU^T$, with $U \in O_d(\R)$ and $D = \diag(\lambda_1,\dots,\lambda_d)$, with $0 \le \lambda_1 \le \dots \lambda_d$ the eigenvalues of $M$. We define $D^{1/2} = \diag(\sqrt{\lambda_1},\dots,\sqrt{\lambda_d})$ and construct the matrix $N = UD^{1/2}U^T$. $N$ is positive semidefinite, because its eigenvalues are those of $D^{1/2}$, which are all positive, and besides,
   \[N^2 = UD^{1/2}U^TUD^{1/2}U^T = UD^{1/2}D^{1/2}U^T = UDU^T = M.\]
   Furthermore, the strict positivity of the eigenvalues of $M$ is equivalent to that of the eigenvalues of $N$, then  $M \in \mathcal{M}_d(\R)^+ \iff N \in \mathcal{M}_d(\R)^+$. Let us finally see that $N$ is unique.

  Suppose that we have $N_1, N_2 \in S_d(\R)^+_0$ with $N_1^2 = M = N_2^2$. Observe that $N_1$ and $N_2$ must have the same eigenvalues, since they are necessarily the positive square roots of the eigenvalues of $M$. Therefore, $N_1$ and $N_2$ are similar to a same diagonal matrix, that is, there are matrices $U, V \in O_d(\R)$ with $N_1 = UDU^T$ and $N_2 = VDV^T$. From $N_1^2 = N_2^2$ we have
  \[ UD^2U^T = VD^2V^T \implies V^TUD^2 = D^2V^TU, \]
  so for $W = V^TU \in O_d(\R)$ we obtain that $D^2$ and $W$ commute. Combining Lemmas \ref{lem:poly_diag_sqrt} and \ref{lem:poly_conmute}, we obtain that $D$ and $W$ also commute. Therefore,
  \[WD = DW \implies V^TUD = DV^TU \implies UDU^T = VDV^T \implies N_1 = N_2, \]
  obtaining the uniqueness.
\end{proof}

As we had anticipated, this theorem motivates the definition of square roots for positive semidefinite matrices.

\begin{definition}
  Let $M \in S_d(\R)^+_0$. We define the \emph{square root} of $M$ as the unique matrix $N \in S_d(\R)^+_0$ with $N^2 = M$. We denote it as $N = M^{1/2}$.
\end{definition}

We can also extend other concepts defined over the non-negative real numbers to the positive semidefinite matrices. For example, the square root allows us to define the concept of module for any matrix.

\begin{definition}
  Let $A \in \mathcal{M}_{d\times d'}(\R)$. We define the \emph{module} of $A$ as
  \[ |A| = (A^TA)^{1/2} \in S_{d'}(\R)^+_0. \]
\end{definition}

With the module we can state a polar decomposition theorem, which shows a decomposition that can be seen as an extension of the polar form for complex numbers.

\begin{theorem}[Polar decomposition]
  Let $A \in \mathcal{M}_{d\times d'}(\R)$, with $d' \le d$. Then, there is a matrix $U \in \mathcal{M}_{d\times d'}(\R)$ with $U^TU = I$, so that $A = U|A|$. This decomposition is called the \emph{polar decomposition} of $A$, and it is not necessarily unique, unless $A$ is square and invertible.
\end{theorem}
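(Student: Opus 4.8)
The plan is to construct $U$ explicitly from the geometric relationship between $A$ and its module $|A|$, handling separately the part of the domain where $|A|$ is invertible and its kernel. First I would record that $|A| = (A^TA)^{1/2}$ is a well-defined positive semidefinite matrix of order $d'$ by Theorem \ref{thm:decomp_sqrt}, and that it satisfies the length-preserving identity $\|Ax\|^2 = x^TA^TAx = x^T|A|^2x = \||A|x\|^2$ for every $x \in \R^{d'}$. In particular $\|Ax\| = \||A|x\|$, from which it follows at once that $\ker A = \ker|A|$.

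With this identity in hand, I would define the candidate for $U$ on the subspace $\im|A| \subset \R^{d'}$ by the rule $U(|A|x) = Ax$. The length identity shows simultaneously that this map is well defined --- if $|A|x_1 = |A|x_2$ then $x_1 - x_2 \in \ker|A| = \ker A$, so $Ax_1 = Ax_2$ --- and that it is a linear isometry from $\im|A|$ onto $\im A$. At this stage the identity $A = U|A|$ already holds, since $|A|x$ always lies in $\im|A|$.

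The remaining step, and the one requiring the hypothesis $d' \le d$, is to extend this partial isometry to one defined on all of $\R^{d'}$. Because $|A|$ is symmetric, $\R^{d'}$ decomposes orthogonally as $\im|A| \oplus \ker|A|$, so it suffices to define $U$ isometrically on $\ker|A|$ with image orthogonal to $\im A$. A dimension count makes this possible: $\dim\ker|A| = d' - \dim\im A$, whereas the orthogonal complement of $\im A$ inside $\R^d$ has dimension $d - \dim\im A \ge d' - \dim\im A$, so an orthonormal basis of $\ker|A|$ can be mapped to an orthonormal system inside $(\im A)^\perp$. The resulting $U \colon \R^{d'} \to \R^d$ is then a global isometry, whence $U^TU = I$, and the extension leaves the identity $A = U|A|$ undisturbed.

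Finally I would address uniqueness. When $A$ is square and invertible, $A^TA$ is positive definite, so $|A|$ is invertible and $\ker|A| = \{0\}$; there is then no freedom in the extension and $U = A|A|^{-1}$ is forced. In the rank-deficient case the isometric extension onto $(\im A)^\perp$ can be chosen in many ways, which is precisely the source of the asserted non-uniqueness. The main obstacle is not any single computation but the careful bookkeeping of the isometric extension across the kernel, where the inequality $d' \le d$ is indispensable to guarantee enough room in the codomain.
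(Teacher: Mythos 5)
Your proof is correct and follows essentially the same route as the paper's: both rest on the length identity $\|Ax\| = \||A|x\|$, the resulting equality $\ker A = \ker|A|$, and an isometric extension across the kernel into $(\im A)^\perp$ made possible by $d' \le d$, with the same argument for uniqueness in the invertible case. The only difference is presentational — the paper realizes the partial isometry concretely on an eigenbasis of $|A|$ (sending $w_i \mapsto Aw_i/\lambda_i$ and assembling $U = VW$), whereas you define the same map $|A|x \mapsto Ax$ basis-free and extend it abstractly.
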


\begin{proof}
  First, observe that, given $x \in \R^{d'}$, we have
  \[ \|Ax\|_2^2 = (Ax)^T(Ax) = x^TA^TAx = x^T|A|^2x = x^T|A||A|x = (|A|x)^T(|A|x) = \||A|x\|_2^2.\]
  This means that $A$ and $|A|$ have the same effect on the length of any vector. As an immediate consequence, we can observe that $\ker A =  \ker |A|$, since
  \[ x \in \ker A \iff Ax = 0 \iff \|Ax\| = 0 = \||A|x\| \iff |A|x = 0 \iff x \in \ker |A|. \]
  As $d' = \dim\ker A + \dim\im A = \dim\ker |A| + \dim\im |A|$, we also conclude that $\dim\im A = \dim\im |A|$, and then $r(A) = r(|A|)$. We will denote this rank as $r \le d$.

  $|A|$ is positive semidefinite, so there is an orthonormal basis $\{w_1,\dots,w_{d'}\} \subset \R^{d'}$ consisting of eigenvectors of $|A|$, with corresponding eigenvalues $\lambda_1,\dots,\lambda_{d'}$. We can assume that $\lambda_1,\dots,\lambda_r > 0$ and $\lambda_{r+1} = \dots = \lambda_{d'} = 0$, or equivalently, $\{w_{r+1},\dots,w_{d'}\}$ is an orthonormal basis of $\ker |A| = \ker A$.

  We consider the set of vectors $\{Aw_1/\lambda_1,\dots,Aw_r/\lambda_r \}$. Note that
  \[ \left\langle \frac{1}{\lambda_i}Aw_i,\frac{1}{\lambda_{j}}Aw_j\right\rangle = \frac{1}{\lambda_i\lambda_j}\langle Aw_i, Aw_j\rangle = \frac{1}{\lambda_i\lambda_j}w_i^T|A|^2w_j = \frac{1}{\lambda_i\lambda_j}w_i^T\lambda_j^2w_j = \frac{\lambda_j}{\lambda_i}w_i^Tw_j, \]
  which equals 1 if $i = j$, and 0 otherwise, so this set is also orthonormal. In fact, this set is an orthonormal basis of $\im A$.

  We extend the previous set to an orthonormal set of size $d'$ in $\R^d$,
  \[\left\{\frac{1}{\lambda_1}Aw_1,\dots,\frac{1}{\lambda_r}Aw_r, v_{r+1}, \dots, v_{d'} \right\}.\]

  Finally, we construct the matrix $V \in \mathcal{M}_{d\times d'}(\R)$ by adding as columns the vectors in the previous set, and the matrix $W \in \mathcal{M}_{d'}(\R)$ by adding as rows the vectors $w_1,\dots,w_{d'}$. We define $U$ as $U = VW \in \mathcal{M}_{d\times d'}(\R)$. Observe that both $V$ and $W$ have orthonormal columns, and then $V^TV = I = W^TW$, obtaining that $U^TU = I$ as well. We can also observe that $Ww_i = e_i$, where $\{e_1,\dots,e_{d'}\}$ is the canonical basis of $\R^{d'}$. Therefore, we obtain
  \[ Uw_i = \begin{cases} \frac{1}{\lambda_i}w_i, & 1 \le i \le r \\ v_i, & r < i \le d' \end{cases}, \]
  and finally,
  \[ U|A|w_i = \begin{cases} \lambda_i U w_i, & 1 \le i \le r \\ 0, & r < i \le d'\end{cases} = \begin{cases} Aw_i, &1 \le i \le r \\ 0, & r < i \le d'\end{cases} = Aw_i,\]
  where the last equality holds, since $\{w_{r+1},\dots,w_{d'}\} \subset \ker A$. So, we have the equality $A = U|A|$ on the basis $\{w_1,\dots,w_{d'}\}$, concluding the proof. The uniqueness of $U$ when $A$ is square and invertible is due to the fact that $|A|$ is also invertible in that case, and then $U = A|A|^{-1}$.

\end{proof}

\begin{remark}
  When $A \in \mathcal{M}_d(\R)$ is a square matrix, the polar decomposition can be stated as $A = U|A|$, where $U \in O_d(\R)$ is an orthogonal matrix.
\end{remark}

We are now in a position to prove Theorem \ref{thm:decomposition_llt}. We recall its statement below.

\vspace{\topsep}
\noindent
{\bf Theorem} {\it Let $M \in S_d(\R)^+_0$. Then,
  \begin{enumerate}
    \item There is a matrix $L \in \mathcal{M}_d(\R)$ so that $M = L^TL$.
    \item If $K \in \mathcal{M}_d(\R)$ is any other matrix with $M = K^TK$, then $K = UL$, where $U \in O_d(\R)$ (that is, $L$ is unique up to isometries).
  \end{enumerate}
}
\vspace{\topsep}
\begin{proof}
  The first statement was proved in Theorem \ref{thm:decomp_sqrt}. Suppose then that $L, K \in \mathcal{M}_d(\R)$ verify that $M = L^TL = K^TK$. Let $L = V|L|, K = W|K|$, with $V,W \in O_d(\R)$, be polar decompositions of $L$ and $K$. Then, we have
  \begin{align*}
        L^TL = K^TK &\implies |L|^TV^TV|L| = |K|^TW^TW|K| \\
                    &\implies |L|^T|L| = |K|^T|K| \implies |L|^2 = |K|^2.
  \end{align*}

  As $|L|$ and $|K|$ are positive semidefinite, they must be the only square root of $|L|^2 = |K|^2$, that is, $|L| = |K|$. We call $N = |L| = |K|$. Returning to the polar decompositions of $L$ and $K$, it follows that
  \[ N = V^TL = W^TK \implies K = WV^TL. \]
  Therefore, taking $U = WV^T \in O_d(\R)$, we obtain the desired equality.

\end{proof}

\subsubsection{Matrix Optimization Problems} \label{ssec:matrix_opt}

To conclude the section about matrix analysis, we consider that the analysis of several specific optimization problems based on eigenvectors is necessary. These problems can be expressed as the maximization of a trace, and they do not need analytical methods, like gradient methods, to find a solution to them. It can be solved only via algebraic methods, specifically by calculating the eigenvectors of the matrices involved in the problem. These problems appear in most of the dimensionality reduction distance metric learning algorithms. We state these problems, together with their solutions, in the lines below.

\begin{theorem} \label{thm:eigen_trace_opt}
    Let $d',d \in \N $, with $d' \le d$. Let $A \in \mathcal{S}_d(\R)$, and we consider the optimization problem
    
    \begin{equation*}
    \begin{split}
        \max_{L \in \mathcal{M}_{d'\times d}(\R)} &\quad \tr\left(LAL^T\right)  \\
        \text{s.t.: } &\quad LL^T = I.
    \end{split}
    \end{equation*}
    
    The problem attains a maximum if $L = \begin{pmatrix}
    \text{---} \hspace{-0.2cm} & v_1 & \hspace{-0.2cm} \text{---} \\
    & \dots &  \\
    \text{---} \hspace{-0.2cm} & v_{d'} & \hspace{-0.2cm} \text{---}
    \end{pmatrix}$, where $v_1,\dots,v_{d'}$ are orthonormal eigenvectors of $A$ corresponding to its $d'$ largest eigenvalues. In addition, the maximum value is the sum of the $d'$ largest eigenvalues of $A$.

\end{theorem}

\begin{theorem} \label{thm:eigen_trace_ratio_opt}
    Let $d',d \in \N $, with $d' \le d$. Let $A \in S_d(\R)$ and $B \in S_d(\R)^+$, and we consider the optimization problem
    
    \begin{equation*}
    \max_{L \in \mathcal{M}_{d'\times d}(\R)} \quad \tr\left((LBL^T)^{-1}(LAL^T)\right)
    \end{equation*}
    
    The problem attains a maximum if $L = \begin{pmatrix}
    \text{---} \hspace{-0.2cm} & v_1 & \hspace{-0.2cm} \text{---} \\
    & \dots &  \\
    \text{---} \hspace{-0.2cm} & v_{d'} & \hspace{-0.2cm} \text{---}
    \end{pmatrix}$, where $v_1,\dots,v_{d'}$ are eigenvectors of $B^{-1}A$ corresponding to its $d'$ largest eigenvalues.
\end{theorem}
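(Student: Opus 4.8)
The plan is to reduce the generalized trace-ratio problem to the already-solved orthonormally-constrained problem of Theorem \ref{thm:eigen_trace_opt} by a change of variables that exploits the positive definiteness of $B$. First I would note that the objective is only defined when $LBL^T$ is invertible; since $B \in S_d(\R)^+$, this forces $L$ to have full row rank $d'$, so throughout I restrict to such $L$. The crucial structural observation is that the objective is invariant under left multiplication by any invertible $G \in \mathcal{M}_{d'}(\R)$: replacing $L$ by $GL$ turns $(LBL^T)^{-1}(LAL^T)$ into $(G^T)^{-1}(LBL^T)^{-1}(LAL^T)G^T$, a similarity transformation, whose trace is unchanged.

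Next I would change variables using the square root of $B$. By Theorem \ref{thm:decomp_sqrt}, $B$ has a positive definite square root $B^{1/2}$, which is invertible; set $B^{-1/2} = (B^{1/2})^{-1}$ and $\widetilde{L} = LB^{1/2}$. Then $LBL^T = \widetilde{L}\widetilde{L}^T$ and $LAL^T = \widetilde{L}\widetilde{A}\widetilde{L}^T$, where $\widetilde{A} = B^{-1/2}AB^{-1/2} \in S_d(\R)$, so the objective becomes $\tr\!\left((\widetilde{L}\widetilde{L}^T)^{-1}(\widetilde{L}\widetilde{A}\widetilde{L}^T)\right)$. Since $L \mapsto \widetilde{L}$ is a bijection of the full-row-rank matrices, maximizing over $L$ is the same as maximizing this expression over full-row-rank $\widetilde{L}$. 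Now I would use the gauge freedom to normalize: given any such $\widetilde{L}$, the matrix $\widetilde{L}\widetilde{L}^T$ is positive definite, so writing $P = (\widetilde{L}\widetilde{L}^T)^{1/2}$ (again via Theorem \ref{thm:decomp_sqrt}) and $\widetilde{L}_0 = P^{-1}\widetilde{L}$ gives $\widetilde{L}_0\widetilde{L}_0^T = I$ and, by the invariance above, $\tr\!\left((\widetilde{L}\widetilde{L}^T)^{-1}(\widetilde{L}\widetilde{A}\widetilde{L}^T)\right) = \tr\!\left(\widetilde{L}_0\widetilde{A}\widetilde{L}_0^T\right)$. Hence the supremum over full-rank $\widetilde{L}$ equals the supremum of $\tr(\widetilde{L}_0\widetilde{A}\widetilde{L}_0^T)$ subject to $\widetilde{L}_0\widetilde{L}_0^T = I$, which is exactly the problem solved by Theorem \ref{thm:eigen_trace_opt}. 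That theorem yields the maximizer: the rows of $\widetilde{L}_0$ are orthonormal eigenvectors $w_1,\dots,w_{d'}$ of $\widetilde{A}$ for its $d'$ largest eigenvalues.

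Finally I would translate the solution back to the original variable. The matrices $\widetilde{A} = B^{-1/2}AB^{-1/2}$ and $B^{-1}A$ are similar, via $B^{-1}A = B^{-1/2}\widetilde{A}B^{1/2}$, so they share the same eigenvalues; moreover, if $\widetilde{A}w = \lambda w$ then $B^{-1}A(B^{-1/2}w) = \lambda(B^{-1/2}w)$, so $v = B^{-1/2}w$ is an eigenvector of $B^{-1}A$ with the same eigenvalue. Taking $L = \widetilde{L}_0 B^{-1/2}$, the $i$-th row of $L$ is $w_i^T B^{-1/2} = (B^{-1/2}w_i)^T = v_i^T$, i.e.\ an eigenvector of $B^{-1}A$ associated with its $i$-th largest eigenvalue, which is precisely the claimed form.

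I expect the main obstacle to be carrying out the reduction cleanly rather than any deep difficulty: the two points needing care are verifying the similarity-invariance of the trace ratio and justifying that imposing $\widetilde{L}_0\widetilde{L}_0^T = I$ loses no generality (through the factorization $\widetilde{L} = P\widetilde{L}_0$). The bookkeeping of how eigenpairs of $\widetilde{A}$ correspond to eigenpairs of $B^{-1}A$ is the step that actually produces the stated answer, so I would make sure the eigenvalue ordering and the map $w \mapsto B^{-1/2}w$ are tracked consistently throughout.
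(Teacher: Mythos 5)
Your proposal is correct and takes essentially the same route as the paper: both reduce the problem to Theorem \ref{thm:eigen_trace_opt} by conjugating with $B^{1/2}$ (the paper packages this step as the simultaneous-diagonalization Lemma \ref{lem:diag_simult}), then normalize to orthonormal rows --- your square-root gauge $\widetilde{L}_0 = (\widetilde{L}\widetilde{L}^T)^{-1/2}\widetilde{L}$ is exactly the paper's polar decomposition $V = Q|V|$ in disguise --- and finally use invariance of the trace ratio under invertible multiplications to identify the maximizer. The only cosmetic difference is that the paper diagonalizes $B^{-1/2}AB^{-1/2}$ explicitly before invoking Theorem \ref{thm:eigen_trace_opt}, whereas you apply it directly to $\widetilde{A}$ and then transport eigenpairs back to $B^{-1}A$ via $v = B^{-1/2}w$.
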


\begin{theorem} \label{thm:eigen_trace_ratio_sym_opt}
    Let $d',d \in \N $, with $d' \le d$. Let $A, B \in S_d(\R)^+$, and we consider the optimization problem
    
    \begin{equation*}
    \max_{L \in \mathcal{M}_{d'\times d}(\R)} \quad \tr\left((LBL^T)^{-1}(LAL^T) + (LAL^T)^{-1}(LBL^T)\right)
    \end{equation*}
    
    The problem attains a maximum if $L = \begin{pmatrix}
    \text{---} \hspace{-0.2cm} & v_1 & \hspace{-0.2cm} \text{---} \\
    & \dots &  \\
    \text{---} \hspace{-0.2cm} & v_{d'} & \hspace{-0.2cm} \text{---}
    \end{pmatrix}$, where $v_1,\dots,v_{d'}$ are the $d'$ eigenvectors of $B^{-1}A$ with the highest values for the expression $\lambda_i + 1/\lambda_i$, where $\lambda_i$ is the eigenvalue associated with $v_i$.
\end{theorem}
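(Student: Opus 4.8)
The plan is to exploit two features of the objective: its invariance under a change of basis in the $d'$-dimensional target, and the simultaneous diagonalizability of the pencil $(A,B)$. Write $F(L) = \tr((LBL^T)^{-1}(LAL^T)) + \tr((LAL^T)^{-1}(LBL^T))$, understood on full-row-rank $L$ so that $LBL^T, LAL^T \in S_{d'}(\R)^+$ (since $B \succ 0$, $LBL^T \succ 0$ precisely when $L$ has rank $d'$, and likewise for $A$). First I would record that $F$ is invariant under $L \mapsto PL$ for any $P \in \gl_{d'}(\R)$: such a substitution conjugates $(LBL^T)^{-1}(LAL^T)$ by $P^T$, replacing it by a similar matrix, and does the same to the reciprocal term, so neither trace changes. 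Thus $F$ depends only on the row space of $L$.

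Next I would simultaneously diagonalize. Since $B \in S_d(\R)^+$, there is an invertible $W$ with $W^TBW = I$ and $W^TAW = \Lambda = \diag(\lambda_1,\dots,\lambda_d)$, where the columns of $W$ are $B$-orthonormal generalized eigenvectors and the $\lambda_i > 0$ are the eigenvalues of $B^{-1}A$ (indeed $B^{-1}AW = W\Lambda$). Substituting $M = L(W^{-1})^T$, a bijection on full-rank matrices, gives $LBL^T = MM^T$ and $LAL^T = M\Lambda M^T$, so that $F = \tr((MM^T)^{-1}(M\Lambda M^T)) + \tr((M\Lambda M^T)^{-1}(MM^T))$. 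Applying the invariance once more, now as $M \mapsto (MM^T)^{-1/2}M$, I may assume $MM^T = I_{d'}$. The objective then collapses to $\tr(M\Lambda M^T) + \tr((M\Lambda M^T)^{-1}) = \sum_{k=1}^{d'} g(\mu_k)$, where $g(t) = t + 1/t$ and $\mu_1,\dots,\mu_{d'}$ are the eigenvalues of the compression $M\Lambda M^T$, which lie in $[\lambda_{\min},\lambda_{\max}] \subset (0,\infty)$ by interlacing.

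The crux is to bound $\sum_k g(\mu_k)$. Setting $V = M^T$, the condition $MM^T = I_{d'}$ says exactly that $V$ is an isometry, $V^TV = I_{d'}$, and $M\Lambda M^T = V^T\Lambda V$. Since $g$ is convex on an interval containing the spectrum of $\Lambda$, the trace form of Jensen's inequality for isometries yields $\tr g(V^T\Lambda V) \le \tr(V^T g(\Lambda)V) = \tr(g(\Lambda)\,VV^T) = \sum_{i=1}^d g(\lambda_i)\,(VV^T)_{ii}$. Here $VV^T$ is the orthogonal projection onto a $d'$-dimensional subspace, so its diagonal entries lie in $[0,1]$ and sum to $d'$; maximizing the linear form $\sum_i g(\lambda_i)\,p_i$ over $0 \le p_i \le 1$, $\sum_i p_i = d'$, places unit weight on the $d'$ largest values $g(\lambda_i) = \lambda_i + 1/\lambda_i$. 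Hence $F \le \sum_{i \in S}(\lambda_i + 1/\lambda_i)$, with $S$ the set of those $d'$ indices.

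Finally I would verify attainment and translate back. Taking $M$ with rows the standard basis vectors $\{e_i : i \in S\}$ gives $MM^T = I$ and $M\Lambda M^T = \diag(\lambda_i : i \in S)$, so $F = \sum_{i\in S} g(\lambda_i)$ and the bound is met. Undoing the substitution, $L = MW^T$, whose rows are the columns $v_i = We_i$ of $W$ for $i \in S$, i.e. the eigenvectors of $B^{-1}A$ with the largest $\lambda_i + 1/\lambda_i$, as claimed. I expect the trace-Jensen inequality $\tr g(V^T\Lambda V) \le \tr(V^T g(\Lambda)V)$ to be the main obstacle, as it is the one genuinely non-elementary input; the remaining steps are invariance bookkeeping, simultaneous diagonalization from the spectral theorem, and a one-line linear maximization. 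If that inequality is not already available, I would prove it by expressing the diagonal entries of $V^T\Lambda V$ in the eigenbasis of $\Lambda$, observing that they are convex combinations of the $\lambda_i$ with doubly-substochastic weights, and applying scalar Jensen together with the eigenvalue majorization of a symmetric matrix by its diagonal.
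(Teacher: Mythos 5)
Your proof is correct, and while its first half retraces the paper's steps, the decisive step is genuinely different -- and in fact more rigorous than what the paper does. The preliminaries coincide: the paper proves this theorem by ``following the same proof'' as Theorem~\ref{thm:eigen_trace_ratio_opt}, which uses exactly your ingredients -- invariance of the objective under multiplication by $\gl_{d'}(\R)$, simultaneous diagonalization (its Lemma~\ref{lem:diag_simult} is your $W$), and a polar decomposition, which is your normalization $M \mapsto (MM^T)^{-1/2}M$ written transposed. The divergence is in how the reduced problem is finished. After orthonormalization the true objective is $\tr(M\Lambda M^T) + \tr\bigl((M\Lambda M^T)^{-1}\bigr)$; the paper identifies this with its auxiliary problem (Eq.~\ref{eq:eigen_trace_sym_opt}), $\max\{\tr(W(\Lambda+\Lambda^{-1})W^T) : WW^T = I\}$, i.e.\ it silently replaces $\tr\bigl((M\Lambda M^T)^{-1}\bigr)$ by $\tr(M\Lambda^{-1}M^T)$. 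These differ in general: the identification needs the operator inequality $(M\Lambda M^T)^{-1} \preceq M\Lambda^{-1}M^T$ for $MM^T = I$ (provable by a Schur-complement argument), which the paper never states; the two quantities agree only at the claimed maximizer. Your trace-Jensen step for $g(t)=t+1/t$ bounds the actual reduced objective, inverse term included, and the bookkeeping with the diagonal of the projection $VV^T$ (entries in $[0,1]$, trace $d'$) plus the one-line linear maximization gives the sharp bound $\sum_{i\in S}(\lambda_i + 1/\lambda_i)$ together with attainment. So the paper's route buys elementarity (only Theorem~\ref{thm:eigen_trace_opt} is invoked) at the price of a genuine gap, while yours buys rigor and a stronger conclusion (the explicit global maximum value) at the price of invoking the Hansen--Pedersen/Brown--Kosaki trace-Jensen inequality for isometries.

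One small repair in your closing sentence: as literally written, your fallback proof of that inequality does not chain. Scalar Jensen on the diagonal entries of $C = V^T\Lambda V$ gives $\sum_k g(C_{kk}) \le \tr(V^Tg(\Lambda)V)$, while Schur majorization of the diagonal by the spectrum gives $\sum_k g(C_{kk}) \le \tr g(C)$ for convex $g$ -- both inequalities bound the \emph{same} quantity from above, so nothing follows about $\tr g(C)$ versus $\tr(V^Tg(\Lambda)V)$. The fix costs nothing: using the invariance you already established, replace $V$ by $VU$ with $U \in O_{d'}(\R)$ an orthogonal matrix diagonalizing $C$ (neither side of the desired inequality changes), so that the diagonal entries of the compression \emph{are} its eigenvalues $\mu_k$; each $\mu_k$ is then a convex combination of the $\lambda_i$, and the scalar Jensen step alone concludes, with no majorization needed.
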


These theorems can be proven using tools such as the Rayleigh quotient and the Courant-Fischer theorem and its consequences. First, we will introduce the Rayleigh quotient, and we will see its relationship with the eigenvalues and eigenvectors.

\begin{definition}
  Let $A \in S_d(\R)$. We define the \emph{Rayleigh quotient} associated with $A$ as the mapping $\rho_A \colon \R^d \setminus\{0\} \to \R$ given by
  \[ \rho_A(x) = \frac{x^TAx}{x^Tx} = \frac{\langle Ax, x\rangle}{\|x\|_2^2} \quad \forall x \in \R^d \setminus\{0\}. \]

  If $B \in S_d(\R)^+$, we define the \emph{generalized Rayleigh quotient} associated with $A$ and $B$ as the mapping $\mathcal{R}_{A,B}\colon \R^d \setminus\{0\} \to \R$ given by
  \[ \mathcal{R}_{A,B}(x) = \frac{x^TAx}{x^TBx} = \frac{\langle Ax ,x \rangle}{\|x\|_B^2} \quad \forall x \in \R^d \setminus \{0\}.\]

\end{definition}

Throughout this section we will assume that $A \in S_d(\R)$ and $B \in S_d(\R)^+$ are fixed, and we will refer to Rayleigh quotients as $\rho = \rho_A$ and $\mathcal{R} = \mathcal{R}_{A,B}$. A first observation about $\rho$ and $\mathcal{R}$ is that, for $x \in \R^d \setminus \{0\}$ and $\lambda \in \R^*$, it is verified that
\[ \mathcal{R}(\lambda x) = \frac{(\lambda x)^TA(\lambda x)}{(\lambda x)^TB(\lambda x)} = \frac{\lambda^2(x^TAx)}{\lambda^2(x^TBx)} = \mathcal{R}(x). \]

Therefore, $\mathcal{R}$ takes all its values over the $(d-1)$-dimensional unit sphere, that is, $\mathcal{R}(\R \setminus \{0\}) = \mathcal{R}(\mathbb{S}^{d-1}) \subset \R$. Since $\mathcal{R}$ is continuous and the sphere is compact, it follows that $\mathcal{R}$ achieves a maximum and a minimum in $\R^d \setminus \{0\}$. The same follows with $\rho$. These maxima and minima are closely related with the problems we want to analyze. We start studying the extremes of $\rho$.

\begin{theorem}[Rayleigh-Ritz]
  Let $\lambda_{\min}$ and $\lambda_{\max}$ be the minimum and maximum eigenvalues of $A$, respectively. Then,
  \begin{enumerate}
    \item For every $x \in \R^d$, $\lambda_{\min} \|x\|^2 \le x^TAx \le \lambda_{\max}\|x\|^2$.
    \item $\lambda_{\max} = \max_{x \in \R^d \setminus \{0\}} \frac{x^TAx}{x^Tx} = \max_{\|x\|_2 = 1} x^TAx$.
    \item $\lambda_{\min} = \min_{x \in \R^d \setminus \{0\}} \frac{x^TAx}{x^Tx} = \min_{\|x\|_2 = 1} x^TAx$.
  \end{enumerate}
  Therefore, the maximum and minimum values of $\rho$ are $\lambda_{\min}$ and $\lambda_{\max}$, respectively. These values are attained in the corresponding eigenvectors.
\end{theorem}

\begin{proof}
  Let $A = UDU^T$, with $U \in O_d(\R)$ and $D = \diag(\lambda_1,\dots,\lambda_d)$, where $\lambda_1 \le \dots \le \lambda_d$, be a spectral decomposition of $A$. Let $x \in \R^d \setminus \{0\}$ and we take $y = U^Tx$. Then,
  \begin{equation} \label{eq:ray_ritz:1}
        \rho(x) = \frac{x^TAx}{x^Tx} = \frac{x^TUDU^Tx}{x^Tx} = \frac{y^TU^TUDU^TUy}{y^TU^TUy} = \frac{y^TDy}{\|y\|^2_2} = \frac{\sum\limits_{i=1}^d \lambda_i y_i^2}{\|\lambda\|^2_2}. 
  \end{equation}
  In addition, it is clear that
  \[ \lambda_1\|y\|_2^2 = \lambda_1 \sum_{i=1}^d y_i^2 \le \sum_{i=1}^d \lambda_i y_i^2 \le \lambda_d \sum_{i=1}^d y_i^2 = \lambda_d\|y\|_2^2. \]
  Applying this inequality over Eq. \ref{eq:ray_ritz:1}, it follows that
  \[ \lambda_1 \le \rho(x) \le \lambda_d. \]
  Furthermore, if $u_1$ and $u_d$ are the corresponding eigenvectors of $\lambda_1$ and $\lambda_d$, we get
  \[ \rho(u_1) = \frac{u_1^TAu_1}{u_1^Tu_1} = \frac{\lambda_1 u_1^Tu_1}{u_1^Tu_1} = \lambda_1, \quad \rho(u_d) = \frac{u_d^TAu_d}{u_d^Tu_d} = \frac{\lambda_d u_d^Tu_d}{u_d^Tu_d} = \lambda_d.  \]
  Therefore, the equality is attained, and the three statements of the theorem follow from this equality.
\end{proof}

Rayleigh-Ritz theorem shows us that $\rho(\R^d \setminus \{0\}) = [\lambda_{\min}, \lambda_{\max}]$, obtaining the extreme values in the corresponding eigenvectors. However, these are not the only eigenvalues that can act as an optimal for a Rayleigh quotient. If we restrict ourselves to lower dimensional spaces, we can obtain any eigenvalue of $A$ as an optimal for the Rayleigh quotient, as we will see below.

\begin{theorem}[Courant-Fischer] \label{ref:courant_fischer}
  Let $\lambda_1 \le \dots \le \lambda_d$ the eigenvectors of $A$, and we denote by $S_k$ a vector subspace of $\R^d$ of dimension $k$. Then, for each $k \in \{1,\dots,d\}$, we get
  \begin{align}
        \lambda_k &= \min_{S_{k} \subset \R^d} \max_{\substack{x \in S_{k} \\ \|x\|_2 = 1}} x^TAx \label{eq:courant_fischer:1},\\ 
        \lambda_k &= \max_{S_{d-k+1} \subset \R^d} \min_{\substack{x \in S_{d-k+1} \\ \|x\|_2 = 1}} x^TAx \label{eq:courant_fischer:2}.
  \end{align}
\end{theorem}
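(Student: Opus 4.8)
The plan is to reduce everything to the coordinates given by an orthonormal basis of eigenvectors, and then to play two subspaces of complementary dimension against each other. First I would fix a spectral decomposition $A = UDU^T$ with $U \in O_d(\R)$ and $D = \diag(\lambda_1,\dots,\lambda_d)$, $\lambda_1 \le \dots \le \lambda_d$, and write $u_1,\dots,u_d$ for the columns of $U$, i.e.\ the orthonormal eigenvectors of $A$. Expanding a unit vector as $x = \sum_{i=1}^d c_i u_i$ (so that $\sum_i c_i^2 = 1$) gives the identity $x^TAx = \sum_{i=1}^d \lambda_i c_i^2$, a convex combination of the eigenvalues. This single identity is the workhorse: restricting $x$ to lie in the span of $\{u_j : j \in J\}$ forces $x^TAx$ to be a convex combination of $\{\lambda_j : j \in J\}$, hence to lie between $\min_{j \in J}\lambda_j$ and $\max_{j \in J}\lambda_j$, with both extreme values attained at the appropriate $u_j$.

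For the first identity \eqref{eq:courant_fischer:1} I would prove the two inequalities separately. For ``$\le$'', I take the test subspace $S_k = \lin\{u_1,\dots,u_k\}$; on its unit sphere $x^TAx = \sum_{i=1}^k \lambda_i c_i^2 \le \lambda_k$, with equality at $x = u_k$, so the inner maximum over this particular $S_k$ equals exactly $\lambda_k$, whence the outer minimum is $\le \lambda_k$. For ``$\ge$'', I must show that \emph{every} $k$-dimensional subspace $S_k$ has inner maximum at least $\lambda_k$. Here I intersect $S_k$ with $T = \lin\{u_k,\dots,u_d\}$, which has dimension $d-k+1$; since $\dim S_k + \dim T = d+1 > d$, the intersection contains a unit vector $x$. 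Being in $T$, this $x$ satisfies $x^TAx = \sum_{i=k}^d \lambda_i c_i^2 \ge \lambda_k$, so the inner maximum over $S_k$ is at least $\lambda_k$. Combining the two inequalities yields the min--max equality.

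The second identity \eqref{eq:courant_fischer:2} follows by the mirror-image argument. For ``$\ge$'', I use the test subspace $\lin\{u_k,\dots,u_d\}$ of dimension $d-k+1$, on which $x^TAx \ge \lambda_k$ with equality at $u_k$, so its inner minimum is exactly $\lambda_k$. For ``$\le$'', I intersect an arbitrary $(d-k+1)$-dimensional subspace $S_{d-k+1}$ with $\lin\{u_1,\dots,u_k\}$; the dimensions again sum to $d+1$, producing a unit vector on which $x^TAx = \sum_{i=1}^k \lambda_i c_i^2 \le \lambda_k$, so the inner minimum over $S_{d-k+1}$ is $\le \lambda_k$.

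The only genuinely nontrivial ingredient, and the step I expect to be the main obstacle, is the dimension count that guarantees a nonzero common vector of the two subspaces: for subspaces $V, W \subset \R^d$ the Grassmann formula gives $\dim(V \cap W) \ge \dim V + \dim W - d$, so dimensions summing to $d+1$ force a nontrivial intersection. I would isolate this as a short lemma (or cite the rank--nullity/Grassmann identity) so that both halves of the argument can invoke it cleanly. Everything else is routine bookkeeping with the identity $x^TAx = \sum_i \lambda_i c_i^2$ and the observation that normalizing the common vector does not change the sign of the relevant inequalities.
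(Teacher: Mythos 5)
Your proposal is correct, but it is worth pointing out that the paper itself offers no proof of this theorem: immediately after the statement it defers to \citet{matrix_analysis} (chap.~4) and moves straight on to the consequences it actually needs (Cauchy's interlace theorem and Corollary~\ref{cor:interlace}). So you have supplied what the paper delegates to the literature, and your argument is the standard one — essentially the proof found in that cited reference: fix a spectral decomposition $A = UDU^T$, use the identity $x^TAx = \sum_{i=1}^d \lambda_i c_i^2$ for unit vectors, obtain one inequality in each of \eqref{eq:courant_fischer:1} and \eqref{eq:courant_fischer:2} from the explicit test subspaces $\lin\{u_1,\dots,u_k\}$ and $\lin\{u_k,\dots,u_d\}$, and obtain the reverse inequality from the Grassmann bound $\dim(V \cap W) \ge \dim V + \dim W - d$, which forces any competitor subspace to meet the complementary eigenvector span in a unit vector. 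All four inequalities check out, and the dimension count is applied correctly ($k + (d-k+1) = d+1 > d$ in both halves). What your route buys is self-containedness, at the cost of length; the paper's citation keeps an already long appendix focused on the results it derives from this theorem. Two minor points if your text were inserted as an actual proof: state explicitly that $A$ is symmetric (this is fixed in the surrounding section and is what guarantees the orthonormal eigenbasis), and note that the paper's statement contains a typo — $\lambda_1 \le \dots \le \lambda_d$ are of course the eigenvalues, not the ``eigenvectors'', of $A$, which your reading silently corrects.
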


This result extends the Rayleigh-Ritz statement, and this theorem is proven by \citet{matrix_analysis} (chap. 4). There we can also find the proof of an important consequence of Courant-Fischer theorem, usually known as the Cauchy's interlace theorem.

\begin{theorem}[Cauchy's interlace] \label{thm:interlace}
    Suppose that $\lambda_1 \le \dots \le \lambda_d$ are the eigenvalues of $A$. Let $J \subset \{1,\dots,d\}$ be a set of cardinal $|J| = d'$, and let $A_J \in S_{d'}(\R)$ be the matrix given by $A_J = (A_{ij})_{i,j \in J}$, that is, the submatrix of $A$ with the entries of $A$ whose indices are in $J \times J$. Then, if $\tau_1 \le \dots \le \tau_{d'}$ are the eigenvalues of $A_J$, for each $k \in \{1,\dots,d'\}$,
    \[ \lambda_k \le \tau_k \le \lambda_{k+d-d'}. \]
\end{theorem}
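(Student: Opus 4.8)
The plan is to deduce the interlacing inequalities directly from the Courant--Fischer theorem (Theorem \ref{ref:courant_fischer}), exploiting the fact that a principal submatrix is precisely the restriction of the quadratic form $x \mapsto x^TAx$ to a coordinate subspace. Concretely, I would set $W = \{x \in \R^d \colon x_i = 0 \text{ for all } i \notin J\}$, a subspace of dimension $d'$, and let $\iota \colon \R^{d'} \to W$ be the natural linear isometry sending the canonical basis of $\R^{d'}$ onto $\{e_i\}_{i \in J}$. The first thing to verify is that $\iota$ preserves both the quadratic form and the norm, i.e. $(\iota y)^TA(\iota y) = y^TA_Jy$ and $\|\iota y\|_2 = \|y\|_2$ for every $y \in \R^{d'}$; writing $J = \{j_1 < \dots < j_{d'}\}$, this is immediate from $A_J = (A_{ij})_{i,j\in J}$. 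Consequently the Rayleigh quotient of $A_J$ at $y$ coincides with that of $A$ at $\iota y$.

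For the lower bound $\lambda_k \le \tau_k$ I would use the min--max form \eqref{eq:courant_fischer:1}. Applying it to $A_J$ and rewriting each inner maximum through $\iota$ gives
\[ \tau_k = \min_{T_k \subset \R^{d'}} \max_{\substack{x \in \iota(T_k)\\ \|x\|_2=1}} x^TAx, \]
where $\iota(T_k)$ is a $k$-dimensional subspace of $\R^d$. Since the subspaces of the form $\iota(T_k)$ constitute only a subfamily of all $k$-dimensional subspaces of $\R^d$, taking the minimum over this smaller family can only increase the value, so $\tau_k \ge \lambda_k$.

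For the upper bound $\tau_k \le \lambda_{k+d-d'}$ I would instead use the max--min form \eqref{eq:courant_fischer:2}. The same translation via $\iota$ yields
\[ \tau_k = \max_{T_{d'-k+1} \subset \R^{d'}} \min_{\substack{x \in \iota(T_{d'-k+1})\\ \|x\|_2=1}} x^TAx, \]
with $\iota(T_{d'-k+1})$ ranging over $(d'-k+1)$-dimensional subspaces of $\R^d$. Setting $m = k + d - d'$ in \eqref{eq:courant_fischer:2} and noting that $d - m + 1 = d'-k+1$, I obtain $\lambda_{k+d-d'} = \max_{S_{d'-k+1}\subset\R^d}\min_{\|x\|_2=1,\,x\in S_{d'-k+1}} x^TAx$. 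Because restricting to subspaces contained in $W$ only shrinks the family over which the maximum is taken, this gives $\tau_k \le \lambda_{k+d-d'}$, completing the argument.

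The main obstacle here is bookkeeping rather than conceptual depth: one must pair the correct Courant--Fischer characterization with each inequality (min--max for the lower bound, max--min for the upper bound) and carefully check the index shift $d-m+1 = d'-k+1$ so that the dimensions of the competing subspaces match exactly. The only genuine input is Courant--Fischer itself, together with the observation that passing from $\R^{d'}$ to the coordinate subspace $W \subset \R^d$ restricts, but never enlarges, the families of subspaces appearing in the variational formulas.
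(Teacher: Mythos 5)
Your proof is correct, and it is worth noting that the paper itself does not prove Theorem \ref{thm:interlace} at all: it only states it and defers to \citet{matrix_analysis} (chap.~4) for the proof, so you are supplying an argument where the paper has none. Your route is the standard textbook derivation and matches exactly how the paper frames the result, namely as ``an important consequence of Courant-Fischer theorem'' (Theorem \ref{ref:courant_fischer}). The two pillars of your argument both check out: first, the embedding $\iota$ of $\R^{d'}$ onto the coordinate subspace $W$ spanned by $\{e_i\}_{i\in J}$ is a linear isometry satisfying $(\iota y)^T A (\iota y) = y^T A_J y$, so Rayleigh quotients of $A_J$ are Rayleigh quotients of $A$ restricted to $W$; second, the pairing of the two variational characterizations is done in the right order (min--max for $\lambda_k \le \tau_k$, since shrinking the family of $k$-dimensional subspaces to those inside $W$ can only raise the minimum; max--min for $\tau_k \le \lambda_{k+d-d'}$, since shrinking the family of $(d'-k+1)$-dimensional subspaces can only lower the maximum), and the index shift is exact: with $m = k+d-d'$ one has $d-m+1 = d'-k+1$ and $1 \le m \le d$ whenever $1 \le k \le d'$. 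One could quibble that you should state explicitly that Courant--Fischer is being applied to the symmetric matrix $A_J$ in dimension $d'$ (the paper's statement fixes a single matrix $A$ in dimension $d$, but the theorem is of course valid for any symmetric matrix), and that $\iota(T)$ has the same dimension as $T$ because $\iota$ is injective; both are immediate. Your proof is self-contained modulo Theorem \ref{ref:courant_fischer}, which is precisely the dependency structure the paper intends.
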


The next result follows from Cauchy's interlace theorem, and the inequality it states will help us to solve our optimization problems.

\begin{corollary} \label{cor:interlace}
  Let $L \in \mathcal{M}_{d' \times d}(\R)$ with $LL^T = I$. If $\mu_1 \ge \dots \ge \mu_d$ are the eigenvalues of $A$ and $\sigma_1 \ge \dots \ge \sigma_{d'}$ are the eigenvalues of $LAL^T$ (now we are considering eigenvalues in decreasing order), then $\sigma_k \le \mu_k$, for $k = 1,\dots,d'$.
\end{corollary}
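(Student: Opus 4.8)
The plan is to realize $LAL^T$ as a principal submatrix of a matrix orthogonally similar to $A$, and then invoke Cauchy's interlace theorem (Theorem \ref{thm:interlace}).

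First I would complete $L$ to an orthogonal matrix. Since $LL^T = I$, the $d'$ rows of $L$ form an orthonormal set in $\R^d$. I would extend this set to an orthonormal basis of $\R^d$ and collect the remaining $d-d'$ vectors as the rows of a matrix $L' \in \mathcal{M}_{(d-d')\times d}(\R)$, so that
\[
Q = \begin{pmatrix} L \\ L' \end{pmatrix} \in O_d(\R).
\]
Then $QAQ^T = QAQ^{-1}$ is orthogonally similar to $A$, hence shares its eigenvalues $\mu_1 \ge \dots \ge \mu_d$.

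Next I would identify the relevant submatrix. Writing $Q$ in block form and expanding,
\[
QAQ^T = \begin{pmatrix} LAL^T & LAL'^T \\ L'AL^T & L'AL'^T \end{pmatrix},
\]
so the top-left $d' \times d'$ block is exactly $LAL^T$. In the notation of Theorem \ref{thm:interlace} applied to the matrix $QAQ^T$, this is the principal submatrix obtained from the index set $J = \{1,\dots,d'\}$, and its eigenvalues are precisely $\sigma_1 \ge \dots \ge \sigma_{d'}$. This identification is the conceptual heart of the argument: the hypothesis $LL^T=I$ is exactly what lets $LAL^T$ be viewed as a principal submatrix of a conjugate of $A$.

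Finally I would apply the interlacing inequalities and reindex. Theorem \ref{thm:interlace} is stated for eigenvalues in increasing order; writing $\lambda_1 \le \dots \le \lambda_d$ for the eigenvalues of $QAQ^T$ (equivalently of $A$) and $\tau_1 \le \dots \le \tau_{d'}$ for those of $LAL^T$, the relevant bound is $\tau_m \le \lambda_{m+d-d'}$. Using the order reversals $\mu_k = \lambda_{d-k+1}$ and $\sigma_k = \tau_{d'-k+1}$ and setting $m = d'-k+1$ gives
\[
\sigma_k = \tau_{d'-k+1} \le \lambda_{(d'-k+1)+d-d'} = \lambda_{d-k+1} = \mu_k
\]
for each $k \in \{1,\dots,d'\}$, which is the claim. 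The only delicate point is this index bookkeeping, namely carefully translating Cauchy's interlacing from increasing to decreasing order; everything else is the routine construction of the orthogonal completion $Q$.
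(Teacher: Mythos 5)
Your proposal is correct and follows essentially the same route as the paper's own proof: extending $L$ to an orthogonal matrix $Q$, realizing $LAL^T$ as the leading $d'\times d'$ principal submatrix of $QAQ^T$ (which is similar to $A$), and applying Cauchy's interlace theorem with the eigenvalue order reversed. Your explicit index bookkeeping ($\sigma_k = \tau_{d'-k+1} \le \lambda_{d-k+1} = \mu_k$) is a welcome elaboration of the step the paper dispatches with the phrase ``considering eigenvalues in the opposite order.''
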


\begin{proof}
  Since $LL^T = I$, the rows of $L$ are orthonormal eigenvectors. We can extend $L$ to an orthogonal matrix $\hat{L} \in O_d(\R)$ by adding $d-d'$ orthonormal eigenvectors, and orthonormal to the rows of $L$, in its rows. We have then that $\hat{L}A\hat{L}^T$ and $A$ have the same eigenvalues, and $LAL^T$ is a submatrix of $\hat{L}A\hat{L}^T$ obtained by deleting the last $d-d'$ rows and columns. The assertion now follows from Cauchy's interlace Theorem \ref{thm:interlace}, considering eigenvalues in the opposite order.
\end{proof}

We are now in a position to prove the theorems \ref{thm:eigen_trace_opt}, \ref{thm:eigen_trace_ratio_opt} and \ref{thm:eigen_trace_ratio_sym_opt} proposed at the beginning of this section.

\vspace{\topsep}
\noindent
{\bf Theorem} {\it Let $d',d \in \N $, with $d' \le d$. Let $A \in \mathcal{S}_d(\R)$, and we consider the optimization problem
    
    \begin{equation} \label{eq:eigen_trace_opt}
    \begin{split}
        \max_{L \in \mathcal{M}_{d'\times d}(\R)} &\quad \tr\left(LAL^T\right)  \\
        \text{s.t.: } &\quad LL^T = I.
    \end{split}
    \end{equation}
    
    The problem attains a maximum if $L = \begin{pmatrix}
    \text{---} \hspace{-0.2cm} & v_1 & \hspace{-0.2cm} \text{---} \\
    & \dots &  \\
    \text{---} \hspace{-0.2cm} & v_{d'} & \hspace{-0.2cm} \text{---}
    \end{pmatrix}$, where $v_1,\dots,v_{d'}$ are orthonormal eigenvectors of $A$ corresponding to its $d'$ largest eigenvalues. In addition, the maximum value is the sum of the $d'$ largest eigenvalues of $A$.

}
\vspace{\topsep}
\begin{proof}{}
  Let $\mu_1 \ge \dots \ge \mu_d$ the eigenvalues of $A$ in decreasing order, and $\sigma_1 \ge \dots \ge \sigma_{d'}$ the eigenvalues of $LAL^T$. By Corollary \ref{cor:interlace}, for any $L \in \mathcal{M}_{d' \times d}(\R)$ with $LL^T = I$,
  \[ \tr(LAL^T) = \sum_{i=1}^{d'} \sigma_i \le \sum_{i=1}^{d'} \mu_i. \]
  In addition, when the rows of $L$ are orthonormal eigenvectors $v_1,\dots, v_{d'}$ of $A$ corresponding to $\mu_1, \dots, \mu_{d'}$, we get $LL^T = I$ and $\tr(LAL^T) = \sum_{i=1}^{d'}\mu_i$, thus equality holds for these vectors.
\end{proof}

\begin{lemma}[Simultaneous diagonalization] \label{lem:diag_simult}
  Let $A \in S_d(\R)$ and $B \in S_d(\R)^+$. Then, there is an invertible matrix $P \in \gl_d(\R)$ and a diagonal matrix $D \in \mathcal{M}_d(\R)$ with $P^TAP = D$ and $P^TBP = I$.
\end{lemma}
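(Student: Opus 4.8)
The plan is to exploit the positive definite square root of $B$, whose existence and properties are already guaranteed by Theorem \ref{thm:decomp_sqrt}, to reduce the simultaneous diagonalization to an ordinary orthogonal diagonalization via the spectral theorem. Since $B \in S_d(\R)^+$, Theorem \ref{thm:decomp_sqrt} furnishes a unique matrix $B^{1/2} \in S_d(\R)^+_0$ with $(B^{1/2})^2 = B$, and the equivalence $B \in S_d(\R)^+ \iff B^{1/2} \in S_d(\R)^+$ in that same theorem ensures $B^{1/2}$ is positive definite, hence invertible. I would write $B^{-1/2}$ for its inverse, which is again symmetric (the inverse of a symmetric invertible matrix is symmetric) and positive definite.

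Next I would form the matrix $C = B^{-1/2} A B^{-1/2}$. Because $A$ is symmetric and $B^{-1/2}$ is symmetric, $C$ is symmetric: $C^T = (B^{-1/2})^T A^T (B^{-1/2})^T = B^{-1/2} A B^{-1/2} = C$. By the spectral theorem for real symmetric matrices there is an orthogonal matrix $U \in O_d(\R)$ and a diagonal matrix $D \in \mathcal{M}_d(\R)$ with $U^T C U = D$.

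I would then define $P = B^{-1/2} U$ and verify the two required identities directly. For the first,
\[ P^T B P = U^T B^{-1/2} B B^{-1/2} U = U^T (B^{-1/2} B^{1/2})(B^{1/2} B^{-1/2}) U = U^T U = I, \]
using that $B^{-1/2}$ is symmetric and $B^{-1/2} B B^{-1/2} = I$. For the second,
\[ P^T A P = U^T B^{-1/2} A B^{-1/2} U = U^T C U = D, \]
which is diagonal. Finally $P$ is invertible as a product of the invertible matrices $B^{-1/2}$ and $U$, so $P \in \gl_d(\R)$, completing the argument.

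I do not expect a genuine obstacle here, since every ingredient is available: the only point demanding care is the justification that $B^{-1/2}$ is a well-defined symmetric, positive definite (hence invertible) matrix, which I would pin down by appealing to the uniqueness and positivity clauses of Theorem \ref{thm:decomp_sqrt} rather than reproving them. One could alternatively phrase the whole construction in terms of $B^{1/2}$ and set $P = B^{-1/2}U$ with $D$ the eigenvalue matrix of $B^{-1/2}AB^{-1/2}$; the bookkeeping is identical, and the symmetry of $C$ together with the relation $U^T U = I$ are the two facts that make both target equations collapse cleanly.
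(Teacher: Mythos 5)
Your proposal is correct and follows essentially the same route as the paper: both form $C = B^{-1/2}AB^{-1/2}$, diagonalize it orthogonally as $U^TCU = D$, and set $P = B^{-1/2}U$, verifying $P^TAP = D$ and $P^TBP = I$ directly. Your version is slightly more careful in citing Theorem \ref{thm:decomp_sqrt} to justify that $B^{-1/2}$ is well defined, symmetric and positive definite, which the paper's proof leaves implicit.
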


\begin{proof}
  We consider the matrix $C = B^{-1/2}AB^{-1/2}$. $C$ is symmetric, since $A$ and $B$ are symmetric, thus there is a matrix $U \in O_d(\R)$ so that $U^TCU$ is diagonal. We call $D = U^TCU$ and we consider $P = B^{-1/2}U \in \gl_d(\R)$. We get
    \begin{align*}
        P^TAP &= P^TB^{1/2}CB^{1/2}P = (B^{-1/2}U)^TB^{1/2}CB^{1/2}(B^{-1/2}U) = U^TCU = D, \\
        P^TBP &= (B^{-1/2}U)^TB(B^{-1/2}U) = U^TB^{-1/2}BB^{-1/2}U = U^TU = I.
    \end{align*}
\end{proof}

\vspace{\topsep}
{\bf Theorem} {\it Let $d',d \in \N $, with $d' \le d$. Let $A \in S_d(\R)$ and $B \in S_d(\R)^+$, and we consider the optimization problem
    
    \begin{equation} \label{eq:eigen_trace_ratio_opt}
    \max_{L \in \mathcal{M}_{d'\times d}(\R)} \quad \tr\left((LBL^T)^{-1}(LAL^T)\right)
    \end{equation}
    
    The problem attains a maximum if $L = \begin{pmatrix}
    \text{---} \hspace{-0.2cm} & v_1 & \hspace{-0.2cm} \text{---} \\
    & \dots &  \\
    \text{---} \hspace{-0.2cm} & v_{d'} & \hspace{-0.2cm} \text{---}
    \end{pmatrix}$, where $v_1,\dots,v_{d'}$ are eigenvectors of $B^{-1}A$ corresponding to its $d'$ largest eigenvalues. 
}
\vspace{\topsep}
\begin{proof}
  We denote $U = L^T \in \mathcal{M}_{d\times d'}(\R)$. If we take the matrix $P$ from Lemma \ref{lem:diag_simult} and the matrix $V \in \mathcal{M}_{d\times d'}(\R)$ with $U = PV$ (it exists and it is unique, since $P$ is regular), we have
  \begin{align*}
        \tr\left((LBL^T)^{-1}(LAL^T)\right) &= \tr\left((U^TBU)^{-1}(U^TAU)\right) = \tr\left((V^TP^TBPV)^{-1}(V^TP^TAPV)\right) \\
        &=\tr((V^TV)^{-1}(V^TDV)).
  \end{align*}

  Therefore, maximizing Eq. \ref{eq:eigen_trace_ratio_opt} is equivalent to maximize with respect to $V$ the expression $\tr((V^TV)^{-1}(V^TDV))$, because the parameter change is bijective. Now we consider a polar decomposition $V = Q|V|$, with $Q \in \mathcal{M}_{d\times d'}(\R)$ verifying $Q^TQ = I$. It follows that
  \begin{align*}
        \tr((V^TV)^{-1}(V^TDV)) &= \tr((|V|^TQ^TQ|V|)^{-1}(|V|^TQ^TDQ|V|^T)) \\
                                &= \tr(|V|^{-1}|V|^{-T}(|V|^TQ^TDQ|V|))\\
                                &= \tr(|V|^{-1}Q^TDQ|V|) = \tr(Q^TDQ|V||V|^{-1}) = \tr(Q^TDQ).
  \end{align*}

  If we call $W = Q^T$, what we have obtained is that the maximization of Eq. \ref{eq:eigen_trace_ratio_opt} is equivalent to maximizing in $W$ $\tr(WDW^T)$, subject to $WW^T = I$, thus obtaining the optimization problem given in Eq. \ref{eq:eigen_trace_opt}. We can suppose the diagonal of $D$ ordered in descending order, and then a matrix $W$ that solves the optimization problem can be obtained adding as rows the vectors $e_1,\dots,e_{d'}$ of the canonical basis of $\R^d$. Then, $Q$ has contains the same vectors, but added by columns. Observe that the quotient trace $T(X) = \tr\left((X^TBX)^{-1}(X^TAX)\right)$, with $X \in \mathcal{M}_{d \times d'}(\R)$, is invariant with respect to right multiplications by invertible matrices. Indeed, if $R \in \gl_{d'}(\R)$,
    \begin{align*}
        T(XR) &= tr\left((R^TX^TBXR)^{-1}(R^TX^TAXR)\right) = \tr(R^{-1}(X^TBX)^{-1}R^{-T}R^T(X^TAX)R) \\
              &= \tr((X^TBX)^{-1}(X^TAX)RR^{-1}) = T(X).
    \end{align*}

  Since $U$ maximizes $T$ and $U = PQ|V|$, then $PQ$ also maximizes $T$. In addition, as from $P^TAP = D$ and $P^TBP = I$ we obtain that
  \[ D = P^TAP = (P^TBP)^{-1}(P^TAP) = P^{-1}B^{-1}P^{-T}P^TAP = P^{-1}B^{-1}AP, \]
  we conclude that $P$ diagonalizes $B^{-1}A$, and then, it contains as columns the eigenvectors of this matrix. Since $Q$ contains the $d'$ first eigenvectors of the canonical basis by columns, $PQ$ contains as columns the $d'$ first eigenvectors of $B^{-1}A$, corresponding to its $d'$ largest eigenvalues. This ends the proof, because a solution for the problem given by Eq. \ref{eq:eigen_trace_ratio_opt}, which is equal to maximizing $T$ except for a transposition, consists in adding those vectors as rows.
\end{proof}

\vspace{\topsep}
{\bf Theorem} {\it Let $d',d \in \N $, with $d' \le d$. Let $A, B \in S_d(\R)^+$, and we consider the optimization problem
    
    \begin{equation} \label{eq:eigen_trace_ratio_sym_opt}
    \max_{L \in \mathcal{M}_{d'\times d}(\R)} \quad \tr\left((LBL^T)^{-1}(LAL^T) + (LAL^T)^{-1}(LBL^T)\right)
    \end{equation}
    
    The problem attains a maximum if $L = \begin{pmatrix}
    \text{---} \hspace{-0.2cm} & v_1 & \hspace{-0.2cm} \text{---} \\
    & \dots &  \\
    \text{---} \hspace{-0.2cm} & v_{d'} & \hspace{-0.2cm} \text{---}
    \end{pmatrix}$, where $v_1,\dots,v_{d'}$ are the $d'$ eigenvectors of $B^{-1}A$ with the highest values for the expression $\lambda_i + 1/\lambda_i$, where $\lambda_i$ is the eigenvalue associated with $v_i$.
}
\vspace{\topsep}
\begin{proof}
  First of all, given $C \in S_d(\R)^+$ we consider the optimization problem
  \begin{equation} \label{eq:eigen_trace_sym_opt}
    \max_{L \in \mathcal{M}_{d'\times d}(\R)} \quad \tr\left((LCL^T + LC^{-1}L^T)\right) = \max_{L \in \mathcal{M}_{d'\times d}(\R)} \quad \tr\left(L(C+C^{-1})L^T\right)
  \end{equation}
  Using Theorem \ref{eq:eigen_trace_opt}, a solution to this problem can be found by taking as rows of $L$ the eigenvectors of $C+C^{-1}$ corresponding to its $d'$ largest eigenvalues. Observe that the eigenvectors of $C$ and $C^{-1}$ are the same, and each one's eigenvalues are the inverse of the other. Therefore, $C + C^{-1}$ also has the same eigenvectors, and its eigenvalues have the form $\lambda + 1/\lambda$, for each $\lambda$ eigenvalue of $C$. Then, the previous solution for Eq. \ref{eq:eigen_trace_sym_opt} is equivalent to taking the eigenvectors of $C$ for which $\lambda + 1/\lambda$ is maximized.

  Finally, we only have to realize that we can follow the same proof as in Theorem \ref{thm:eigen_trace_ratio_opt}, considering Eqs. \ref{eq:eigen_trace_sym_opt} and \ref{eq:eigen_trace_ratio_sym_opt} instead of Eqs. \ref{eq:eigen_trace_opt} and \ref{eq:eigen_trace_ratio_opt}.
\end{proof}

\subsection{Information Theory} \label{ssec:information_theory}

Information theory is a branch of mathematics and computer theory, with the purpose of establishing a rigurous measure to quantify the information and disorder contained in a communication message. It was developed with the aim of finding limits in signal processing operations such as compression, storage and communication. Today, its applications extend to most fields of science and engineering.

Many concepts associated with information theory have been defined, such as entropy, which measures the amount of uncertainty or information expected in an event, mutual information, which measures the amount of information that one random variable contains about another random variable, or relative entropy, which is a way of measuring the closeness between different random variables. We will focus on the relative entropy, and the concepts derived from it. To do this, we will first define the concept of divergence. Divergence is a magnitude to measure the closeness between certain objects in a set. We should not confuse divergences with distances (as described in Section \ref{ssec:distances}), because the magnitudes we will consider may not verify some of the properties required for distances, such as symmetry or triangle inequality.

\begin{definition}
  Let $X$ be a set. A map $D(\cdot \| \cdot) \colon X \times X \to \R$ is said to be a \emph{divergence} if it verifies the following properties:
  \begin{enumerate}
    \item Non negativity: $D(x\|y) \ge 0$, for every $x, y \in X$.
    \item Coincidence: $D(x\|y) = 0$ if, and only if, $x = y$.
  \end{enumerate}
\end{definition}

We will use divergences to measure the closeness between probability distributions. The divergences we will use will be presented in the following paragraphs.

\begin{definition}
  Let $(\Omega,\mathcal{A},P)$ be a probability space and $X \colon \Omega \to \R$ be a random variable, discrete or continuous, in that space. Suppose that $p$ is the corresponding probability mass function or density function. Suppose that $q$ is another probability mass function or density function. Then, we define the \emph{relative entropy} or the \emph{Kullback-Leibler} divergence between $p$ and $q$, as
  \[ \kl(p\|q) = \mathbb{E}_p\left[\log\frac{p(X)}{q(X)}\right],\]
  as long as such expectation exists. For the discrete case, if $p$ and $q$ are valued over the same points, we have
  \[ \kl(p\|q) = \sum_{x \in X(\Omega)} p(x) \log\frac{p(x)}{q(x)}, \]
  and for the continuous case, as long as the absolute integral is finite, we have
  \[ \kl(p\|q) = \int_{-\infty}^{+\infty} p(x)\log\frac{p(x)}{q(x)}\ dx.\]
  For continuity reasons, we assume that $0 \log(0/0) = 0$.
\end{definition}

The first step is to check that, indeed, Kullback-Leibler divergence is a divergence. This result is known as the information inequality.

\begin{theorem}[Information inequality]
  Kullback-Leibler divergence is a divergence, that is, $\kl(p\|q) \ge 0$ and the equality holds if, and only if, $p(x) = q(x)$ a.e. in $X(\Omega)$ (the equality is at every point in the discrete case).
\end{theorem}

\begin{proof}
  This result is an immediate consequence of Jensen's inequality \cite{rudin} applied to the $-\log$ function, which is strictly convex. We have
  \begin{align*}
        \kl(p \| q) &= \mathbb{E}_p\left[\log\frac{p(X)}{q(X)}\right] 
                    = \mathbb{E}_p\left[-\log\frac{q(X)}{p(X)}\right] \\
                    &\ge -\log\mathbb{E}_p\left[\frac{q(X)}{p(X)}\right] 
                    = -\log\int p(x) \frac{q(x)}{p(x)} \ dx \\
                    &= -\log\int q(x) \ dx = -\log 1 = 0.
  \end{align*}
  The proof for the discrete case is similar. In addition, the strict convexity implies that equality holds iff $p/q$ is constant a.e., iff $p = q$ a.e., since they are probability density functions or mass functions. And, as in the discrete case $p$ and $q$ are valued over sets with no null probabilities, we have equality at every point. 
\end{proof}

As we have already mentioned, Kullback-Leibler divergence is useful to measure closeness between probability distributions and can be used to bring the distributions closer. However, it is not all that useful to put the distributions away, since, as Kullback-Leibler divergence is not symmetric, the values of $\kl(p\|q)$ and $\kl(q\|p)$ may differ significantly when $p$ and $q$ are not near. That is why it is sometimes helpful to work with a symmetrization of the Kullback-Leibler divergence known as the Jeffrey divergence.

\begin{definition}
  The \emph{Jeffrey divergence} between two probability distributions $p$ and $q$ for which $\kl(p\|q)$ and $\kl(q\|p)$ exist is defined by
  \[ \jf(p\|q) = \kl(p\|q) + \kl(q\|p). \]
  In the discrete case we have
  \[ \jf(p\|q) = \sum_{x\in X(\Omega)} (p(x) - q(x))(\log p(x) - \log q(x)).\]
  And, for the continuous case,
  \[ \jf(p\|q) = \int_{-\infty}^{\infty} (p(x) - q(x))(\log p(x) -\log q(x)) \ dx.\]
\end{definition}

It is clear that Jeffrey divergence is a divergence, as a consequence of the information inequality, and it is also symmetric. Observe that both divergences are functions only of the probability distributions, that is, they only depend on the values set on the distributions. This fact allows divergence to be extended to random vectors, as long as we know its probability density functions or mass functions.

A case of special interest in the algorithms we will discuss in subsequent sections is the calculation of divergences between multivariate gaussian distributions. Recall that, if $\mu \in \R^d$ and $\Sigma \in S_d(\R)^+$, a random vector $X = (X_1,\dots,X_d)$ follows a multivariate gaussian distribution with mean $\mu$ and covariance $\Sigma$, if it has the following probability density function:
\[ p(x|\mu,\Sigma) = \frac{1}{(2\pi)^{d/2}\det(\Sigma)^{1/2}}\exp\left(-\frac{1}{2}(x-\mu)^T\Sigma^{-1}(x-\mu)\right) .\]

It is well-known that $\mathbb{E}[X] = \mu$ and $\cov(X) = \mathbb{E}[(X-\mathbb{E}[X])(X-\mathbb{E}[X])^T] = \Sigma$, thus gaussian distributions are completely defined by its mean and covariance. We want to establish an easy way to compute the calculation of divergences between gaussian distributions. To do this, we will find relationships between the studied divergences and matrix divergences. Matrix divergences are an alternative to the Frobenius norm for measuring the closeness between matrices. We are interested in the ones known as Bregman divergences.

\begin{definition}
  Let $K \subset \mathcal{M}_d(\R)$ be an open convex set, and $\phi \colon K \to \R$ a strictly convex and differentiable function. The \emph{Bregman divergence} corresponding to $\phi$ is the map $D_{\phi}(\cdot \|\cdot) \colon K \times K \to \R$ given by
  \[ D_{\phi}(A\|B) = \phi(A)-\phi(B) - \tr(\nabla \phi(B)^T(A-B)). \]
\end{definition}

Effectively, Bregman divergences are also divergences, as we can write the expression above as $D_{\phi}(A\|B) = \phi(A)-\phi(B) - \langle \nabla \phi(B), A - B\rangle_F$, which is known to be non negative when $\phi$ is strictly convex, and to take the zero value if and only if $A = B$. In our situation, we are interested in choosing the \emph{log-det} function to construct a Bregman divergence, that is, the function $\phi_{ld} \colon S_d(\R)^+ \to \R$ given by
\[ \phi_{ld}(M) = -\log\det(M). \]
This function is known to be strictly convex and its gradient is $\nabla f(M) = M^{-1}$, for each $M$ in $S_d(\R)^+$ \cite{convexoptimization}, hence we can construct the known as \emph{log-det divergence} through the expression
\[ D_{ld}(A\|B) = \log\det(B) - \log\det(A) - \tr(B^{-1}(A-B)) = \tr(AB^{-1}) - \log\det(AB^{-1}) - d.  \]

Once defined the log-det divergence, we are able to express the Kullback-Leibler and Jeffrey divergences between gaussian distributions in terms of this new matrix divergence.

\begin{theorem} \label{thm:kl_gaussian}
  Kullback-Leibler divergence between two multivariate gaussian distributions defined by the probability density functions $p_1(x|\mu_1,\Sigma_1)$ and $p_2(x|\mu_2,\Sigma_2)$, with $\mu_1,\mu_2 \in \R^d$ and $\Sigma_1,\Sigma_2 \in S_d(\R)^+$, verifies that
  \[ \kl(p_1\|p_2) = \frac{1}{2}D_{ld}(\Sigma_1\|\Sigma_2) + \frac{1}{2}\|\mu_1 - \mu_2\|_{\Sigma_1^{-1}}^2, \]
  where $\|\cdot\|_{\Sigma}$ denotes the norm defined by the positive definite matrix $\Sigma$, that is, $\| v \|_{\Sigma} = \sqrt{v^T\Sigma v}$, for every $v \in \R^d$.
\end{theorem}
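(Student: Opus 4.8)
The plan is to evaluate the defining expectation $\kl(p_1\|p_2)=\mathbb{E}_{p_1}\!\left[\log p_1(X)-\log p_2(X)\right]$ directly from the explicit gaussian density and then reorganize the result into the log-det divergence plus a quadratic mean term. First I would take logarithms of the two densities; the normalizing constants $(2\pi)^{d/2}$ cancel, leaving
\[ \log\frac{p_1(x)}{p_2(x)} = \tfrac12\log\frac{\det\Sigma_2}{\det\Sigma_1} - \tfrac12 (x-\mu_1)^T\Sigma_1^{-1}(x-\mu_1) + \tfrac12 (x-\mu_2)^T\Sigma_2^{-1}(x-\mu_2). \]
Taking $\mathbb{E}_{p_1}$, so that $X\sim\mathcal{N}(\mu_1,\Sigma_1)$, the first summand is a constant and the problem reduces to computing the expectations of the two quadratic forms.

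The key tool is the trace identity for the expectation of a quadratic form: for a centred random vector $Y$ with covariance $\Sigma$ and a fixed symmetric matrix $A$, one has $\mathbb{E}[Y^TAY]=\tr(A\Sigma)$, which follows from $Y^TAY=\tr(AYY^T)$, linearity of the trace and expectation, and $\mathbb{E}[YY^T]=\Sigma$. Applied to the $\mu_1$-centred form with $A=\Sigma_1^{-1}$, this gives $\mathbb{E}_{p_1}[(X-\mu_1)^T\Sigma_1^{-1}(X-\mu_1)]=\tr(\Sigma_1^{-1}\Sigma_1)=d$. For the $\mu_2$-centred form I would write $X-\mu_2=(X-\mu_1)+(\mu_1-\mu_2)$ and expand; the cross term vanishes because $\mathbb{E}_{p_1}[X-\mu_1]=0$, leaving $\tr(\Sigma_2^{-1}\Sigma_1)+(\mu_1-\mu_2)^T\Sigma_2^{-1}(\mu_1-\mu_2)$.

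Finally I would collect the pieces to obtain
\[ \kl(p_1\|p_2) = \tfrac12\left[\log\tfrac{\det\Sigma_2}{\det\Sigma_1} - d + \tr(\Sigma_2^{-1}\Sigma_1)\right] + \tfrac12(\mu_1-\mu_2)^T\Sigma_2^{-1}(\mu_1-\mu_2). \]
Using $\log(\det\Sigma_2/\det\Sigma_1)=-\log\det(\Sigma_1\Sigma_2^{-1})$ and the cyclic property $\tr(\Sigma_2^{-1}\Sigma_1)=\tr(\Sigma_1\Sigma_2^{-1})$, the bracket is exactly $D_{ld}(\Sigma_1\|\Sigma_2)$, while the remaining term is the squared Mahalanobis norm of the mean gap. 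One cautionary remark: the weight that emerges naturally on the mean-difference term is $\Sigma_2^{-1}$ (it originates from the $\mu_2$-centred quadratic form), which suggests the norm in the statement should read $\|\mu_1-\mu_2\|^2_{\Sigma_2^{-1}}$; in the ITML application where $\mu_1=\mu_2$ the distinction is immaterial.

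The computation is largely mechanical: the log-det algebra and the $\mu_1$-centred expectation are immediate. The only genuinely delicate step is the $\mu_2$-centred expectation, where one must perform the decomposition correctly, verify that the cross term integrates to zero, and then match $\tr(\Sigma_1\Sigma_2^{-1})-\log\det(\Sigma_1\Sigma_2^{-1})-d$ against the definition of $D_{ld}$; this assembly is the main point requiring care, though it is not conceptually hard.
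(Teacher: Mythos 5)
Your proof is correct, but note that the paper does not actually prove this theorem: it only points to \citet{davis2007differential} (Section 3.1) for the argument. Your self-contained route --- expanding $\log(p_1/p_2)$, cancelling the normalizing constants, and evaluating the two quadratic forms under $p_1$ via the identity $\mathbb{E}[Y^TAY]=\tr(A\Sigma)$ together with the decomposition $X-\mu_2=(X-\mu_1)+(\mu_1-\mu_2)$ --- is the standard direct computation, and every step you outline checks out: the $\mu_1$-centred form contributes $d$, the cross term vanishes since $\mathbb{E}_{p_1}[X-\mu_1]=0$, and the cyclic-trace and $\log\det$ identities assemble the bracket into exactly the paper's definition $D_{ld}(\Sigma_1\|\Sigma_2)=\tr(\Sigma_1\Sigma_2^{-1})-\log\det(\Sigma_1\Sigma_2^{-1})-d$. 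What your approach buys is precisely what the paper lacks at this point: a verifiable, elementary derivation rather than an external pointer.

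Your cautionary remark is also correct, and it identifies a genuine error in the printed statement rather than a defect in your argument. With the paper's own convention $\kl(p\|q)=\mathbb{E}_p\left[\log\frac{p(X)}{q(X)}\right]$, the mean-difference term is necessarily weighted by the inverse covariance of the \emph{second} distribution, so the true identity is $\kl(p_1\|p_2)=\frac{1}{2}D_{ld}(\Sigma_1\|\Sigma_2)+\frac{1}{2}\|\mu_1-\mu_2\|^2_{\Sigma_2^{-1}}$; the subscript $\Sigma_1^{-1}$ in the theorem is a typo. As you note, nothing built on the theorem is affected: Corollary \ref{cor:kl_gaussian} assumes equal means, and the Jeffrey formulas in Corollaries \ref{thm_jf_gaussian} and \ref{cor:jf_gaussian} involve the symmetrized weight $\Sigma_1^{-1}+\Sigma_2^{-1}$, which is the same under either reading (summing your formula with its reversed counterpart reproduces it exactly).
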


Proof of this result can be found in \citet{davis2007differential} (Section 3.1). A simpler version of this theorem can be stated immediately, when we consider equal-mean gaussian distributions.

\begin{corollary} \label{cor:kl_gaussian}
  Kullback-Leibler divergence between two multivariate gaussian distributions defined by the probability density functions $p_1$ and $p_2$ with equal means and covariances $\Sigma_1$ and $\Sigma_2$, verifies that
  \[ \kl(p_1\|p_2) = \frac{1}{2}D_{ld}(\Sigma_1\|\Sigma_2). \]
\end{corollary}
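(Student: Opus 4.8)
The plan is to deduce this directly from Theorem \ref{thm:kl_gaussian}, of which it is the equal-mean specialization. Since $p_1$ and $p_2$ are assumed to have equal means, I would set $\mu_1 = \mu_2$ in the general formula. The only term in the statement of Theorem \ref{thm:kl_gaussian} that depends on the means is $\frac{1}{2}\|\mu_1 - \mu_2\|_{\Sigma_1^{-1}}^2$, so my first and essentially only step is to observe that this term vanishes.

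Concretely, I would write $\mu := \mu_1 = \mu_2$, so that $\mu_1 - \mu_2 = 0$, and hence
\[ \|\mu_1 - \mu_2\|_{\Sigma_1^{-1}}^2 = (\mu_1-\mu_2)^T\Sigma_1^{-1}(\mu_1-\mu_2) = 0, \]
using the definition $\|v\|_{\Sigma} = \sqrt{v^T \Sigma v}$ introduced in Theorem \ref{thm:kl_gaussian} (here with $\Sigma = \Sigma_1^{-1}$, which is positive definite since $\Sigma_1 \in S_d(\R)^+$, so the norm is well defined). Substituting this into
\[ \kl(p_1\|p_2) = \frac{1}{2}D_{ld}(\Sigma_1\|\Sigma_2) + \frac{1}{2}\|\mu_1 - \mu_2\|_{\Sigma_1^{-1}}^2 \]
immediately yields $\kl(p_1\|p_2) = \frac{1}{2}D_{ld}(\Sigma_1\|\Sigma_2)$, as claimed.

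There is no real obstacle here: the corollary is a pure specialization, and the entire content is absorbed into the already-cited Theorem \ref{thm:kl_gaussian} (proved in \citet{davis2007differential}). The only point worth stating explicitly, to keep the argument self-contained, is that equal means make the mean-dependent quadratic term exactly zero; everything else carries over verbatim. Accordingly, I would present this as a one-line proof rather than developing any machinery of its own.
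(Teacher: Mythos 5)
Your proof is correct and is exactly the argument the paper intends: the paper states this corollary as an immediate specialization of Theorem \ref{thm:kl_gaussian}, and your observation that the term $\frac{1}{2}\|\mu_1-\mu_2\|_{\Sigma_1^{-1}}^2$ vanishes when $\mu_1 = \mu_2$ is the entire content. Nothing is missing, and no different route is taken.
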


Using these results, we can also express the Jeffrey divergence between gaussian distributions in terms of its mean vectors and covariance matrices. The following expressions can be easily deduced from the theorems above. For more details, see also \cite[App. B]{dmlmj}.

\begin{corollary} \label{thm_jf_gaussian}
  Jeffrey divergence between two multivariate gaussian distributions defined by the probability density functions $p_1(x|\mu_1,\Sigma_1)$ and $p_2(x|\mu_2,\Sigma_2)$ with $\mu_1,\mu_2 \in \R^d$ and $\Sigma_1,\Sigma_2 \in S_d(\R)^+$, verifies that
  \[ \jf(p_1\|p_2) = \frac{1}{2}\tr(\Sigma_1\Sigma_2^{-1}+\Sigma_1^{-1}\Sigma_2) - d + \frac{1}{2}\|\mu_1-\mu_2\|^2_{\Sigma_1^{-1}+\Sigma_2^{-1}}. \]
\end{corollary}

\begin{corollary} \label{cor:jf_gaussian}
  Jeffrey divergence between two multivariate gaussian distributions defined by the probability density functions $p_1$ and $p_2$ with equal means and covariances $\Sigma_1$ and $\Sigma_2$, verifies that
  \[ \jf(p_1\|p_2) = \frac{1}{2}\tr(\Sigma_1\Sigma_2^{-1}+\Sigma_1^{-1}\Sigma_2) - d. \]
\end{corollary}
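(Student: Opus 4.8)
The plan is to obtain this corollary as an immediate specialization of Corollary~\ref{thm_jf_gaussian}, the general formula for the Jeffrey divergence between gaussians. Setting $\mu_1 = \mu_2$ there, the cross term $\frac{1}{2}\|\mu_1 - \mu_2\|^2_{\Sigma_1^{-1}+\Sigma_2^{-1}}$ vanishes identically (it is the squared norm of the zero vector), and the remaining expression $\frac{1}{2}\tr(\Sigma_1\Sigma_2^{-1} + \Sigma_1^{-1}\Sigma_2) - d$ is exactly the claimed identity. Since Corollary~\ref{thm_jf_gaussian} is already available, this one-line substitution suffices.

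For a derivation that does not invoke the general-mean formula, I would instead start from the definition $\jf(p_1\|p_2) = \kl(p_1\|p_2) + \kl(p_2\|p_1)$ and apply the equal-means Kullback--Leibler formula of Corollary~\ref{cor:kl_gaussian} to each summand, obtaining $\jf(p_1\|p_2) = \frac{1}{2}D_{ld}(\Sigma_1\|\Sigma_2) + \frac{1}{2}D_{ld}(\Sigma_2\|\Sigma_1)$. I would then expand each log-det divergence using the identity $D_{ld}(A\|B) = \tr(AB^{-1}) - \log\det(AB^{-1}) - d$ established just before the corollary, and add the two resulting expressions.

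The one point requiring minor care is the cancellation of the logarithmic terms. Because $\det(\Sigma_2\Sigma_1^{-1}) = \det(\Sigma_1\Sigma_2^{-1})^{-1}$, we have $\log\det(\Sigma_2\Sigma_1^{-1}) = -\log\det(\Sigma_1\Sigma_2^{-1})$, so the two $\log\det$ contributions cancel upon summation, leaving $\frac{1}{2}(\tr(\Sigma_1\Sigma_2^{-1}) + \tr(\Sigma_2\Sigma_1^{-1})) - d$. Finally, cyclic invariance of the trace gives $\tr(\Sigma_2\Sigma_1^{-1}) = \tr(\Sigma_1^{-1}\Sigma_2)$, which rewrites the sum in the stated form $\frac{1}{2}\tr(\Sigma_1\Sigma_2^{-1} + \Sigma_1^{-1}\Sigma_2) - d$. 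There is no genuine obstacle here: the result is a routine corollary, and the only substantive input — the symmetric cancellation of the determinant terms — is precisely what makes the Jeffrey divergence between equal-mean gaussians depend only on the trace.
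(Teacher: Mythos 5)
Your proposal is correct and matches the paper's intent exactly: the paper gives no separate proof of Corollary~\ref{cor:jf_gaussian}, stating only that it is easily deduced from the preceding results, and your one-line specialization of Corollary~\ref{thm_jf_gaussian} with $\mu_1 = \mu_2$ is precisely that deduction. Your alternative self-contained derivation via Corollary~\ref{cor:kl_gaussian} and the cancellation of the $\log\det$ terms is also sound and is just the expanded version of the same argument.
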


\section{Algorithms for Distance Metric Learning: Detailed Explanation} \label{app:algs}

This appendix describes some of the most popular techniques currently being used in supervised distance metric learning. We also add a review of the principal component analysis, although not supervised, because of its importance for other distance metric learning algorithms. Some of these techniques, such as PCA or LDA \cite{cunningham2015linear}, are statistical procedures developed over the last century, which are still of great relevance in many problems nowadays. Other more recent proposals are in the state of the art, as is the case of NCMML \cite{ncmml} or DMLMJ \cite{dmlmj}, among others. Several of the most popular classic distance metric learning algorithms, such as LMNN \cite{lmnn} or NCA \cite{nca}, have also been included.

The analyzed techniques are grouped into six subsections. Each of these subsections describes algorithms that share the main purpose, although the purposes described in each section are not exclusive. In the first section (Appendix \ref{ssec:dim_red}) we will study the techniques oriented specifically to dimensionality reduction. Next, the techniques with the purpose of learning distances that improve the nearest neighbors classifiers will be developed (Appendix \ref{ssec:algs_knn}), followed by those techniques that aim to improve classifiers based on centroids (Appendix \ref{ssec:algs_ncm}). The fourth subsection includes methods based on the information theory concepts studied in Appendix \ref{ssec:information_theory}. Subsequently, several distance metric learning mechanisms with less specific goals are described (Appendix \ref{ssec:other_dmls}). Finally, kernel-based versions of some of the above algorithms are analyzed, to be able to work in high-dimensionality spaces (Appendix \ref{ssec:kernel_dml}).

For each of the techniques we will analyze the problem they try to solve or optimize, the mathematical formulations of those problems and the algorithms proposed to solve them.

\subsection{Dimensionality Reduction Techniques} \label{ssec:dim_red}

Dimensionality reduction techniques try to learn a distance by searching for a linear transformation from the dataset space to a lower dimensional euclidean space. These kinds of algorithms share many features. For instance, they are usually efficient and their execution involves the calculation of eigenvectors. It is important to point out that there are other non-linear or unsupervised dimensionality reduction techniques \cite{lee2007nonlinear}, but they are beyond the scope of this paper (with the exception of kernel versions in Appendix \ref{ssec:kernel_dml}). The algorithms we will describe are PCA \cite{pca}, LDA \cite{lda} and ANMM \cite{anmm}.

\subsubsection{Principal Component Analysis (PCA)} \label{desc:pca}

PCA \cite{pca} is one of the most popular dimensionality reduction techniques in unsupervised distance metric learning. Although PCA is an unsupervised learning algorithm, it is necessary to talk about it in our work, firstly because of its great relevance, and more particularly, because when a supervised distance metric learning algorithm does not allow a dimensionality reduction, PCA can be first applied to the data in order to be able to use the algorithm later in the lower dimensional space.

Principal component analysis can be understood from two different points of view, which end up leading to the same optimization problem. The first of these approaches consists of finding two linear transformations, one that compresses the data to a smaller space, and another that decompresses them in the original space, so that in the process of compression and decompression the minimum information is lost.

Let us focus on this first approach. Suppose we have the dataset $\mathcal{X} = \{x_1,\dots,x_N\} \subset \R^d$, and fix $0 < d' < d$. Let us also assume that data are centered, that is, that the mean of the dataset is zero. If it is not the case, it is enough to apply previously to the data the transformation $x \mapsto x - \mu$, where $\mu = \sum x_i / N$ is the dataset mean. We are looking for a compression matrix $L \in \mathcal{M}_{d' \times d}(\R)$, and a decompression matrix $U \in \mathcal{M}_{d \times d'}(\R)$, so that, after compressing and decompressing each data the squares of the euclidean distances to the original data are minimal. In other words, the problem we are trying to solve is
\begin{equation} \label{eq:pca:compress}
    \min_{\substack{L \in \mathcal{M}_{d'\times d}(\R) \\ U \in \mathcal{M}_{d\times d'}(\R)}} \quad \sum_{i=1}^{N} \|x_i - ULx_i\|_2^2.
\end{equation}

To find a solution to this problem, first of all we will see that $U$ and $L$ matrices have to be related in a very particular way.
\begin{lemma}
  If $(U,L)$ is a solution  of the problem given in Eq. \ref{eq:pca:compress}, then $LL^T = I$ (in $\R^{d'}$) and $U = L^T$.
\end{lemma}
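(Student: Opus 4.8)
The plan is to reduce the joint minimization over the pair $(U,L)$ to a minimization over the single composite reconstruction map $P = UL$, and then to exploit that the closest point to a given vector inside a subspace is its orthogonal projection. First I would observe that the objective in Eq. \ref{eq:pca:compress} depends on $(U,L)$ only through $P = UL$, and that for every $i$ the vector $ULx_i$ lies in the subspace $R = \im(U) \subseteq \R^d$, whose dimension is at most $d'$ because $U$ has only $d'$ columns. Since $R$ is a non-empty closed convex set, the convex projection theorem (Theorem \ref{thm:convex_projection}) guarantees a unique closest point of $R$ to each $x_i$, namely the orthogonal projection $\Pi_R x_i$. Hence
\[
\sum_{i=1}^N \|x_i - ULx_i\|_2^2 \ \ge\ \sum_{i=1}^N \|x_i - \Pi_R x_i\|_2^2,
\]
with equality if and only if $ULx_i = \Pi_R x_i$ for every $i$.

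Next I would realize the right-hand side by a factorization of the desired shape. Choosing an orthonormal basis of $R$ and, when $\dim R < d'$, padding it with further orthonormal vectors of $\R^d$ to obtain exactly $d'$ columns, I assemble these into a matrix $V \in \mathcal{M}_{d\times d'}(\R)$ with $V^TV = I_{d'}$. Then $VV^T$ is the orthogonal projection onto $R' = \im(V) \supseteq R$, so that setting $L_0 = V^T$ and $U_0 = V = L_0^T$ yields $L_0 L_0^T = V^TV = I$ together with
\[
\sum_{i=1}^N \|x_i - U_0 L_0 x_i\|_2^2 = \sum_{i=1}^N \|x_i - VV^T x_i\|_2^2 \le \sum_{i=1}^N \|x_i - \Pi_R x_i\|_2^2 \le \sum_{i=1}^N \|x_i - UL x_i\|_2^2,
\]
where the first inequality uses $R \subseteq R'$, since projecting onto a larger subspace can only decrease the residual. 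Because $(U,L)$ is a minimizer, every inequality above is forced to be an equality, so $(U_0,L_0) = (V, V^T)$ is itself an optimal solution obeying $L_0 L_0^T = I$ and $U_0 = L_0^T$; the equality in the first display moreover shows the original map acts as $ULx_i = \Pi_R x_i$ on the data.

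Two points deserve care, and the second is the real obstacle. The minor issue is the rank-deficient case $\dim R < d'$, handled above by padding so that $V^TV = I_{d'}$ stays exact. The genuine subtlety is that the conclusion cannot hold verbatim for an \emph{arbitrary} optimal pair: since $(U,L)$ and $(UA^{-1}, AL)$ induce the same map $UL$ for any invertible $A \in \gl_{d'}(\R)$, while $(AL)(AL)^T \ne I$ in general, the normalization $LL^T = I$, $U = L^T$ is a choice of representative rather than an automatic property. Thus the precise content to be established is that the minimization may be carried out over pairs with $U = L^T$ and $LL^T = I$ — equivalently, an optimal solution of this normalized form always exists — which is exactly what the construction via $V$ provides. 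With this reduction in hand, Eq. \ref{eq:pca:compress} collapses to $\min_{LL^T = I}\sum_i \|x_i - L^TLx_i\|_2^2$, and expanding the summand using $LL^TL = L$ turns it into the trace-maximization problem $\max_{LL^T = I}\tr(L\Sigma L^T)$ solved by Theorem \ref{thm:eigen_trace_opt}.
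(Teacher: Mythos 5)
Your proof is correct and follows essentially the same route as the paper's: both arguments realize the optimal reconstruction inside the subspace $R$ spanned by the reconstructed points via a matrix $V$ with orthonormal columns and conclude that the normalized pair $(V,V^T)$ does at least as well as $(U,L)$ — the only cosmetic differences being that the paper establishes the projection property by explicitly minimizing $\|x-V^Ty\|_2^2$ over $y$ where you invoke Theorem \ref{thm:convex_projection}, and that it handles rank deficiency by extending $U,L$ to full rank where you pad the basis. Your closing remark about the gauge freedom $(UA^{-1},AL)$, $A \in \gl_{d'}(\R)$, is also well taken: the lemma is really a without-loss-of-generality normalization rather than a property of every minimizer, and that normalized-existence statement is precisely (and only) what the paper's own proof establishes too.
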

\begin{proof}
  We fix $U \in \mathcal{M}_{d\times d'}(\R)$ and $L \in \mathcal{M}_{d' \times d}(\R)$. We can assume that both $U$ and $L$ are full-rank, otherwise the rank of $UL$ is lower than $d'$. Note that in that case, it is always possible to extend $U$ and $L$ matrices to full-rank matrices (by replacing linear combinations in the columns by linear independent vectors as long as the dimension allows it) so that the subspace generated extends the one generated by $UL$, and in such a case, the error obtained in Eq. \ref{eq:pca:compress} for the extension will be, at most, the error obtained for $U$ and $L$.

  We consider the linear map $x \mapsto ULx$. The image of this map, $R = \{ULx \colon x \in \R^d\}$, is a vector subspace of $\R^{d}$ of dimension $d'$. Let $\{u_1,\dots,u_{d'}\}$ be an orthonormal basis of $R$, and let $V \in \mathcal{M}_{d'\times d}(\R)$ the matrix that has, by rows, the vectors $u_1,\dots,u_{d'}$. It is verified then that the image of $V$ has dimension $d'$ and that $VV^T = I$. In addition, if we consider $V^T$ as a linear map, we see that its image is $R$ (since $V^Te_i = u_i, i = 1,\dots, d'$, where $\{e_1,\dots,e_{d'}\}$ is the canonical basis of $\R^{d'}$).

  Therefore, every vector of $R$ can be written as $V^Ty$, with $y \in \mathbb{R}^{d'}$. Given $x \in \R^d, y \in \R^{d'}$, we have
  \begin{align*}
    \|x-V^Ty\|_2^2 &= \langle x- V^Ty, x - V^Ty \rangle \\
                   &= \|x\|^2 - 2\langle x,V^T y\rangle + \|V^Ty\|^2 \\
                   &= \|x\|^2 - 2\langle y,Vx \rangle + y^TVV^Ty \\
                   &= \|x\|^2 - 2\langle y,Vx \rangle + y^Ty \\
                   &= \|x\|^2 + \|y\|^2 - 2 \langle y,Vx \rangle.
  \end{align*}

  If we calculate the gradient with respect to $y$ from the last previous expression, we obtain $\nabla_y \|x-V^Ty\|_2^2 = 2y - 2Vx$, which, by equating to zero, allows us to obtain a single critical point, $y = Vx$. The convexity of this function (it is the composition of the euclidean norm with an affine map) assures us that this critical point is a global minimum. Therefore, this tells us that, for each $x \in \R^d$, the distance to $x$ in the set $R$ achieves its minimum at the point $V^TVx$. In particular, for the dataset $\mathcal{X}$ we conclude that
  \[ \sum_{i=1}^N \|x_i - ULx_i\|_2^2 \ge \sum_{i=1}^N\|x_i - V^TV x_i\|^2_2. \]

  Since $U$ and $L$ were fixed, we can find a matrix $V$ with these properties for any $U$ and $L$ in the conditions of the problem, which concludes the proof.
\end{proof}

The above lemma allows us to reformulate our problem in terms of only the matrix $L$,
\begin{equation} \label{eq:pca:compress2}
    \min_{\substack{L \in \mathcal{M}_{d'\times d}(\R) \\LL^T = I}} \quad \sum_{i=1}^{N} \|x_i - L^TLx_i\|_2^2.
\end{equation}

Let us note now that, for $x \in \R^d$ and $L \in \mathcal{M}_{d'\times d}(\R)$, it is verified that
\begin{align*}
    \|x - L^TLx\|_2^2 &= \langle x - L^TLx, x - L^TLx \rangle \\
                      &= \|x\|^2 - 2\langle x,L^TLx \rangle + \langle L^TLx, L^TLx \rangle \\
                      &= \|x\|^2 - 2x^TL^TLx + x^TL^TLL^TLx \\
                      &= \|x\|^2 - x^TL^TLx \\
                      &= \|x\|^2 - \tr(x^TL^TLx) \\
                      &= \|x\|^2 - \tr(Lxx^TL^T).
\end{align*}

Thus, if we remove terms that do not depend on $L$, we can transform the problem in Eq. \ref{eq:pca:compress2} into the following equivalent problem:
\begin{equation} \label{eq:pca:traceproblem}
    \max_{\substack{L \in \mathcal{M}_{d'\times d}(\R) \\LL^T = I}} \quad \tr\left(L \Sigma L^T\right),
\end{equation}
where $\Sigma = \sum_{i=1}^N x_i x_i^T$ is, except for a constant, the covariance matrix corresponding to the data in $\mathcal{X}$. This matrix is symmetric, and Theorem \ref{thm:eigen_trace_opt} guarantees that we can find a maximum of the problem if we build $L$ adding the $d'$ orthonormal eigenvectors corresponding to the $d'$ largest eigenvalues of $\Sigma$. The directions that determine these vectors are the \emph{principal directions}, and the components of the data transformed in the orthonormal system determined by the principal directions are the so-called \emph{principal components}.

To conclude, the second approach from which the principal components problem can be dealt with consists of selecting the orthogonal directions for which the variance is maximized. We know that if $\Sigma$ is the covariance matrix of $\mathcal{X}$, when applying a linear transformation $L$ to the data, the new covariance matrix is given by $L\Sigma L^T$. If we want a transformation that reduces the dimensionality and for which the variance is maximized in each variable, what we are looking for is to take the trace of the previous matrix, which leads us back again to Eq. \ref{eq:pca:traceproblem}. The symmetry of $\Sigma$ ensures that we can take the main orthonormal directions that maximize the variance for each possible value of $d'$.

Finally, it is important to note that the matrix $L \in \mathcal{M}_{d}(\R)$ (taking all dimensions) that is constructed by adding $\Sigma$ eigenvectors row by row is the orthogonal matrix that diagonalizes $\Sigma$, and therefore, when $L$ is applied to the data, the transformed data have as the covariance matrix the diagonal matrix $L\Sigma L^T = \diag(\lambda_1,\dots,\lambda_d)$, where $\lambda_1 \ge \dots \ge \lambda_d$ are the eigenvalues of $\Sigma$. This tells us that the eigenvalues of the covariance matrix represent the amount of variance explained by each of the principal directions. This provides an additional advantage to PCA, since it allows the percentage of variance that explains each principal component to be analyzed, in order to be able to later choose a dimension that adjusts to the amount of variance that we want to keep in the transformed data.

Figure \ref{fig:pca} graphically exemplifies how principal component analysis works.

\begin{figure}[htbp]
    \centering
    \includegraphics[width=0.75\textwidth]{./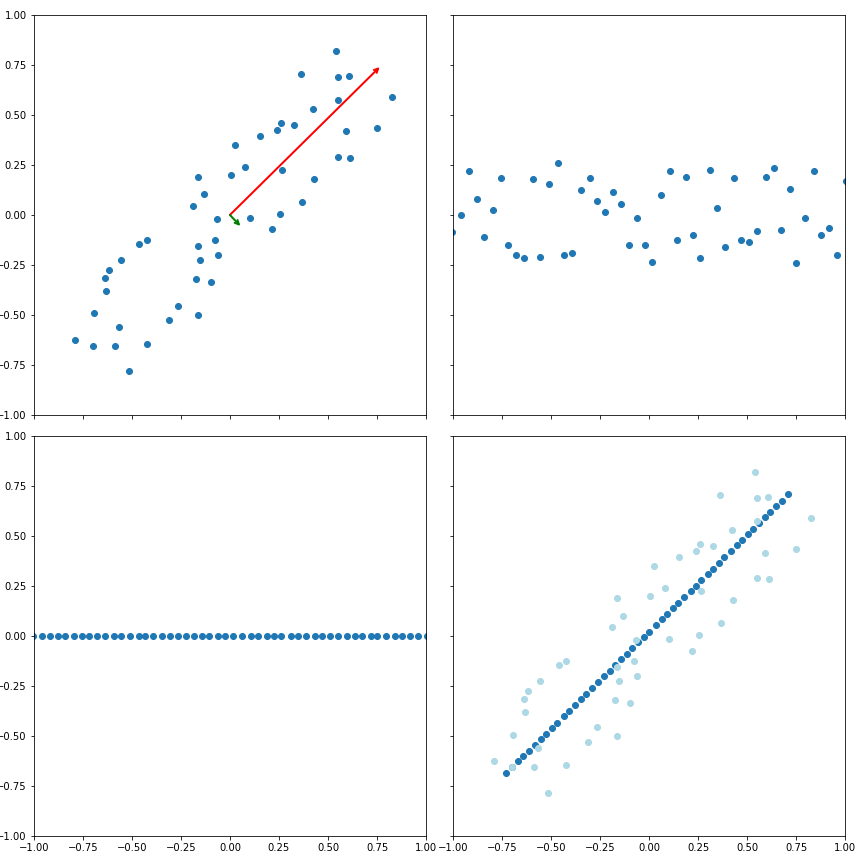}
    \caption{A graphical example of PCA. The first image shows a dataset, along with the principal directions (proportional according to the explained variance) learned by PCA. To the right, the data is projected at maximum dimension. We observe that this projection consists of rotating the data making the axes coincide with the principal directions. At the bottom left, data is projected onto the first principal component. Finally, to the right, the data recovered through the decompression matrix, along with the original data. We can see that the PCA projection is the one that minimizes the quadratic decompression error. In this particular case the decompressed data is on the regression line of the original data, due to the dimensions of the problem.} \label{fig:pca}
\end{figure}

\subsubsection{Linear Discriminant Analysis (LDA)} \label{desc:lda}

LDA \cite{lda} is a classical distance metric learning technique with the purpose of learning a projection matrix that maximizes the separation between classes in the projected space, that is, it tries to find the directions that best distinguish the different classes, as shown in Figure \ref{fig:lda}.

\begin{figure}[htbp]
    \centering
    \includegraphics[width=0.75\textwidth]{./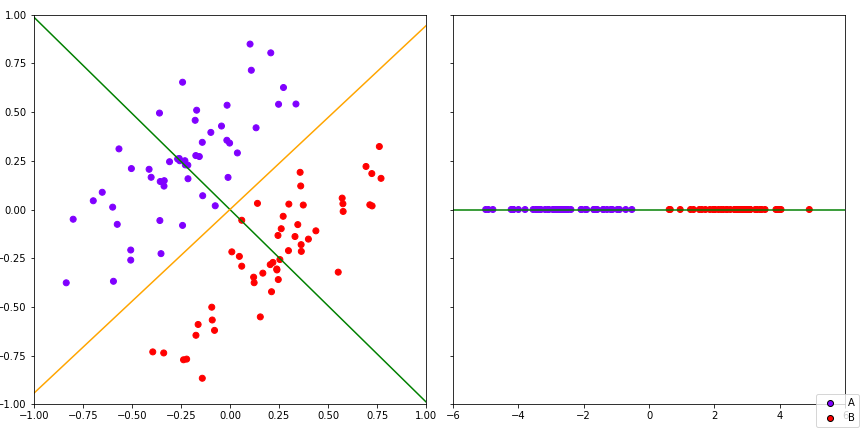}
    \caption{Graphical example of LDA and comparison with PCA. The first image shows a dataset, with the first principal direction determined by PCA, in orange, and the direction determined by LDA, in green. We observe that if we project the data on the direction obtained by LDA they separate, as it is shown in the right image. In contrast, the direction obtained by PCA only allows us to maximize the variance of the whole dataset, since it does not consider the information of the labels.} \label{fig:lda}
\end{figure}

Figure \ref{fig:lda} also allows us to compare the results of the projections obtained by PCA and LDA, showing the most remarkable difference between the two techniques: PCA does not take into account the labels information, while LDA does use it. We can observe that the directions obtained by PCA and LDA do not present any type of relationship, the latter being the only one of them that provides a data projection oriented to supervised learning.

It is also possible to observe in Figure \ref{fig:lda} that it makes no sense to look for a second independent direction that continues to maximize class separation, while in PCA it always makes sense to look inductively for orthogonal directions that maximize variance. If the dataset shown in the figure had a third class, we could find a second direction that maximizes the separation between classes, thus offering the possibility of projecting onto a plane. In general, we will see that if we have $r$ classes we will be able to find at most (and as long as the dimension of the original space allows it) $r-1$ directions that maximize the separation. This indicates that the projections that LDA is going to learn will be, in general, towards a quite low dimension, and always limited by the number of classes in the dataset.

Suppose we have the labeled dataset $\mathcal{X} = \{x_1,\dots,x_N\} \subset \R^d$, where $\mathcal{C}$ is the set of all the classes in the problem and $y_1,\dots,y_N \in \mathcal{C}$ are the corresponding labels. Suppose that the number of classes in the problem is $|\mathcal{C}| = r$. For each $c \in \mathcal{C}$ we define the set $\mathcal{C}_c = \{ i \in \{1,\dots,N\} \colon y_i = c\}$, and $N_c = |\mathcal{C}_c|$. We consider the mean vector of each class,
\[\mu_c = \frac{1}{N_c} \sum_{i \in \mathcal{C}_c} x_i,\]
and the mean vector for the whole dataset,
\[\mu = \frac{1}{N}\sum_{c \in \mathcal{C}}\sum_{i \in \mathcal{C}_c}x_i = \frac{1}{N}\sum_{i=1}^N x_i. \]

We will define two scatter matrices, one \emph{between-class}, denoted as $S_b$, and the other \emph{within-class}, denoted as $S_w$. The between-class scatter matrix is defined as
\begin{equation*}
    S_b = \sum_{c \in \mathcal{C}} N_c(\mu_c - \mu)(\mu_c - \mu)^T.
\end{equation*}
And the within-class scatter matrix is defined as
\begin{equation*}
    S_w = \sum_{c \in \mathcal{C}} \sum_{i \in \mathcal{C}_c}(x_i- \mu_c)(x_i - \mu_c)^T.
\end{equation*}

Note that these matrices represent, except multiplicative constants, the covariances between the data of different classes, taking the class means as representatives for each class in the first case, and the sum, for each class, of the covariances of that class data, in the second case. Since we want to maximize the separation between classes we will formulate the problem of optimization as the search for a projection $L \in \mathcal{M}_{d' \times d}(\R)$ that maximizes the quotient of the between-class variances and within-class variances determined by the previous matrices. The problem is established as
\begin{equation} \label{eq:lda}
    \max_{\substack{L \in \mathcal{M}_{d'\times d}(\R) }} \quad \tr\left((LS_wL^T)^{-1}(L S_b L^T)\right).
\end{equation}

Theorem \ref{thm:eigen_trace_ratio_opt} assures us that, in order to maximize the problem given in Eq. \ref{eq:lda}, $L$ has to be composed by the eigenvectors corresponding to the largest eigenvalues of $S_w^{-1}S_b$, as long as $S_w$ is invertible. In practice, this happens in most problems where $N \gg d$, because $S_w$ is the sum of $N$ outer products, each of which may add a new dimension to the matrix rank. If $N \gg d$ it is likely that $S_w$ is full-rank. This, together with the fact that $S_w$ is positive semidefinite, would guarantee $S_w$ to be positive definite, thus entering into the theorem hypothesis. 

It is interesting to remark the similarity between the optimization problem in Eq. \ref{eq:lda} and the expression of the Calinski-Harabasz index \cite{calinski1974dendrite}, an index used in clustering to measure the separation of the established clusters, and that uses the same scatter matrices, and a similar quotient formulation.

Furthermore, let us note, as it was already mentioned at the beginning of this section, that at most we can get $r-1$ eigenvectors with a non zero corresponding eigenvalue. This is because the maximum rank of $S_b$ is $r-1$, because its rank coincides with the rank of the matrix $A$ that has as columns the vectors $\mu_c - \mu$ (we get $S_b = A \diag(N_{c_1},\dots,N_{c_r})A^T$), which can have as maximum rank $r$, and this matrix also includes the linear combination $\sum N_c(\mu_c - \mu) = 0$, so at least one column is linearly dependent of the others. Therefore, $S_w^{-1}S_b$ also has a maximum rank of $r-1$. Consequently, the projection matrix that maximizes Eq. \ref{eq:lda} is also going to have, at most, this rank, thus the projection will be contained in a space of this dimension. Therefore, the choice of a dimension $d' > r-1$ will not provide any additional information to that provided by the projection onto dimension $r-1$.

To conclude, although we have seen that LDA allows us to reduce dimensionality by adding supervised information as opposed to the non supervised PCA, it can also present some limitations:

\begin{itemize}
  \item If the size of the dataset is too small, the within-class scatter matrix may be singular, preventing the calculation of $S_w^{-1}S_b$. In this situation, several mechanisms are proposed to keep this technique going. One of the most used consists of regularizing the problem, considering, instead of $S_w$, the matrix $S_w + \varepsilon I$, where $\varepsilon > 0$, making $S_w + \varepsilon I$ be positive definite. The problem of the singularity of $S_w$ also arises if there are correlated attributes. This case can be avoided by eliminating redundant attributes in a preprocessing prior to learning.

  \item The definition of the scatter matrices assumes, to some extent, that the data in each class are distributed according to a multivariate gaussian distribution. Therefore, if the data presented other distributions, the projection learned might not be of enough quality. 

  \item As already mentioned, LDA only allows the extraction of $r-1$ attributes, which may be suboptimal in some cases, as a lot of information could be lost.
\end{itemize}

\subsubsection{Average Neighborhood Margin Maximization (ANMM)} \label{desc:anmm}

ANMM \cite{anmm} is a distance metric learning technique specifically oriented to dimensionality reduction. It therefore follows the same path as the aforementioned PCA and LDA, trying to solve some of their limitations.

The objective of ANMM is to learn a linear transformation $L \in \mathcal{M}_{d' \times d}(\R)$, with $d' \le d$, that projects the data onto a lower dimensional space, so that the similarity between the elements of the same class and the separation between classes is maximized, following the criterion of maximization of margins that we will show next.

We consider the training dataset $\mathcal{X} = \{x_1,\dots,x_N\} \subset \R^d$, with corresponding labels $y_1,\dots,y_N$, and we fix $\xi, \zeta \in \N$, and euclidean distance as the initial distance. From these variables we will create two types of neighborhoods.

\begin{definition}
  Let $x_i \in \mathcal{X}$.

  We define the \emph{$\xi$-nearest homogeneous neighborhood} of $x_i$ as the set of the $\xi$ samples in $\mathcal{X}\setminus\{x_i\}$ nearest to $x_i$ that belong to its same class. We will denote it by $\mathcal{N}_i^o$.

  We define the \emph{$\zeta$-nearest heterogeneous neighborhood} of $x_i$ as the set of the $\zeta$ samples in $\mathcal{X}$ nearest to $x_i$ that belong to a different class. We will denote it by $\mathcal{N}_i^e$. 

\end{definition}

ANMM is intended to maximize the concept of \emph{average neighborhood margin}, which we define below.

\begin{definition}
  Given $x_i \in \mathcal{X}$, its \emph{average neighborhood margin} $\gamma_i$ is defined as
  \begin{equation*}
      \gamma_i = \sum\limits_{k \colon x_k \in \mathcal{N}_i^e} \frac{\|x_i - x_k \|^2}{|\mathcal{N}_i^e|} - \sum\limits_{j \colon x_j \in \mathcal{N}_i^o} \frac{\|x_i - x_j \|^2}{|\mathcal{N}_i^o|}.
  \end{equation*}

  The (global) average neighborhood margin $\gamma$ is defined as
  \begin{equation*}
        \gamma = \sum_{i=1}^N \gamma_i.
    \end{equation*}
\end{definition}

Note that, for each $x_i \in \mathcal{X}$, its average neighborhood margin represents the difference between the average distance from $x_i$ to its heterogeneus neighbors, and the average distance from $x_i$ to its homogeneous neighbors. Therefore, maximizing this margin allows, locally, to move data from different classes away, and pulling those of the same class. Figure \ref{fig:average_neighbor_margin} graphically describes the concept of average neighborhood margin.

\begin{figure}[htbp]
    \centering
    \includegraphics[width=0.7\textwidth]{./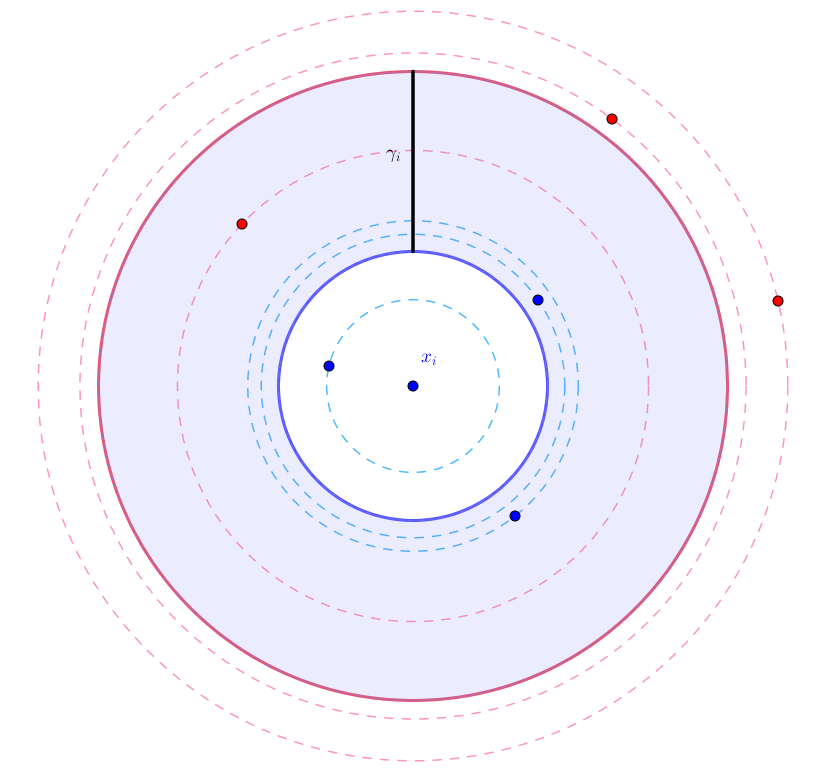}
    \caption{Graphical description of the average neighborhood margin, for the sample $x_i$, for $\xi = \zeta = 3$. The blue and red circumferences determine the average distance from $x_i$ to data of the same and different classes, respectively.} \label{fig:average_neighbor_margin}
\end{figure}

We are now looking for a linear transformation $L$ that maximizes the margin associated with the projected data, $\{Lx_i \colon i = 1,\dots,N\}$. For such data, we have the average neighborhood margin corresponding to that transformation,
\[ \gamma^L = \sum_{i=1}^{N} \gamma_i^L = \sum_{i=1}^N\left( \sum\limits_{k \colon x_k \in \mathcal{N}_i^e} \frac{\|Lx_i - Lx_k \|^2}{|\mathcal{N}_i^e|}- \sum\limits_{j \colon x_j \in \mathcal{N}_i^o} \frac{\|Lx_i - Lx_j \|^2}{|\mathcal{N}_i^o|} \right). \]

Observe that, thanks to the linearity of the trace operator, we can express
\begin{align*}
    \sum_{i=1}^n\sum\limits_{k \colon x_k \in \mathcal{N}_i^e} \frac{\|Lx_i - Lx_k \|^2}{|\mathcal{N}_i^e|} &= \tr\left( \sum_{i=1}^N \sum\limits_{k \colon x_k \in \mathcal{N}_i^e} \frac{(Lx_i - Lx_k)(Lx_i - Lx_k)^T}{|N_i^e|} \right) \\
    &= \tr\left[ L \left( \sum_{i=1}^N \sum\limits_{k \colon x_k \in \mathcal{N}_i^e} \frac{(x_i-x_k)(x_i-x_k)^T}{|N_i^e|}  \right) L^T\right]\\
    &= \tr(LSL^T),
\end{align*}
where $S = \sum_{i}\sum_{k\colon x_k \in \mathcal{N}_i^e}\frac{(x_i-x_k)(x_i-x_k)^T}{|\mathcal{N}_i^e|}$ is called the \emph{scatter matrix}. In a similar way, if we define $C = \sum_{i}\sum_{j\colon x_j \in \mathcal{N}_i^o}\frac{(x_i-x_j)(x_i-x_j)^T}{|\mathcal{N}_i^o|}$, which we will call the \emph{compactness matrix}, we get
\begin{equation*}
    \sum_{i=1}^n\sum\limits_{j \colon x_j \in \mathcal{N}_i^o} \frac{\|Lx_i - Lx_j \|^2}{|\mathcal{N}_i^o|} = \tr(LCL^T).
\end{equation*}

And therefore, combining both expressions,
\begin{equation} \label{eq:margin_caract}
    \gamma^L = \tr(L(S-C)L^T).
\end{equation}

The maximization of $\gamma^L$ as presented in Eq. \ref{eq:margin_caract} is not restrictive enough, because it is enough to multiply $L$ by positive constants to get a value of $\gamma^L$ as large as we want. That is why the constraint $LL^T = I$ is added, so we end up with the next optimization problem:
\begin{align*}
    \max_{L \in \mathcal{M}_{d'\times d}(\R)} &\quad \tr\left(L(S-C)L^T\right)  \\
    \text{s.t.: } &\quad LL^T = I.
\end{align*}

Observe that $S-C$ is symmetric, as it is the difference between two positive semidefinite matrices (each of them is the sum of outer products). Theorem \ref{thm:eigen_trace_opt} tells us that the matrix $L$ we are looking for can be built by adding, by rows, the $d'$ eigenvectors of $S-C$ corresponding to its $d'$ largest eigenvalues.

To conclude, note that ANMM solves some of the issues of the previously mentioned PCA and LDA. On the one hand, it is a supervised learning algorithm, hence it uses the class information that is ignored by PCA. On the other hand, faced with the shortcomings of LDA, we can see that:

\begin{itemize}
  \item It does not have computational problems with small samples, for which scatter or compactness matrices may be singular, because it does not have to calculate their inverse matrices.
  \item It does not make any assumption about the class distributions. The formulation of the problem is purely geometric.
  \item It admits any size for dimensionality reduction. It does not impose that this size must be lower than the number of classes.
\end{itemize}

Finally, we can also observe that, if we keep the maximum dimension $d$, the condition $LL^T = I$ implies that $L$ is orthogonal and $L^TL= I$, thus we are just learning an isometry, as already happened with PCA. Therefore, distance-based classifiers will only be able to experience improvements when the chosen dimension is strictly smaller than the original one.

\subsection{Algorithms to Improve Nearest Neighbors Classifiers} \label{ssec:algs_knn}

In the following paragraphs we will analyze algorithms specifically designed to work with nearest neighbors classifiers. The algorithms we will study are known as LMNN \cite{lmnn} and NCA \cite{nca}.

\subsubsection{Large Margin Nearest Neighbors (LMNN)} \label{desc:lmnn}

LMNN \cite{lmnn} is a distance metric learning algorithm aimed specifically at improving the accuracy of the $k$-nearest neighbors classifier. It is based on the premise that this classifier will label a sample more reliably if its $k$ neighbors share the same label, and to do so it tries to learn a distance that maximizes the number of samples that share its label with as many neighbors as possible.

In this way, the LMNN algorithm tries to minimize an error function that penalizes, on the one hand, the large distance between each sample and those considered its ideal neighbors, and on the other hand, the small distances between examples of different classes.

Suppose we have a dataset $\mathcal{X} = \{x_1,\dots,x_N\} \subset \R^d$ with corresponding labels $y_1,\dots,y_N$. To work, the algorithm makes use of the concept of \emph{target neighbors}. Given a sample $x_i \in \mathcal{X}$, its 
$k$ target neighbors are those examples of the same class as $x_i$ and different from this, for which it is desired to be considered as neighbors in the nearest neighbors classification. If $x_j$ is a target neighbor of $x_i$, then we will write it as $j \istargetof i$. Observe that the relationship given by $\istargetof$ may not be symmetric. Target neighbors are fixed during the learning process. If we have some prior information about our dataset we can use it to determine the target neighbors. Otherwise, a good option is to use the nearest neighbors for the euclidean distance as target neighbors.

Once the target neighbors have been established, for each distance and for each sample in $\mathcal{X}$ we can create a perimeter determined by the the furthest target neighbor. We are looking for distances for which there are no samples of other classes in this perimeter. It is necessary to emphasize that with this perimeter there are not enough separation guarantees, because a feasible distance could have collapsed all the target neighbors in a point, and then the perimeter would have radius zero. For this reason, a margin determined by the radius of the perimeter is considered, to which a positive constant is added. We will see that there is no loss of generality, because of the function that we will define, in supposing that this constant is 1. Any sample of a different class that invades this margin will be called an \emph{impostor}. Our objective, therefore, will be, in addition to bringing each sample as close as possible to its target neighbors, to try to keep impostors as far away as possible.

In mathematical terms, if our distance is determined by the linear transformation $L \in \mathcal{M}_d(\R)$, and $x_i, x_j \in \mathcal{X}$ with $j \istargetof i$, we will say that $x_l$ is an impostor for these samples if $y_l \ne y_i$ and $\|L(x_i - x_j)\|^2 \le \|L(x_i - x_j)\|^2+1$. In Figure \ref{fig:targets_impostors} the concepts of target neighbor and impostor are graphically described. Finally, note that the margin is defined in terms of the squared distances, instead of considering only the distance. This will make the problem formulation easy to solve.

\begin{figure}[htbp]
    \centering
    \subfloat{\fbox{\includegraphics[height = 5cm]{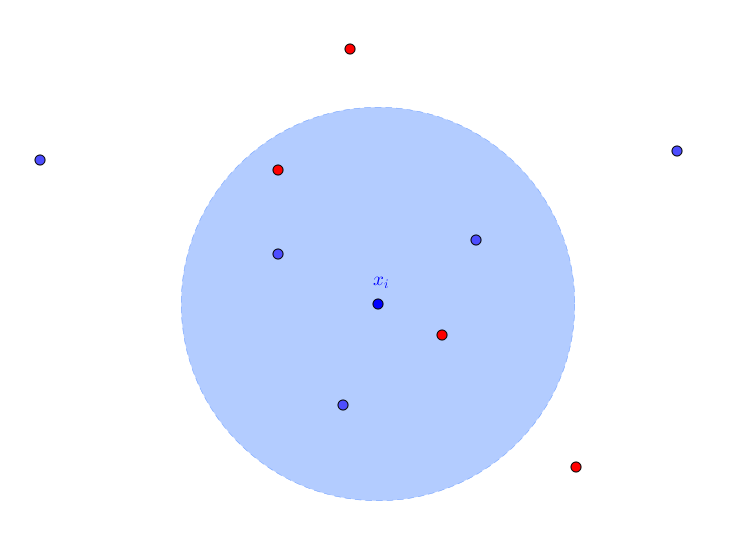}}}
    \subfloat{\fbox{\includegraphics[height = 5cm]{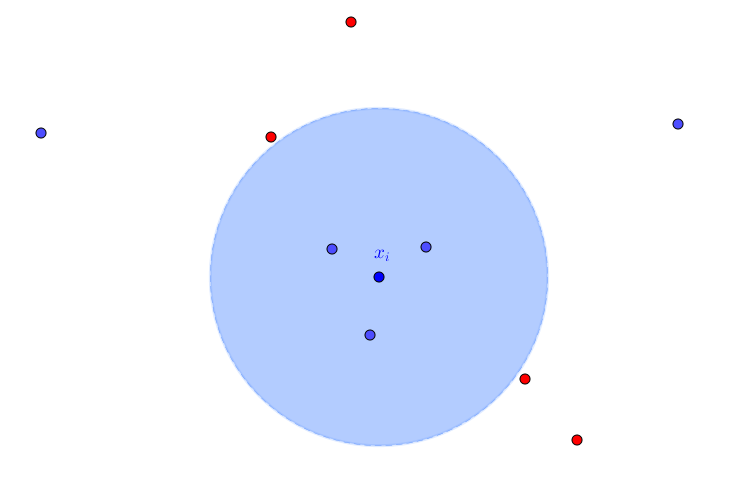}}}

    \caption{Graphical description of target neighbors and impostors (with $k = 3$) for the sample $x_i$. The blue circle represents the margin determined by the target neighbors. All the points of different classes in this circle are impostors. LMNN's goal will be to bring the target neighbors as close as possible and to remove the impostors from the circle. Therefore, data of the same class that are not target neighbors will not have any influence, and impostors will no longer be penalized as soon as they leave the margin, as shown in right image. This gives a local nature to this learning technique.} \label{fig:targets_impostors}
\end{figure}

We now proceed to define accurately the terms of the objective function. As already mentioned, it will be composed of two terms. The first one will penalize distant target neighbors and the second one will penalize nearby impostors. The first term is defined as
\[ \varepsilon_{pull}(L) = \sum_{i=1}^N \sum_{j\istargetof i} \|L(x_i-x_j)\|^2. \]

The minimization of this error causes a pulling force between the data samples. The second term is defined as
\[ \varepsilon_{push}(L) = \sum_{i=1}^{N}\sum_{j\istargetof i}\sum_{l=1}^{N} (1 - y_{il})[1 + \|L(x_i-x_j)\|^2 - \|L(x_i-x_l)\|^2]_{+}, \]
where $y_{il}$ is a binary variable which takes the value $1$ if $y_i = y_l$, and $0$ if $y_i \ne y_l$, and the operator $[\cdot]_{+}\colon \R \to \R^+_0$ is defined as $[z]_+ = \max\{z,0\}$. Thus, this error adds up when $y_{il} = 0$ (that is, $x_l$ is in different class to $x_i$), and the second factor is strictly positive (that is, the margin defined by the target neighbors is exceeded). The minimization of this second term causes a pushing force between the data samples.

Finally, the objective function results from combining these two terms. After fixing $\mu \in ]0,1[$, we define
\begin{equation} \label{eq:lmnn:L}
\varepsilon(L) = (1 - \mu)\varepsilon_{pull}(L) + \mu\varepsilon_{push}(L).
\end{equation}

The authors state that, experimentally, the choice of $\mu$ does not cause great differences in results, so it is usually taken $\mu = 1/2$. Minimizing this function will lead us to learn the distance we were looking for. Note that this function is sub-differentiable, but not convex, so if we use a subgradient descent method under this approach we may be stuck in a local optimal. However, we can reformulate the objective function in order to make it act over the positive semidefinite cone. If for every $L \in \mathcal{M}_d(\R)$ we take $M = L^TL \in S_d(\R)^+_0$, we know that $\|x_i - x_j\|_M^2 = \|L(x_i - x_j)\|_2^2$, and consequently,
\begin{equation} \label{eq:lmnn:M}
 \varepsilon(M) = (1-\mu) \sum_{i=1}^{N}\sum_{j\istargetof i} \|x_i - x_j\|_M^2 + \mu \sum_{i=1}^{N}\sum_{j\istargetof i}\sum_{l=1}^N [ 1 + \|x_i - x_j\|_M^2 - \|x_i - x_l\|_M^2]_{+}
\end{equation}
is a convex function in $M$ that takes the same values as $\varepsilon(L)$. The minimization of $\varepsilon(M)$ in this case is subject to the constraint $M \in S_d(\R)^+_0$, so the projected subgradient method, with projections onto the positive semidefinite cone, can be used to optimize this function. In addition, we can easily calculate a subgradient $G \in \partial \varepsilon / \partial M$ given by
\[ G = (1-\mu) \sum_{i,j\istargetof i} O_{ij} + \mu \sum_{(i,j,l) \in \mathcal{N}} (O_{ij} - O_{il}), \]
where $\mathcal{N}$ is the set of triplets $(i,j,l)$ for which $x_l$ is an impostor over $x_i$ with the margin determined by $x_j$, and $O_{ij} = (x_i - x_j)(x_i - x_j)^T$ are the outer products obtained from the distances differentiation. The first term of the gradient is constant, while the second term only varies in each iteration with the changes of the impostors that enter or leave the set $\mathcal{N}$. These considerations allow a fairly efficient gradient calculation. 

As for dimensionality reduction, two different alternatives are presented. If we keep the optimization with respect to $M$, it is not feasible to add rank restrictions, as it is shown in Example \ref{ex:rank_not_convex}. Therefore, the use of PCA is suggested prior to the algorithm execution, to project the data onto its first principal components, and then apply LMNN on the projected data. The other alternative is to optimize the objective function with respect to $L \in \mathcal{M}_{d' \times d}(\R)$, with $d' < d$ using a gradient descent algorithm. In this case the optimization is not convex, but we learn directly a linear transformation that reduces the dimensionality without making changes in the optimization of Eq. \ref{eq:lmnn:L}. Authors also state, based on empirical results, that this non-convex optimization gives good results.

Other proposals made for the improvement of this algorithm consist of applying LMNN multiple times, learning new metrics each time, and using these metrics to determine increasingly accurate target neighbors, or learning different metrics locally. Finally, although the distance learned by LMNN is designed to be used by the $k$-neighbors classifier, it is possible to use the objective function itself as a classification method. These classification models are called \emph{energy-based}. Thus, to classify a test sample $x_t$, for each possible label value $y_t$, we look for $k$ target neighbors in the training set for class $y_t$, and evaluate the \emph{energy} for the metric learned, finally assigning to $x_t$ the value of $y_t$ that provides the lowest energy. According to the objective function, energy will penalize large distances between $x_t$ and its target neighbors, impostors on the $x_t$ perimeter, and perimeters of other classes invaded by $x_t$. Therefore,
\begin{equation*}
\begin{split}
 y_t^{pred} &= \arg\min_{y_t} \left\{ (1-\mu) \sum_{j \istargetof t} \|x_t-x_j\|_M^2 \right. \\
            &+ \mu \sum_{j \istargetof t,l} (1-y_{tl})\left[ 1 + \|x_t-x_j\|_M^2 - \|x_t-x_l\|_M^2\right]_+  \\
            &+ \left. \mu \sum_{i,j \istargetof i}(1 - y_{it}) \left[ 1 + \|x_i-x_j\|_M^2 - \|x_i-x_t\|_M^2\right]_+ \right\} .
\end{split}
\end{equation*}

\subsubsection{Neighborhood Components Analysis (NCA)} \label{desc:nca}

NCA \cite{nca} is another distance metric learning algorithm aimed specifically at improving the accuracy of the nearest neighbors classifiers. Its aim is to learn a linear transformation with the goal of minimizing the leave-one-out error expected by the nearest neighbor classification. Additionally, this transformation could be used to reduce the dimensionality of the dataset, and thus make the classifier more efficient.

We consider the training set $\mathcal{X} = \{x_1,\dots,x_N\} \subset \R^d$, labeled by $y_1, \dots, y_N$. We want to learn a distance, determined by a linear transformation $L \in \mathcal{M}_d(\R)$, that optimizes the accuracy of the nearest neighbors classifier. Ideally, we would optimize the performance of the classifier over the test dataset, but we only have the training set. Therefore, our goal will be to try to optimize the classification leave-one-out error on the training set. The choice of the leave-one-out error is due to the nature of the nearest neighbors classifier: as we will learn and evaluate over the same set, the nearest neighbor of each sample would be the sample itself, which would not allow the results to be interpreted correctly if the sample is kept while evaluating it.

However, the function that maps each transformation $L$ to the leave-one-out error for the distance corresponding to $L$ has no guarantee of differentiability, not even continuity, so it is not easy to deal with it for optimization (observe that the image of this function is a finite set, and its domain is a connected set, so it cannot be continous unless it is constant, which does not happen in non-trivial examples).

To do this, NCA tries to approach the problem in a stochastic way, that is, instead of operating with the leave-one-out error directly, it operates with its expected value for the probability that we will define below.

Given two samples $x_i, x_j \in \mathcal{X}$, we define the probability that $x_i$ has $x_j$ as its nearest neighbor, for the distance determined by the mapping $L$, as follows:
\begin{equation*}
    \begin{split}
    p_{ij}^L = \frac{\exp\left( - \|Lx_i - Lx_j \|^2 \right)}{\sum\limits_{k \ne i} \exp\left(-\|Lx_i - Lx_k \|^2\right)}\ \ (j \ne i),  
    \end{split}
    \quad\quad
    \begin{split}
    p_{ii}^L = 0.
    \end{split}
\end{equation*}

Notice that, indeed, $p_{i*}$ defines a probability measure on the set $\{1,\dots,N\}$, for each $i \in \{1,\dots,N\}$. Under this probability law, we can define the probability that the sample $x_i$ is correctly classified as the sum of the probabilities that $x_i$ has as its nearest neighbor each sample of its same class, that is
\begin{equation*}
    p_i^L = \sum_{j \in C_i} p_{ij}^L \text{, where } C_i = \{j \in \{1,\dots,N\}\colon y_j = y_i\}.
\end{equation*}

Finally, the expected number of correctly classified samples, and the function we will try to maximize, is obtained as
\begin{equation*}
    f(L) = \sum_{i=1}^N p_i^L = \sum_{i=1}^N \sum_{j \in C_i} p_{ij}^L = \sum_{i=1}^N \sum_{\substack{j \in C_i \\ j \ne i}} \frac{\exp\left(-\|Lx_i - Lx_j \|^2\right)}{\sum\limits_{k \ne i} \exp\left( -\|Lx_i - Lx_k\|^2 \right)}.
\end{equation*}

This function is differentiable, and its derivative can be computed as
\begin{equation*}
   \nabla f(L) = 2L \sum_{i=1}^N \left( p_i^L \sum_{k=1}^N p_{ik}^L O_{ik} - \sum_{j \in C_i} p_{ij}^LO_{ij} \right),
\end{equation*}
where $O_{ij} = (x_i - x_j)(x_i - x_j)^T$ represent again the outer products between the differences of the samples in $\mathcal{X}$. Once the gradient is known, we can optimize the objective function using a gradient ascent method. Note that the objective function is not concave, and can therefore be trapped in local optima. Another issue for this algorithm is the possibility of overfitting, if the expected leave-one-out error of the learned distance is too low. Authors affirm, based on the experimentaal results, that normally there is no overfitting, even if we ascend a lot in the objective function.

\subsection{Algorithms to Improve Nearest Centroids Classifiers} \label{ssec:algs_ncm}

In this block we will analyze, following the previous lines, algorithms specifically oriented to improve distance-based classifiers, focusing in this case on the classifiers based on centroids. The algorithms we will study are NCMML and NCMC \cite{ncmml}.

\subsubsection{Nearest Class Mean Metric Learning (NCMML)} \label{desc:ncmml}

NCMML \cite{ncmml} is a distance metric learning algorithm specifically designed to improve the nearest class mean (NCM) classifier. To do this, it uses a probabilistic approach similar to that used by NCA to improve the accuracy of the nearest neighbors classifier.

Nearest class mean classifier, during learning process, calculates the mean vectors of each class subset. Then, when predicting a new sample, it assigns the class of the nearest mean vector found. It is a very efficient and simple classifier, although its simplicity makes it a rather weak classifier against datasets that are not grouped around their mean. We will learn in the following lines how to learn a distance for this classifier.

We consider the training set $\mathcal{X} = \{x_1,\dots,x_N\} \subset \R^d$, with labels $y_1,\dots,y_N \in \mathcal{C}$, where $\mathcal{C} = \{c_1,\dots,c_r\}$ is the set of available classes. For each $c \in \mathcal{C}$, we call $\mu_c \in \R^d$ the mean vector of the samples belonging to the class $c$, that is, $\mu_c = \frac{1}{N_c}\sum_{i\colon y_i = c}x_i$, where $N_c$ is the number of elements of $\mathcal{X}$ that belong to class $c$. Given a linear transformation $L \in \mathcal{M}_{d'\times d}(\R)$, we will define, for each $x \in \mathcal{X}$ and each $c \in \mathcal{C}$, the probability that $x$ will be labeled with the class $c$ (according to the nearest class mean criterion) as follows:
\begin{equation*}
    p_L(c|x) = \frac{\exp\left(-\frac{1}{2} \|L(x - \mu_c)\|^2\right)}{\sum\limits_{c' \in \mathcal{C}} \exp\left(-\frac{1}{2} \|L(x - \mu_{c'})\|^2\right)}.
\end{equation*} 

Note that $p_L(\cdot|x)$ effectively defines a probability in the set $\mathcal{C}$. Once the above probability is defined, the objective function that NCMML tries to maximize is the log-likelihood for the labeled data in the training set, that is,
\begin{equation*}
\mathcal{L}(L) = \frac{1}{N}\sum_{i=1}^N\log p_L(y_i|x_i).
\end{equation*} 

This function is differentiable and its gradient is given by
\begin{equation*}
\nabla \mathcal{L}(L) = \frac{1}{N} \sum_{i=1}^N \sum\limits_{c\in \mathcal{C}} \alpha_{ic} L (\mu_c - x_i)(\mu_c - x_i)^T,
\end{equation*}
where $\alpha_{ic} = p_L(c|x_i) - [\![ y_i = c ]\!]$ and $[\![ R ]\!]$ denotes the indicator function for the condition $R$. The maximization of this function using gradient methods is the task carried out by NCMML.

\subsubsection{Nearest Class with Multiple Centroids (NCMC)} \label{desc:ncmc}

Although nearest class mean classifier is a simple, intuitive and efficient classifier in both learning and prediction processes, it has one major drawback, and that is that it assumes that classes are grouped around their center, which is an overly restrictive hypothesis. In Figure \ref{fig:problem_ncm} we can see an example where NCM is unable to give good results.

\begin{figure}[htbp]
    \centering
    \includegraphics[width=1.0\textwidth]{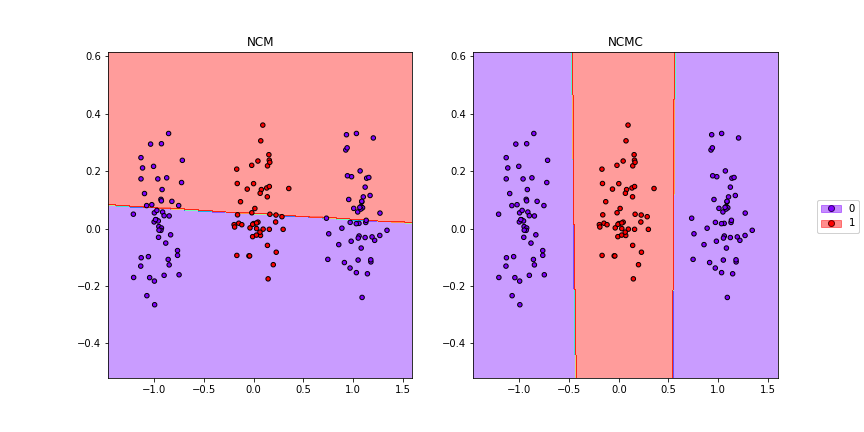}
    \caption{Dataset where the NCM classifier does not provide good results, because the centroids of both classes are very close and both fall between the points of class 1. We will see that, by choosing more than one centroid in an appropiate way, we can classify this set as shown in right image.} \label{fig:problem_ncm}
\end{figure}

One way to solve this problem is, instead of considering the center of each class to classify new samples, to find subgroups within each class that present a quality grouping, and to consider the center for each of its subgroups. In this way we would have a set of centroids for each class, and at the time of classifying a new sample, it would suffice to select the nearest centroid and assign it the class of which it is centroid.

In this new classifier, which we will call NCMC (\emph{nearest class with multiple centroids}), the clustering algorithms come into play. There are numerous algorithms \cite{clustering_algorithms} to obtain a set of clusters from a dataset, each with its advantages and disadvantages. Due to the form of our problem, in which we are interested not only in obtaining a set of clusters for each class, but also a center for each cluster, the algorithm that meets the most suitable conditions, besides being simple and efficient is $k$-Means.

To classify with NCMC, the use $k$-Means reduces to applying the segmentation algorithm within each subset of data associated with each of the classes of the problem. In this way, we obtain in a simple way the set of centroides we wanted for each class, and on which we can carry out the classification of new data simply by searching for the nearest centroid. For this algorithm, as it happens with $k$-Means, it is necessary to previously establish the number of centroids for each class. These numbers can be estimated by cross validation.

Once the NCMC classifier is defined, the distance learning process \cite{ncmml} is similar to NCM. Following the notation used in NCMML, in this case, instead of a set of class centers $\{\mu_c\}$, with $c \in \mathcal{C}$, we have a set of centroids, $\{m_{c_j}\}_{j=1}^{k_c}$, with $k_c \in \N$, for each $c \in \mathcal{C}$. In this case, the probabilities associated with each class for the correct prediction of $x \in \mathcal{X}$ are given by $p_L(c|x) = \sum_{j=1}^{k_c} p_L(m_{c_j}|x)$, where the centroids are those whose probability is defined by the softmax function
\begin{equation*}
    p_L(m_{c_j}|x) = \frac{\exp\left( -\frac{1}{2} \|L(x-m_{c_j})\|^2 \right)}{ \sum\limits_{c \in \mathcal{C}} \sum\limits_{i=1}^{k_c} \exp\left( -\frac{1}{2} \|L(x-m_{c_i})\|^2 \right) }.
\end{equation*}

Again, we maximize the log-likelihood function $\mathcal{L}(L) = \frac{1}{N}\sum_{i=1}^N p_L(y_i|x_i)$, whose gradient is given by
\begin{equation*}
    \nabla \mathcal{L}(L) = \frac{1}{N} \sum_{i=1}^N \sum_{c \in \mathcal{C}} \sum_{j=1}^{k_c} \alpha_{ic_j} L (m_{c_j}-x_i)(m_{c_j}-x_i)^T,
\end{equation*}
where \[\alpha_{ic_j} = p_L(m_{c_j}|x_i) - [\![ y_i = c ]\!] \frac{p_L(m_{c_j}|x_i)}{\sum_{j'=1}^{k_c} p_L(m_{c_{j'}}|x_i)}.\]

The log-likelihood maximization by gradient methods is the task carried out by the distance learning technique for NCMC classifier, which we will call with the same name as the classifier.

\subsection{Information Theory Based Algorithms} \label{ssec:algs_information_theory}

In this section we will study several distance metric learning algorithms based on information theory, specifically, in the Kullback-Leibler and Jeffrey divergences. Their working scheme is similar. First of all, they establish different probability distributions on the data, and then they try to bring them closer or further away by using the divergences. The algorithms we will study are ITML \cite{itml}, DMLMJ \cite{dmlmj} and MCML \cite{mcml}.

\subsubsection{Information Theoretic Metric Learning (ITML)} \label{desc:itml}

ITML \cite{itml} is a distance metric learning technique whose objective is to find a metric as close as possible to an initial distance, understanding this closeness from the point of view of relative entropy, as we will formulate later, making that metric satisfy certain similarity constraints for the trained data.

ITML starts with a dataset $\mathcal{X} = \{x_1,\dots,x_N\} \subset \R^d$, not necessarily labeled, but for which it is known that certain pairs of samples considered similar must be at a distance lower than or equal to $u$, and other pairs of samples considered not similar must be at a distance greater than or equal to $l$, where $u, l \in \R^+$ are pre-defined constants, with relative small and large values, respectively, with respect to the dataset.

From the data with the indicated restrictions, ITML considers an initial distance corresponding to a positive definite matrix $M_0$, and tries to find a positive definite matrix $M$, as similar as possible to $M_0$, and that respects the imposed similarity constraints. The way to measure the similarity between $M$ and $M_0$ is done using information theory tools.

As we saw in Appendix \ref{ssec:information_theory}, there is a correspondence between positive definite matrices and multivariate gaussian distributions, if we fix the same mean vector $\mu$ for every distribution. Given $M \in S_d(\R)^+$ we can then construct a normal distribution through its density function,
\[ p(x|M) = \frac{1}{(2\pi)^{n/2}\det(M)^{1/2}}\exp\left( (x-\mu)^TM^{-1}(x-\mu) \right). \]

Reciprocally, from this distribution, if we calculate the covariance matrix, we recover the matrix $M$. Using this correspondence, we will measure the closeness between $M_0$ and $M$ through the Kullback-Leibler divergence between their corresponding gaussian distributions, that is,
\[ \kl(p(x|M_0))\|p(x|M)) = \int p(x|M_0)\log\frac{p(x|M_0)}{p(x|M)}dx. \]

Once we have defined the mechanism to measure the proximity between the metrics, we can formulate the optimization problem of the technique ITML. If we call $S$ and $D$ to the sets of pairs of indices on the elements of $\mathcal{X}$ that represent the samples considered similar and not similar, respectively, and we start from the initial metric $M_0$, the problem is
\begin{equation} \label{eq:itml:prob1}
    \begin{split}
    \min_{M \in S_d(\R)^+} &\quad \kl(p(x|M_0)\|p(x|M))  \\
    \text{s.t.: } &\quad d_M(x_i,x_j) \le u, \quad (i,j) \in S \\
                  &\quad d_M(x_i,x_j) \ge l, \quad (i,j) \in D.
    \end{split}
\end{equation}

We have seen in Theorem \ref{cor:kl_gaussian} that the Kullback-Leibler divergence between two gaussian distributions with the same mean can be expressed in terms of the \emph{log-det}  matrix divergence. This allows us to reformulate Eq. \ref{eq:itml:prob1} in a way that is easier to deal with computationally:
\begin{equation} \label{eq:itml:prob2}
    \begin{split}
    \min_{M \in S_d(\R)^+} &\quad D_{ld}(M_0\|M)  \\
    \text{s.t.: } &\quad \tr(M(x_i-x_j)(x_i-x_j)^T) \le u, \quad (i,j) \in S \\
                  &\quad \tr(M(x_i-x_j)(x_i-x_j)^T) \ge l, \quad (i,j) \in D.
    \end{split}
\end{equation}

We may not be able to find a metric $M$ that simultaneously satisfies every constraint, so the problem may not have a solution. Therefore, ITML introduces in Eq. \ref{eq:itml:prob2} slack variables through which we obtain a problem whose optimization establishes a trade-off between the minimization of the divergence and the fulfillment of the constraints, in order to arrive to an approximate solution of the original problem, in case there is no solution for this. Finally, the computational technique used in the resolution of this optimization problem is the \emph{Bregman projections method} discussed in Appendix \ref{sssec:opt_methods}.

\subsubsection{Distance Metric Learning through the Maximization of the Jeffrey Divergence (DMLMJ)} \label{desc:dmlmj}

DMLMJ \cite{dmlmj} is another distance metric learning technique based on information theory. In this case, the tool that is used by DMLMJ is the Jeffrey divergence, to separate as much as possible the distribution associated with similar points from that associated to dissimilar points, in the sense that we will see below.

We consider the training set $\mathcal{X} = \{x_1,\dots,x_N\} \subset \R^d$ with corresponding labels $y_1,\dots,y_N$, and we set $k \in \N$. As we have already commented, DMLMJ tries to maximize, with respect to the Jeffrey divergence, the separation between distributions of similar and not similar points. To do this, we will introduce several concepts.

\begin{definition}
Given $x_i \in \mathcal{X}$, the \emph{$k$-positive neighborhood} of $x_i$ is defined as the set of the $k$ nearest neighbors of $x_i$ in $\mathcal{X}\setminus\{x_i\}$ whose class is the same as $x_i$. It is denoted by $V_k^+(x_i)$.

The $k$-negative neighborhood of $x_i$ is defined as the set of the $k$ nearest neighbors of $x_i$ in $\mathcal{X}$ whose class is different from that of $x_i$. It is denoted by $V_k^-(x_i)$.

The \emph{$k$-positive difference space} of the labeled dataset is defined as the set
\[ S = \{x_i - x_j \colon x_i \in \mathcal{X}, x_j \in V_k^+(x_i)\}. \]

Similarly, the \emph{$k$-negative difference space} of the labeled dataset is defined as the set
\[ D = \{x_i - x_j \colon x_i \in \mathcal{X}, x_j \in V_k^-(x_i) \}. \]

\end{definition}

Sets $S$ and $D$ represent, therefore, the vectors with the differences between the samples in $\mathcal{X}$ and its $k$ nearest neighbors, from the same or a different class, respectively. We refer to $P$ and $Q$ as the probability distributions in the spaces $S$ and $D$, respectively, assuming that they are multivariate gaussians. We will also assume that both distributions have zero mean. This assumption is reasonable, since in practice, in most cases, if $x_i$ is a neighbor of $x_j$, $x_j$ is also a neighbor of $x_i$, then both differences will appear in the difference space, averaging zero. Finally, we will call the corresponding covariance matrices $\Sigma_S$ and $\Sigma_D$, respectively.

If we now apply a linear transformation to the data, $x \mapsto Lx$, with $L \in \mathcal{M}_{d' \times d}(\R)$, the transformed distributions will still have mean zero, and covariances $L\Sigma_S L^T$ and $L\Sigma_D L^T$, respectively. We will call these distributions $P_L$ and $Q_L$. The goal of DMLMJ is to find a transformation that maximizes the Jeffrey divergence between $P_L$ and $Q_L$, that is, the problem to optimize is:
\[ \max_{L \in \mathcal{M}_{d'\times d}(\R)} \quad f(L) =  \jf(P_L\|Q_L) = \kl(P_L\|Q_L) + \kl(Q_L\|P_L).\]

As it was shown in Proposition \ref{cor:jf_gaussian}, Jeffrey divergence between the gaussian distributions $P_L$ and $Q_L$ can be rewritten as
\[ f(L) = \frac{1}{2} \tr\left( (L\Sigma_S L^T)^{-1} (L\Sigma_D L^T) + (L \Sigma_D L^T)^{-1} (L \Sigma_S L^T) \right) - d'. \]

Since $d'$ is constant, we obtain the equivalent problem
\begin{align*}
\max_{L \in \mathcal{M}_{d'\times d}(\R)} \quad J(L) &=  \tr\left( (L\Sigma_S L^T)^{-1} (L\Sigma_D L^T) + (L \Sigma_D L^T)^{-1} (L \Sigma_S L^T) \right).
\end{align*}

Theorem \ref{thm:eigen_trace_ratio_sym_opt} tells us that, to maximize $J(L)$, we can choose the $d'$ eigenvectors of $\Sigma_S^{-1}\Sigma_D$, $v_1,\dots,v_{d'}$ corresponding to the largest values of $\lambda_i + 1/\lambda_i$, with $\lambda_i$ being the eigenvalue of $\Sigma_S^{-1}\Sigma_D$ associated with $v_i$, and add this eigenvectors to the rows of $L$. The transformation $L$ constructed from these eigenvectors determines the distance that is learned by the DMLMJ technique.

Finally, the only additional requirement necessary to complete the construction of $L$ is the calculation of the covariance matrices $\Sigma_S$ and $\Sigma_D$. Bearing in mind that it has been assumed that the mean of the distributions of $S$ and $D$ is 0, we can obtain these matrices quite simply from the difference vectors, as shown below:
\begin{align*}
    \Sigma_S &= \frac{1}{|S|}\sum_{i=1}^{N} \left[ \sum_{x_j \in V_k^+(x_i)} (x_i-x_j)(x_i-x_j)^T\right], \\
    \Sigma_D &= \frac{1}{|D|}\sum_{i=1}^{N} \left[ \sum_{x_j \in V_k^-(x_i)} (x_i-x_j)(x_i-x_j)^T\right].
\end{align*}

Let us observe that we can also see this algorithm as a dimensionality reduction algorithm and even as an algorithm oriented to improve the nearest neighbors classifier, due to its local character.

\subsubsection{Maximally Collapsing Metric Learning (MCML)} \label{desc:mcml}

MCML \cite{mcml} is a supervised distance metric learning technique, based on the idea that if all the samples of the same class were projected to the same point, and data of different classes were projected to different points and sufficiently far away, we would have, over the projected data, an ideal class separation. Its purpose is to learn a distance metric that allows to collapse as much possible, within the limitations of the metric, all the samples of the same class in a single point, arbitrarily far from the points where the samples of the remaining classes will collapse.

We consider the dataset $\mathcal{X} = \{x_1,\dots,x_N\} \subset \R^d$, with corresponding labels $y_1,\dots,y_N$. We want to learn a metric determined by $M \in S_d(\R)^+$ that tries to collapse the classes as much as possible according to the approach of the previous paragraph. The way to deal with this problem will consist once again in using the tools provided by the information theory. To do this, we first introduce a conditional distribution on the points of the dataset, analogous to that established in the case of NCA. If $i, j \in \{1,\dots,N\}$, with $i \ne j$, we define the probability that $x_j$ will be classified with the class of $x_i$ according to the distance between $x_i$ and $x_j$ as follows:
\begin{equation*}
    p^{M}(j|i) = \frac{\exp(-\|x_i-x_j\|^2_M)}{\sum\limits_{k\ne i} \exp(-\|x_i-x_k\|^2_M)}.
\end{equation*}

Furthermore, the ideal distribution we are looking for is a binary distribution for which the probability that a sample is correctly classified is 1, and 0 otherwise, that is,
\begin{equation*}
    p_0(j|i) \propto \begin{cases}1, &\quad y_i = y_j \\ 0, &\quad y_i \ne y_j\end{cases}.
\end{equation*}

Note that during the training process we know the real classes of the data, therefore we can deal with this last probability. Besides, we can observe that if we get a metric $M$ whose associated distribution $p^M$ coincides with $p_0$, then, under very mild sufficiency conditions on the data, we will be able to collapse the classes in infinitely distant points.

Indeed, suppose there are at least $r+2$ samples in each class, were $r$ is the rank of $M$, and that $p^M(j|i) = p^0(j|i)$ for any $i, j \in \{1,\dots,N\}$. Then, on the one hand, from $p^M(j|i) = 0$ for $y_i \ne y_j$, it follows that $\exp(-\|x_i-x_j\|^2_M) = 0$, which undoubtedly leads to $x_i$ and $x_j$ being infinitely distant when their classes are different. On the other hand, from $p^M(i|j) \propto 1$ for any $x_i, x_j$ with $y_i = y_j$, it follows that the value $\exp(-\|x_i-x_j\|^2_M)$ is constant for all the members of the same class, and consequently, all the points in the same class are equidistant. As $M$ has rank $r$, it is inducing a distance on a subspace of dimension $r$, where it is known that at most there can be $r+1$ different points and equidistant between them. Since we are assuming that there are at least $r+2$ points per class, all the points of the same class must have a distance of 0 between them with respect to $M$, thus collapsing into a single point. 

Once both distributions are set, the objective of MCML is, as we have already commented, to approximate $p^M(\cdot|i)$ to $p_0(\cdot|i)$ as much as possible, for each $i$, using the relative entropy between both distributions. The optimization problem is, therefore, to minimize this divergence,
\begin{equation*}
    \min_{M \in S_d(\R)^+_0}\quad f(M) = \sum_{i=1}^N \kl \left[ p_0(\cdot|i) \| p^M(\cdot|i) \right].
\end{equation*} 

We can rewrite the objective function in terms of elementary functions:
\begin{equation} \label{eq:mcml:fobj2}
    \begin{split}
        f(M) &= \sum_{i=1}^N \sum_{j=1}^N p_0(j|i) \log \frac{p_0(j|i)}{p^M(j|i)} = \sum_{i=1}^N \sum_{j \colon y_i = y_j}\log\frac{1}{p^M(j|i)} \\
             &= \sum_{i=1}^N \sum_{j \colon y_i = y_j}-\log p^M(j|i) \\
             &= - \sum_{i=1}^N \sum_{j \colon y_i = y_j}\left(-\|x_i-x_j\|_M^2-\log\sum_{k \ne i} \exp(-\|x_i-x_k\|^2)\right)\\
             &= \sum_{i=1}^N \sum_{j \colon y_i = y_j}\|x_i-x_j\|_M^2+\sum_{i=1}^N\log\sum_{k \ne i} \exp(-\|x_i-x_k\|^2).
    \end{split}
\end{equation}

This function is differentiable, and each summand of the previous expression is convex in $M$, the first because it is a distance function in $M$ (which is affine), and the second because it is a \emph{log-sum-exp} function (see \cite{convexoptimization}, sec.~3.1.5) composed with a distance function. In addition, the restriction $M \in S_d(\R)^+_0$ is convex, so we can use the projected gradient descent algorithm with projections onto the positive semidefinite cone to optimize the objective function. This requires an expression of the gradient of the objective function, which can be calculated from its expression in Eq. \ref{eq:mcml:fobj2}:
\begin{equation*}
    \nabla f(M) = \sum_{i,j\colon y_i = y_j}(x_i - x_j)^T(x_i-x_j) - \sum_i \frac{-\sum\limits_{k \ne i} (x_i-x_k)^T(x_i-x_k) \exp(-\|x_i-x_k\|_M^2)}{ \sum\limits_{k \ne i} \exp(-\|x_i-x_k\|^2_M)}.
\end{equation*}

\subsection{Other Distance Metric Learning Techniques} \label{ssec:other_dmls}

In this section we will study some different proposals for distance metric learning techniques. The algorithms we will analyze are LSI \cite{lsi}, DML-eig \cite{dmleig} and LDML \cite{ldml}.

\subsubsection{Learning with Side Information (LSI)} \label{desc:lsi}

LSI \cite{lsi}, also sometimes referred to as MMC (\emph{Mahalanobis metric for clustering}) is a distance metric learning technique that works with a dataset that is not necessarily labeled, which contains certain pairs of samples that are known to be similar and, optionally, pairs of samples that are known not to be similar. It is possibly one of the first algorithms that has helped make the concept of distance metric learning more well known.

LSI tries to learn a metric $M$ that respects this additional information. This is why it can be used both in supervised learning, where similar pairs will correspond to data with the same label, and in unsupervised learning with similarity constraints, such as, for example, clustering problems where it is known that certain samples must be grouped in the same cluster.

We now formulate the problem to be optimized by LSI. Suppose we have the dataset $\mathcal{X} = \{x_1,\dots,x_N\} \subset \R^d$, and we know additionally the set $S = \{(x_i, x_j) \in \mathcal{X}\times \mathcal{X} \colon x_i \textit{ and } x_j \ \allowbreak\textit{are similar.}\}$. In addition, we may know the set $D =  \{(x_i,x_j) \in \mathcal{X}\times\mathcal{X} \colon x_i \textit{ and } x_j \ \allowbreak\textit{are dissimilar.} \}$. If we do not have the latter, we can take $D$ as the complement of $S$ in $\mathcal{X}\times\mathcal{X}$.

The first intuition to address this problem, given the information we have, is to minimize the distances between pairs of similar points, that is, to minimize $\sum_{(x_i,x_j)\in S} \|x_i - x_j \|_M^2$, where $M \in S_d(\R)^+_0$. However, this will lead us to the solution $M = 0$, which would not give us any productive information. That is why LSI adds the additional constraint $\sum_{(x_i,x_j) \in D} \|x_i - x_j\|_M \ge 1$, which leads us to the optimization problem
\begin{equation*} \label{eq:lsi}
\begin{split}
    \min_{M} &\quad \sum_{(x_i,x_j)\in S}  \|x_i - x_j \|_M^2 \\
    \text{s.t.: } &\quad \sum_{(x_i,x_j) \in D} \|x_i - x_j\|_M \ge 1 \\
                  &\quad M \in S_d(\R)^+_0.
\end{split}
\end{equation*}

Note several observations regarding this formula. First, the choice of constant 1 in the constraint is irrelevant; if we choose any constant $c > 0$ we get a metric proportional to $M$. Secondly, the optimization problem is convex, because the sets determined by the restrictions are convex and the function to optimize is also convex. Finally, we may consider a restriction on the set $D$ of the form $\sum_{(x_i,x_j) \in D} \|x_i - x_j\|_M^2 \ge 1$. However, it is possible to rewrite that problem into a formulation similar to that used on the 2-class LDA, where the metric learned would have a rank of 1, which may not be optimal.

To easily optimize this problem, authors propose the equivalent problem
\begin{equation} \label{eq:lsi:equiv}
\begin{split}
    \max_{M} &\quad \sum_{(x_i,x_j)\in D}  \|x_i - x_j \|_M \\
    \text{s.a.: } &\quad \sum_{(x_i,x_j) \in S} \|x_i - x_j\|_M^2 \le 1 \\
                  &\quad M \in S_d(\R)^+_0.
\end{split}
\end{equation}

This problem with two convex constraints can be solved by a projected gradient ascent method. In this problem, constraints are easy to satisfy separately. The first constraint consists of a projection onto an affine half-space, while the second constraint consists of a projection onto the positive semidefinite cone. The method of iterated projections makes it possible to fulfill both restrictions by repeteadly projecting onto both sets until convergence is obtained.

\subsubsection{Distance Metric Learning with Eigenvalue Optimization (DML-eig)} \label{desc:dmleig}

DML-eig \cite{dmleig} is a distance metric learning algorithm inspired by the LSI algorithm of the previous section, proposing a very similar optimization problem but offering a completely different resolution method, based on eigenvalue optimization.

We consider, as in the previous case, a training dataset $\mathcal{X} = \{x_1,\dots,x_N\} \subset \R^d$, for which we know two sets of pairs, $S$ and $D$, of data considered similar and dissimilar, respectively. In the previous section, in order to optimize Eq. \ref{eq:lsi:equiv} an ascending gradient method with iterated projections was proposed, which may take a long time to converge. DML-eig proposal consists of a slight modification of the objective function, keeping the same constraints, which leads us to the problem
\begin{equation} \label{eq:dmleig:1}
\begin{split}
    \max_{M} &\quad \min_{(x_i,x_j)\in D}  \|x_i - x_j \|_M^2 \\
    \text{s.t.: } &\quad \sum_{(x_i,x_j) \in S} \|x_i - x_j\|_M^2 \le 1 \\
                  &\quad M \in S_d(\R)^+_0.
\end{split}
\end{equation}

To address this problem, it is useful to introduce a notation that simplifies the indexing of the data. First, we will denote $X_{ij} = (x_i - x_j)(x_i-x_j)^T$ to the outer products between the differences of the elements in $\mathcal{X}$. To access pairs of elements $(i,j)$ we will use a single index $\tau \equiv (i,j)$. This index can be assumed ordered when necessary, to access the components of a vector of appropiate size. The previous outer product $X_{ij}$ can also be written as $X_{\tau}$. Finally, for sets $S$ and $D$, we also assume that they are made by indexes $\tau$ associated with a pair $(i,j)$ such that $x_i$ and $x_j$ are similar or dissimilar, respectively. Thus, if we denote $X_S = \sum_{(i,j)\in S}X_{ij}$, Eq. \ref{eq:dmleig:1} can be rewritten in terms of Frobenius dot product as
\begin{equation} \label{eq:dmleig:2}
\begin{split}
    \max_{M} &\quad \min_{\tau \in D}  \langle X_{\tau}, M \rangle \\
    \text{s.t.: } &\quad \langle X_S, M \rangle \le 1 \\
                  &\quad M \in S_d(\R)^+_0.
\end{split}
\end{equation}

Let us see how the formulation of the problem we are looking for is established in terms of eigenvalue optimization. For each symmetric matrix $X \in S_d(\R)$ we denote its highest eigenvalue as $\lambda_{\max}(X)$. Associated with the set $D$ of dissimilar pairs we will define the simplex
\[ \Delta = \left\{u \in \R^{|D|} \colon u_\tau \ge 0 \ \forall \tau \in D, \sum_{\tau \in D} u_{\tau} = 1 \right\}. \]

We also consider the set
\[ \mathcal{P} = \{M \in \mathcal{M}_d(\R)^+_0 \colon \tr(M) = 1 \}.  \]

$\mathcal{P}$ is the intersection of the positive semidefinite cone with an affine subspace of $\mathcal{M}_d(\R)$. Sets with this structure are known as \emph{spectrahedra}.

So, if $X_S$ is positive semidefinite, and we define, for each $\tau \in D$, $\widetilde{X}_{\tau} = X_S^{-1/2}X_{\tau}X_S^{-1/2}$, we can prove \cite{dmleig} that the problem given by Eq. \ref{eq:dmleig:2} is equivalent to the following problem:
\begin{equation*} \label{eq:dmleig:3}
      \max_{S \in \mathcal{P}} \min_{u \in \Delta} \sum_{\tau \in D} u_{\tau}\langle \widetilde{X}_{\tau},S\rangle,
\end{equation*}
which in turn can be rewritten as an eigenvalue optimization problem:
\begin{equation} \label{eq:dmleig:4}
      \min_{u \in \Delta} \max_{S \in \mathcal{P}} \left\langle \sum_{\tau \in D} u_{\tau} \widetilde{X}_{\tau},S\right\rangle = \min_{u \in \Delta} \lambda_{\max}\left( \sum_{\tau\in D}u_{\tau}\widetilde{X}_{\tau} \right).
\end{equation}

The problem of minimizing the largest eigenvalue of a symmetric matrix is well-known and there are some iterative methods that allow this minimum to be reached \cite{overton1988minimizing}. Furthermore, \citet{dmleig} also propose an algorithm to solve the problem  $\max_{S \in \mathcal{P}} \min_{u \in \Delta} \sum_{\tau \in D} u_{\tau}\langle \widetilde{X}_{\tau},S\rangle + \mu \sum_{\tau \in D} u_{\tau} \log u_{\tau}$, where $\mu > 0$ is a smoothing parameter, by means of which the problem in Eq. \ref{eq:dmleig:4} can be approximated.

\subsubsection{Logistic Discriminant Metric Learning (LDML)} \label{desc:ldml}

LDML \cite{ldml} is a distance metric learning algorithm in which the optimization model makes use of the logistic function. Authors affirm that this technique is quite useful to learn distances on sets of labeled images, being able to be used therefore in problems like face identification.

Recall that the \emph{logistic} or \emph{sigmoid} function is the map $\sigma \colon \R \to \R$ given by
\[ \sigma(x) = \frac{1}{1+e^{-x}}. \]

This function presents a graph with a sigmoidal shape, is differentiable, strictly increasing and takes values between 0 and 1, reaching these values in their limits at infinity. These properties allow the logistic function to be the cumulative distribution function of a random variable, which gives it an important probabilistic utility. Its graph presents an asymptotic behaviour from small values (in absolute value), with an exponential growth in zones close to zero.  This makes logistic function very useful for modeling binary signals. It also presents a derivative that is easy to calculate, and can be expressed in terms of the logistic function itself, $\sigma'(x) = \sigma(x)(1-\sigma(x))$.

Suppose we have the dataset $\mathcal{X} = \{x_1,\dots,x_N\} \subset \R^d$, with corresponding labels $y_1,\dots,y_N$. In LDML, logistic function is used to define a probability, which will assign the greater probability the smaller the distance between points. To measure the distance, LDML will use a positive semidefinite matrix, resulting in the expression of the probability as
\begin{equation*}
    p_{ij,M} = \sigma(b - d_M(x_i,x_j)^2),
\end{equation*} 
where $b$ is a positive threshold value that will determine the maximum value achievable by the logistic function, and that can be estimated by cross validation. Associated with this probability, we can define a random variable that follows a Bernouilli distribution, and that takes the values 0 and 1, according to whether the pair $(x_i,x_j)$ belongs to the same class. This distribution is determined by the probability mass function
\[ f_{ij,M}(x) = (p_{ij,M})^{x}(1-p_{ij,M})^{1-x}, \quad x  \in \{0,1\}. \]

The function that LDML tries to maximize is the log-likelihood of the previous distribution for the given dataset, that is,
\begin{equation*}
    \mathcal{L}(M) = \sum_{i,j=1}^N y_{ij}\log p_{ij,M} + (1-y_{ij})\log(1-p_{ij,M}),
\end{equation*}
where $y_{ij}$ is a binary variable that takes the value 1 if $y_i = y_j$ and 0 otherwise. This function is differentiable and concave (it is a positive combination of functions that can be expressed as a minus log-sum-exp function, which is concave), so we have a convex maximization problem. Keeping in mind the properties of the logistic function, if  $x_{ij} \equiv (x_i-x_j)(x_i-x_j)^T$ and $p_{ij} \equiv p_{ij,M}$, the gradient has the expression
\begin{align*}
    \mathcal{\nabla L}(M) &= \sum_{i,j=1}^N y_{ij}\frac{-x_{ij} p_{ij}(1 - p_{ij})}{p_{ij}} + (1-y_{ij})\frac{x_{ij}p_{ij}(1-p_{ij})}{1-p_{ij}} \\
                          &= \sum_{i,j=1}^N -y_{ij}x_{ij}(1 - p_{ij}) + (1-y_{ij})x_{ij}p_{ij} \\
                          &= \sum_{i,j=1}^N x_{ij}((1-y_{ij})p_{ij}-(1-p_{ij})y_{ij}) \\
                          &= \sum_{i,j=1}^N x_{ij}(p_{ij}-y_{ij}),
\end{align*}
The projected gradient method with projections onto the positive semidefinite cone is the semidefinite programming algorithm that is used in LDML to obtain the metric that optimizes its objective function.

\subsection{Kernel Distance Metric Learning} \label{ssec:kernel_dml}

In this part we will analyze some of the kernelized versions of the algorithms presented throughout this section. An introduction to the use of the kernel trick for distance metric learning was already made in Section \ref{sssec:kernel_dml}. Below we will study the kernel algorithms for LMNN, ANMM, DMLMJ and LDA.


\subsubsection{Kernel Large Margin Nearest Neighbors (KLMNN)} \label{desc:klmnn}

KLMNN \cite{klmnn,lmnn} is the kernelized version of LMNN. In it, the data in $\mathcal{X}$ is sent to the feature space to learn in that space a distance that minimizes the objective function set in the LMNN problem.

Although the problem formulated in the non-kernelized version was made with respect to a positive semidefinite matrix $M$, using the error function given in Eq. \ref{eq:lmnn:M}, when working in feature spaces we are more interested in dealing with a linear map, even if the convexity of the problem is lost, in order to be able to use the representer theorem. Therefore, adapting the error function proposed in Eq. \ref{eq:lmnn:L} to the feature space, the LMNN problem for the kernelized version consists of
\begin{equation*} \label{eq:klmnn:L}
\begin{split}
    \min_{L\in \mathcal{L}(\mathcal{F},\R^{d})} \quad \varepsilon(L) &= (1-\mu)\sum_{i=1}^N\sum_{j \istargetof i} \|L(\phi(x_i) - \phi(x_j))\|^2 \\
                &+ \mu \sum_{i=1}^N\sum_{j \istargetof i} \sum_{l=1}^N(1-y_{il})[1 + \|L(\phi(x_i) - \phi(x_j))\|^2-\|L(\phi(x_i) - \phi(x_l))\|^2]_+. 
 \end{split}
\end{equation*}

As a consequence of the representer theorem, it follows that, for each $x_i \in \mathcal{X}$, $L\phi(x) = AK_{.i}$, where $A \in \mathcal{M}_{d' \times N}(\R)$ is the matrix given by the representer theorem, and $K_{.i}$ represents the $i$-th column of the kernel matrix for the training set. Using this in the error expression, we obtain
\begin{align*}
& (1-\mu)\sum_{i=1}^N\sum_{j \istargetof i} \|L(\phi(x_i) - \phi(x_j))\|^2 \\
                & \quad + \mu \sum_{i=1}^N\sum_{j \istargetof i} \sum_{l=1}^N(1-y_{il})[1 + \|L(\phi(x_i) - \phi(x_j))\|^2-\|L(\phi(x_i) - \phi(x_l))\|^2]_+ \\
&= (1-\mu)\sum_{i=1}^N\sum_{j \istargetof i} \|A(K_{.i} - K_{.j})\|^2 \\
                & \quad+ \mu \sum_{i=1}^N\sum_{j \istargetof i} \sum_{l=1}^N(1-y_{il})[1 + \|A(K_{.i} - K_{.j})\|^2-\|A(K_{.i} - K_{.l})\|^2]_+.
 \end{align*}

The above expression depends only on $A$ and kernel functions, and minimizing it as a function of $A$ (we will denote it $\varepsilon(A)$) we get the same value as minimizing $\varepsilon(L)$. Note also that the expression $\varepsilon(A)$ also requires the calculation of target neighbors and impostors, but these depend only on the distances in the feature space, which, as we have already seen, are computable, as shown in Eq. \ref{eq:dist_features}. Therefore, all the components of $\varepsilon(A)$ are computationally manipulable, so if we apply a gradient descent method on $\varepsilon(A)$ we can reduce the value of the objective function, always keeping in mind that we can be stuck in a local optimum, because the problem is not convex. Finally, once a matrix $A$ that minimizes $\varepsilon(A)$ is found, we will have determined the corresponding map $L$ thanks to the representer theorem, and we can use $A$ together with the kernel functions to transform new data.

\subsubsection{Kernel Average Neighborhood Margin Maximization (KANMM)} \label{desc:kanmm}

KANMM \cite{anmm} is the kernelized version of ANMM. In it, the data in $\mathcal{X}$ is sent to the feature space via the map $\phi \colon \R^d \to \mathcal{F}$, where ANMM is applied to obtain the linear map we are looking for.

Recall that the first step for the application of ANMM was to obtain the homogeneous and heterogeneous neighborhoods for each sample $x_i \in \mathcal{X}$. Note that for this calculation it is only necessary to compare distances in the feature space, which we have seen can be done thanks to the kernel function, through Eq. \ref{eq:dist_features}. We will denote the neighborhoods in the feature space as $N_{\phi(x_i)}^o$ y $N_{\phi(x_i)}^e$, respectively, for each $x_i$.

The scatter and compactness matrices (or endomorphisms, more in general) in the feature space are given by
\begin{align*}
    S^{\phi} = \sum\limits_{i,k \colon \phi(x_k) \in N_{\phi(x_i)}^e} \frac{(\phi(x_i)-\phi(x_k))(\phi(x_i)-\phi(x_k))^T}{|N_{\phi(x_i)}^e|} \\
    C^{\phi} = \sum\limits_{i,j \colon \phi(x_j) \in N_{\phi(x_i)}^o} \frac{(\phi(x_i)-\phi(x_j))(\phi(x_i)-\phi(x_j))^T}{|N_{\phi(x_i)}^o|}.
\end{align*}

The problem to be optimized is therefore expressed as
\begin{equation}
\begin{split}
    \max_{L \in \mathcal{L}(\mathcal{F},\R^{d'})} &\quad \tr\left(L(S^{\phi}-C^{\phi})L^T\right)  \\
    \text{s.t.: } &\quad LL^T = I.
\end{split}
\end{equation}

According to the representer theorem, $L\varphi(x_i) = AK_{.i}$, where $A$ is the matrix of coefficients of the representation theorem and $K_{.i}$ represents the $i$-th column of the kernel matrix for the training set. Then,
\begin{equation*}
    \begin{split}
        L(\phi(x_i)-\phi(x_j))(\phi(x_i)-\phi(x_j))^TL^T = A(K_{.i}-K_{.j})(K_{.i}-K_{.j})^TA^T,
    \end{split}
\end{equation*}
and if we consider the matrices
\begin{align*}
    \widetilde{S}^{\phi} = \sum\limits_{i,k \colon \phi(x_k) \in N_{\phi(x_i)}^e} \frac{(K_{.i}-K_{.k})(K_{.i}-K_{.k})^T}{|N_{\phi(x_i)}^e|} \\
    \widetilde{C}^{\phi} = \sum\limits_{i,j \colon \phi(x_j) \in N_{\phi(x_i)}^o} \frac{(K_{.i}-K_{.j})(K_{.i}-K_{.j})^T}{|N_{\phi(x_i)}^o|},
\end{align*}
it follows that the average neighborhood margin is given by
\begin{equation*}
    \gamma^L = \tr(L(S^{\phi}-C^{\phi})L^T) = \tr(LS^{\phi}L^T-LC^{\phi}L^T) = \tr(A \widetilde{S}^{\phi} A^T - A\widetilde{C}^{\phi}A^T = \tr(A(\widetilde{S}^{\phi}-\widetilde{C}^{\phi})A^T)).
\end{equation*} 

If we impose the restriction $AA^T = I$, Theorem \ref{thm:eigen_trace_opt} tells us again that we can take matrix $A$ that which contains as rows the eigenvectors of $\widetilde{S}^{\phi}-\widetilde{C}^{\phi}$ corresponding to its $d'$ largest eigenvalues. Observe that we can calculate both matrices from the kernel function, and the matrix $A$ we obtain determines the linear map, as a consequence of the representer theorem. Therefore, we have finally obtained a kernel-based method for applying ANMM in feature spaces.

\subsubsection{Kernel Distance Metric Learning through the Maximization of the Jeffrey Divergence (KDMLMJ)} \label{desc:kdmlmj}

KDMLMJ \cite{dmlmj} is the kernelized version of DMLMJ. In it, the data in $\mathcal{X}$ is sent to the feature space, where a distance is learned after applying DMLMJ.

Again, it is possible to calculate the $k$-positive and $k$-negative neighborhoods, $V_k^+(\phi(x_i))$ and $V_k^-(\phi(x_i))$, for each $x_i \in \mathcal{X}$, thanks to Eq. \ref{eq:dist_features}. It is not the same with the endomorphisms associated with the difference spaces,
\begin{align*}
    \Sigma_S^{\phi} &= \frac{1}{|S|}\sum_{i=1}^{N} \left[ \sum_{\phi(x_j) \in V_k^+(\phi(x_i))} (\phi(x_i)-\phi(x_j))(\phi(x_i)-\phi(x_j))^T\right] \\
    \Sigma_D^{\phi} &= \frac{1}{|D|}\sum_{i=1}^{N} \left[ \sum_{\phi(x_j) \in V_k^-(\phi(x_i))} (\phi(x_i)-\phi(x_j))(\phi(x_i)-\phi(x_j))^T\right].
\end{align*}

The optimization problem is given by
\[ \max_{L \in \mathcal{L}(\mathcal{F}, \R^{d'})} \quad J(L) =  \tr\left( (L\Sigma_S^{\phi} L^T)^{-1} (L\Sigma_D^{\phi} L^T) + (L \Sigma_D^{\phi} L^T)^{-1} (L \Sigma_S^{\phi} L^T) \right).\]

Again we have, as a consequence of the representer theorem, that $L\phi(x_i) = AK_{.i}$ for each $x_i \in \mathcal{X}$, where $A$ is the matrix provided by the representer theorem, and $K_{.i}$ is the $i$-th column of the kernel matrix for the training set. If, reasoning as in the previous section, we define the matrices
\begin{align*}
    U &= \frac{1}{|S|}\sum_{i=1}^{N} \left[ \sum_{\phi(x_j) \in V_k^+(\phi(x_i))} (K_{.i}-K_{.j})(K_{.i}-K_{.j})^T\right] \\
    V &= \frac{1}{|D|}\sum_{i=1}^{N} \left[ \sum_{\phi(x_j) \in V_k^-(\phi(x_i))} (K_{.i}-K_{.j})(K_{.i}-K_{.j})^T\right],
\end{align*}
we obtain that
\begin{equation*}
    \begin{split}
        \tr\left( (L\Sigma_S^{\phi} L^T)^{-1} (L\Sigma_D^{\phi} L^T) + (L \Sigma_D^{\phi} L^T)^{-1} (L \Sigma_S^{\phi} L^T) \right) = \\
        \tr\left( (AUA^T)^{-1} (AV A^T) + (A V A^T)^{-1} (A U A^T) \right).
    \end{split}
\end{equation*}

As with DMLMJ, Theorem \ref{thm:eigen_trace_ratio_sym_opt} tells us that we can find a matrix $A$ that maximizes this last equality by taking the eigenvectors of $U^{-1}V$ for which the value $\lambda + 1/\lambda$ is maximized, where $\lambda$ is the associated eigenvalue. As matrices $U$ and $V$ can be obtained from the kernel function, and $A$ determines $L$ by the representer theorem, we have obtained an algorithm for the application of DMLMJ in the feature space.

\subsubsection{Kernel Discriminant Analysis (KDA)} \label{desc:kda}

KDA \cite{kda} is the kernelized version of linear discriminant analysis. The kernelization of this algorithm will make it possible to find non-linear directions that nicely separate the data according to the criteria established in the discriminant analysis. Once again, we send the data in $\mathcal{X}$ to the feature space using the mapping $\phi\colon \R^d \to \mathcal{F}$. On that space we will apply linear discriminant analysis.

Suppose, as in LDA, that the set of possible classes is $\mathcal{C}$, of cardinal $r$, and for each $c \in \mathcal{C}$ we define $\mathcal{C}_c = \{i \in \{1,\dots,N\} \colon y_i = c\}$ and $N_c = |\mathcal{C}_c|$, with $\mu_c^{\phi}$ the mean vector of the class $c$, and $\mu^{\phi}$ the mean vector of the whole dataset, considering it within the feature space. The problem we want to solve in this case is
\begin{equation} \label{eq:kda}
    \max_{\substack{L \in \mathcal{L}(\mathcal{F}, \R^{d'}) }} \quad \tr\left((L S_w^{\phi} L^T)^{-1} (LS_b^{\phi}L^T)\right),
\end{equation}
where $S_b^{\phi}$ and $S_w^{\phi}$ are the operators that measure the between-class and within-class scatter, respectively, and are given by
\begin{align*}
    S_b^{\phi} &= \sum_{c \in \mathcal{C}}(\mu_c^{\phi} - \mu^{\phi})(\mu_c^{\phi} - \mu^{\phi})^T \\
    S_w^{\phi} &= \sum_{c \in \mathcal{C}}\sum_{i \in \mathcal{C}_c}(\phi(x_i)-\mu_c^{\phi})(\phi(x_i)-\mu_c^{\phi})^T.
\end{align*}

Again, we use the representer theorem, so that if $L \in \mathcal{L}(\mathcal{F},\R^{d'})$, then, for each $x \in \R^d$,
\[L\phi(x) = A \begin{pmatrix} K(x_1,x) \\ \vdots \\ K(x_N,x) \end{pmatrix}, \]
where $A$ is in the conditions of the representer theorem. Let us look again for an expression of the problem given in Eq. \ref{eq:kda} that depends only on the kernel function and the matrix $A$. To do this, we have to observe that for the mean vectors of each class we have
\[ L\mu_c^{\phi} = L\left(\frac{1}{N_c} \sum_{i \in \mathcal{C}_c} \phi(x_i)\right) = \frac{1}{N_c}\sum_{i \in \mathcal{C}_c}L\phi(x_i) = \frac{1}{N_c}\sum_{i \in \mathcal{C}_c}AK_{.i},  \]
where $K_{.i}$ is the $i$-th column of the kernel matrix for the training set. Similarly, for the global mean vector, we have
\[ L\mu^{\phi} = \frac{1}{N} \sum_{i=1}^NAK_{.i}. \]

Consequently,
\begin{align*}
    L(\mu_c^{\phi} - \mu^{\phi})(\mu_c^{\phi} - \mu^{\phi})^TL^T &= (L\mu_c^{\phi} - L\mu^{\phi})(L\mu_c^{\phi} - L\mu^{\phi})^T \\
                                     &=  \left( \frac{1}{N_c}\sum_{i \in \mathcal{C}_c} AK_{.i} - \frac{1}{N}\sum_{i=1}^N AK_{.i} \right)\left( \frac{1}{N_c}\sum_{i \in \mathcal{C}_c} AK_{.i} - \frac{1}{N}\sum_{i=1}^N AK_{.i} \right)^T.
\end{align*}

Note that the last expression depends only on $A$ and the kernel function. Moreover, for $x_i \in \mathcal{X}$ with $y_i = c$, we have
\begin{multline*}
    L(\phi(x_i) - \mu_c^{\phi})(\phi(x_i) - \mu_c^{\phi})^TL^T = (L\phi(x_i) - L\mu_c^{\phi})(L\phi(x_i) - L\mu_c^{\phi})^T \\
                                     = \left(AK_{.i} - \frac{1}{N_c}\sum_{j \in \mathcal{C}_c} AK_{.j} \right)\left(AK_{.i} - \frac{1}{N_c}\sum_{j \in \mathcal{C}_c} AK_{.j} \right)^T \\
                                     = \left(AK_{.i} - \frac{1}{N_c}\sum_{j \in \mathcal{C}_c} AK_{.j} \right)\left(K_{.i}^TA^T - \frac{1}{N_c}\sum_{j \in \mathcal{C}_c} K_{.j}^TA^T \right) \\
                                     = AK_{.i}K_{.i}^TA^T - \frac{1}{N_c}\sum_{j\in \mathcal{C}_c}AK_{.i}K_{.j}^TA^T - \frac{1}{N_c}\sum_{j\in \mathcal{C}_c}AK_{.j}K_{.i}^TA^T + \frac{1}{N_c^2}\sum_{j \in \mathcal{C}_c}\sum_{l \in \mathcal{C}_c}AK_{.j}K_{.l}^TA^T.
\end{multline*}

By summing in $i \in \mathcal{C}_c$, we obtain
\begin{multline*}
    \sum_{i\in\mathcal{C}_c}L(\phi(x_i) - \mu_c^{\phi})(\phi(x_i) - \mu_c^{\phi})^TL^T \\
    = \sum_{i \in \mathcal{C}_c}\left[AK_{.i}K_{.i}^TA^T - \frac{1}{N_c}\sum_{j\in \mathcal{C}_c}AK_{.i}K_{.j}^TA^T - \frac{1}{N_c}\sum_{j\in \mathcal{C}_c}AK_{.j}K_{.i}^TA^T + \frac{1}{N_c^2}\sum_{j \in \mathcal{C}_c}\sum_{l \in \mathcal{C}_c}AK_{.j}K_{.l}^TA^T\right] \\
            = \sum_{i \in \mathcal{C}_c}AK_{.i}K_{.i}^TA^T - \frac{2}{N_c}\sum_{i \in \mathcal{C}_c}\sum_{j \in \mathcal{C}_c}AK_{.i}K_{.j}^TA^T + \frac{1}{N_c^2}\sum_{i \in \mathcal{C}_c}\sum_{j \in \mathcal{C}_c}\sum_{l \in \mathcal{C}_c}AK_{.j}K_{.l}^TA^T \\
            = \sum_{i \in \mathcal{C}_c}AK_{.i}K_{.i}^TA^T - \frac{2}{N_c}\sum_{i \in \mathcal{C}_c}\sum_{j \in \mathcal{C}_c}AK_{.i}K_{.j}^TA^T + \frac{N_c}{N_c^2}\sum_{j \in \mathcal{C}_c}\sum_{l \in \mathcal{C}_c}AK_{.j}K_{.l}^TA^T \\
            = \sum_{i \in \mathcal{C}_c}AK_{.i}K_{.i}^TA^T - \frac{1}{N_c}\sum_{i \in \mathcal{C}_c}\sum_{j \in \mathcal{C}_c}AK_{.i}K_{.j}^TA^T \\
            = AK_cK_c^TA^T - AK_c\left(\frac{1}{N_c}\mathbbm{1}\right)K_c^TA^T \\
            = AK_c\left(I - \frac{1}{N_c}\mathbbm{1}\right)K_c^TA^T,
\end{multline*} 
where $\mathbbm{1}\in \mathcal{M}_{N_c}(\R)$ is a square matrix with the value 1 in all its entries, and $K_c \in \mathcal{M}_{N\times N_c}$ is a kernel matrix whose entries are the values of the kernel function between all the samples in $\mathcal{X}$ and the samples with class $c$. Again, this last expression depends only on $A$ and the kernel function.

If we finally define
\begin{equation*}
    \begin{split}
        U_c &= \frac{1}{N_c}\sum_{i \in \mathcal{C}_c}K_{.i} \in \R^N, c \in \mathcal{C}\\
        U_{\mu} &= \frac{1}{N}\sum_{j=1}^N K_{.i} \in \R^N \\
        U &= \sum_{c \in \mathcal{C}} N_c(U_c - U_{\mu})(U_c - U_{\mu})^T \in S_N(\R) \\
        V &= \sum_{c \in \mathcal{C}} K_c\left(I - \frac{1}{N_c}\mathbbm{1}\right)K_c^T \in S_N(\R),
    \end{split}
\end{equation*}
we can conclude that
\begin{equation*}
    \tr\left((L S_w^{\phi} L^T)^{-1}(LS_b^{\phi}L^T)\right) = \tr\left((AVA^T)^{-1}(AUA^T) \right),
\end{equation*}
where $U$ and $V$ are computable using the kernel function. Therefore, we obtain a problem equivalent to the original given in Eq. \ref{eq:kda}, but in terms of $A$, for which Theorem \ref{thm:eigen_trace_ratio_opt} states that, if $U$ is positive definite, we can maximize the value of the trace by taking as rows of $A$ the eigenvectors of $V^{-1}U$ corresponding to its $d'$ largest eigenvalues. In this way, since $A$ determines $L$ thanks to the representer theorem, we obtain a kernel-based method for the application of discriminant analysis in feature spaces.

\bibliography{main}

\end{document}